\newif\ifColor

\documentclass[11pt,twoside,a4paper]{book}
\usepackage[square, sort, numbers]{natbib}
\usepackage{bibentry}
 \nobibliography*

\usepackage{breakcites} 
\usepackage{microtype} 

\title{Graphs for deep learning latent representations}
\author{Carlos Lassance}
\date{}
\setcounter{tocdepth}{1}


\usepackage{subcaption}
\expandafter\def\csname ver@subcaption.sty\endcsname{}

\usepackage{zref-abspage}

\setcounter{secnumdepth}{4} 

\usepackage[vcentering,dvips]{geometry}
\geometry{a4paper, top=3.0cm, bottom=3.0cm, hmarginratio=3:2}


\usepackage[utf8]{inputenc}
\usepackage{tikz}
\usepackage{pgfplotstable}
\usepackage{makecell}
\pgfplotsset{compat=1.15}
\usepgfplotslibrary{statistics}
\usetikzlibrary{arrows}

\usepackage[T1]{fontenc}

\usepackage{amsmath}
\usepackage{amsfonts}
\usepackage{amsthm}
\usepackage{multirow}
\usepackage{colortbl}
\usepackage{booktabs}
\usepackage{graphicx}


\usepackage{bm}

\usepackage{float}

\usepackage{makeidx}\makeindex
\usepackage[nottoc]{tocbibind}
\usepackage{caption}

\usepackage{fancyhdr}
\usepackage{tikz}
\usetikzlibrary{arrows}
\usetikzlibrary{shapes}
\usepackage{pdfpages}
\usetikzlibrary{external}
\tikzexternalize[optimize=true,optimize command away=\includepdf,prefix=compiled_figures/]
\newcommand{%
  \tikzsetnextfilename{}%
  \input{.tex}%
}[1]{%
  \tikzsetnextfilename{#1}%
  \input{#1.tex}%
}

\usepackage{amssymb}
\usepackage[chapter]{algorithm}

\usepackage[noend]{algpseudocode}

\theoremstyle{definition}

\newtheorem{proposition}{Proposition}
\newtheorem{theorem}{Theorem}

\newtheorem{remark}{Remark}

\usepackage{array}
\newcolumntype{P}[1]{>{\centering\arraybackslash}p{#1}}

\newcommand{\changefont}{
    \fontsize{9}{11}\selectfont
}
\pagestyle{fancy}
\fancyhf{}
\fancyhead[LO]{\changefont \slshape \leftmark}
\fancyhead[RE]{\changefont \slshape \rightmark}
\fancyhead[RO,LE]{\thepage}
\fancyfoot[C]{\changefont \thepage} 









\def\eqref#1{equation~\ref{#1}}
\def\Eqref#1{Equation~\ref{#1}}








\def\1{\bm{1}}
\newcommand{\dataset}{\mathcal{D}}
\newcommand{\trainset}{\mathcal{D_{\mathrm{train}}}}
\newcommand{\supportset}{\mathcal{D_{\mathrm{support}}}}
\newcommand{\relevant}{\mathcal{D_{\mathrm{relevant}}}}
\newcommand{\queryset}{\mathcal{D_{\mathrm{query}}}}
\newcommand{\validset}{\mathcal{D_{\mathrm{valid}}}}
\newcommand{\easyset}{\mathcal{D_{\mathrm{easy}}}}
\newcommand{\hardset}{\mathcal{D_{\mathrm{hard}}}}
\newcommand{\unclearset}{\mathcal{D_{\mathrm{unclear}}}}
\newcommand{\testset}{\mathcal{D_{\mathrm{test}}}}
\newcommand{\classifier}{\mathcal{C}}
\newcommand{\featureextractor}{\mathcal{F}}
\newcommand{\featuremaps}{F}








\def\vb{{\bm{b}}}
\def\vc{{\bm{c}}}

\def\vlambda{{\bm{\lambda}}}

\def\vs{{\bm{s}}}

\def\vx{{\bm{x}}}
\def\vy{{\bm{y}}}


\def\evlambda{{\lambda}}

\def\mA{{\bm{A}}}

\def\mD{{\bm{D}}}

\def\mF{{\bm{F}}}

\def\mH{{\bm{H}}}

\def\mL{{\bm{L}}}

\def\mS{{\bm{S}}}

\def\mW{{\bm{W}}}
\def\mX{{\bm{X}}}

\def\mLambda{{\bm{\Lambda}}}

\DeclareMathAlphabet{\mathsfit}{\encodingdefault}{\sfdefault}{m}{sl}
\SetMathAlphabet{\mathsfit}{bold}{\encodingdefault}{\sfdefault}{bx}{n}


\def\gG{{\mathcal{G}}}


\def\adjmatrix{{\mathbf{\mathcal{A}}}}
\def\allocationmatrix{{\mathbf{\mathcal{T}}}}

\def\identity{{\mathbf{\mathcal{I}}}}

\def\sA{{\mathbb{A}}}

\def\sD{{\mathbb{D}}}
\def\sE{{\mathbb{E}}}
\def\sF{{\mathbb{F}}}

\def\sH{{\mathbb{H}}}

\def\sM{{\mathbb{M}}}
\def\sN{{\mathbb{N}}}

\def\sT{{\mathbb{T}}}
\def\sU{{\mathbb{U}}}
\def\sV{{\mathbb{V}}}

\def\sX{{\mathbb{X}}}

\def\emLambda{{\Lambda}}
\def\emAdjacency{{\mathcal{A}}}

\def\emD{{D}}

\def\emH{{H}}

\def\emS{{S}}

\def\emW{{W}}







\newcommand{\R}{\mathbb{R}}


\newcommand{\normltwo}{L_2}

\newcommand{\normmax}{L_\infty}
\newcommand{\sfilter}{\mathbf{\mathfrak{s}}}
\newcommand{\xfilter}{\mathbf{\mathfrak{s}}}


\usepackage[pdffitwindow=false,
pdfview=FitH,
pdfstartview=FitH,
pagebackref=true,
breaklinks=true,
\ifColor
colorlinks,
\fi
bookmarks=true,
plainpages=false]{hyperref}

\DeclareRobustCommand{\[}{\begin{equation}}
\DeclareRobustCommand{\]}{\end{equation}}

\usepackage[inline]{enumitem}
\newlist{inlinelist}{enumerate*}{1}
\setlist*[inlinelist,1]{label=\roman*),itemjoin={{, }},itemjoin*={{, and }}}

\usepackage{etoc}
\usepackage{titlesec}
\theoremstyle{definition}
\newtheorem{definition}{Definition}[section]

\usepackage[utf8]{inputenc}
\usepackage{pgfplots}
\usepackage{adjustbox}
\usepackage{framed}

\DeclareUnicodeCharacter{2212}{−}
\usepgfplotslibrary{groupplots,dateplot}
\usetikzlibrary{patterns,shapes.arrows}
\pgfplotsset{compat=newest}
\usepackage{svg}

\usetikzlibrary{decorations.markings}
\tikzset{
    set arrow inside/.code={\pgfqkeys{/tikz/arrow inside}{#1}},
    set arrow inside={end/.initial=>, opt/.initial=},
    /pgf/decoration/Mark/.style={
        mark/.expanded=at position #1 with
        {
            \noexpand\arrow[\pgfkeysvalueof{/tikz/arrow inside/opt}]{\pgfkeysvalueof{/tikz/arrow inside/end}}
        }
    },
    arrow inside/.style 2 args={
        set arrow inside={#1},
        postaction={
            decorate,decoration={
                markings,Mark/.list={#2}
            }
        }
    },
}

\definecolor{imtatlantique}{rgb}{0.005,0.715,0.867}

\allowdisplaybreaks

\begin{document}
\setlength{\parindent}{2em}
\setlength{\parskip}{1em}
\newlength{\figwidth}
\setlength{\figwidth}{26pc}
\newlength{\notationgap}
\setlength{\notationgap}{1pc}

\frontmatter

\includepdf{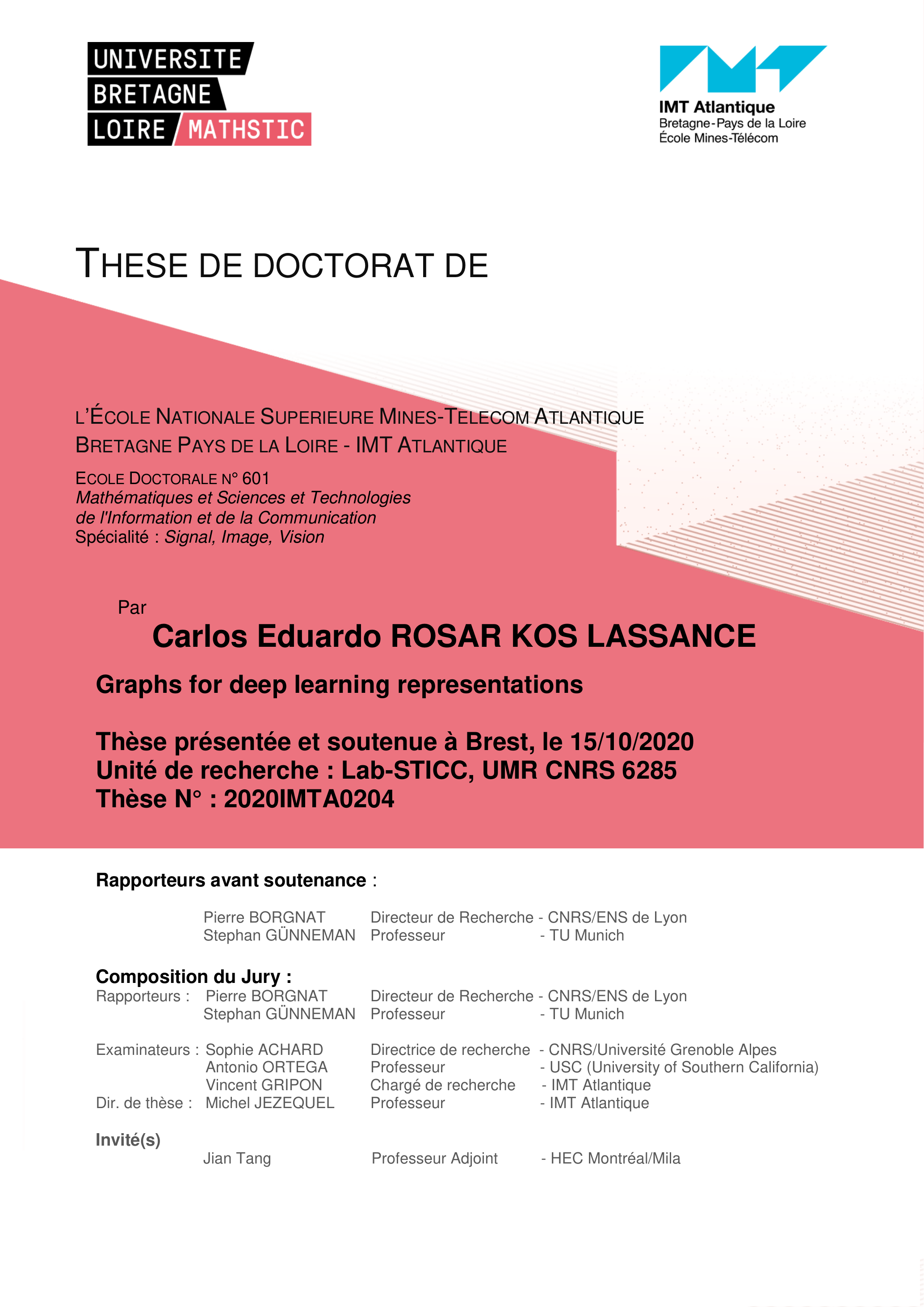}
\tableofcontents


\chapter*{Acknowledgements}\addcontentsline{toc}{chapter}{Acknowledgements}\markboth{\MakeUppercase{Acknowledgements}}{\MakeUppercase{Acknowledgements}}
This is for me the most important part of this thesis, as it would not be possible to do this alone. I can only hope to acknowledge everyone that helped me in this pursuit, but for the ones that are not cited here, it is 100\% my fault as I have left this part to the very end of my writing and I hope you forgive me.

First, I have to thank my whole family. My parents (Jacqueline ROSAR KOS LASSANCE and Carlos Alberto KOS LASSANCE JUNIOR) that supported me in all my decisions and that greatly invested in my education, both academic/formal but most importantly on my social/informal education. I would not be anywhere near where I am now without you. 

The same could be said about my brother (Luiz Carlos BANDEIRA LASSANCE) and sisters (Tatiana BANDEIRA LASSANCE and Patricia BANDEIRA LASSANCE BURNS), without you in my life I would not be able to develop in the person that I am now.

Second, I have to thank my supervisor (Vincent GRIPON) and my thesis director (Michel JEZEQUEL). Vincent teached me more than I could help to understand and both gave me the full support I needed to develop my thesis in the appropriate time. I also need to thank them for their patience, when I was down and thought that I was not doing any work. I can only hope to be able to pay this forward to the next generation.

Third, I have to thank my PUC-Rio professor (Sergio LIFSCHITZ) for all the things he teached me during my time at PUC-Rio, for nudging me into considering Télécom Bretagne (now IMT Atlantique) for my double-degree and for his friendship. 

In this vein, I also need to thank all my co-authors for the invaluable discussions and for improving my knowledge of the field. The same has to be said about my lab-mates who had to endure endless hours of me nagging about the most innocous things. 

I also have to thank my international collaborators. First Antonio ORTEGA from USC, who was almost a second supervisor during my thesis and helped immensely to advance on my thesis and my writing. Second, Jian TANG from Mila and HEC-Montréal, who took me under an one year internship and helped me develop a lot of skills in the graph neural network domain. Third, Yasir LATIF and Ravi GARG from University of Adelaide that teached me a lot on the domain of VBL (Visual-based localization) and allowed me extend my competences to another field of Deep Learning. Finally, Gonzalo MATEOS from University of Rochester, who helped me a lot with graph inference and took me under a one month visit of his lab.

To all my friends, everywhere in the world, who helped me not obsess about the thesis and to be able to arrive here at the end.

I also have to thank all the members of the jury for accepting to be a part of this process. I was eager to read their comments and discuss the work with them and it was with major excitement that I read their reports and answered their questions during my thesis defense.

Finally, I feel that I have to acknowledge Claude BERROU for believing in my capabilities and allowing me to work with him in the context of Deep Learning when I knew almost nothing of the domain. I would have never worked in the field if not for this.

\etocsettocstyle{}{} 
\chapter*{Résumé}\addcontentsline{toc}{chapter}{Résumé}\markboth{\MakeUppercase{Résumé}}{\MakeUppercase{Résumé}}
\localtableofcontents

\section*{Abstract}\addcontentsline{toc}{section}{Abstract}

Ces dernières années, les méthodes d'apprentissage profond ont atteint l'état de l'art dans une vaste gamme de tâches d'apprentissage automatique, y compris la classification d'images et la traduction automatique. Ces architectures sont assemblées pour résoudre des tâches d'apprentissage automatique de bout en bout. Afin d'atteindre des performances de haut niveau, ces architectures nécessitent souvent d'un très grand nombre de paramètres. Les conséquences indésirables sont multiples, et pour y remédier, il est souhaitable de pouvoir compreendre ce qui se passe à l'intérieur des architectures d'apprentissage profond. Il est difficile de le faire en raison de: \begin{inlinelist} \item la dimension élevée des représentations \item la stochasticité du processus de formation.\end{inlinelist} Dans cette thèse, nous étudions ces architectures en introduisant un formalisme à base de graphes, s'appuyant notamment sur les récents progrès du traitement de signaux sur graphe (TSG). À savoir, nous utilisons des graphes pour représenter les espaces latents des réseaux neuronaux profonds. Nous montrons que ce formalisme des graphes nous permet de répondre à diverses questions, notamment: \begin{inlinelist} \item mesurer des capacités de généralisation \item réduire la quantité de des choix arbitraires dans la conception du processus d'apprentissage \item améliorer la robustesse aux petites perturbations ajoutées sur les entrées \item réduire la complexité des calculs. \end{inlinelist}

\section*{Introduction}\addcontentsline{toc}{section}{Introduction}

Ces dernières années, les réseaux de neurones profonds (DNN) ont explosé en popularité, créant un nouveau domaine appelé «Apprentissage Profond»~\citep{dlbook}. Si le concept de réseaux de neurones~\citep{rosenblatt1958perceptron} et les DNN~\citep{rumelhart1986learning} sont tous deux assez anciens, ils n'ont commencé à gagner en popularité que ces dernières années. Ce changement est dû aux deux avancées en matériel, spécialement les cartes graphiques (GPU)~\citep{hacene2019processing} et aux premières victoires dans les défis de vision par ordinateur comme AlexNet~\citep{krizhevsky2012imagenet} gagnant le LSRVC 2012- Imagenet~\citep{russakovsky2015imagenet} et DanNet~\citep{cirecsan2013mitosis} remportant le «Contest on Mitosis Detection in Breast Cancer Histological Images»~\citep{roux2013mitosis}. La figure~\ref{resu:fig:example_alexnet_dannet} présente des exemples d'images issues de ces concours. 

\begin{figure}[ht]
\begin{center}
\includegraphics[width=0.45\columnwidth,height=0.45\columnwidth]{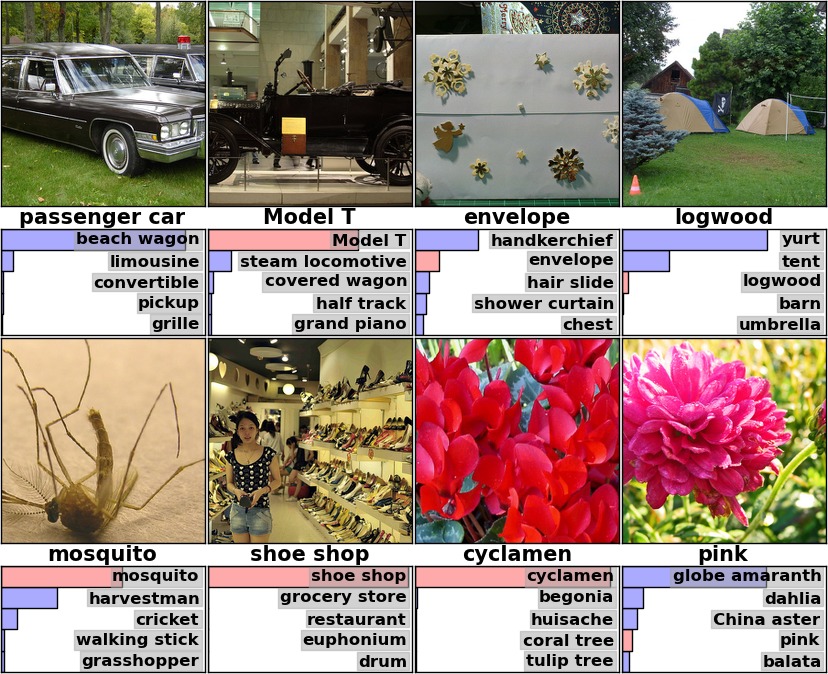}
\includegraphics[width=0.45\columnwidth,height=0.45\columnwidth]{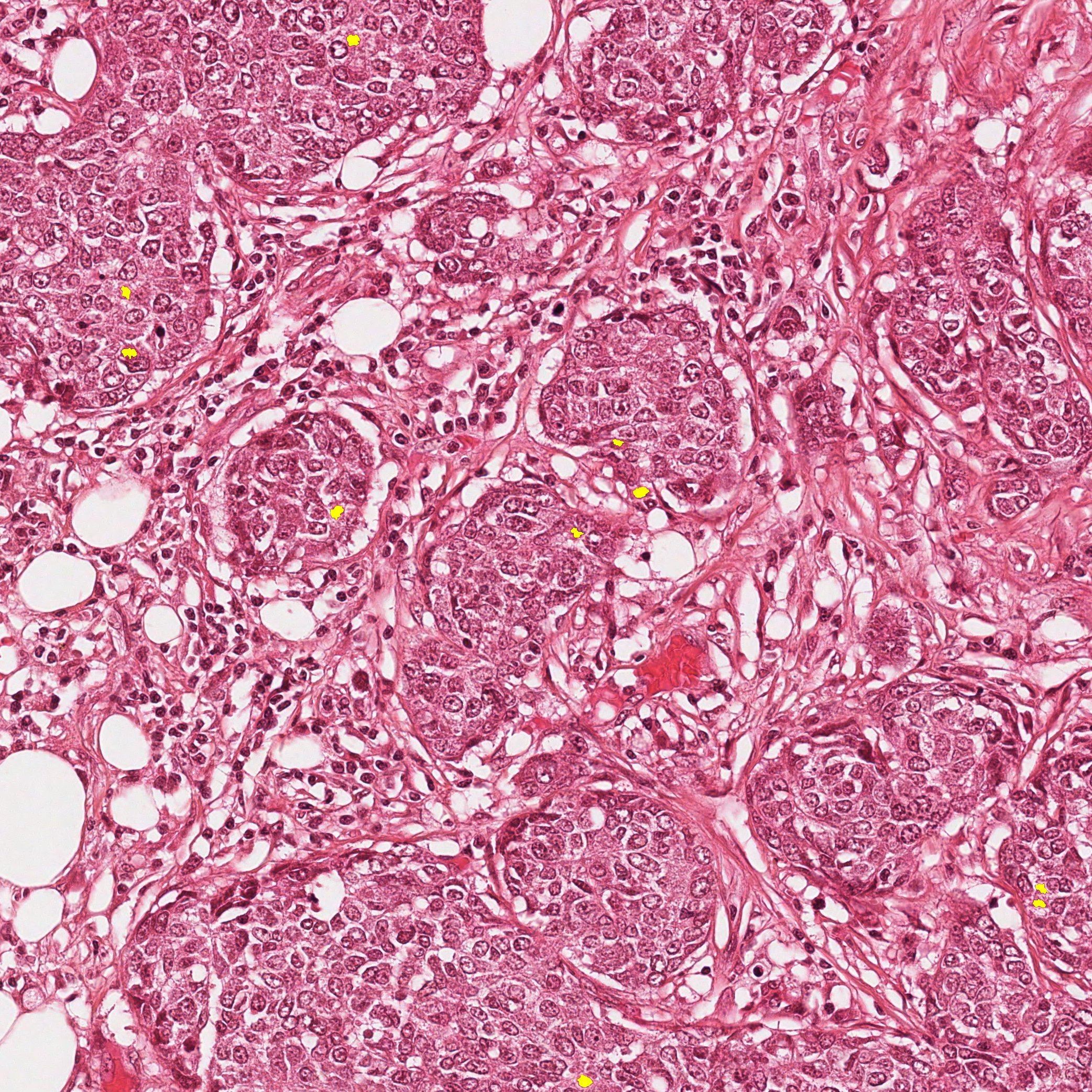}
\end{center}
\caption{\textbf{Gauche:} Huit images du «test set» de ILSVRC-2010 et les cinq étiquettes les plus probables par AlexNet. L'étiquette correcte est écrite au dessous de chaque image, et la probabilité que le réseau a assigné pour la bonne étiquette est montrée par la barre rouge (seulement si la bonne étiquette est dans le top5). Cette image est tirée de~\citep{krizhevsky2012imagenet} @2012 Neural Information Processing Systems Foundation. \textbf{Droite:} Image tirée de~\citep{roux2013mitosis} pour la détéction de mitose, où les parties jaunes sont les parties que le DNN doit détécter}\label{resu:fig:example_alexnet_dannet}
\end{figure}

Ces réseaux ont été construits sur la base de deux grands principes : \begin{inlinelist} \item un a-priori convolutif \item des représentations hiérarchiques apprises par rétropropagation du gradient d'erreur\end{inlinelist}. Le premier guide la forme de base du réseau, afin d'imposer l'invariance aux translations et le partage des poids. Le second assure que c'est à la méthode d'optimisation d'adapter le réseau à un extracteur de caractéristiques suivi d'un classificateur, sans contrôle spécifique de l'évolution de ce dernier. En effet, dans~\citep{lecun1995convolutional} les auteurs disent : «A potentially more interesting scheme is to eliminate the feature extractor, feeding the network with ``raw'' inputs (e.g. normalized images), and to rely on backpropagation to turn the first few layers into an appropriate feature extractor». Par conséquent, nous pouvons considérer le traitement d'une entrée dans un DNN comme la génération d'une séquence de \textbf{représentations intermédiaires} qui font partie des \textbf{espaces latents} du DNN.

Pour mieux comprendre ce que nous appelons un DNN et les représentations intermédiaires, nous allons illustrer les DNN dans la figure~\ref{resu:fig:example_architectures} et les représentations intermédiaires dans la figure~\ref{resu:fig:visualization}. Notons comment les représentations intermédiaires sont adaptées à diverses résolutions et concepts abstraits~\citep{zeiler2014visualizing}, par exemple dans la figure~\ref{resu:fig:visualization} nous pouvons dire que la couche 2 est spécialisée pour détécter des coins et des bords tandis que la couche 5 est spécialisée pour des objets entiers. 
\begin{figure}[ht]
    \begin{center}
    \includegraphics[height=0.5\textheight]{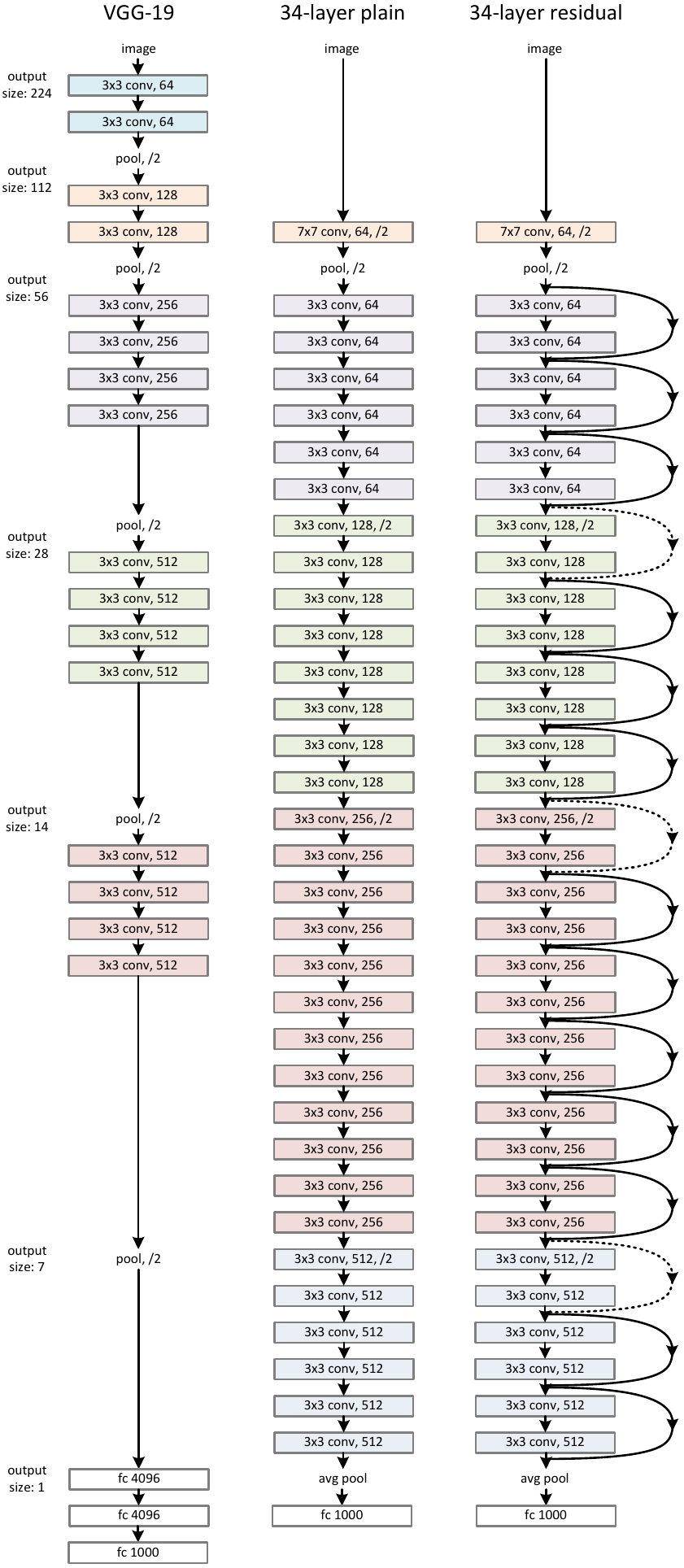}
    \caption{Examples de architectures de DNNs. \textbf{Gauche}: VGG-19~\citep{simonyan2014very} (19.6 milliards de FLOPs). \textbf{Centre}: un réseau avec 34 couches (3.6 milliards de FLOPs). \textbf{Droite}: un réseau résiduel~\citep{he2016deep} avec 34 couches (3.6 milliards FLOPs). Image retiré de~\citep{he2016deep} ©2016 IEEE.}
    \label{resu:fig:example_architectures}
    \end{center}
\end{figure}
            
\begin{figure}[ht]
    \begin{center}
      \begin{subfigure}[ht]{.49\linewidth}
        \includegraphics[width=\linewidth]{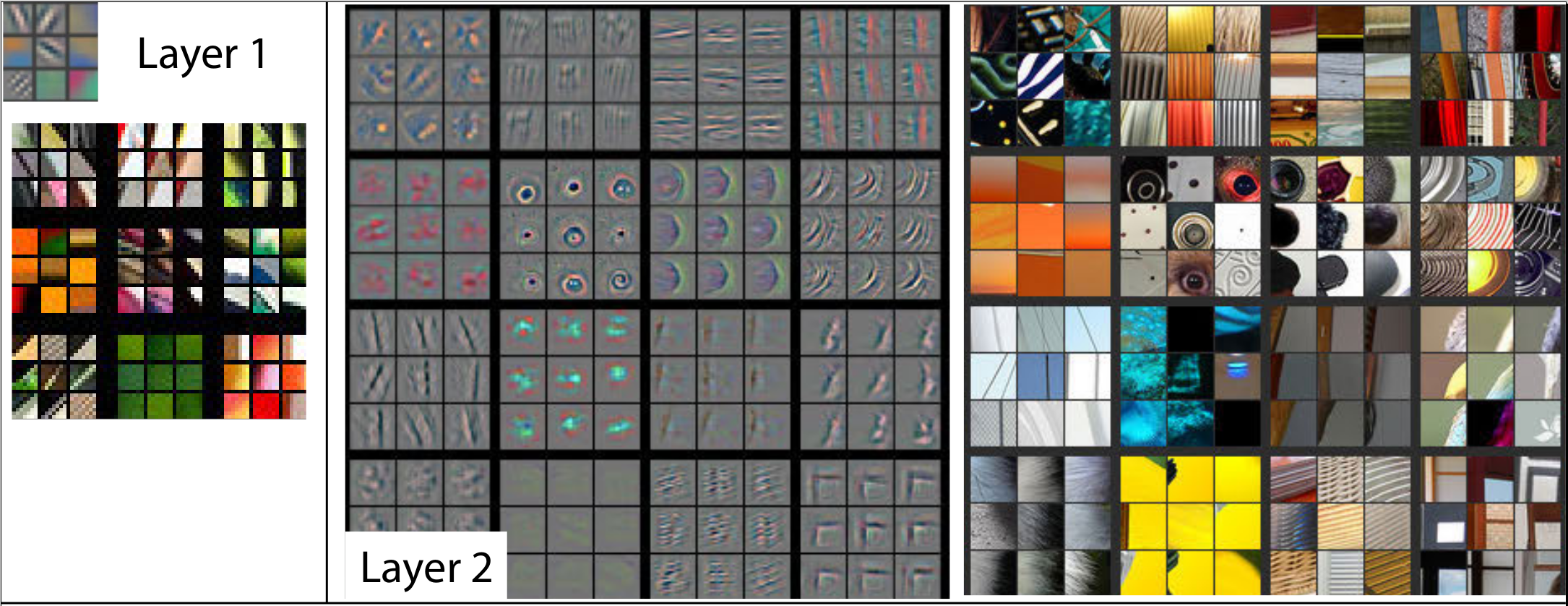}
      \end{subfigure}
      \begin{subfigure}[ht]{.49\linewidth}
        \includegraphics[width=\linewidth]{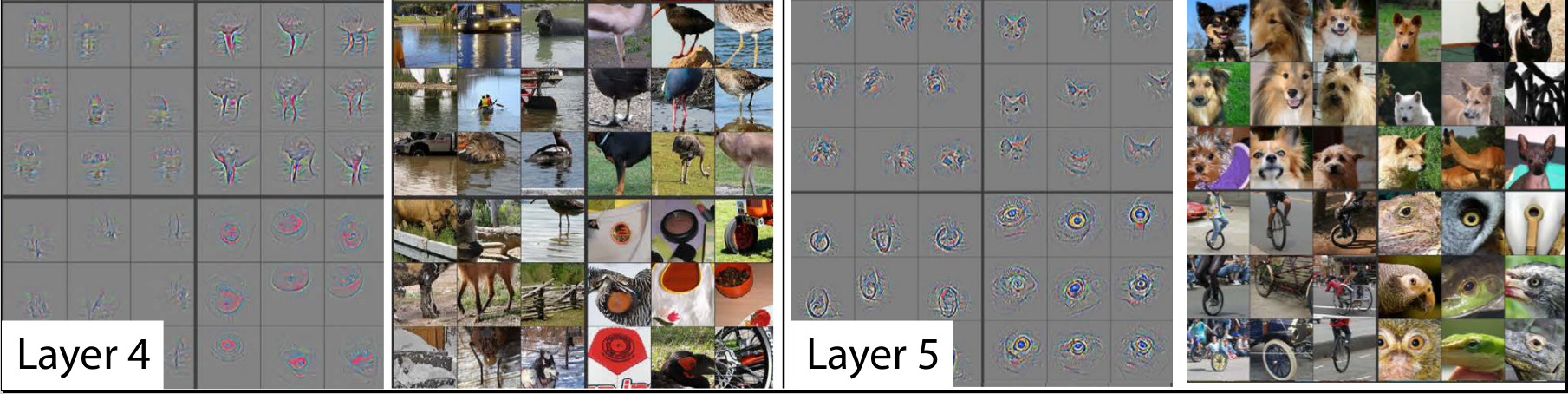}
      \end{subfigure}
      \caption{Visualisation des caractéristiques dans un modèle entrainé. Pour les couches (2,4,5), les 9 principales activations sont représentées à partir d'un sous-ensemble aléatoire de cartes d'éléments dans les données de validation, projetées dans l'espace des pixels en utilisant une approche de réseau déconvolutif. Les reconstructions sont des échantillons du modèle : il s'agit de modèles reconstruits à partir de l'ensemble de données de validation qui provoquent des activations élevées dans une carte de caractéristiques donnée. Pour chaque carte de caractéristiques, nous montrons également les patchs d'images correspondants. On note : (i) le regroupement important dans chaque carte de caractéristiques, (ii) une plus grande invariance dans les couches supérieures et (iii) l'exagération des parties discriminantes de l'image. Il est préférable de visualiser l'image sous forme numérique. Image et legende adaptées de~\citep{zeiler2014visualizing}.}\label{resu:fig:visualization}
    \end{center}
  \end{figure}    

\subsection*{Contexte et motivation}

Comme nous l'avons dit précédemment, les architectures d'apprentissage profond sont capables d'atteindre l'état de l'art dans de nombreux défis dans le domaine de l'apprentissage machine. Elles le font parce qu'elles sont capables d'exploiter la quantité colossale d'informations disponibles. Elles sont souvent présentées comme un cas extrême de méthodes basées sur les données, où il n'a pas de connaissance sur la forme de la fonction à trouver. En tant que telles, elles souffrent de quelques incovénients:
\begin{enumerate}
    \item Elles contiennent de nombreux paramètres qui sont réglés à l'aide de routines \\ d'optimisation complexes qui dépendent à la fois de leur initialisation et des données d'entraînement. En conséquence, elles sont souvent déployées comme des boîtes noires associant des entrées à des sorties. Il existe peu de théories capables de fournir des résultats exploitables sur les mécanismes de ces boîtes noires;
    \item Il est tout à fait habituel d'observer une optimalité de pareto entre la complexité des modèles et les performances sur les tâches considérées. Autrement dit, pour atteindre l'état de l'art, les modèles nécessitent un grand nombre de paramètres, de calculs et de mémoire~\citep{hacene2019processing};
    \item Il est juste de dire que les méthodes d'apprentissage profond ont connu un grand succès grâce à leurs performances expérimentales. Les modèles proposés ont connu plusieurs générations de complexification depuis le renouvellement du domaine au début des années 2010. Il existe donc un écart croissant entre ce que la théorie de l'apprentissage profond peut expliquer et ce que les solutions pratiques actuelles mettent en œuvre pour résoudre les problèmes.
\end{enumerate}

\subsubsection*{DNN - une «boîte noire»}

Comme nous l'avons présenté au tout début de ce résumé, le changement de paradigme consistant à passer de caractéristiques et de modèles triés sur le volet à des architectures d'apprentissage profond a été le principe directeur des recherches récentes dans le domaine. Si des caractéristiques conçues par un expert humain sont considérées comme bien maitrisées ou interprétables, les DNN n'ont de leur côté aucun contrôle explicite, ce qui conduit à un très haut degré de liberté et à des solutions qui sont basées à 100\% sur des données. De manière empirique, il a été constaté que les DNN ont tendance à dépasser les performances des systèmes experts au fur et à mesure que la quantité de données disponibles augmente.

Si cela a conduit à diverses améliorations de l'apprentissage machine et a permis à l'apprentissage profond d'être l'état de l'art de la plupart des tâches d'apprentissage machine, cela entraîne divers inconvénients. Par exemple, dans les domaines où l'interprétabilité est essentielle, comme le domaine médical~\citep{miotto2018deep}, la précision des modèles d'apprentissage profond pourrait ne pas être un argument assez fort pour permettre son adoption mondiale. Ouvrir la «boîte noire» et comprendre les mécanismes sous-jacents de chaque étape d'une architecture d'apprentissage profond est un problème encore ouvert que nous abordons partiellement dans cette thèse. 

\subsubsection*{Quantité de paramètres et de calculs}

Un autre problème qui découle de l'utilisation des DNN est la quantité de paramètres et les exigences en calculs. Les physiciens citent fréquemment la célèbre déclaration de von Neumann : ``... with four parameters I can fit an elephant, and with five I can make him wiggle his trunk.''~\citep{dyson2004meeting} pour soutenir que les modèles d'apprentissage machine ont tendance à être sur-paramétrés. D'un autre côté, nous avons maintenant des praticiens de l'apprentissage profond qui dépassent fréquemment les millions et parfois même les milliards de paramètres~\citep{brown2020language}. Il est ainsi très compliqué de gérer la liberté dont disposent les modèles d'apprentissage profond et de faire en sorte qu'ils apprennent le comportement souhaité.

En effet, si les éléments constitutifs des architectures d'apprentissage approfondi sont des fonctions très simples, la quantité totale de paramètres rend très difficile l'interprétation exacte de ce qui est traité dans un DNN. En outre, la complexité de calcul de ces modèles s'est accrue, ce qui nécessite non seulement du matériel spécialisé (comme les GPU) mais aussi une grande consommation d'énergie. L'étude de modèles d'apprentissage profond et la réduction de la quantité de paramètres et de calculs nécessaires sont non seulement nécessaires du point de vue des connaissances sous-jacentes (par exemple, l'interprétabilité et la robustesse), mais aussi en raison de problèmes sociétaux plus complexes tels que le coût environnemental du matériel et de l'énergie nécessaires pour développer et utiliser ces modèles, ainsi que pour rendre les systèmes d'apprentissage profond accessibles à la plupart des gens.

\subsubsection*{Manque de compréhension théorique}

Comme la plupart des recherches dans le domaine de l'apprentissage profond sont fortement axées sur les applications, l'expérimentation et les résultats de référence sont bien souvent la partie la plus importante des articles récents. Bien qu'il soit très important de comparer les différentes méthodes utilisées dans la littérature, en particulier dans un domaine qui évolue aussi rapidement que l'apprentissage approfondi, cela peut présenter plusieurs inconvénients: 
\begin{enumerate}
\item Les améliorations peuvent provenir de petits détails de mise en œuvre plutôt que de la théorie sous-jacente. Considérons par exemple les problèmes de reproductibilité dans le domaine de l'apprentissage du renforcement~\citep{henderson2018deep}, où il est parfois impossible de reproduire un résultat sans regarder l'implémentation directe du code au lieu de regarder simplement l'article;
\item Les améliorations peuvent ne pas être en accord avec la théorie qui leur a été présentée, par exemple les couches de «batch normalization» (BN)~\citep{ioffe2015batch} ont été proposées pour traiter le décalage des covariables («covariate shift»), ce que d'autres chercheurs soutiennent qu'elles ne sont pas adaptées pour traiter~\citep{santurkar2018does, zhang2018residual, de2020batch}, mais les couches BN sont toujours la pierre fondamentale de diverses architectures;
\item Des comparaisons injustes entre les méthodes peuvent se produire en raison de la combinaison de différentes méthodes ou de la référence elle-même. Par exemple, dans l'apprentissage des métriques, il a été démontré que les méthodes traditionnelles peuvent être plus performantes que les méthodes plus récentes si elles sont correctement formées (c'est-à-dire dans des conditions d'égalité avec les méthodes récentes)~\citep{roth2020revisiting}.
\end{enumerate}

\subsection*{Graphes pour représenter les espaces latents des réseaux neuronaux profonds}

En résumé, les lacunes présentées dans les paragraphes précédents peuvent être considérées comme provenant de la principale force de l'apprentissage profond, à savoir : le fait que les modèles sont capables d'utiliser leur grande liberté pour apprendre des fonctions très complexes, alors que le fonctionnement sous-jacent de chaque étape individuelle n'est pas compris ou explicitement contrôlé. Dans cette thèse, nous proposons d'attaquer ces inconvénients en nous concentrant sur l'étude des représentations intermédiaires dans les DNN. En effet, l'étude des représentations intermédiaires peut être considérée comme une «ouverture de la boîte noire des DNN» et vise à mieux comprendre ce qui est traité à chaque étape. Notez que l'étude des représentations intermédiaires n'augmente pas la complexité du calcul (il faut de toute façon les calculer) et il a déjà été démontré qu'elles contiennent des informations importantes, par exemple la compression des DNN par la distillation des connaissances~\citep{hinton2014distillation,romero2015fitnets}.

L'étude des représentations intermédiaires et de l'effet global de chaque couche intermédiaire devrait permettre une compréhension plus fine des différentes propriétés des DNN, telles que la robustesse~\citep{lassance2019robustness} et la généralisation globale~\citep{gripon2018insidelook}. Dans cette thèse, nous nous concentrons sur l'étude des représentations latentes/intermédiaires des DNN. Comme nous l'avons vu dans les paragraphes précédents, les principaux inconvénients de l'apprentissage profond viennent de sa force centrale : le fait qu'il est capable d'exploiter pleinement les données disponibles sans aucune contrainte forte. Cela tend à conduire à des réseaux très complexes où il est difficile de comprendre à quoi chaque partie est destinée, ainsi qu'à des difficultés pour évaluer si la fonction apprise est une bonne approximation de celle qui est visée.

Pour contrer cet inconvénient, nous analysons et proposons de nombreuses méthodes qui exploitent les connaissances intrinsèques des représentations intermédiaires des DNN. Dans la suite de ce document, nous présentons les définitions nécessaires pour comprendre les domaines explorés de l'apprentissage profond et les méthodes que nous proposons. En particulier, nous nous concentrons sur trois de ces domaines \begin{inlinelist} \item apprentissage de la représentation et du transfert des éléments \item compression des architectures \item surapprentissage (généralisation et robustesse). \end{inlinelist} Afin d'effectuer l'analyse nécessaire et de concevoir les méthodes, nous utilisons le cadre du \textbf{traitement de signal sur graphe (TSG)}~\citep{shuman2013emerging}. 

Nous avons choisi le cadre du TSG car il étend l'analyse harmonique traditionnelle aux domaines irréguliers représentés par des graphes. Les graphes présentent un avantage unique dans le domaine de l'apprentissage profond car ils exploitent les relations des données elles-mêmes. Cela est très conforme à la philosophie de l'apprentissage profond où les données sont essentielles. Par conséquent, l'utilisation des graphes nous permet d'étudier les DNN et leurs représentations intermédiaires car elle fournit un support pour les relations qui sont générées à chaque représentation intermédiaire. Cela facilite l'étude des représentations intermédiaires des DNN, car nous pouvons examiner les données et leurs relations au lieu de l'espace irrégulier de grande dimension. 

Afin d'illustrer et de donner une idée de ce que sont les graphes d'une représentation intermédiaire, nous décrivons et illustrons un exemple dans les paragraphes suivants.

\subsubsection*{Exemple: Illustration des graphes de répresentation intermédiaire}

Considérons que nous avons un DNN déjà entrainé sur un ensemble de données. Nous construisons trois graphes de similarité où les sommets correspondent aux échantillons et les arêtes relient les échantillons les plus similaires. Nous le construisons en utilisant un petit sous-ensemble de l'ensemble de données. Le premier graphe utilise les représentations de l'espace initial (l'espace des images) et les deux derniers utilisent les représentations intermédiaires du DNN. Ces représentations proviennent d'une couche intermédiaire et d'une des couches finales. Sur le plan qualitatif, nous nous attendons à ce que les échantillons qui appartiennent à une même classe soient plus faciles à séparer à mesure que nous nous enfonçons dans l'architecture considérée. Ceci serait en accord avec la définition citée de~\citep{lecun1995convolutional}. Nous décrivons cet exemple dans la figure~\ref{resu:fig-example}. Comme prévu, nous pouvons voir qualitativement la différence de séparation entre l'espace image et les espaces latents, comme on peut le constater par la quantité d'arêtes entre les éléments de classes distinctes (nous gardons le même nombre $k$ de voisins sur chaque graphe) et aussi par la séparation géométrique lorsque nous utilisons des «Laplacian eigenmaps»~\cite{belkin2003laplacian} pour placer les différents échantillons dans un espace 2D régulier.

\begin{figure}[ht]
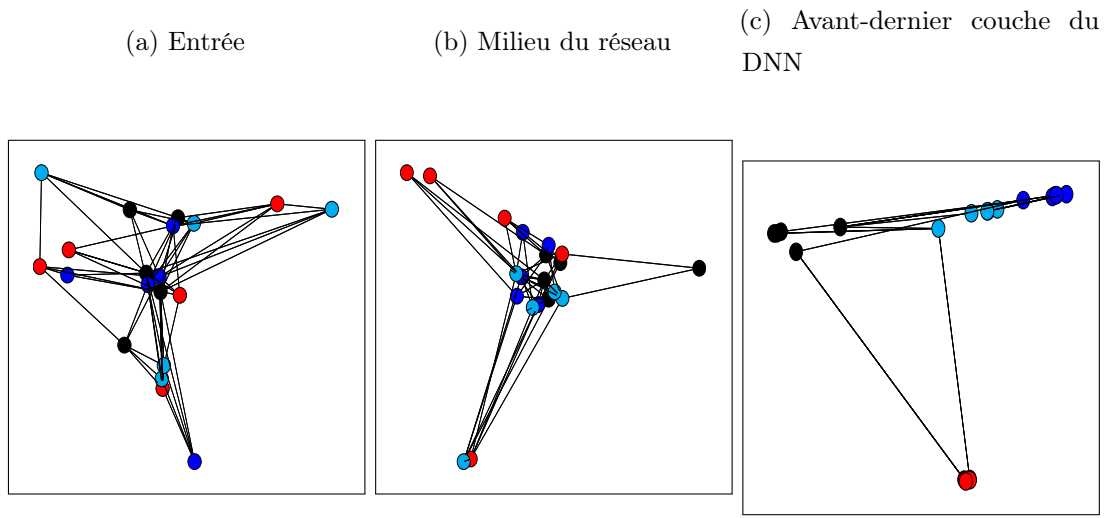

    \begin{center}
        \begin{subfigure}[ht]{0.32\linewidth}
            \centering
            \caption{Entrée}
            \vspace{0.5cm}
            \begin{framed}
                \begin{adjustbox}{width=\linewidth, height=\linewidth}
  \tikzsetnextfilename{chapter1/tikz/example_input}%
  \input{chapter1/tikz/example_input.tex}%

                \end{adjustbox}
            \end{framed}
        \end{subfigure}
        \begin{subfigure}[ht]{0.32\linewidth}
            \centering
            \caption{Milieu du réseau}
            \vspace{0.5cm}
            \begin{framed}
                \begin{adjustbox}{width=\linewidth, height=\linewidth}
  \tikzsetnextfilename{chapter1/tikz/example_middle}%
  \input{chapter1/tikz/example_middle.tex}%

                \end{adjustbox}
            \end{framed}
        \end{subfigure}
        \begin{subfigure}[ht]{0.32\linewidth}
            \centering
            \caption{Avant-dernier couche du DNN}
            \vspace{0.5cm}
            \begin{framed}
                \begin{adjustbox}{width=\linewidth,height=\linewidth}
  \tikzsetnextfilename{chapter1/tikz/example_output}%
  \input{chapter1/tikz/example_output.tex}%

                \end{adjustbox}
            \end{framed}
        \end{subfigure}
    \end{center}
    \caption{Exemple de répresentation sous forme de graphes, de l'espace des entrées (gauche) à l'avant dernière couche du DNN (droite). Les différentes couleurs des sommets représentent la classe de l'objet. Pour faciliter la visualisation, nous ne représentons que les arêtes entre des exemples de classes distinctes. Notez qu'il y a beaucoup plus de bords à l'entrée (a) et que le nombre de bords diminue au fur et à mesure que l'on s'enfonce dans l'architecture (b et c). }
    \label{resu:fig-example}
\end{figure}

Notez que ces représentations illustrent clairement le principe de démêlage dans les réseaux neuronaux profonds. En effet, les DNN peuvent être considérés comme une cascade d'opérations qui transforment l'espace d'entrée dans lequel les échantillons d'une même classe sont mélangés avec d''autres, en espaces latents qui sont progressivement mieux alignés sur la tâche considérée à laquelle le DNN a été entrainé.

\subsection*{Contributions}

Nous considérons que cette thèse comporte trois types de contributions différentes. Premièrement, nous avons cherché à rendre toutes les productions de la thèse aussi ouvertes/libres que possible. Cela nous est très cher, car le but de cette thèse n'était pas de générer une ``application de premier ordre'' ou une preuve de concept, mais d'étudier les orientations de recherche que nous pensons être d'intérêt pour la communauté. Pour atteindre cet objectif, nous avons mis à disposition la plupart de notre production textuelle (par exemple des articles) soit sur des sites d'archives bien connus (tels que \url{arxiv.org}), soit sur les pages web personnelles des auteurs. Nous avons également mis à disposition, lorsque cela était possible, le code utilisé pour les expériences et les preuves de concepts sur le site de contrôle de version \url{github.com}. Nous fournissons une liste des contributions à la fin de cette section.

Deuxièmement, nous avons fait un effort pour communiquer nos résultats et diffuser les connaissances via des présentations, via la conception de cours et via l'enseignement. Nous avons présenté nos résultats lors de diverses conférences et ateliers internationaux et nationaux, afin de promouvoir et de discuter de nos résultats avec la communauté scientifique. Nous avons également consacré une partie de la thèse à l'enseignement et à la conception de cours dans les domaines de la théorie des graphes et de l'apprentissage automatique, y compris les cours ouverts massifs en ligne (MOOC). En fait, au cours de mon doctorat, j'ai eu l'occasion de contribuer à la création de deux cours. Le premier est un MOOC intitulé «Advanced Algorithmics and Graph Theory with Python», disponible sur le platorm EdX et qui a rassemblé plus de 10 000 étudiants de plus de 50 pays depuis son lancement en 2018. Le second est un cours d'introduction à l'IA moderne qui est conçu pour les étudiants de l'IMT Atlantique. Dans les deux cas, j'ai participé à la fois à la conception générale et aux parties techniques des cours.

Enfin, nous nous sommes consacrés à l'étude des technologies qui, selon nous, devraient contribuer à la société dans son ensemble. Prenons par exemple deux des domaines que nous avons étudiés : la compression et la robustesse des DNN. Dans le premier, nous visons à réduire la consommation totale d'énergie (et donc les émissions de carbone) des réseaux neuronaux, indépendamment de la tâche en aval. Dans le second, nous visons à accroître la confiance générale que nous pouvons avoir dans les résultats générés par un DNN.

Dans les paragraphes suivants, nous présentons une liste des contributions de cette thèse :

\begin{itemize}
    \item \bibentry{lassance2018laplacian}
    \item \bibentry{lassance2018matching}
    \item \bibentry{lassance2018predicting}
    \item \bibentry{lassance2019improved}
    \item \bibentry{lassance2019robustness}
    \item \bibentry{lassance2020benchmark}
    \item \bibentry{lassance2020deep}
    \item \bibentry{grelier2018graph}
    \item \bibentry{bontonou2019smoothness}
    \item \bibentry{bontonou2019formalism}
    \item \bibentry{bontonou2019comparing}
\end{itemize}

Dans le suvi de cette résumé, nous décrivons via un sommaire rapide chacun des chapitres de cette thèse.

\section*{Chapitre 2: Concepts en apprentissage profond}\addcontentsline{toc}{section}{Chapitre 2: Concepts en apprentissage profond}

Dans ce premier chapitre, nous présentons les réseaux neuronaux profonds (DNN), en nous concentrant plus particulièrement sur les architectures résiduelles. Nous introduisons également le concept de représentations intermédiaires dans les DNN, qui sera un élément central des chapitres suivants de cette thèse. Ces représentations intermédiaires peuvent être utilisées afin d'effectuer un apprentissage par transfert, tel que présenté dans la section~\ref{chap2:feature_extraction} et également afin d'abstraire les DNN en tant qu'extracteurs de caractéristiques suivis de classificateurs.

Nous présentons divers problèmes pour lesquels les DNN sont pertinents. Ces problèmes vont être étudiés plus en détail dans les prochains chapitres et comprennent les tâches suivantes: \begin{inlinelist} \item localisation basée sur la vision \item classification des images \item classification des tâches neurologiques \item classification des documents \end{inlinelist}. 

Nous faisons également une revue de littérature sur la compression des réseaux de neurones, notamment les méthodes de distillation et les couches de convolution plus efficaces. Nous présentons SAL, une contribution sur le sujet des couches de convolution efficaces, qui a fait l'objet de l'article de conférence suivant:
\begin{itemize}
\item \bibentry{hacene2019attention}
\end{itemize}

En outre, nous introduisons le concept de robustesse d'un classificateur, et nous démontrons empiriquement comment il peut être lié à la capacité de bien fonctionner en présence d'entrées corrompues. Ce concept de robustesse et ses expériences empiriques ont été publiés dans l'article de conférence suivant:
\begin{itemize}
\item \bibentry{lassance2019robustness}
\end{itemize}

\section*{Chapitre 3: Concepts en traitment des signaux sur graphe}\addcontentsline{toc}{section}{Chapitre 3: Concepts en traitment de signaux sur graphe}

Dans ce chapitre, nous introduisons les concepts de graphes et de signaux de graphes, ainsi que les outils nécessaires du cadre du traitement des signaux de graphes (TSG). Ces concepts et outils nous permettent d'analyser les représentations latentes profondes et d'en tirer de nouvelles contributions destinées à la communauté d'apprentissage machine et qui seront présentées dans les chapitres suivants.

Parmi les outils présentés dans cette section figurent la transformée de Fourier sur graphe (GFT) et l'analyse de la fluidité des signaux de graphes. Nous abordons également les méthodes permettant de déduire des graphes à partir de données pour lesquelles la structure de support des graphes n'est pas explicitement disponible, y compris une nouvelle contribution :

\begin{itemize}
  \item \bibentry{lassance2020benchmark}
\end{itemize}

Nous dérivons ensuite des filtres de graphe, qui émulent les filtres traditionnels de traitement du signal dans le domaine des graphes. Ces filtres de graphe seront utilisés pour relier les couches convolutives et les couches convolutives de graphes dans le prochain chapitre. Nous présentons également deux applications de filtres de graphes qui nous permettent de réduire la quantité de bruit des éléments extraits à l'aide de DNN et d'améliorer les performances des tâches en aval, notamment : l'apprentissage avec peu d'exemples ; la classification des images ; la localisation visuelle (VBL) et l'extraction d'images (IR). L'application de localisation visuelle a fait l'objet d'une contribution :
  
  \begin{itemize}
    \item \bibentry{lassance2019improved}
  \end{itemize}

\section*{Chapitre 4: Réseaux de neurones profonds pour des signaux sur graphe}\addcontentsline{toc}{section}{Chapitre 4: Réseaux de neurones profonds pour des signaux sur graphe}

Dans ce chapitre, nous approfondissons le domaine des réseaux neuronaux profonds définis sur des graphes. Nous nous sommes appuyés sur les concepts des chapitres précédents afin de définir les méthodes récentes dans un cadre de filtrage sur graphes unique que nous avons présenté par ordre croissant de complexité dans la section~\ref{chap4:definitions}. Bien que ce cadre ne soit pas exactement nouveau, nous l'avons étendu à d'autres méthodes et avons introduit une discussion sur les inconvénients de ces méthodes.

Nous avons ensuite discuté des applications des DNN définis sur les graphes dans le contexte de la classification supervisée des signaux des graphes dans la section~\ref{chap4:classify_graph_signals}. Nous avons discuté des contributions récentes qui montrent les inconvénients des approches actuelles dans ce domaine et avons ensuite présenté deux de nos contributions. Leur objectif est de combler l'écart entre les convolutions des graphes et les convolutions 2D/3D classiques. Nos deux contributions introduites ont été publiées dans des conférences :
\begin{itemize}
    \item \bibentry{grelier2018graph}
    \item \bibentry{lassance2018matching}
\end{itemize}

Enfin, nous discutons des applications dans le contexte de la classification semi-supervisée des sommets d'un graphe. Nous discutons d'abord du problème de l'évaluation équitable des différentes méthodes GNN sur cette tâche. Bien qu'il ne s'agisse pas d'un problème nouveau dans le domaine, les travaux récents présentent encore les deux écueils les plus courants : \begin{inlinelist} \item l'utilisation d'une seule répartition train/validité/essai dont il a déjà été démontré qu'elle faussait les résultats~\citep{shchur2018pitfalls} \item des expériences ne comparant pas équitablement les méthodes, par exemple, la méthode A est plus performante que la méthode B, mais cela est principalement dû à l'ajout du dropout plutôt qu'à la méthode elle-même\end{inlinelist}. Notez que ces problèmes ne sont pas nécessairement dus à une faute de connaissance ou à une malveillance, mais surtout à un simple problème de quantité des calcul qui seraient nécessaires pour tout exécuter correctement. En effet, nous proposons un cadre afin de résoudre à la fois les problèmes i) et ii), mais nous montrons que nous ne pourrions jamais exécuter la version optimale dans un délai raisonnable. Nous avons ajouté un cadre plus souple et présentons nos résultats sur l'ensemble de données de Cora.

\section*{Chapitre 5: Répresentations latentes de réseaux profonds sur graphe}\addcontentsline{toc}{section}{Chapitre 5: Répresentations latentes de réseaux profonds sur graphe}

Dans ce chapitre nous présentons principalement nos contributions dans le domaine de l'utilisation de graphes pour représenter les espaces latents de réseaux de neurones profonds. Bien que ce domaine ne soit pas très développé, nous espérons que nos contributions pourront apporter un éclairage et permettre de le développer davantage, car nous pensons qu'il y a beaucoup de contributions intéressantes à poursuivre. 

Nous présentons d'abord le travail qui a été le début de notre intérêt pour le domaine~\citep{gripon2018insidelook}, dans lequel les auteurs ont montré qu'il était possible de caractériser différents comportements de DNN en analysant l'évolution de la fluidité d'un signal sur un graphe. Nous nous sommes ensuite appuyés sur ces travaux pour proposer une mesure qui est empiriquement corrélée avec la performance de généralisation des DNN. Cette mesure a fait l'objet d'une contribution à une conférence :

\begin{itemize}
    \item \bibentry{lassance2018predicting}
\end{itemize}

Nous nous sommes ensuite concentrés sur les utilisations possibles de la fluidité du signal sur le graphe pendant la formation des réseaux de neurones. Nous avons d'abord montré que nous sommes capables de former de bons extracteurs de caractéristiques en entraînant le réseau à minimiser la fluidité des signaux des indicateurs d'étiquettes sur les graphes générés par leurs sorties. Cette nouvelle fonction objectif possède trois caractéristiques importantes qui ne sont pas présentes dans l'entropie croisée et nous démontrons à l'aide d'expériences que nous sommes capables d'obtenir des réseaux plus robustes, sans perdre trop de performance de généralisation. Deuxièmement, nous proposons d'utiliser un régularisateur afin de contrôler l'évolution de la fluidité des signaux indicateurs des étiquettes sur les graphes qui sont générés par les représentations intermédiaires des DNN. Nous montrons que ces régulariseurs sont non seulement théoriquement conformes à notre définition de la robustesse (Definition~\ref{chap2:def_robustness}), mais aussi que nous pouvons démontrer empiriquement leur efficacité lorsqu'ils sont comparés (ou ajoutés) à d'autres méthodes dans la littérature. Ces deux utilisations de la fluidité d'un signal sur graphe ont fait l'objet de contributions, l'une à une conférence et l'autre est en cours d'examen dans une revue :

\begin{enumerate}
    \item \bibentry{bontonou2019smoothness}
    \item \bibentry{lassance2018laplacian}
\end{enumerate}

Enfin, nous présentons une méthode qui ne s'appuie pas sur le cadre du TSG, mais qui nous permet d'utiliser le cadre du TSG sur des techniques préalablement définies. En d'autres termes, nous avons spécialisé le cadre RKD en GKD, dont nous avons démontré empiriquement et analytiquement qu'il permettait d'améliorer les performances des réseaux compressés. Ce travail d'introduction a été publié lors d'une conférence :

\begin{itemize}
    \item \bibentry{lassance2020deep}
\end{itemize}

\section*{Conclusion}

L'idée principale que nous avons poursuivie au cours des trois dernières années était de remédier à certaines lacunes des architectures d'apprentissage profond en examinant leurs représentations intermédiaires. Pour effectuer nos analyses, nous avons utilisé le cadre du traitement du signal des graphes, dans lequel les graphes sont utilisés pour représenter la topologie d'un domaine complexe (ici : les espaces latents des architectures d'apprentissage profond). Nous avons considéré les applications d'apprentissage profond dans trois domaines d'apprentissage machine : \begin{inlinelist} \item apprentissage et transfert des représentations \item compression des architectures d'apprentissage profond \item étude du sur-apprentissage (generalisation et robustesse) \end{inlinelist}.
\mainmatter

\chapter{Introduction}\label{chap1}
\localtableofcontents

In recent years, Deep Neural Networks (DNN) have exploded in popularity, creating a new domain called "Deep Learning"~\citep{dlbook}. While both the concept of neural networks~\citep{rosenblatt1958perceptron} and DNNs~\citep{rumelhart1986learning} are quite old, they only started gaining popularity in recent years. This change was due to both advances in hardware, specially Graphic Processing Unit (GPUs)~\citep{hacene2019processing} and with the first victories in computer vision challenges such as AlexNet~\citep{krizhevsky2012imagenet} winning the 2012 CVPR \textbf{L}arge \textbf{S}cale \textbf{V}isual \textbf{R}ecognition \textbf{C}hallenge (LSRVC-Imagenet)~\citep{russakovsky2015imagenet} and DanNet~\citep{cirecsan2013mitosis} winning the Contest on Mitosis Detection in Breast Cancer Histological Images of ICPR 2012~\citep{roux2013mitosis}. Figure~\ref{fig:example_alexnet_dannet} depict example images from these competitions. 

\begin{figure}[ht]
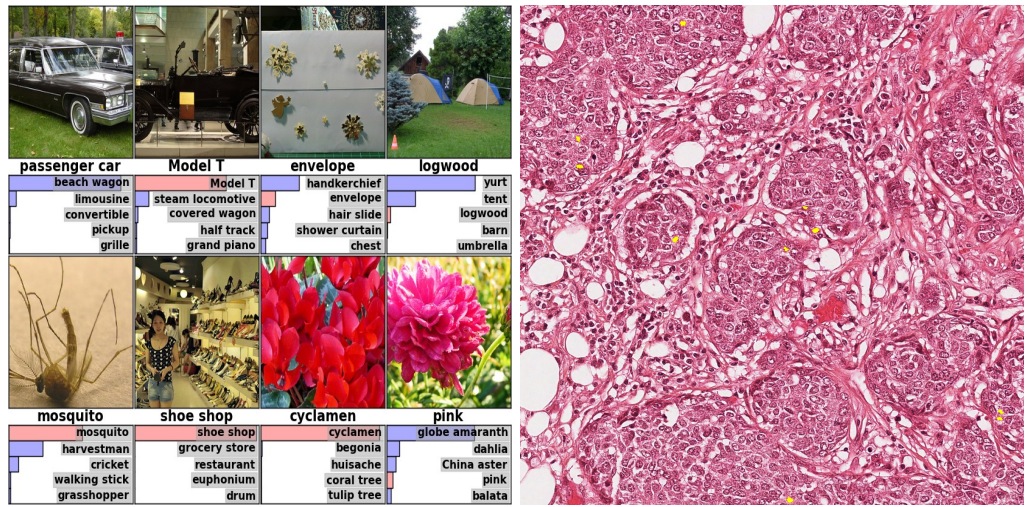

\begin{center}
\includegraphics[width=0.45\columnwidth,height=0.45\columnwidth]{chapter1/AlexNetExamples.jpg}
\includegraphics[width=0.45\columnwidth,height=0.45\columnwidth]{chapter1/DanNetExample.jpg}
\end{center}
\caption{\textbf{Left:}Eight ILSVRC-2010 test images and the five labels considered most probable by AlexNet. The correct label is written under each image, and the probability assigned to the correct label is also shown with a red bar (if it happens to be in the top 5). Image extracted from~\citep{krizhevsky2012imagenet} @2012 Neural Information Processing Systems Foundation. \textbf{Right} Example image from~\citep{roux2013mitosis} for mitosis detection, where the yellow parts represent the parts that should be detected by the DNN}\label{fig:example_alexnet_dannet}
\end{figure}

These networks were built upon two main principles: \begin{inlinelist} \item convolutional priors \item hierarchical backpropagation-learned representations \end{inlinelist}. The former guides the base form of the network, in order to enforce invariance to shifts and weight sharing. The latter informs that the network should receive the input as-is and it is thus the job of the optimization method to adapt the network to a feature extractor followed by a classifier, without any specific control of the network evolution. Indeed, in~\citep{lecun1995convolutional} the authors say: ``A potentially more interesting scheme is to eliminate the feature extractor, feeding the network with ``raw'' inputs (e.g. normalized images), and to rely on backpropagation to turn the first few layers into an appropriate feature extractor". Therefore, we can look at the processing of an input in a DNN as the generation of a sequence of \textbf{intermediate representations} that are part of the \textbf{latent spaces} of the DNN.

To better understand what we call a DNN and an intermediate representation, let us illustrate these concepts. We first depict in Figure~\ref{fig:example_architectures} some typical deep neural networks. Note how they tend to follow a mostly sequential structure, with few shortcuts.

\begin{figure}[ht]
    \begin{center}
    \includegraphics[height=0.5\textheight]{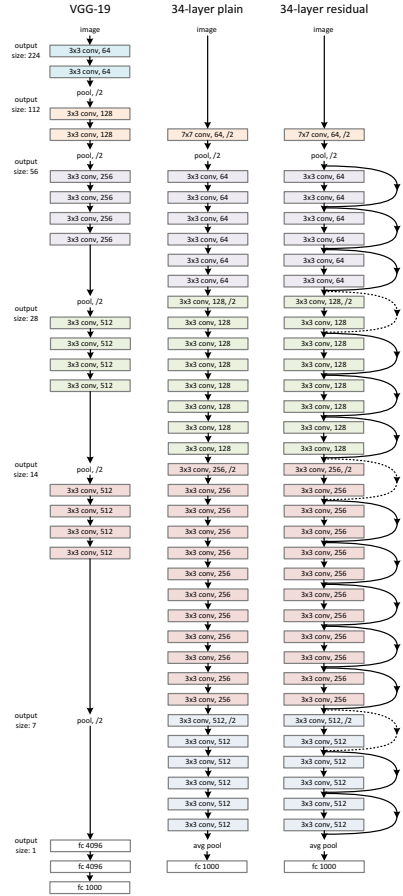}
    \caption{Examples of network architectures. \textbf{Left}: the VGG-19 model~\citep{simonyan2014very} (19.6 billion FLOPs). \textbf{Middle}: a plain network with 34 parameter layers (3.6 billion FLOPs). \textbf{Right}: a residual network~\citep{he2016deep} with 34 parameter layers (3.6 billion FLOPs). Image extracted from~\citep{he2016deep} ©2016 IEEE.}
    \label{fig:example_architectures}
    \end{center}
\end{figure}
            
Now, in Figure~\ref{fig:visualization} we depict the intermediate representation evolution from one layer to the next in the same architecture. Note how the intermediate representations are adapted to various abstract resolutions and concepts~\citep{zeiler2014visualizing}, for example in Figure~\ref{fig:visualization} we can say that layer 2 responds to corners/edges while layer 5 responds to entire objects. 

\begin{figure}[ht]
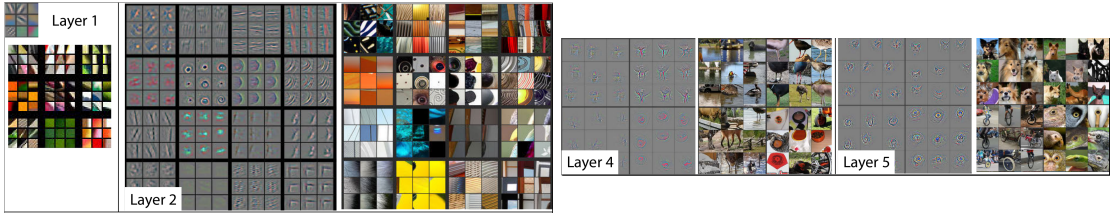

    \begin{center}
      \begin{subfigure}[ht]{.49\linewidth}
        \includegraphics[width=\linewidth]{chapter1/Visualizations1.PNG}
      \end{subfigure}
      \begin{subfigure}[ht]{.49\linewidth}
        \includegraphics[width=\linewidth]{chapter1/Visualizations2.PNG}
      \end{subfigure}
      \caption{Visualization of features in a fully trained model. For layers (2,4,5) the top 9 activations are shown from a random subset of feature maps across the validation data, projected down to pixel space using a deconvolutional network approach. The reconstructions are {\em not} samples from the model: they are reconstructed patterns from the validation set that cause high activations in a given feature map. For each feature map, we also show the corresponding image patches. Note: (i) the strong grouping within each feature map, (ii) greater invariance at higher layers and (iii) exaggeration of discriminative parts of the image. Best viewed in electronic form. Figure and caption adapted from~\citep{zeiler2014visualizing}.}\label{fig:visualization}
    \end{center}
  \end{figure}    

\section{Context and motivation}

As we said in the previous section, deep learning architectures are able to reach state-of-the-art performance in many challenges in the field of machine learning. They do so because they are able to exploit the colossal amount of information contained in the training data. They are often presented as an extreme case of data-driven methods (i.e. a discriminative approach), where very few priors are given about the function to be found.

As such, they suffer from the same shortcomings as most discriminative approaches:
\begin{enumerate}
    \item They contain a lot of parameters that are tuned using complex optimization routines that depend on both their initialization and on the training data. As a consequence, they are often seen as black boxes associating inputs with outputs. There is little theory able to provide exploitable results about the inside of these black boxes;
    \item It is quite usual to observe a pareto optimality between complexity of the models and performance on the considered tasks. Said otherwise, in order to reach state-of-the-art accuracy, models require a huge number of parameters, computations and memory~\citep{hacene2019processing};
    \item It is fair to say that deep learning methods have known a great success thanks to their experimental performance. Proposed models have known several generations of complexifications since the renewal of the domain in the early 2010s. There is therefore an increasing gap between what the theory of deep learning can explain and what current practical solutions implement to solve complex problems.
\end{enumerate}

\subsection{DNN as a black box}

As we have introduced at the very beginning of this document, changing the paradigm from hand-picked features and models to deep learning architectures has been the guiding principle of recent deep learning research. The former is seen as well-behaved or interpretable as the features and models are designed to solve the task in a very specific way. The latter however thrives in the fact that no explicit control is performed\footnote{safe for some priors such as shift-invariance or temporal connections.}, which leads to a very high degree of freedom and to solutions that are 100\% data based. Empirically it has been found that the latter tends to overtake the former as the amount of available data increases.

While this has led to various improvements in machine learning and has allowed deep learning to be the de-facto state-of-the-art of most machine learning tasks, this leads to various drawbacks. For example, in domains where interpretability is key, such as the medical domain~\citep{miotto2018deep}, the accuracy of deep learning models might not be an argument strong enough to allow its global adoption. To ``open'' the black box and understand the underlying mechanisms of each step of a deep learning architecture is still an open problem that we partially tackle in this thesis. 

\subsection{Amount of parameters and computational complexity}

Another problem that arises from the use of DNNs is the amount of parameters and the computational requirements of recent DNN models. Physicists frequently cite the famous von Neumann statement ``... with four parameters I can fit an elephant, and with five I can make him wiggle his trunk.''~\citep{dyson2004meeting} to argue that machine learning models tend to be overparametrized. On the other hand we now have deep learning practicioners frequently surpass the millions and sometimes even billions of parameters~\citep{brown2020language}. Dealing with the amount of freedom that deep learning models have and enforcing that they learn the desired behaviour is very complicated.

Indeed, while the building blocks of deep learning architectures are very simple functions, the total amount of parameters make it very hard to interpret exactly what is being processed inside a DNN. Moreover, the computational complexity of these models have been increasing, which not only requires dedicated hardware (such as GPUs) but a large amount of energy consumption. Studying deep learning models and reducing the amount of parameters and computations needed is not only necessary from an underlying knowledge aspect (e.g. interpretability and robustness) but also due to more complex societal problems such as the environmental cost of material and energy necessary for developing and using those models, as well as making deep learning systems accessible to most.

\subsection{Lack of theoretical understanding}

As most of the research on the domain of deep learning is highly application focused, experimentation and benchmark results are now seen as the most important part of recent papers. While it is very important in order to compare different methods in the literature, especially in a domain that evolves as quickly as deep learning, it may come with several drawbacks: 
\begin{enumerate}
\item Improvements may come from small implementation details instead of the underlying theory. Consider for example the reproducibility concerns in the domain of reinforcement learning~\citep{henderson2018deep}, where sometimes it is not possible to reproduce a result without looking at the direct code implementation instead of just looking at the paper;
\item Improvements may not agree with the theory they were presented with, for example batch normalization (BN)~\citep{ioffe2015batch} layers were proposed to deal with covariate shift, what other researchers argue that they are not suited to address~\citep{santurkar2018does, zhang2018residual, de2020batch}, but the BN layers are still the cornerstone of various architectures;
\item Unfair comparisons between methods may happen due to the combination of different methods or from the benchmark itself. For example, in metric learning it has been shown that traditional methods may outperform more recent methods if they are properly trained (i.e. in equality of conditions with the recent methods)~\citep{roth2020revisiting}.
\end{enumerate}

\subsection{Limitations of deep discriminative models}

In summary, the shortcomings presented in the previous paragraphs can be seen as originating from the main strength of deep learning, that is: the fact that the models are able to use their high amount of freedom to learn very complex functions, while the underlying functioning of each individual step is not understood or explicitly controlled. In this thesis we propose to attack these three drawbacks by concentrating on the study of intermediate representations in DNNs. Indeed, studying the intermediate representations can be seen as ``opening'' the black box of the DNN and aiming to better understand what is being processed at each step. Note that looking at the intermediate representations does not increase the computational complexity (one has to compute them anyway) and it has already been shown that they contain important information, e.g., compression of DNNs via knowledge distillation~\citep{hinton2014distillation,romero2015fitnets}.

Studying the intermediate representations and the overall effect of each intermediate layer should lead to a more fine-grain understanding of different properties of the DNNs, such as robustness~\citep{lassance2019robustness} and overall generalization~\citep{gripon2018insidelook}. In the following paragraphs, we describe how we propose to leverage these representations in order to contribute to deep learning research.

\section{Graphs for deep learning latent representations}

In this thesis, we focus on studying the latent/intermediate representations of DNNs. As we have discussed in the previous paragraphs, the main drawbacks of deep learning come from its central strength: the fact that it is able to fully leverage the available data without any strong constraint. This tends to lead to very complex networks where it is difficult to understand what each part is meant for, as well as difficulties in assessing whether the learned function is a good approximation of the targeted one.

To counter this drawback we analyze and propose many different methods that exploit the intrinsic knowledge from the intermediate representations of DNNs. In the following of this document we present the definitions needed to understand the explored domains of deep learning and the methods we propose. In particular, we focus on three such domains \begin{inlinelist} \item representation and transfer learning \item compression of architectures \item overfitting (generalization and robustness). \end{inlinelist} In order to perform the needed analysis and design the methods we use the framework of \textbf{Graph Signal Processing (GSP)}~\citep{shuman2013emerging}. 

We have chosen the GSP framework as it extends traditional harmonic analysis to irregular domains represented by graphs. Graphs have a unique advantage in the deep learning domain as they exploit the relationships from the data itself. This is very inline with the philosophy of deep learning where data is key. Therefore, using graphs allows us to study the DNNs and intermediate representations as it provides a support for the relationships that are generated at each intermediate representation. This facilitates the study of the intermediate representations of DNNs, as we can look at the data and its relationships instead of the highly-dimensional irregular space.

In order to illustrate and to give an idea of what are intermediate representation graphs, we describe and depict an example in the following paragraphs.

\subsection{Example: Depiction of intermediate representation graphs}

Consider that we have a pre-trained DNN on an image dataset. What does it look like if we create our graph representations in such a scenario? In order to create such a depiction, we construct three similarity graphs where vertices correspond to samples and edges connect samples that are the most similar. We build it by using a small subset of the training dataset. The first graph uses the representations from image space and the latter two use the intermediate representations of the DNN. Such representations come from an intermediate layer and one of the final (end) layers. What we expect to see qualitatively is that the samples that belong to a same class will be easier to separate as we go deeper in the considered architecture. This would be in hand with the quoted definition from~\citep{lecun1995convolutional} in the first page of this thesis. We depict this example in Figure~\ref{fig-example}. As expected, we can qualitatively see the difference in separation from the image space to the latent spaces, as can be noted by the amount of edges between elements of distinct classes (we keep the same number $k$ of neighbors at each graph) and also by the geometric separation when using Laplacian eigenmaps~\cite{belkin2003laplacian} to position the different samples in a regular 2D space.

\begin{figure}[ht]
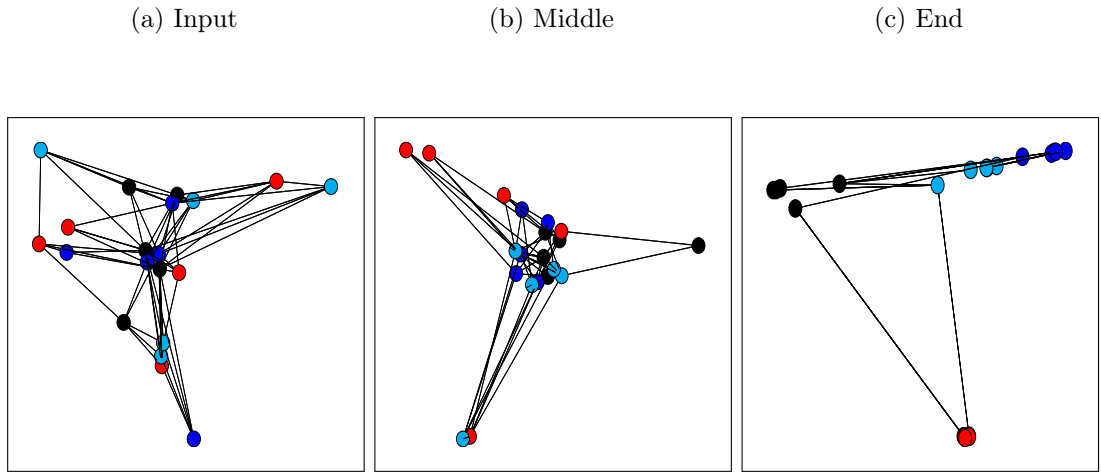

    \begin{center}
        \begin{subfigure}[ht]{0.32\linewidth}
            \centering
            \caption{Input}\label{figure-example-left}
            \vspace{0.5cm}
            \begin{framed}
                \begin{adjustbox}{width=\linewidth, height=\linewidth}
  \tikzsetnextfilename{chapter1/tikz/example_input}%
  \input{chapter1/tikz/example_input.tex}%

                \end{adjustbox}
            \end{framed}
        \end{subfigure}
        \begin{subfigure}[ht]{0.32\linewidth}
            \centering
            \caption{Middle}
            \vspace{0.5cm}
            \begin{framed}
                \begin{adjustbox}{width=\linewidth, height=\linewidth}
  \tikzsetnextfilename{chapter1/tikz/example_middle}%
  \input{chapter1/tikz/example_middle.tex}%

                \end{adjustbox}
            \end{framed}
        \end{subfigure}
        \begin{subfigure}[ht]{0.32\linewidth}
            \centering
            \caption{End}
            \vspace{0.5cm}
            \begin{framed}
                \begin{adjustbox}{width=\linewidth,height=\linewidth}
  \tikzsetnextfilename{chapter1/tikz/example_output}%
  \input{chapter1/tikz/example_output.tex}%

                \end{adjustbox}
            \end{framed}
        \end{subfigure}
    \end{center}
    \caption{Graph representation example, from the input space (left) to the output of the network (right). The different vertex colors represent the class of the object. To help the visualization, we only depict the edges between examples of distinct classes. Note how there are many more edges at the input (a) and how the number of edges decrease as we go deeper in the architecture (b and c). }
    \label{fig-example}
\end{figure}

Note that these representations clearly illustrate the principle of disentangling in deep neural networks. Indeed, DNNs can be thought of as a cascade of operations that smoothly transform the input space in which samples from a same class are likely to be spread among other ones, to latent spaces that progressively are better aligned with the considered task the network is trained upon.

\subsection{Contributions}

We consider this thesis to have three different types of contributions. First, we have seeked to turn all productions of the thesis as open acess/free as possible. This is very dear to us, as the goal of this thesis was not to generate a so called ``top notch application'' or proof-of-concept, but to investigate research directions that we believe to be of interest to the community. To achieve this goal we have made available most of our textual production (e.g. articles) either at well known archival websites (such as \url{arxiv.org}) or at the authors personal webpages. We have also made available, when possible, the code responsible for the experiments and proofs of concepts on the version control site \url{github.com}. We provide a list of contributions at the end of this subsection.

Second, we have done an effort to communicate our results and disseminate knowledge via presentations, course designs and teaching. We have presented our findings in various international and national conferences and workshops, in order to promote and discuss our results with the scientific community. We have also dedicated a part of the thesis to the teaching and course designs in the domains of graph theory and machine learning, including open Massive Online Open Courses (MOOCs). As a matter of fact, during my PhD I had the opportunity to contribute to the creation of two courses. The first one is a MOOC entitled ``Advanced Algorithmics and Graph Theory with Python'' that is available on the EdX platorm and that has gathered more than 10k students from 50+ countries since its launch in 2018. The second one is an introductory course to modern AI that is designed for students at IMT Atlantique. In both cases, I have participated to both the overall design and to the techincal parts of the courses.

Finally, we have dedicated ourselves to the study of technologies that we believe should contribute to society at large. For example consider two of the domains that we have studied: compression and robustness of DNNs. In the former we aim to reduce the total energy consumption (and therefore carbon emissions) of neural networks, independently of the downstream task. In the latter, we aim to increase the general confidence that we can have on the results generated by a DNNs.

In the following paragraphs, we present a list of the academic contributions of this thesis:

\begin{itemize}
    \item \bibentry{lassance2018laplacian}
    \item \bibentry{lassance2018matching}
    \item \bibentry{lassance2018predicting}
    \item \bibentry{lassance2019improved}
    \item \bibentry{lassance2019robustness}
    \item \bibentry{lassance2020benchmark}
    \item \bibentry{lassance2020deep}
    \item \bibentry{grelier2018graph}
    \item \bibentry{bontonou2019smoothness}
    \item \bibentry{bontonou2019formalism}
    \item \bibentry{bontonou2019comparing}
\end{itemize}
  
In addition to these contributions, we are currently working on a book chapter dedicated to the use of graphs to represent latent spaces of DNNs.

\section{Document structure}

Overall this manuscript is divided into four parts. First, we present an introduction of the two main subjects of this thesis: \begin{inlinelist} \item Deep Learning (Chapter~\ref{chap2}) \item Graph Signal Processing (Chapter~\ref{chap3}) \end{inlinelist}. In these two chapters we define concepts that were essential to the thesis in our own words and introduce contributions that are directly linked to either deep learning or GSP. 

Second we introduce and discuss the domain of ``Deep Learning for inputs supported on graphs'' in Chapter~\ref{chap4}. This domain combines both Deep Learning and GSP concepts and studies the application of deep learning methods to inputs that are defined in the graph domain. As we did not have a specific focus on these types of applications during the thesis, we present it more as an overview using a proposed mathematical framework and discuss some applications.

Third, we introduce the main contribution of this thesis, which is the development of the domain of graph based methods for improving deep learning. We do so by studying the intermediate representations of deep neural networks using the formalism from GSP. In contrast with the previous part, we do not need for the inputs to be defined over a graph domain to be able to deploy our methods. In this part we are actually introducing a new domain of research: the study of general intermediate representations of DNNs using the GSP framework.

Finally, we present a summary in Chapter~\ref{conclusion}, including a quick recall of our contributions and the research directions that are now open for future work.

\chapter[Deep Neural Networks]{Classification and feature extraction with Deep Neural Networks}\label{chap2}
\localtableofcontents
\vspace{1.0cm}

In this chapter, we introduce the concepts of classification and feature extraction using Deep Neural Networks (DNNs). This chapter is organized as follows: first in Section~\ref{chap2:definition}, we introduce and define neural networks, then in Section~\ref{chap2:layers}, we introduce the layers used in the scope of this work and in Section~\ref{chap2:datasets} we introduce the datasets considered in this thesis. Finally, we introduce compression tools in Section~\ref{chap2:compression} and robustness definitions in Section~\ref{chap2:robustness}.

\section{Definitions}\label{chap2:definition}

Deep Neural Networks (DNNs) contain the term ``neural'' as they are loosely inspired by the functioning of brain neurons. However, it is fair to say that this inspiration is becoming less important in the recent developments in the field. This is why in this Chapter we adopt a network-based definition of these models.

So let us consider a DNN architecture. Such an architecture is mathematically described by its ``network function'' $f$. We call $f$ ``deep'' as it is obtained through a long cascading sequence of intermediate functions from its input to its output. More precisely, $f$ receives an input tensor $\vx$, which typically represents the pixel values of an image, and outputs a corresponding tensor $f(\vx)$ which dimensions and interpretation depends on the task for which the network was initially designed. 

There exists a lot of ways to obtain deep neural network functions, but the simplest to formalize mathematically consists in a composition of layer functions:
\begin{equation}\label{chap2:compositional_equation}
    f = f^{\ell_\text{max}}\circ f^{\ell_\text{max}-1} \circ \dots \circ f^1.
\end{equation}
Here, each function $f^\ell$, called ``layer function'', is highly constrained. Indeed, each $f^\ell$ is typically defined as a parametrized linear function followed by a non-parametric non-linear function. 

In modern literature, it is rare to encounter such constructions of deep neural network functions. Instead, many authors use residual networks~\citep{he2016deep}. Indeed, residual networks have been demonstrated to reach state-of-the-art performance in many challenges in the context of classification. Residual networks (\textbf{Resnet}) are composed of blocks of layers, as depicted in Figure~\ref{fig:resnet_simple}. Note that even in the case of Resnets, the core idea remains that network functions are built as an assembly of layer functions.

\begin{figure}[ht!]
  \begin{center}
  \tikzsetnextfilename{chapter2/tikz/resnet_simple}%
  \input{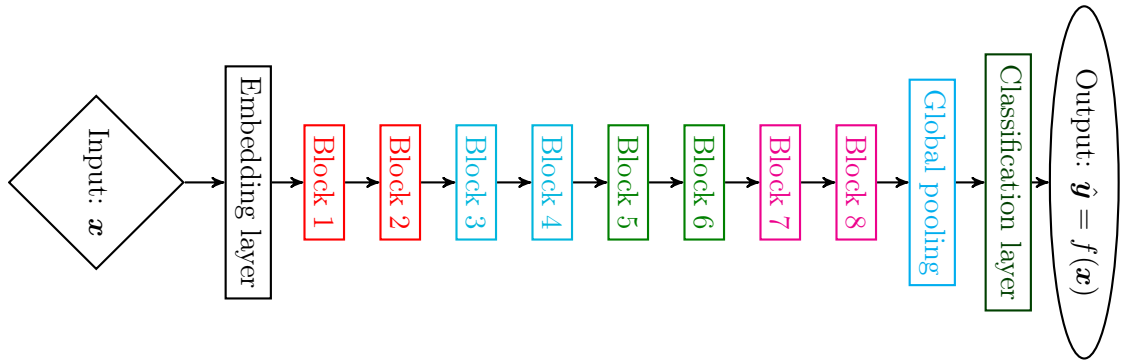}%

    \caption{Simplified depiction of a Resnet with eight residual blocks, divided into four block groups. We depict the residual blocks in Figure~\ref{fig:resnet_blocks}. The color of each block indicates their group block and that they have the same dimensions.}
    \label{fig:resnet_simple}
  \end{center}
\end{figure}

In the literature one can find many types of layers. The most notable ones are \begin{inlinelist} \item Fully connected \item Convolutional \item Pooling \item Normalization \item Graph Convolutional \item Recurrent Neural Network \item Long Short-Term Memory \end{inlinelist}. In Section~\ref{chap2:layers}, we will describe the first four items in more detail; item 5 will be introduced and detailed in Chapter~\ref{chap4}; items 6 and 7 are outside the scope of this work.

The outputs of layer functions are called intermediate representations.

\begin{definition}[intermediate representation]
We call the output of an intermediate function $f^\ell$ an intermediate representation. In other words, in the simple case of architectures that can be written using Equation~\ref{chap2:compositional_equation}, $\vx^\ell$ is the intermediate representation generated by applying the network function $f$ from $f^1$ until $f^\ell$ on $\vx$ where $\ell$ represents the depth in the DNN architecture.
\end{definition}

The goal of the intermediate representations is to capture the internal state of the DNN. In simple architectures that respect Equation~\ref{chap2:compositional_equation}, they also obey the Markov property (i.e., they fully capture the actual state of the input traversing the network and are sufficient to compute the output). Note that in more complex architectures this property does not necessarily holds.

Recall the concept of Resnets that are grouped in blocks of layers. These blocks define splits $f', f'',\dots$. The output of each block is fully characterized by its weights and its inputs. Intermediate representations obtained between blocks are therefore Markovian. However, this property is not valid inside a block, as a residual connection exists. We depict residual network blocks in Figure~\ref{fig:resnet_blocks}.

\begin{figure}[ht!]
      
    \begin{center}
      \begin{adjustbox}{max width=\linewidth}
  \tikzsetnextfilename{chapter2/tikz/resnet_blocks}%
  \input{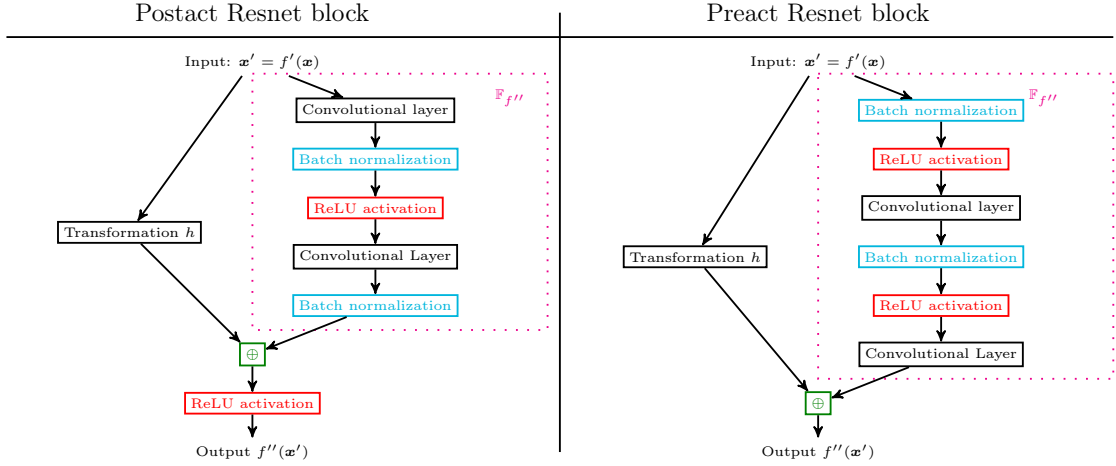}%

      \end{adjustbox}
    \caption{Depiction of the preact and postact residual network blocks proposed in~\citep{he2016deep}. The transformation $h$ ensures that $h(\vx')$ and $\sF_{f''}(\vx')$ have the same dimensions so that we can perform the sum operation between them. $h$ is normally implemented as either the identity operation (if both $\vx'$ and $\sF_{f''}(\vx')$ have the same dimensions) or as a convolution layer if the dimensions differ. In many cases, this convolution has only one parameter for each feature map.}
      \label{fig:resnet_blocks}
    \end{center}
\end{figure}

The function $f$ is characterized using tunable values called \textbf{parameters}. Initially, these parameters are randomly sampled in a distribution $\mathcal{N}$, so that the output of the network function can be interpreted as a random projection of data. In the context of classification, the most used framework is to output a class-wise classification score $\hat{\vy}$. Therefore, in order to train the network function to solve the classification task, it is common to use the \textbf{label indicator vector} $\vy$ associated with the input $\vx$.

\begin{definition}[label indicator vector]\label{chap2:label_indicator_vector}
  A binary vector with as many coordinates as the number of classes in the problem. Only the coordinate corresponding to the class of input $\vx$ is set to 1 while all the other coordinates are zeroed.
\end{definition}

In classification tasks, the goal of a DNN is to correctly classify all the inputs $\vx$ from the domain of possible inputs $\sD$. As it is impossible to collect all the possible inputs for most tasks, we use a subset $\dataset$ that we call dataset. Therefore, the parameters of the DNN are tuned during a learning phase using a dataset $\dataset$ and an objective function $\mathcal{L}$ that measures the discrepancy between the outputs of the network functions and expected label indicator vectors, i.e., discrepancy between $\vy$ and $\hat{\vy}$. We present the datasets used in this thesis in Section~\ref{chap2:datasets}. 
 
Usually, the objective function is a loss function, which is minimized over a subset of the dataset that we call ``\textbf{training set}'' ($\trainset$), composed of training examples $\sX$. This optimization is usually performed using variants of the stochastic gradient descent algorithm~\citep{bottou2010large}. As such, the network function $f$, which is typically composed of a vast number of parameters, is adjusted during the learning phase. We will see in Section~\ref{chap2:compression} exactly how vast is this number of parameters and some techniques aiming at reducing the number of computations and memory needed for both training and inference in neural networks. 

Note that in the context of vision, it is quite common to introduce a \textbf{data augmentation} scheme to go alongside our training.

\begin{definition}[data augmentation]\label{chap2:data_augmentation}
  We define data augmentation as the act of artificially generating new inputs $\vx_{DA}$ to increase the size of $\trainset$ by performing a set of transformations $\sH_{DA}$ on the inputs $\vx \in \trainset$.   
\end{definition}

Data augmentation is indeed an instrumental technique, as it allows us to increase the size of $\trainset$ without the cost of drawing new labeled samples from $\sD$. There are two main types of data augmentation, which we refer to as \textbf{domain-driven} and \textbf{data-driven}. The former uses the knowledge one has over the domain $\sD$ to design transformations $h$ which are known to generate valid new inputs without compromising the nature of their corresponding class. Typical domain-driven data augmentation in image scenarios include randomly removing a small part of the image (also called \emph{random crop}) and horizontal flipping.

Data-driven data augmentation, on the other hand, uses information from the dataset to generate new inputs. Doing so allows us to have a significant advantage as one does not need to be a specialist on the domain $\sD$ to propose the data augmentation scheme. However, it may lead to training on inputs $\vx_{da}$ that are outside of the domain $\sD$ and possibly inputs $\vx_{da}$ that are misclassified. 

Two of the most used data-driven techniques are \emph{autoaugment}~\citep{cubuk2019autoaugment} where one tries various data augmentation schemes at random and keeps the ones that work the best in terms of final accuracy of the model and \emph{mixup}~\citep{zhang2018mixup} where both the input $\vx_1$ and its desired output $\vy_1$ are interpolated with another example which input is $\vx_2$ and output is $\vy_2$ to generate a new input $\vx_{da}$ and its associated output $\vy_{da}$. Note that both data-driven techniques come with drawbacks. Autoaugment still requires some domain knowledge (to design the data augmentation schemes that are tested). On the other hand, mixup may be incompatible with some datasets and requires adapting the objective function. We present some examples of data augmented samples in Table~\ref{chap2:table_mixup} (mixup) and in Figure~\ref{chap2:fig_autoaugment} (autoaugment).

\begin{table}[ht!]
  \begin{center}
  \caption{Examples of mixup based data-augmentation samples. Figures extracted from~\citep{yun2019cutmix} ©2019, IEEE.}
  \begin{tabular}{cccc}
   & $\vx$  &  $\vx_{da}$\citep{zhang2018mixup} & $\vx_{da}$ \citep{yun2019cutmix} \\
  \multirow{4}{*}{Image} &  \multirow{4}{*}{\includegraphics[width=0.185\linewidth]{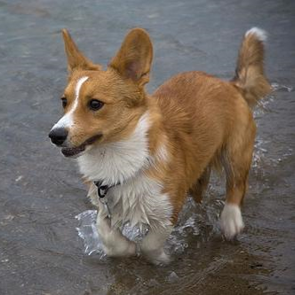}}
   &  \multirow{4}{*}{\includegraphics[width=0.185\linewidth]{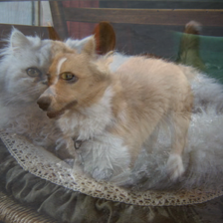}}
   &  \multirow{4}{*}{\includegraphics[width=0.185\linewidth]{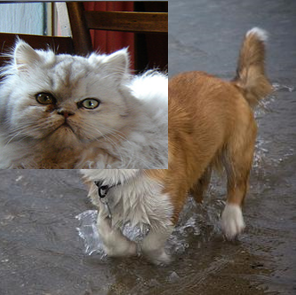}} \\ 
   & & &  \\ 
   & & &  \\
   & & &  \\
   & & &  \\ \midrule
  \multirow{2}{*}{Label indicator vector} 
  &\multirow{2}{*}{Dog 1.0}
  &\multirow{2}{*}{\begin{tabular}[c]{@{}c@{}}Dog 0.5\\ Cat 0.5\end{tabular}}
  &\multirow{2}{*}{\begin{tabular}[c]{@{}c@{}}Dog 0.6\\ Cat 0.4\end{tabular}} \\ 
    & & &  \\  \midrule
  \end{tabular}
  \label{chap2:table_mixup}
\end{center}
\end{table}

\begin{figure}[ht!]
  \centering
  \includegraphics[width=1.0\linewidth]{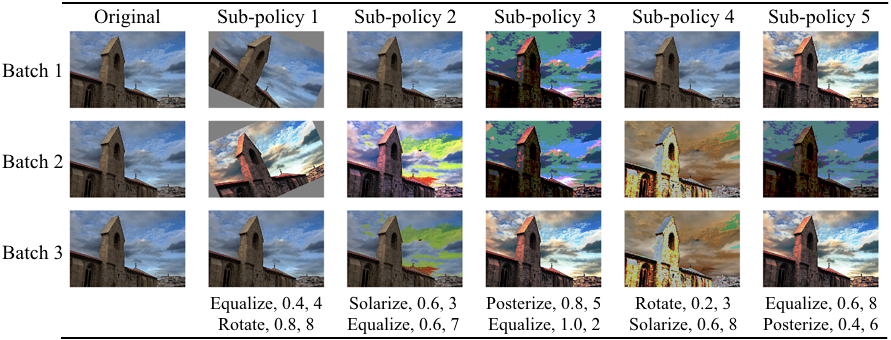}
  \caption{Examples of autoaugment based data-augmentation samples. Figure extracted from~\citep{cubuk2019autoaugment}: ©2019, IEEE.
  }
  \label{chap2:fig_autoaugment}
\end{figure}

After the learning phase comes the \emph{inference phase}. During inference, one first fixes the weights of the network and then evaluates its performance. Note that training and inference phases can be performed iteratively, and most of the time alternate. At the end of the training, we obtain the final architecture with its corresponding weights. This architecture obtains a score (most often the score consists in measuring the accuracy of the model on a dataset that is distinct from the training set).

Most of the time, the whole process of obtaining the final score of an architecture is repeated several times for two main reasons: \begin{inlinelist} \item to verify that the score is robust against a different initialization of the parameters and sampling of the training examples, \item to search for more efficient \textbf{hyperparameters} of the network.\end{inlinelist}

\begin{definition}[hyperparameters]
  Hyperparameters are a set of parameters that are fixed during training. For example, the parameters concerning the architecture of the DNN (e.g., the number of layers in the network) and the training methodology are considered to be hyperparameters of a DNN.
\end{definition} 

During the inference phase there are two types of evaluation we can consider: \begin{inlinelist} \item memorization \item generalization \end{inlinelist}. In the former, the goal is to verify that the network can correctly classify the inputs used during the training phase, i.e., the inputs in $\trainset$. \emph{Generalization}, on the other hand, aims at evaluating the capacity of the network to extrapolate from the seen inputs and correctly classify unseen ones, i.e., not presented during the training. 
  
Generalization is a fundamental property for DNNs, as it is not possible to access the full domain $\sD$. This is why alongside of the training set it also common to define a \textbf{validation set} $\validset$ and a \textbf{test set} $\testset$. The main goal is that all sets come from the same distribution $\dataset$, but do not intersect $\trainset \cap \validset = \varnothing$, $\validset \cap \testset = \varnothing$ and $\trainset \cap \testset = \varnothing$. We illustrate $\sD$ and its relationship with the dataset in Figure~\ref{fig:dataset_domain}. 

\begin{figure}[ht!]
  \begin{center}
    \begin{adjustbox}{max width=\linewidth}
  \tikzsetnextfilename{chapter2/tikz/dataset_domain}%
  \input{chapter2/tikz/dataset_domain.tex}%

    \end{adjustbox}
    \caption{Diagram depicting the relationship between $\sD$, $\dataset$, $\trainset$, $\validset$ and $\testset$. }
    \label{fig:dataset_domain}
  \end{center}
\end{figure}

The \emph{validation set} is to be used during the iterative phases of training and inference. In this case, the validation set helps one make decisions about the architecture and the training process of the network. On the other hand, the \emph{test set} should not be used to influence the decisions during the iterative phases of the process. The test set should serve only as an a posteriori measure of performance.

This setting with validation and test sets has been proposed to avoid the shortcoming of \textbf{overfitting}.

\begin{definition}[overfitting]\label{chap2:overfitting}
  In this work, we define overfitting as a three-fold phenomenon:
  \begin{enumerate}
    \item \emph{Overfit to $\trainset$:} a network that has an excellent performance on memorization, but an inferior one on generalization is said to be overfitted to $\trainset$. Ideally, we would aim for both evaluations having similar performance.
    \item \emph{Overfit to $\validset$:} on the other hand, having an excellent generalization performance to the $\validset$ is not sufficient. By using $\validset$ to define the hyperparameters, the trained network can be biased to $\validset$. For this reason, it is recommended to have two distinct sets for evaluating generalization, one used for tuning hyperparameters ($\validset$) and a second one as an external measure of performance ($\testset$).
    \item \emph{Overfit to $\dataset$:} finally, even if the network shows excellent generalization to both $\validset$ and $\testset$, it could still be overfitted to the dataset, i.e., the network would not generalize to the rest of the domain $\sD$. Another way to say that a network is overfitted to $\dataset$ is to say that it is not robust. We delve into more details on this problem in Section~\ref{chap2:robustness}.
  \end{enumerate}
\end{definition}

Given this definition of overfitting, it is quite surprising that in the recent literature, most works ignore the use of a $\validset$. Instead, they mostly use the union of $\trainset$ and $\validset$ as the de facto training set, and the $\testset$ is used to tune the hyperparameters. Fortunately, this seems not to be biasing the DNNs to the $\testset$ as described in a recent work~\citep{recht2019imagenet} that draws a new $\testset_2$ from the same $\sD$ and shows that the networks that performed the best in $\testset$ also were the best in $\testset_2$. However, this finding only covers the first two definitions of overfitting. 

As the domain $\sD$ grows, it is harder to represent it accurately with the subset $\dataset$. Therefore it is often the case that the construction of the dataset biases neural networks, i.e., they may disregard features that are not present or underrepresented on $\dataset$. We discuss the robustness of neural networks to inputs that are in the domain $\sD$ but are not represented in $\dataset$ in Section~\ref{chap2:robustness}.

\subsection{Residual networks (Resnet)}\label{chap2:subsection_resnet}

There is a vast number of DNN architectures proposed in recent literature~\citep{he2016deep,zagoruyko2016wide,vaswani2017attention,pham2018efficient,tan2019efficientnet}. In this work we mostly use residual networks~\citep{he2016deep}, that we introduce in more detail this subsection. This architecture was chosen for three main reasons:
\begin{enumerate} 
  \item Ease of training: as we are going to introduce in the next paragraphs, residual networks are easy to train and to find the correct hyperparameters;
  \item Standard of the literature: residual networks are used very often in the literature, therefore choosing this architecture allows us an easier and fairer comparison with recent works;
  \item Certified correct implementation: as residual networks are both easy to train and widely used in recent literature, there exist standardized implementations of residual networks for almost all languages and frameworks, removing the possible fail-point that our implementation is incorrect or differs from the works we compare to.
\end{enumerate}

Residual networks are named because they introduce residual connections between layers in the DNN. The main goal of residual connections is to ease the training of deep neural networks. Residual network paths are formed using the original input and a set of sequential layers. The input goes through the set of sequential layers generating an output. This output is then summed with a simple and direct transformation of the original input. This additive path, called residual path, is also commonly called a block. More formally, we define a residual block $f''$ that receives an input $\vx'$ by:
\begin{equation}\label{eq:residualconnection}
  f''(\vx') = g(h(\vx') +  \sF_{f''}(\vx')),
\end{equation} 
where $h$ is a simple transformation that ensures the sum is performed between two tensors with the same dimensions, $g$ is an optional activation function and $\sF_{f''}$ is the set of sequential layers that belongs to block $f''$.

One key interest of residual networks is that a block can easily implement the identity function, and as such deeper architectures can emulate shallower ones. Among other interests, being able to behave as a shallower network eases the training process. In other words, residual DNN can behave like a shallower network at the start of the training in order to be able to warm up, and when their weights are well-conditioned, they can start to use their entire depth and reap the benefits of deeper networks. In the literature, it is not rare to see residual networks with hundreds of layers.

The property of behaving like a shallower network and its influence in training deep residual networks was demonstrated in recent works, where authors~\citep{de2020batch,zhang2018residual} show that one can remove the normalizations of the DNN by starting the network biased for the shallower paths. In Figure~\ref{fig:resnet_blocks_code} we depict the two residual network blocks used in this work (\emph{preact} and \emph{postact} blocks) and the residual connection. Note that these two types of blocks are not the only ones used in the literature and that the preact block is slightly different from the original presentation in~\citep{he2016deep} and in Figure~\ref{fig:resnet_blocks}. This choice was made because most of the recent literature shifted to this design.

\begin{figure}[ht!]
  \begin{center}
    \begin{adjustbox}{max width=\linewidth}
  \tikzsetnextfilename{chapter2/tikz/resnet_blocks_code}%
  \input{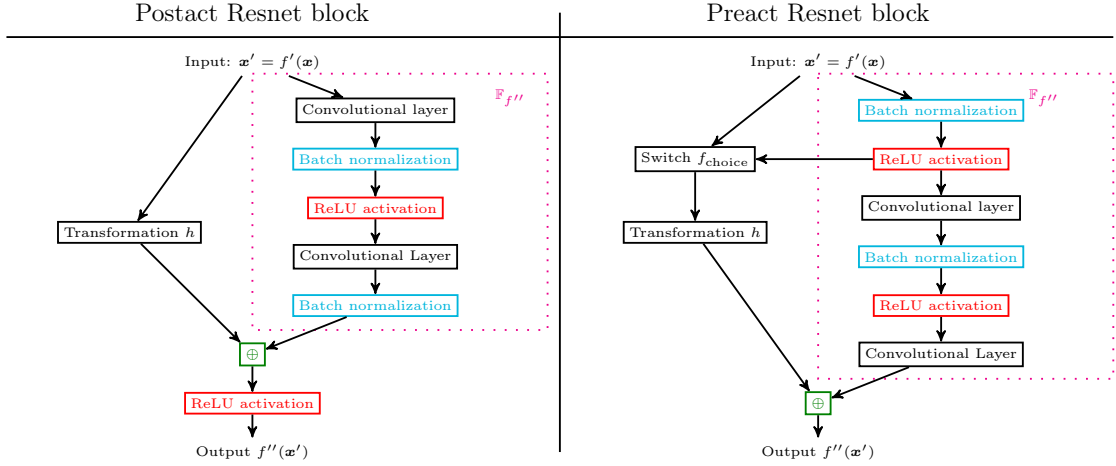}%

    \end{adjustbox}
    \caption{Depiction of the preact and postact residual network blocks used in this work. The switch $f_\text{choice}$ decides the input of $h$ based on the size of the input $\vx'$ and the output from $\sF_{f''}(\vx')$: if both have the same dimensions the direct path is chosen, if not the indirect (the one that starts on the right and then goes to the left) is chosen. We note that the blocks depicted here are different from the ones in Figure~\ref{fig:resnet_blocks} as most works in recent literature shifted to this design.}
    \label{fig:resnet_blocks_code}
  \end{center}
\end{figure}

As shown in Figure~\ref{fig:resnet_simple}, Resnets start with a first convolutional layer to increase the feature map size of the input to $\featuremaps_\text{initial}$, sometimes called embedding layer. The residual blocks then follow the embedding layer. After the residual blocks, it is common to add a pooling layer. This pooling layer is sometimes called global if the 3D intermediate representation ($[\featuremaps,w,h]$) is downsampled to an 1D representation ($[\featuremaps]$). A fully connected layer then follows this downsampling. This FC layer is sometimes called the classification layer. 

To determinate the depth of Resnets and, therefore, their nomenclature, one has to look at the number of blocks of the network and how they are constructed. Note that Resnet blocks do not always treat the same feature map size, and it is quite common to split these blocks into groups, where each block may either end or start with a downsampling operation. We use the nomenclature Resnet$n$-$w_i$ where $n$ is the number of layers, and $w_i$ is a widen factor first defined in WideResnets~\citep{zagoruyko2016wide}. 

The amount of layers $n$ is characterized by the following equation $n= 2 (cd + 1)$, where $c$ is the number of convolutional layers per block, and $d$ is the number of blocks. Unfortunately this notation can be misleading because two Resnet18-$w$ could have very different block configurations (e.g, $[2,2,2,2]$, $[1,3,1,3]$ and $[3,3,2]$ would be valid Resnet18-$w$ configurations). 

It is also standard to define a common feature map amount for all convolutions of the same block group and to define a feature map scaling that doubles at each downsampling operation. We recall that WideResnets~\citep{zagoruyko2016wide} also add a widen factor $w_i$ that is a multiplier to the feature map amounts from the first convolution to the first block, allowing the network to be wider. We note that WideResnets follow a different nomenclature ($n=2 (cd + 2)$). As we try to be consistent on our nomenclature, networks in this work are always presented following $2(cd + 1)$, which may lead to confusion if one is familiarized with WideResnets (e.g., WideResnet28-10 will be presented as WideResnet26-10 in this work).  

In more detail, in this work, we only use Resnets with blocks with two convolutions, as presented in Figure~\ref{fig:resnet_blocks_code} and groups with an equal amount of blocks. We also limit our configurations to have either 3 or 4 groups of blocks. Finally, the residual networks used in this work will use either $\featuremaps_\text{initial}=16$ if there are three groups of blocks and $\featuremaps_\text{initial}=64$ if there are four groups. We depict Resnet18-$w_i$ in Figure~\ref{fig:resnet_18} and Resnet20-$w_i$ in Figure~\ref{fig:resnet_20} to illustrate these architectures as they will be widely used in our experiments.

\begin{figure}[ht!]
  \begin{center}
    \begin{adjustbox}{max width=\linewidth}
  \tikzsetnextfilename{chapter2/tikz/resnet_18}%
  \input{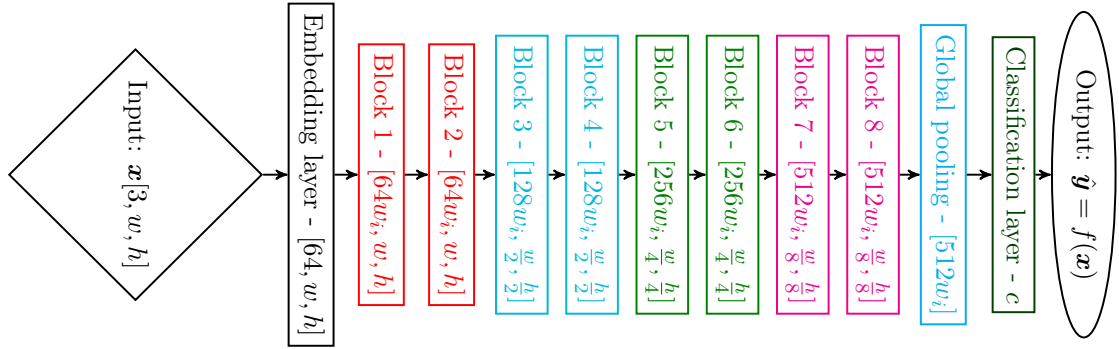}%

    \end{adjustbox}
    \caption{Depiction of the Resnet18-$w_i$ used in this work. We write after the layer/block the dimensions of the output. If the output dimensions of the block differ from the input, the first convolution of the block will be the one responsible for the change, by either increasing the number of output feature maps or performing strided convolutions to reduce width and/or height.}
    \label{fig:resnet_18}
  \end{center}
\end{figure}

\begin{figure}[ht!]
  \begin{center}
    \begin{adjustbox}{max width=\linewidth}
  \tikzsetnextfilename{chapter2/tikz/resnet_20}%
  \input{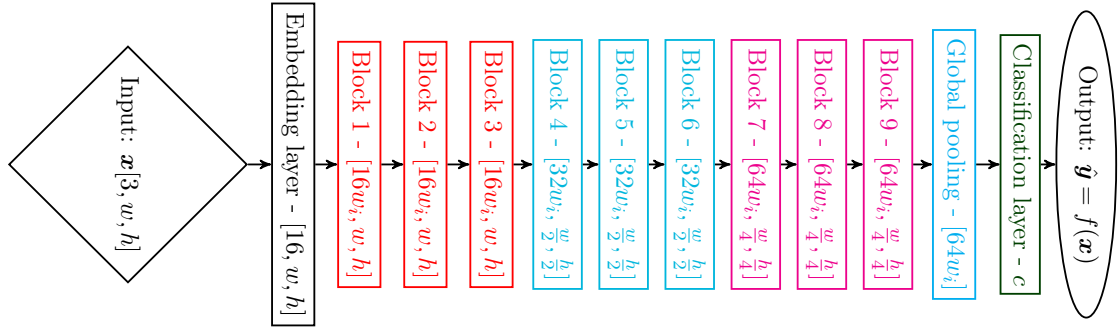}%

    \end{adjustbox}
    \caption{Depiction of the Resnet20-$w_i$ used in this work. We write after the layer/block the dimensions of the output. If the output dimensions of the block differ from the input, the first convolution of the block will be the one responsible for the change, by either increasing the number of output feature maps or performing strided convolutions to reduce width and/or height.}
    \label{fig:resnet_20}
  \end{center}
\end{figure}

\subsection{Feature extraction with DNNs}\label{chap2:feature_extraction}

One of the most significant advantages of DNNs is that they can extract relevant features from the input, as seen in Figure~\ref{fig:visualization}. This property of DNNs seems to be in line with the way biological neural classification operates. Indeed, works such as~\citep{hubel1962receptive} and the more modern~\citep{hansen2007topographic} show that there is a hierarchical organization of transformation from raw images to concepts in the brain. These works were used as a first reasoning basis for DNNs and are the historical reason for terms such as neurons and artificial neural networks. In this work, we try to avoid the word neuron, using the network nomenclature instead, i.e., we use node instead of neuron. 

In this vein, we can also define the network function $f$ by a feature extractor ($\featureextractor$) followed by a a classifier ($\classifier$), such that $f(\vx) = \classifier(\featureextractor(\vx))$. 

As a consequence, it is possible to re-use networks in order to perform \emph{transfer learning}~\citep{tan2018transfersurvey, zhai2019largescale}. In transfer learning we first train a network $f_1$ on a first dataset $\dataset_1$. Then when a subsequent dataset $\dataset_2 \cap \dataset_1 = \varnothing$ appears, one can either re-use the feature extractor $\featureextractor_1$ and train a new classifier $\classifier_2$ or can fine-tune the entire DNN using $f_1$ as a starting point. The former approach is preferable if $|\trainset_2|$ is small~\citep{arandjelovic2016netvlad, mangla2019manifold}, while the latter is used when the $|\trainset_2|$ is sufficiently large, e.g., it is quite common for the state of the art of medium-sized image datasets to be achieved by fine-tuning a network trained using a large image dataset~\citep{alex2019large}. 

The decision of training from scratch or performing transfer learning depends on the cardinality of both $\dataset_1$ and $\dataset_2$ and on the type of classification that is performed. For example, image classification tasks seem to do better with fine-tuning~\citep{alex2019large,zhai2019largescale}, while image retrieval tasks seem to be better with re-using the feature extractor~\citep{arandjelovic2016netvlad,liu2019guided}.

\subsection{Classifiers for DNNs}

Multiple classifiers can be used in DNNs, depending on the goal/task at hand. In this subsection, we introduce some of the most used classifiers for DNNs.

Please note that one does not need to split $\featureextractor$ and $\classifier$ exactly at the last layer of a DNN, as is usually done in the literature. As a matter of fact, it can be beneficial in some cases to have deep classifiers (e.g., ~\citep{alex2019large,zhai2019largescale}).

Throughout this section, we consider given a feature extractor $\featureextractor$, and we introduce various ways to perform downstream classification. Note that when training a DNN end-to-end the classifier choice will directly impact the training of the entire network.

\subsubsection{Logistic Regression (LR)}

The logistic regression is the most used form of the classifier for DNNs. It applies a linear transformation to the input so that the output is of the same dimension as the number of classes in the problem, provided we are facing a classification one. Its goal is to optimize an objective function using a logarithmic model of the probabilities for each class. 

In neural networks, it is common to generate the pseudo-probabilities of logistic regression using the softmax function:
\begin{equation}\label{chap2:softmax_equation}
\hat{\vy}_i = \text{softmax}(\vx')_i \; = \; \frac{e^{\frac{x'_i}{T}}}{\sum_{x'_j \in \vx'}{e^{\frac{x'_j}{T}}}} \;,
\end{equation} 
where $T$ is a temperature parameter set to 1 in most cases, $\hat{\vy}$ is the output of the network, and $\vx'$ is the output of the last layer (commonly called classification layer). The logarithmic model then uses the \textbf{cross-entropy loss} as the objective function $\mathcal{L}$:

\begin{equation}\label{chap2:simplified_ce}
\mathcal{L}_{\text{cross-entropy}} \; = \; -\sum_i{\vy \log \hat{\vy}}\;.
\end{equation}
where $\vy$ is the label indicator vector of $\vx$. We depict three logistic regression classifiers under different values of $T$ in Figure~\ref{chap2:logistic_regression_figure}. Note that higher values of temperature create softer decisions and lower values lead to strict decisions.

\begin{figure}[ht]
  \begin{center}
  \tikzsetnextfilename{chapter2/tikz/logistic_regression}%
  \input{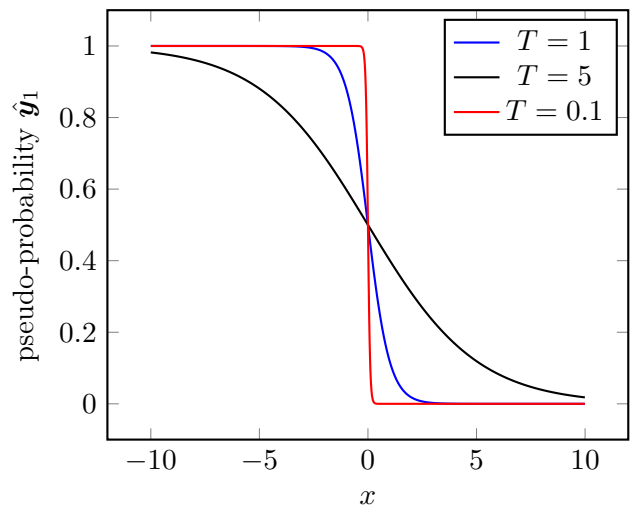}%

    \caption{Logistic regression of a problem with two classes ($x\leq0 \in c_1$ and $x>0 \in c_2$) under different temperature ($T \in \{0.1,1,5\}$) conditions. We depict the output $\hat{y}_1$, and consider that the decision is performed using the $\arg\max$ function.}
    \label{chap2:logistic_regression_figure}
  \end{center}
\end{figure}

\subsubsection{Support Vector Machine (SVM)}

Another widely used classifier in machine learning is the Support Vector Machine~\citep{tang2013svm}. The primary motivation of a SVM is to generate a set of hyperplanes to separate the classes. These hyperplanes may be applied directly to the raw data or after the data has been transformed using a kernel $k$. The latter is also called the kernel-trick, and its main contribution is that it allows one to learn nonlinear classifiers using convex optimization techniques that are guaranteed to converge efficiently~\citep{dlbook}. 

The most used kernel $k$ is the radial basis function (\textbf{RBF}) defined as:
\begin{equation}\label{chap2:eq-rbf}
k(\vx_1,\vx_2) = e^{-\gamma \| \vx_1 - \vx_2 \|^2_2} \; , 
\end{equation}
where $\vx_1$ and $\vx_2$ are two inputs and $\gamma$ is an adjustable parameter. The RBF kernel can also be seen as a dot product in an infinite-dimensional space~\citep{dlbook}. Note that the output of $k(\vx_1, \vx_2)$ is a similarity measure between $\vx_1$ and $\vx_2$.

As the DNN already generates a suitable nonlinear feature extractor, it is more common to apply the hyperplane separators directly. In this case, given $\hat{\vy}$ the output of the network, the objective function $\mathcal{L}_{\text{LinearSVM}}$ is defined as: 

\begin{equation}
  \mathcal{L}_{\text{LinearSVM}} = max(0,m-\vy\hat{\vy})
\end{equation}
where $m$ is the classifier margin, and $\vy$ is a modified binary label indicator vector, where one uses -1 to indicate that it does not belong to a class instead of using 0. In other words, the objective of the network is to output at least $m$ for the coordinate corresponding to the class of the input and at most $-m$ for the other classes. Figure~\ref{chap2:svm_figure} depicts both linear and RBF kernel SVM classifications.

\begin{figure}[ht]
  \begin{center}
  \begin{adjustbox}{max width=\columnwidth}    
  \tikzsetnextfilename{chapter2/tikz/svm}%
  \input{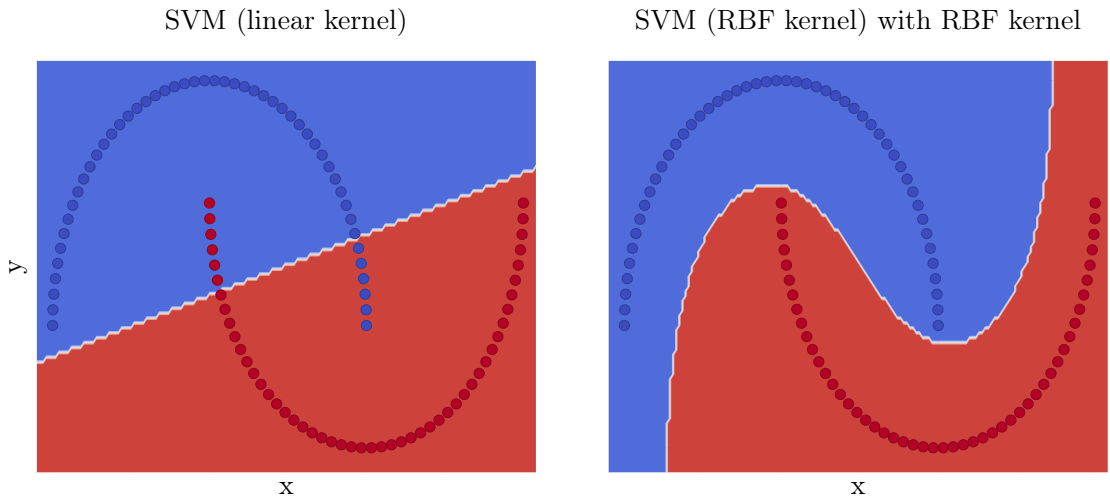}%

  \end{adjustbox}
  \caption{SVM classification applied to the ``two moons'' dataset. The goal is to classify the blue and the red dots. We depict the hyperplanes applied to the raw data (Linear SVM) and using an RBF kernel (RBF SVM). Note how the RBF SVM can completely separate the two classes, while the Linear SVM misclassifies 12 out of the 100 points.}
    \label{chap2:svm_figure}
  \end{center}
\end{figure}

\subsubsection{$k$-Nearest Neighbor classifier ($k$-nn) }

Not all classifiers need to be optimized. Using $k$-nearest neighbors, the idea is to classify an input by looking at the closest training samples in the output domain of the feature extractor. Usually, the classification of an unseen input, commonly called \textbf{query}, is performed using the $k$ closest examples from the training set (also called \textbf{support set} in this scenario) and a majority vote. One of the advantages of not having a training phase for the classifier is that one can quickly create ensemble decisions by adding a classifier per layer, as seen in~\citep{papernot2018deep}. Also, this technique can be easily deployed in the context of streaming data. 

An interesting characteristic of a 1-nn classifier is that it has perfect memorization by definition. On the other hand it will probably have a poor generalization performance. We depict an example of a $1$-nn classifier and a 20-nn classifier in Figure~\ref{chap2:knn_figure}. We note the trade-off between the smoothness of the classifier border and memorization that is displayed in the image. The 1-nn classifier has perfect memorization but a very rough classification border which can sometimes be a problem for generalizing to unseen data, especially when using inputs that could be mislabeled. On the other hand the 20-nn classifier has a very smooth classification border, but fails to correctly classify some examples of the dataset. 
\captionsetup[subfigure]{labelformat=empty}

\begin{figure}[ht]
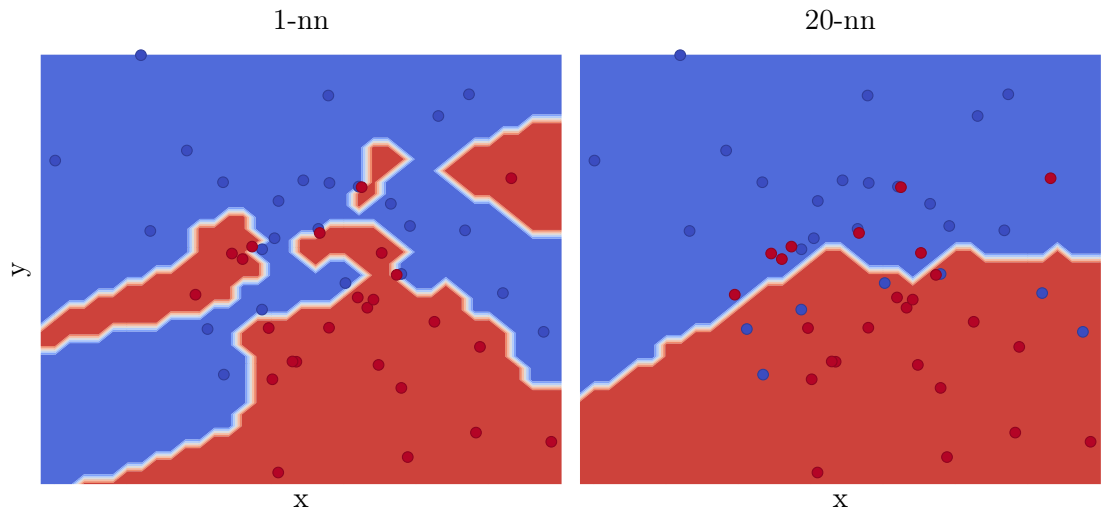

  \begin{center}
    \begin{subfigure}[b]{.5\linewidth}
      \centering
  \tikzsetnextfilename{chapter2/knn/1-nn}%
  \input{chapter2/knn/1-nn.tex}%

    \end{subfigure}%
    \begin{subfigure}[b]{.5\linewidth}
      \centering
  \tikzsetnextfilename{chapter2/knn/20-nn}%
  \input{chapter2/knn/20-nn.tex}%

    \end{subfigure}
    \caption{Example of 1-nn and 20-nn classification. The goal is to classify the blue and the red dots. Note how the 1-nn can completely memorize the dataset, but does so using a very non smooth border which can be a problem for generalizing to unseen data.}
    \label{chap2:knn_figure}
  \end{center}
\end{figure}

\subsubsection{Nearest Centroid Mean classifier (NCM)}

Another example of a classifier that does not need to be training is the NCM or Rocchio classifier~\citep{manning2008introduction}. In this case, one uses the mean representations for each cluster (one or more clusters per class) as the support set for a $k$-nn classifier. A comparison of the $k$-nn classifier and the NCM classifier is depicted in Figure~\ref{chap2:ncm_figure}. Note that the classification border of the NCM classifier is even smoother than that of the 20-nn classifier.

\begin{figure}[ht]
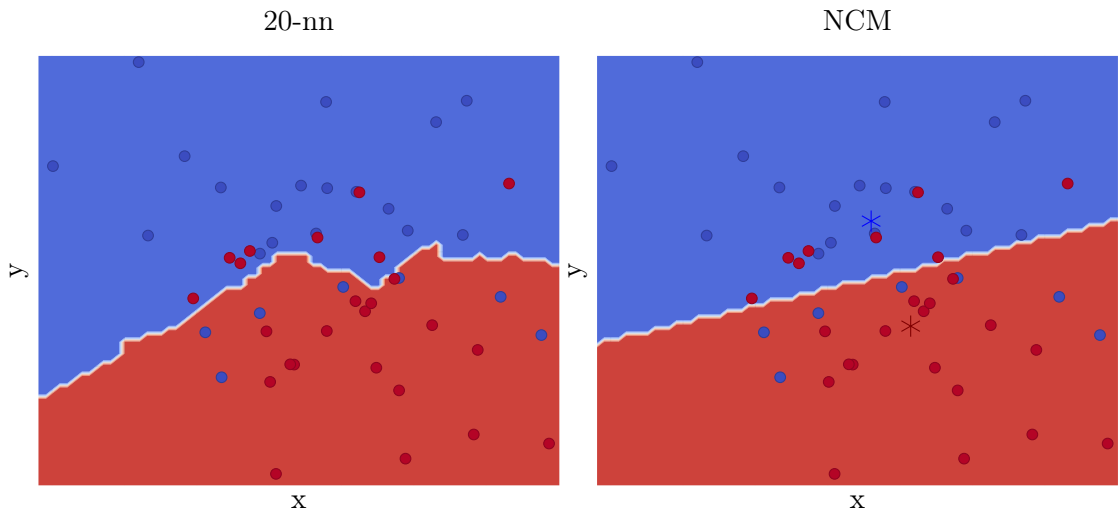

  \begin{center}
    \begin{subfigure}[b]{.5\linewidth}
      \centering
  \tikzsetnextfilename{chapter2/ncm/20-nn}%
  \input{chapter2/ncm/20-nn.tex}%

    \end{subfigure}%
    \begin{subfigure}[b]{.5\linewidth}
      \centering
  \tikzsetnextfilename{chapter2/ncm/ncm}%
  \input{chapter2/ncm/ncm.tex}%

    \end{subfigure}
    \caption{Comparison between the $k$-nn and NCL classifiers. The goal is to classify the blue and the red dots, with the asterisks representing the centroids of each class. Note that the classification border of the NCM classifier is even smoother than that of the 20-nn classifier.}
    \label{chap2:ncm_figure}
  \end{center}
\end{figure}

\section{Deep Learning Layers}\label{chap2:layers}

In this section, we introduce the layers that are used in this work. We recall that layers are the elementary functions on top of which the network function $f$ of a DNN is built. In the remaining of this section, we denote by $\vx'$ the input of a layer and by $f''$ the layer function. 

In this work we always consider 4D inputs $[b_\text{size},\featuremaps,w,h]$, where the first dimension $b_\text{size}$ indicates the number of samples treated concurrently as a \textbf{batch}, and the other three dimensions typically correspond to the number of feature maps, width and height of $\vx'$. 

Feature maps typically aggregate various components of the input (e.g., RGB images have 3 feature maps, one for each color: red, green, and blue). Sometimes, the three dimensions $\featuremaps, w, h$ can be flattened; in this case, we consider an input of dimension $d = w h \featuremaps$. Note that each coordinate of these dimensions is typically referred to as a \emph{node} or \emph{neuron} in the literature. 

Layers typically compose a nonlinear activation function $g$ with a linear function. Popular nonlinear activation functions include sigmoids, ReLUs, tanh\dots These functions are essential to ensure the expressivity of the deep learning models. Indeed, as the algebra of matrices is associative, removing the nonlinear functions would mean that deep models would be equivalent to a linear one-layer model. If one-layer models can only solve linearly separable problems, it has been known for a long time that the deep learning models are universal approximators under mild conditions~\citep{hornik1989approximator}.

\subsection{Fully connected layer (FC)}

A layer is said to be fully connected if each node $i$ of the input is connected to each node $j$ of the output with a proper weight $\emW_{i,j}$. In this case we can say that:
\begin{equation}
  f''(\vx') = g(\mW\vx' + \vb),
\end{equation}  
where $\mW$ is the trainable weight matrix, $\vb$ is a trainable bias added to the layer, and finally, $g$ is the activation function. In other words, we first perform a parametric linear transformation ($\mW\vx' + \vb$) and then we apply a non-parametrized non-linear function $g$. 

The fully connected layer is the most generic layer and the basis of deep learning. However, its use is severely limited when data to be processed is structured. For example, modern architectures for image classification usually only use one FC layer as the final classification layer. Figure~\ref{fig:resnet_18} and Figure~\ref{fig:resnet_20} depict two examples of the architectures used in this work, and both have only one FC layer. This limitation happens because FC layers ignore the intrinsic structure of the data. Indeed, the indexing of the inputs of a FC layer has no consequence on the accuracy of the trained model.

Theoretically, it would be possible for a DNN composed only of FC layers to understand the structure during training and limit its connections to obey this structure. However, it is tough to do so in practice~\citep{lassance2018matching}. We will explore this drawback in more detail in Chapter~\ref{chap4}. 

Examples of data with intrinsic structure range from images to citation networks. The most common solution to this problem is to use convolutional layers that we introduce in the next subsection. Another drawback of FC layers is their huge amount of trainable parameters. Considering that the input has dimension $d_\text{input}$ and the output has $d_\text{output}$ dimensions, the amount of trainable parameters of the FC layer is $(d_\text{input}+1)d_\text{output}$. We represent the FC layer in Figure~\ref{chap2:fc_figure}.

\begin{figure}[ht]
  \begin{center}
  \tikzsetnextfilename{chapter2/tikz/fc}%
  \input{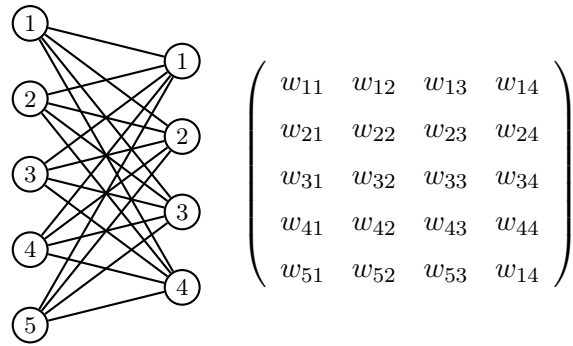}%

    \caption{Representation of a FC layer with $d_\text{input}=5$ and $d_\text{input}=4$.}
    \label{chap2:fc_figure}
  \end{center}
\end{figure}

\subsection{Convolutional layer}\label{chap2:convolutional_layer}

Convolutional layers~\citep{lecun1995convolutional} can be associated with the principle of filtering from signal processing. Instead of generating one full representation of the data by connecting every node of the input to every node of the output the idea of convolutional layers is to convolve several small filters over the coordinates of the input, generating multiple representations of the data on the output. We depict a convolutional layer in Figure~\ref{chap2:conv_unrolled_figure} (unrolled) and Figure~\ref{chap2:conv_rolled_figure} (compressed).

\begin{figure}[ht]
  \begin{center}
  \tikzsetnextfilename{chapter2/tikz/conv}%
  \input{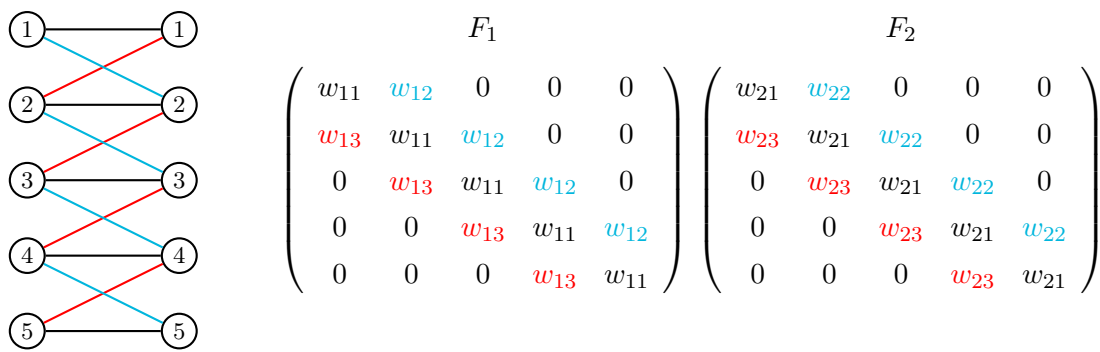}%

    \caption{Representation of an unrolled 1D convolutional layer with $f_w$ = 3, $s=1$, $\featuremaps_{input}=1$, $\featuremaps_{output}=2$, $d_\text{input}=d_\text{output}=5$. }
    \label{chap2:conv_unrolled_figure}
  \end{center}
\end{figure}

\begin{figure}[ht]
  \begin{center}
  \tikzsetnextfilename{chapter2/tikz/conv_compressed}%
  \input{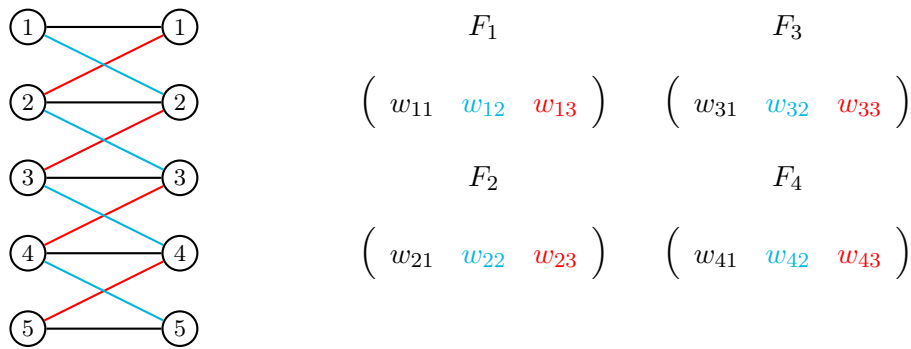}%

    \caption{Representation of a 1D convolutional layer with $f_w$ = 3, $s=1$, $\featuremaps_{input}=1$, $\featuremaps_{output}=4$, $d_\text{input}=d_\text{output}=5$. }
    \label{chap2:conv_rolled_figure}
  \end{center}
\end{figure}

We call \textbf{feature map} each one of those representations ($f'_\text{maps}$), i.e., each intermediate representation from a convolutional layer is composed of multiple feature maps of dimension $d_\text{output}$. In this work we focus on 2D convolutions (except for the illustrative figures~\ref{chap2:conv_unrolled_figure},~\ref{chap2:conv_rolled_figure}, and~\ref{chap2:conv_strided}), and this will be reflected on our notations, but it should be straightforward to extend our work to any dimensional convolutions. In 2D convolutions the filters can be organized as a 4 dimensional weight tensor with dimensions $[\featuremaps_{input},\featuremaps_{output},f_w,f_h]$.  

As these small ($f_w$ by $f_h$) filters are convolved on the input, they respect the intrisic structure of the data and allow the operation to be \textbf{translation equivariant}. 

\begin{definition}[translation equivariance]\label{def:equivariance}
  A function $f'$ is said to be translation equivariant if applying a translation $h$ on the input $\vx'$ is equivalent to applying a consistent mapping $M_h$ on the output of $f'$. Formally we define translation equivariance by:
  \begin{equation}\label{eq:equivariance}
    \forall\vx \in \sD: f'(h(\vx)) \approx M_h(f'(\vx)) 
  \end{equation}
\end{definition}

Being translation equivariant is an essential feature of convolutional layers. It allows for patterns to be recognized even if they are shifted on the data, e.g., in images, the position of the object we want to classify should not change the overall class of the image. Note that translation equivariance and invariance are different features, even if some works use the terms interchangeably. Both translation equivariance and translation invariance are desired features in image classification.

As it was the case of FC layers, we can characterize a convolutional layer by a parametric linear transformation followed by a non-parametrized non-linear function $g$:

\begin{equation}
  f''(\vx') = g(\sH \circledast \vx' + \vb),
\end{equation}
where $\circledast$ is the convolution operator and $\sH$ is the set of $f_w$ by $f_h$ filters of the convolutional layer. Note that this convolution has a parameter $s$ called \emph {stride} that specifies the gap between the center of each convolution. In this work we consider two values of stride, $s=[1,2]$. If $s=1$, then the convolution is performed pixel by pixel, this means that every 2D coordinate is used as the center of every filter. On the other hand, if $s=2$, then the convolution is performed using a quarter of the pixels, and the 2D representation of each feature map is therefore downsampled to a quarter of its original value. In this work we call convolutions with $s=2$ \emph{strided convolutions}, even if all convolutions are by definition strided. We depict a strided convolution in Figure~\ref{chap2:conv_strided}.

\begin{figure}[ht]
  \begin{center}
  \tikzsetnextfilename{chapter2/tikz/strided_conv}%
  \input{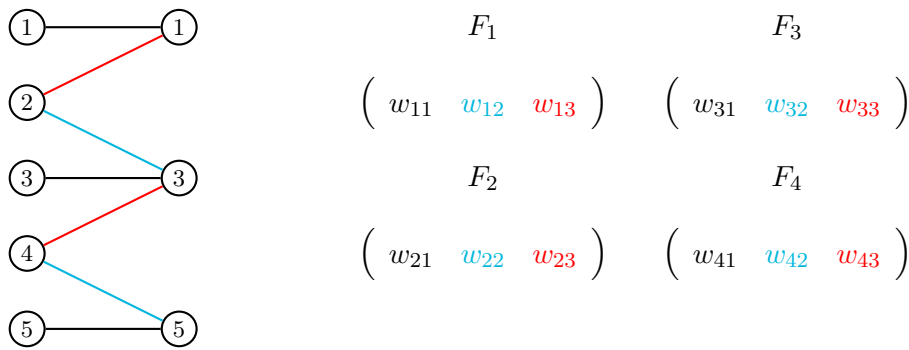}%

    \caption{Representation of a strided 1D convolutional layer with $f_w$ = 3, $s=2$, $\featuremaps_{input}=1$, $\featuremaps_{output}=4$, $d_\text{input}=5$, $d_\text{output}=3$. }
    \label{chap2:conv_strided}
  \end{center}
\end{figure}

We ignore border effects that could be caused by misaligned $w$ and $f_w$ or $h$ and $f_h$ thanks to a zero padding. Zero padding means that if there would be a problem of misalignment between the dimensions we add zeros to the border in order to ensure that either $w_{input}=w_{output}$ or $w_{input}=\frac{w_{output}}{2}$ (in case of $s=2$). The same is done for $h$. 

Works such as~\citep{bontonou2019formalism,vialatte2018convolution} aim to use a similar representation for both FC and convolution layers. This common representation is discussed in more detail in Chapter~\ref{chap4} as this abstraction is fundamental to understanding graph convolutions. 

In the case of convolutional layers the amount of parameters is: $(\featuremaps_{input} f_w f_h + 1) \featuremaps_{output}$, which is typically vastly smaller when compared to the FC layer as $\featuremaps << d$ for both input and output.

\subsubsection{Pooling layer}

Pooling layers have two primary goals.

The first one is to downsample the intermediate representations, allowing to use more coarse or simplified representations as we go deeper in the DNN. Using downsampled representations increases the real filter size of the network (e.g., a 3x3 filter on a downsample representation has a larger visual receptive field than a 3x3 filter on the full representation). In this work, this downsampling operation is executed with a 2D square of side $p$ and an aggregation function $\text{agg}$. The two most common aggregation functions are the max operator (i.e., the output of the pooling operation is the maximum value inside each 2D square) and the average operator (i.e., the output of the pooling operation is the average value inside each 2D square).

The second goal of pooling layers is to provide a weak translation invariance in the radius of the square $p$.

\subsection{Normalization layer}

It is common to add normalization layers as the DNN grows deeper. Normalization layers are added in order to ease the training of the network, even if the theoretical reasoning behind these layers is not exactly solid. In this work, we consider only the most used normalization layer that is the \textbf{batch normalization} layer. 

\subsubsection{Batch normalization layer (BN)}

Batch normalization (BN) layers were first proposed in order to reduce the internal covariate shift~\citep{ioffe2015batch}. Covariate shift happens when the distribution in a learning system changes~\citep{shimodaira2000improving}, which can be a big problem for DNNs as the deeper the network is, the easier it is for a small change in the distribution to affect the deeper layers. The goal is to normalize the intermediate representations $\vx'$ by applying $f_\text{BN}$ so that each feature map has a fixed mean and standard deviation. To do so, we first perform $f''$ to normalize a batch of inputs $\mX'$ to have zero mean and standard deviation equal to one:

\begin{equation}
 \vx'' = f''(\mX') = \frac{\mX' - \mathbf{\mu}_{\mX'}}{\mathbf{\sigma}^2_{\mX'}} 
\end{equation}
where $\mu_{\mX'}$ is a vector with the mean value for each feature map over the batch, and $\sigma^2_{\mX'}$ is a vector with the standard deviation for each feature map over the batch. Note that during the inference phase, one cannot consider that it will receive a batch of inputs, therefore in this work, we always use the running mean and running standard deviation computed during the training phase. Now that the intermediate representation is normalized, we can fix the mean and standard deviation to trainable parameters $\mathbf{\mu}_{f_\text{BN}}$ and $\mathbf{\sigma}^2_{f_\text{BN}}$ as follows:

\begin{equation}
  f_\text{BN}(\mX') = \mathbf{\sigma}^2_{f_\text{BN}} f''(\mX') + \mathbf{\mu}_{f_\text{BN}}
\end{equation}

Various recent works contest the covariate shift claim. In this work, we concentrate on three such studies~\citep{santurkar2018does, zhang2018residual, de2020batch}, but many others exist in recent literature. The first one is a more theoretical paper that argues that BN helps the network to converge by smoothing the optimization landscape instead.

The latter two are more empirical, and while they have a similar argument, they are more focused on the application in residual networks. The authors explain that the BN layers smooth the optimization landscape by biasing the residual network to follow the shallower paths at the start of the training. The smoothing of the optimization landscape is done by giving less importance (sometimes even ignoring) to most of the blocks in the Resnet. They propose to remove the BN layers and to either change the initialization of the parameters or to add a parameter $\alpha$ to the residual connection in order to mimic this behavior and show similar results to batch normalized networks.
 
\section{Tasks and Datasets}\label{chap2:datasets}

In this section, we first introduce the tasks as subsections, and then the datasets used in this thesis for each task as their subsubsections.

\subsection{Image Classification}\label{chap2:image_classification}

Image classification is one of the most common task in computer vision~\citep{dlbook}. In this work, we consider only single-labeled classifications, where the goal is to classify the most prominent object in the image, even if more than one object is visible. In this task, we mostly focus on a measure of performance which is called the top-$k$ classification accuracy. Top-$k$ accuracy is the proportion of samples for which the expected output is among the $k$ highest ranked ones in the obtained output.

\subsubsection{CIFAR-10/100}\label{chap2:cifar10}

CIFAR-10 and CIFAR-100~\citep{krizhevsky2009learning} are tiny (32x32 pixels) image datasets extracted from the 80 million tiny images dataset~\citep{torralba2008tinyimages}. They are mostly used because they offer a good trade-off between complexity (i.e., trivial solutions and DNNs with only FC layers do not provide good performance) and training times. This trade-off means that while trivial solutions, such as KNN on the pixel domain and DNNs with only FC layers, do not provide optimal performance, it is still possible to train near state of the art DNNs with even 3 to 4-year-old GPUs in less than a day. These two characteristics allow for quick/low-cost training and idea iteration on a significant problem. 

The 10 and 100 after the dataset names specify the number of classes of the problem, but this does not mean that CIFAR-10 is a subset of CIFAR-100. The 100 classes of CIFAR-100 may be divided into 20 supergroups of 5 classes. CIFAR-10 classes, on the other hand, may be divided into 2 supergroups: \begin{inlinelist} \item transportation methods \item animals.\end{inlinelist} Note that it is infrequent to use these supergroups in the literature. 

The CIFAR datasets come with a standard train/test split, and it is up to the authors to split the $\trainset$ to define the $\validset$ set. As we discussed in Section~\ref{chap2:definition}, most authors opt for directly training on $\trainset \cup \validset$ using the $\testset$ to optimize the hyperparameters, which of course is not completely fair. Both datasets are composed of 60,000 images, being 50,000 images on the training set (5,000 per class for CIFAR-10 and 500 per class for CIFAR-100), and 10,000 images on the test set (1,000 per class for CIFAR-10 and 100 per class on CIFAR-100). We illustrate some of the examples from the datasets in Figure~\ref{chap2:tab-cifar10}.

\begin{figure}[ht]
  \begin{center}
    \begin{subfigure}[b]{.24\linewidth}
      \centering
      \caption{airplane}
      \includegraphics[]{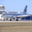} \includegraphics[]{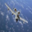} 
    \end{subfigure}%
    \begin{subfigure}[b]{.24\linewidth}
      \centering
      \caption{automobile}
      \includegraphics[]{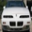} \includegraphics[]{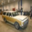} 
    \end{subfigure}%
    \begin{subfigure}[b]{.24\linewidth}
      \centering
      \caption{bird}
      \includegraphics[]{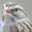} \includegraphics[]{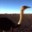} 
    \end{subfigure}%
    \begin{subfigure}[b]{.24\linewidth}
      \centering
      \caption{cat}
      \includegraphics[]{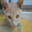} \includegraphics[]{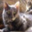} 
    \end{subfigure}%
    \\
    \begin{subfigure}[b]{.24\linewidth}
      \centering
      \caption{deer}
      \includegraphics[]{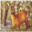} \includegraphics[]{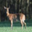} 
    \end{subfigure}%
    \begin{subfigure}[b]{.24\linewidth}
      \centering
      \caption{dog}
      \includegraphics[]{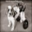} \includegraphics[]{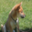} 
    \end{subfigure}%
    \begin{subfigure}[b]{.24\linewidth}
      \centering
      \caption{frog}
      \includegraphics[]{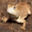} \includegraphics[]{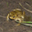} 
    \end{subfigure}%
    \begin{subfigure}[b]{.24\linewidth}
      \centering
      \caption{horse}
      \includegraphics[]{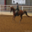} \includegraphics[]{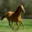} 
    \end{subfigure}%
    \\
    \begin{subfigure}[b]{.24\linewidth}
      \centering
      \caption{ship}
      \includegraphics[]{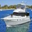} \includegraphics[]{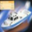} 
    \end{subfigure}%
    \begin{subfigure}[b]{.24\linewidth}
      \centering
      \caption{truck}
      \includegraphics[]{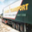} \includegraphics[]{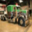} 
    \end{subfigure}%
    \caption{Illustrative examples from the CIFAR-10 dataset.}
    \label{chap2:tab-cifar10}
  \end{center}
\end{figure}

\subsubsection{SVHN}\label{chapt2:SVHN}

The Street View House Numbers (SVHN)~\citep{netzer2011reading} is a dataset composed of images taken from House numbers, where the goal is to identify the most prominent number on the cropped image, where sometimes more than one number appears on the image. The dataset was first proposed as a harder alternative to the MNIST dataset~\citep{lecun1998gradient} on the digit recognition task. SVHN is composed of 10 classes (1 for each digit) and comes with a standard split of 73,257 training images and 26,032 test images. Note that an additional 531,131 extra images are available and can be incorporated into the dataset. We depict some examples of this dataset in Figure~\ref{chap2:fig-svhn}.

\begin{figure}[ht]
  \begin{center}
    \includegraphics{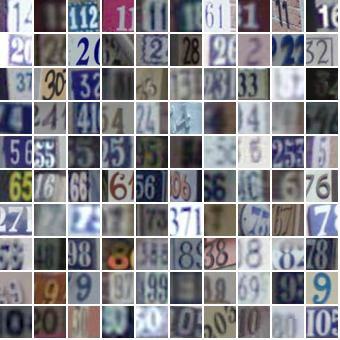}
    \caption{Illustrative examples from the SVHN dataset, extracted from the dataset website~\url{http://ufldl.stanford.edu/housenumbers/}}
    \label{chap2:fig-svhn}
  \end{center}
\end{figure}

\subsubsection{Imagenet and variants}\label{chap2:imagenet}

As introduced in Chapter~\ref{chap1}, one of the reasons for the quick expansion of DNNs and deep learning was that these systems won the CVPR2012 Large Scale Visual Recognition (LSVR) challenge~\citep{krizhevsky2012imagenet,russakovsky2015imagenet}. This challenge has been used as the de-facto benchmark since then, and most works (including this one) still use the $\trainset$ and $\validset$ from that competition for benchmarking purposes. In other words, while it is common place to call this dataset Imagenet, a more appropriate name would be LSVR2012 Imagenet, as Imagenet is the database where the images were extracted. Note that it is uncommon to use the $\testset$ of this dataset.

The Imagenet dataset is one of the most complex challenges in computer vision given not only its high number of classes (1,000) and images (1.2 million images on the training set and 50,000 images on the validation set) but also the high image resolution of the dataset (it varies from 75x56 to 4,288x2,848). Treating images of high resolution takes considerably more time to process, but tends to lead to better results. In this work, due to computational constraints, we use the Imagenet32 variant~\citep{chrabaszcz2017downsampled}, which downscales all images to the same size of the CIFAR datasets (32x32) allowing us to have a good trade-off between the cost of training and the relevance of the problem. We depict some examples of Imagenet in Figure~\ref{chap2:fig-imagenet} and Figure~\ref{fig:example_alexnet_dannet}.

\begin{figure}[ht]
  \begin{center}
    \begin{subfigure}[b]{.3\linewidth}
      \centering
      \includegraphics[height=.6\linewidth, width=\linewidth]{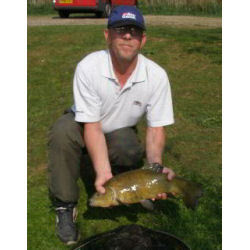}
      \caption{tench/tinca (fish)}
    \end{subfigure}%
    \begin{subfigure}[b]{.3\linewidth}
      \centering
      \includegraphics[height=.6\linewidth, width=\linewidth]{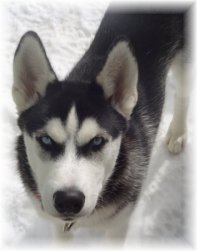}
      \caption{eskimo dog/husky}
    \end{subfigure}
    \begin{subfigure}[b]{.3\linewidth}
      \centering
      \includegraphics[height=.6\linewidth, width=\linewidth]{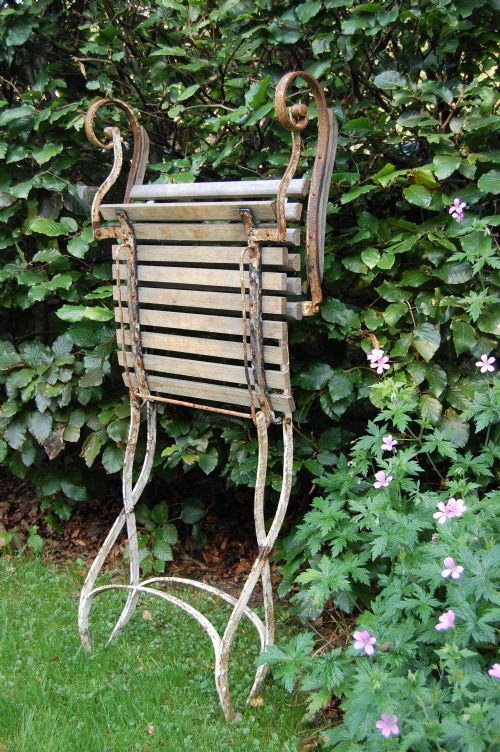}
      \caption{folding chair}
    \end{subfigure}
    \caption{Illustrative examples from the Imagenet dataset.}
    \label{chap2:fig-imagenet}
  \end{center}
\end{figure}

\subsection{Image retrieval}\label{chap2:ir}

Image retrieval differs from image classification as the focus is mostly on the mean average precision (mAP) instead of the top-$k$ accuracy. 
\begin{definition}[mean average precision (mAP)]
  To define mAP, we first need to define average precision (AP). To compute the average precision of an an example $\vx_q \in \queryset$, we use the sum of the precision at each correctly retrieved image of the support set ($\supportset$) based on its ranking $k$:
  \begin{equation}
    AP(\vx_q) = \frac{\sum_{k=1}^{|\supportset|}{P(k) \; r(k) }}{|\relevant_q|}
  \end{equation}
  where $P(k)$ is the precision at ranking $k$ (the percentage of correctly retrieved images until rank $k$), $\relevant_q$ is the subset of $\supportset$ containing the relevant examples to $\vx_q$ and $r(k)$ is an indicator function returning 1 if the item at ranking $k$ is in $\relevant$, and 0 otherwise. Given the average precision we can now define the mAP over a query set $\queryset$ as: 
  \begin{equation}
    mAP(\queryset) = \frac{\sum_{\vx_q \in \queryset}{AP(\vx_q)}}{|\queryset|}
  \end{equation}

\end{definition}

In other words, mAP grows when the top ranked retrieved images from the support set ($\supportset$) are from closer location and/or present the same objects as the considered query image $\vx_q \in \queryset$. Note that doing simple classification and then outputting all images from the found class would be heavily penalized by this measure in case of misclassification. This is why most methods rely on alternative solutions.

\subsubsection{Revisited oxford5k and paris6k}

The revisited oxford5k ($\mathcal{R}\text{Oxf}$) and revisited paris6k($\mathcal{R}\text{Par}$) were first introduced in~\citep{radenovic2018revisiting}, in order to better represent the image retrieval task when compared to their original versions~\citep{philbin2007object,philbin2008lost}. Images are divided into $\supportset$ and $\queryset$. For each object that we want to retrieve (13 for Oxford and 12 for Paris), the images of the support set are divided, depending on the quality of the image and how apparent are the objects we want to retrieve, into three sets: \begin{inlinelist} \item easy ($\easyset_q$) \item hard ($\hardset_q$) \item unclear ($\unclearset_q$) \end{inlinelist}. The task is then divided into two difficulties:

  \begin{enumerate}
\item Medium: Easy and hard images have to be retrieved and unclear images are disregarded (i.e., are not taking into account for computing mAP). More formally: $\supportset_q = \supportset - \unclearset_q$  and $\relevant_q = \hardset_q \cup \easyset_q$.
\item Hard: Only the hard images have to be retrieved while easy and unclear images are disregarded. More formally $\supportset_q = \supportset - (\unclearset \cup \easyset)$ and $\relevant_q = \hardset_q$.
\end{enumerate}

The datasets are composed of 70 query images ($\queryset$) and a $\supportset$ of 4,993 images for the Oxford dataset and 6,332 for the Paris one. We depict example images from the datasets in Figure~\ref{chap2:fig-oxford} (oxford5k) and in Figure~\ref{chap2:fig-paris} (paris6k).  

\begin{figure}[ht]
  \begin{center}
    \begin{subfigure}[b]{.33\linewidth}
      \centering
      \includegraphics[height=\linewidth, width=\linewidth]{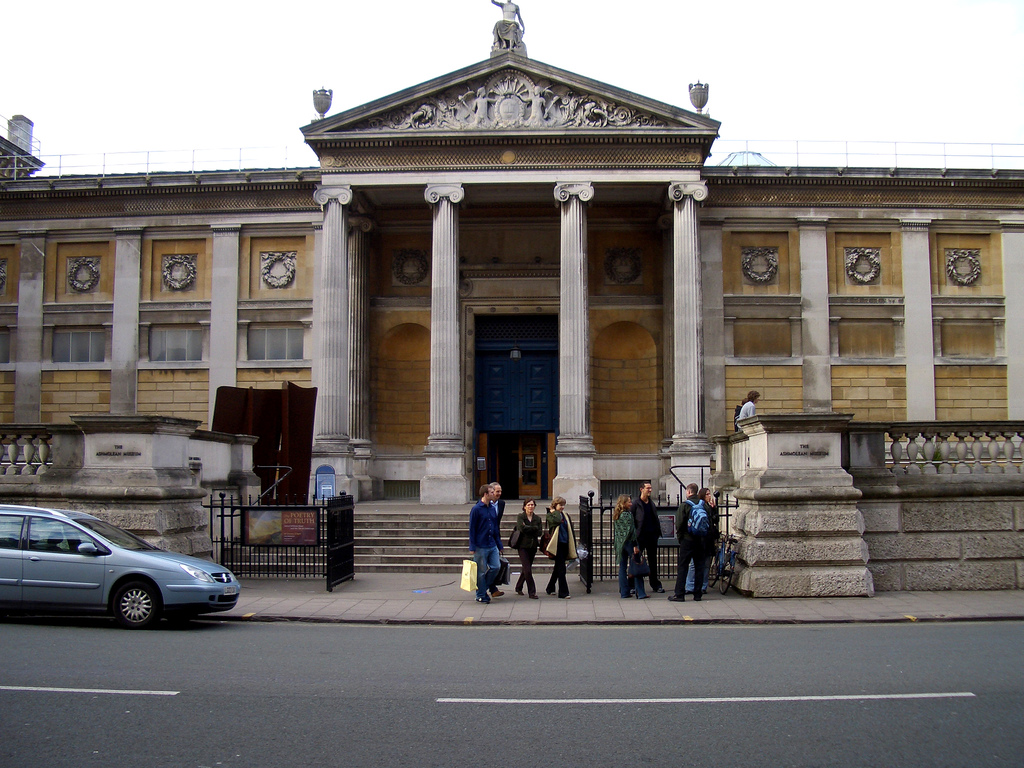}
      \caption{Ashmolean museum}
    \end{subfigure}%
    \begin{subfigure}[b]{.33\linewidth}
      \centering
      \includegraphics[height=\linewidth, width=\linewidth]{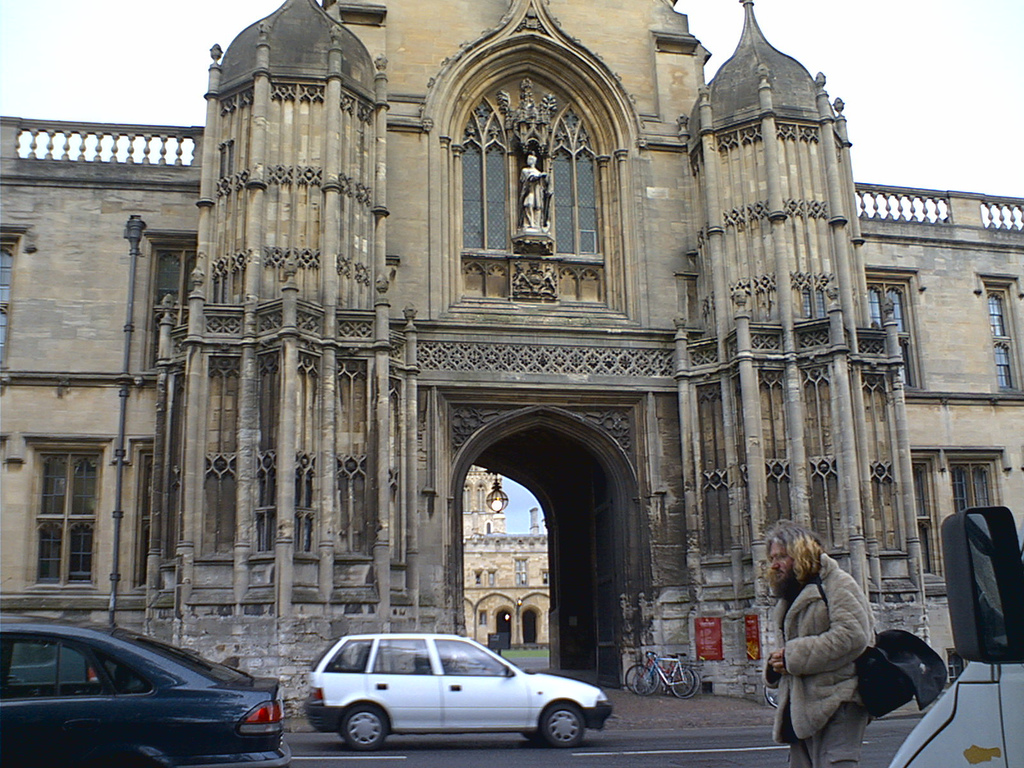}
      \caption{Christ church}
    \end{subfigure}
    \begin{subfigure}[b]{.33\linewidth}
      \centering
      \includegraphics[height=\linewidth, width=\linewidth]{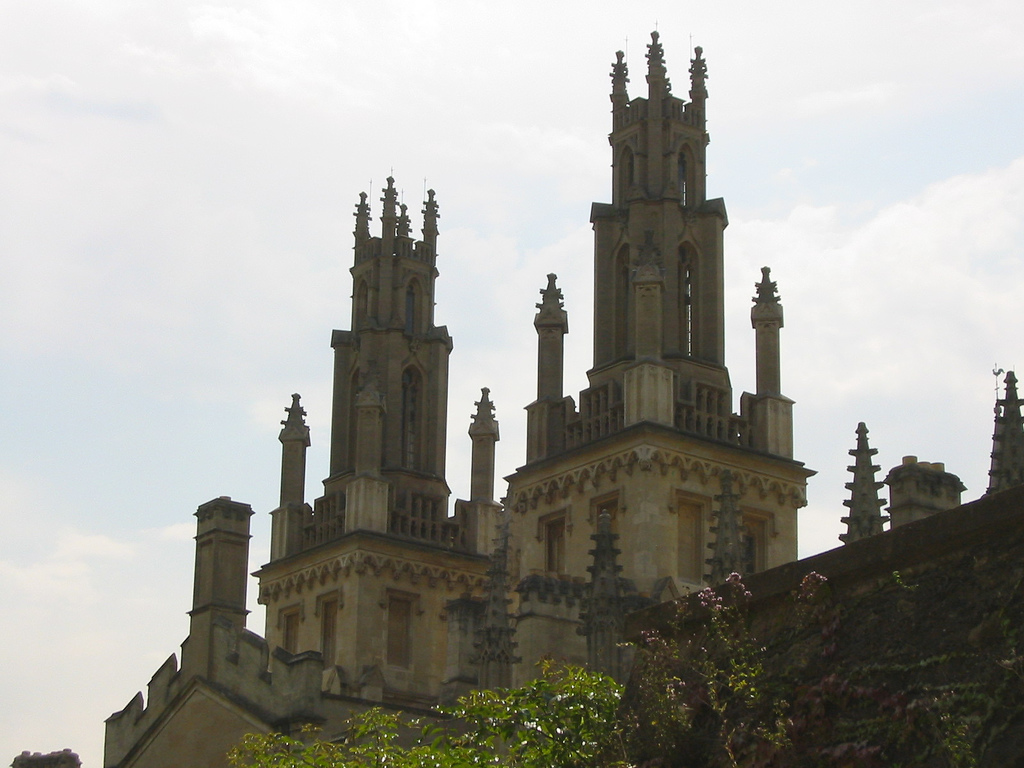}
      \caption{Magdalen college}
    \end{subfigure}
    \caption{Illustrative examples from the revisited oxford5k dataset.}
    \label{chap2:fig-oxford}
  \end{center}
\end{figure}

\begin{figure}[ht]
  \begin{center}
    \begin{subfigure}[b]{.33\linewidth}
      \centering
      \includegraphics[height=\linewidth, width=\linewidth]{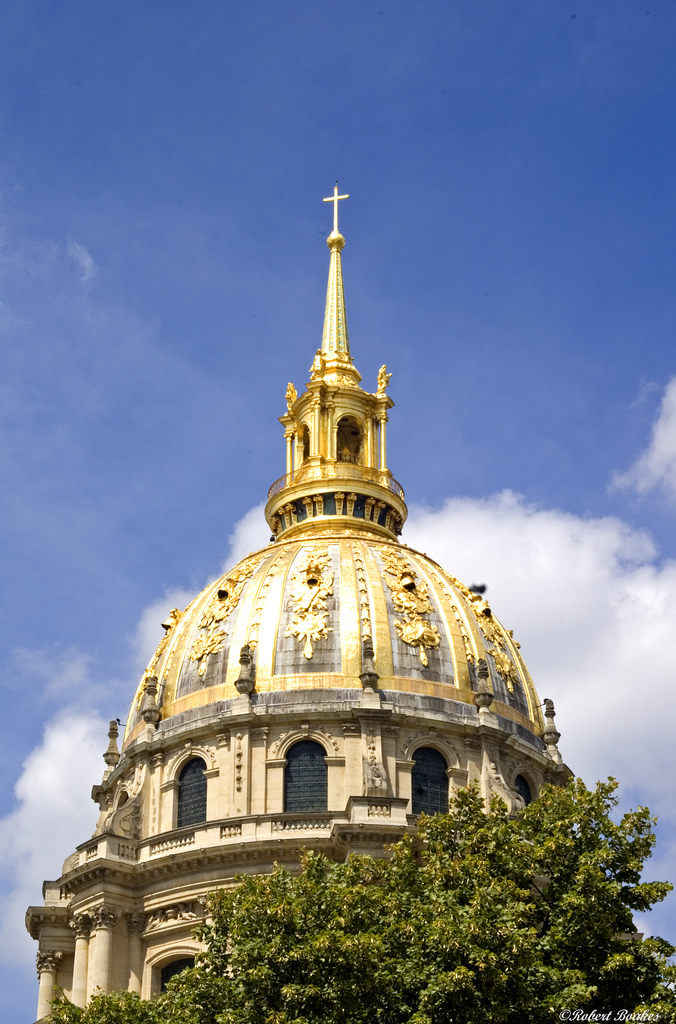}
      \caption{Les invalides}
    \end{subfigure}%
    \begin{subfigure}[b]{.33\linewidth}
      \centering
      \includegraphics[height=\linewidth, width=\linewidth]{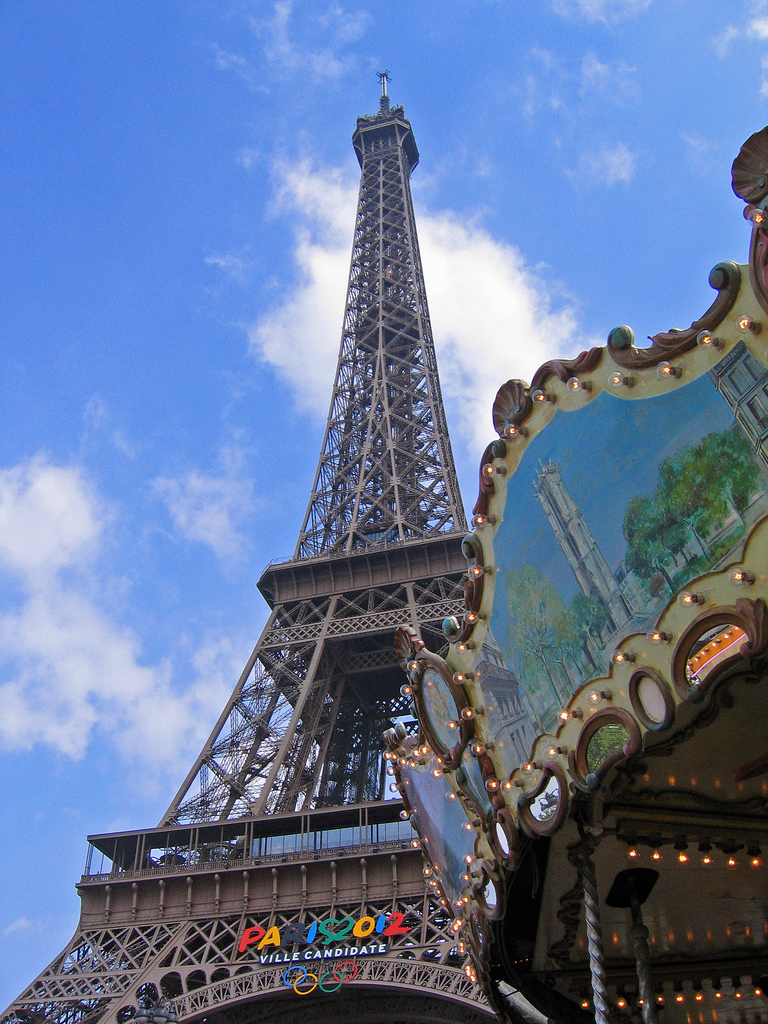}
      \caption{Tour Eiffel}
    \end{subfigure}
    \begin{subfigure}[b]{.33\linewidth}
      \centering
      \includegraphics[height=\linewidth, width=\linewidth]{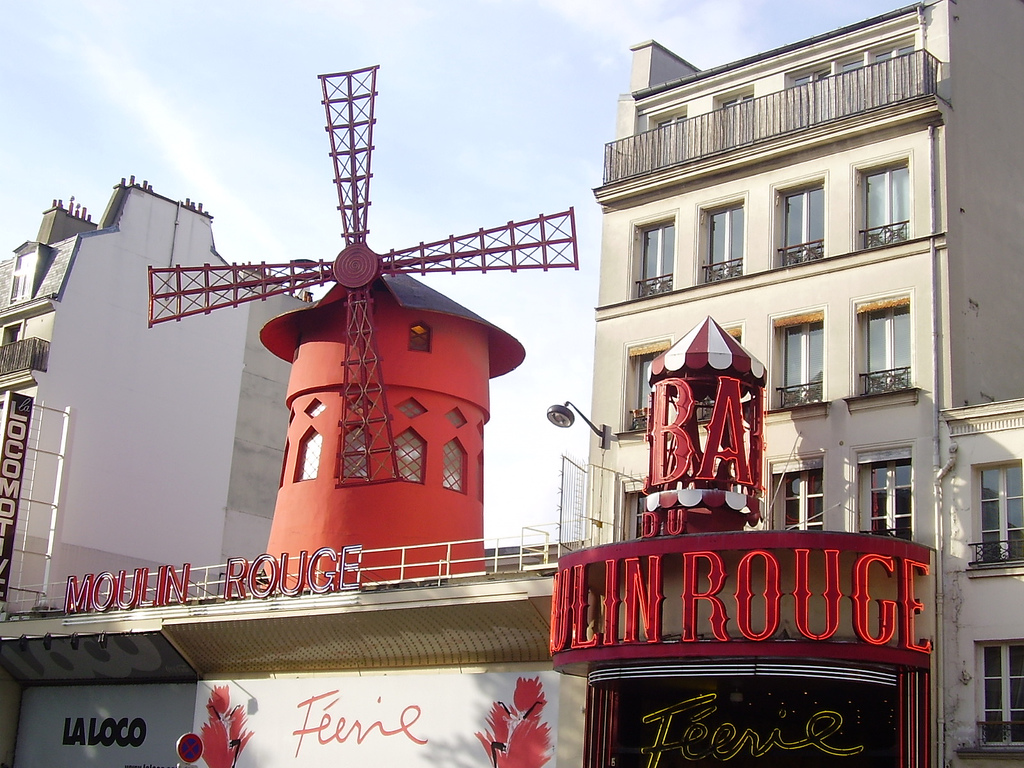}
      \caption{Moulin Rouge}
    \end{subfigure}
    \caption{Illustrative examples from the revisited paris6k dataset.}
    \label{chap2:fig-paris}
  \end{center}
\end{figure}

\subsection{Vision-based localization}\label{chap2:vbl}

Vision-Based Localization (VBL) is at the same time closely related to the image retrieval problem as we want to find the landmarks on the image that allows us to recognize the location and to the image classification problem as it could be seen as a regression task where given an image we want to return the physical location of the camera. In a more formal way, VBL refers to the problem of retrieving both the location and orientation (pose) of the camera based on a query image. We refer the readers to~\citep{piasco2018vblsurvey} for a review on the subject.

\subsubsection{Adelaide and Sydney datasets}

In this work, we focus mostly on the datasets we introduced in~\citep{lassance2019improved}. These datasets were constructed by collecting images from the Mapillary API\footnote{\url{https://www.mapillary.com/developer/api-documentation/}}, which contains data that was publicly sourced over time using dashcams (i.e., cameras mounted on the windshield of vehicles). The images are extracted from videos, meaning that they can be divided into sequences. We extracted two sets of images from the road imagery of Australian cities. The first (Adelaide) covers the Central Business District (CBD) area of Adelaide, Australia. The second set is collected around the Greater Sydney region and covers an area of around 200km$^2$.  Since the data is publicly sourced, there are some extra difficulties in these datasets (that are kept to simulate real life scenarios better):
\begin{enumerate}
  \item There are viewpoint, illumination and dynamic changes. This is expected to happen in real-life scenarios too (e.g., winter images would have snow while summer images may be brighter),
  \item In the Sydney dataset, some of the sequences were generated using different equipment (e.g., panoramic cameras) and different positioning (e.g., some of the images are not of dashcams but cameras mounted on the side windows) from the ones used in traditional VBL problems. 
\end{enumerate}

In addition to imagery, the collected data provides sequence information and GPS. We depict the GPS tracks in Figure~\ref{chap2:fig_adelaide} and Figure~\ref{chap2:fig_sydney}

\begin{figure}
  \centering
  \includegraphics[width=0.5\columnwidth]{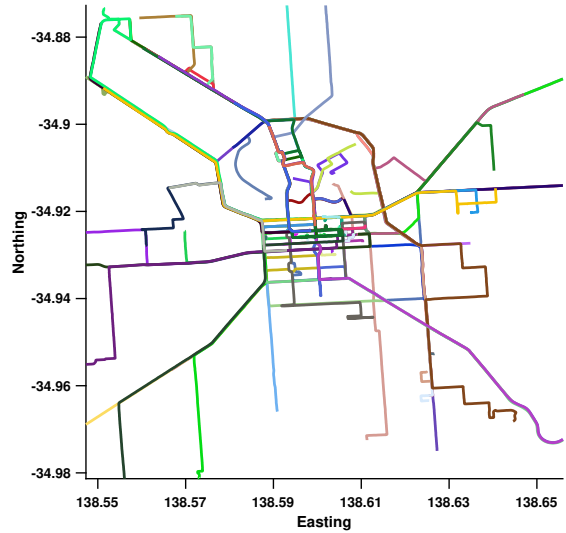}
  \caption{GPS Tracks of image sequence collected around Adelaide CBD from Mapillary. Figure and caption extracted from~\citep{lassance2019improved}}
  \label{chap2:fig_adelaide}.
\end{figure}

\begin{figure}
  \centering
  \includegraphics[width=0.5\columnwidth]{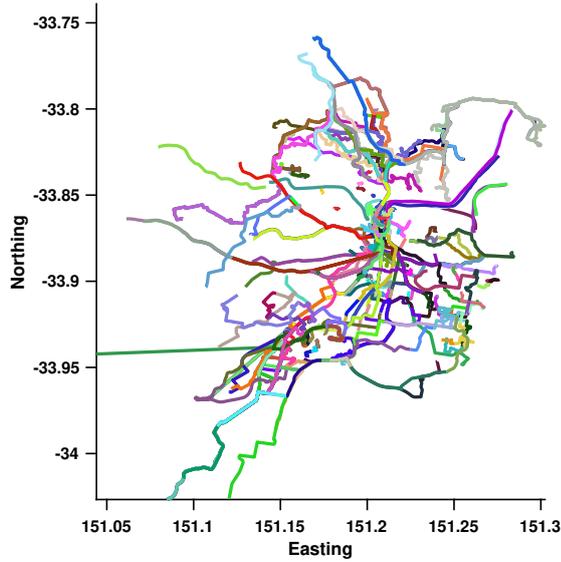}
  \caption{GPS Tracks of image sequence collected around Sydney from Mapillary. Figure and caption extracted from~\citep{lassance2019improved}}
  \label{chap2:fig_sydney}.

\end{figure}

The Adelaide dataset is divided into one support set and two query sets ($\validset$ and $\testset$). For the Sydney database, we split the images into one support and two query sets ($\easyset$ and $\hardset$). The split into easy and hard was performed due to the greater difficulty of the Sydney set. Note that the definition of $\easyset$ and $\hardset$ is not the same as in the case of Image retrieval problems. Statistics for each dataset are presented in Table~\ref{dataset-summary}. 

\begin{table}[ht!]
\centering
\caption{Summary of the VBL datasets.}
\begin{tabular}{|c|cc|}
\hline
City             & \multicolumn{2}{c|}{Adelaide} \\ \hline
                 & \# Sequences     & \# Images    \\ \hline
$\supportset$ & 44              & 24,263       \\ 
$\validset$ & 4               & 2,141        \\ 
$\testset$       & 5               & 1,481        \\ \hline
                 & \multicolumn{2}{c|}{Sydney} \\ \hline
                 & \# Sequences     & \# Images    \\ \hline
$\supportset$ & 284              & 117,860       \\ 
$\easyset$       & 5               & 1,915      \\ 
$\hardset$       & 5        & 2,285   \\ \hline
\end{tabular}
\label{dataset-summary}
\end{table}

\subsection{Neurological task classification}

In this work we also consider datasets composed of fMRI (functional Magnetic Resonance Imaging) scans. The main goal of using this type of data is to study architectures that leverage the underlying structure, even though the latter is not as simple as a standard euclidean 2D space. We will delve into more details on this in Chapter~\ref{chap4}.

\subsubsection{Pines}

In~\citep{lassance2018matching}, we introduce the use of the Pines dataset in the context of deep neural networks for the first time. The task of the PINES dataset is to identify the emotional rating a subject gives to a picture by using the fMRI scan of its brain. There are 182 subjects in the study~\citep{chang2015sensitive}. To generate the dataset, we fetched first-level statistical maps (beta images) of each individual with minimal and maximal ratings from \url{https://neurovault.org/collections/1964/}. Final volumes used for classification contain 369 signals per sample distributed on a 16mm cube. The dataset is composed of a $\trainset$ containing 1,949 samples and a $\testset$ of 1,010 samples. The samples are divided into two classes, one for the minimal rating and the other for the maximal rating. 

\subsection{Document classification on citation networks}\label{chap2:citation_networks}

In this work, we also consider the problem of classifying scientific papers on a citation network. All datasets are constructed using the bag-of-words~\citep{sivic2005discovering} method. The bag-of-words approach consists of using indicator vectors over a dictionary as the features of each document (i.e., the amount of times each word of the dictionary appears in the document). In this work, we use binarized versions of the bag-of-words vectors as it is commonplace in recent literature. All the datasets presented for this task are accompanied by a citation graph connecting documents that either cite or are cited by other documents. 

The datasets presented in this subsection have no definitive split into $\trainset$ and $\testset$, but it is common to use the split from~\citep{yang2016revisiting} for benchmarking. We delve into more detail on this choice and these datasets in Chapter~\ref{chap4}. In particular, we will show that they suffer from many limitations and biases, despite being the cornerstone of benchmarking in the field of graph-supported semi-supervised learning~\citep{shchur2018pitfalls}.

\subsubsection{cora}\label{chap2:cora}

A dataset of machine learning papers, composed of 2,708 documents, divided into seven classes, with a dictionary of 1,433 words (i.e., 1,433 features per document). This dataset was first proposed in~\citep{sen2008collective}, using articles from the cora database~\citep{mccallum2000automating}.

\subsubsection{citeseer}

The citeseer dataset is composed of 3,312 documents with a dictionary of 3,703 words and divided into six classes. This dataset was also first proposed in~\citep{sen2008collective} using data from the citeseer database~\citep{giles1998citeseer}.

\subsubsection{pubmed}

A dataset of diabetes medicine research~\citep{namata2012query}. This dataset is composed of 19,717 papers with a dictionary of 500 words and divided into 3 classes depending on the type of diabetes addressed in the publication.

\section{Compression}\label{chap2:compression}

In the previous sections we have introduced DNNs and the tasks and datasets on which we are going to apply them in this work. Now let us take a step back and consider the computational complexity of DNNs. Indeed, to achieve state of the art results, CNNs will often rely on a large number of trainable parameters, and considerable computational complexity. This is why there has been a lot of interest in the past few years towards the compression of CNNs, so that they can be deployed onto embedded systems or in real-time settings. The purpose of this section is to analyze the techniques that allow for the efficient compression of DNNs in order to reduce their computational complexity and memory footprint, while maintaining a high level of accuracy.

Prominent areas in neural network compression include distilling knowledge from a larger teacher network to a smaller (student) network~\citep{hinton2014distillation,romero2015fitnets,koratana2019lit,park2019rkd,lassance2019deep}, binarizing (or quantifying) weights and activations~\citep{hubara2016bnn,bulat2019xnor}, pruning network connections during or before training~\citep{li2017pruning,ardakani2017sparsely}, changing the way convolutions are performed~\citep{wu2018shift,hacene2019attention} and many others. 

Most of the works in this area focus only on reducing complexity and footprint in the inference phase. Indeed, many of these methods increase the cost of the training phase. This choice is justified because the training of the network is considered to be done in an unconstrained environment, while the inference phase would be run in a highly constrained environment, which is applicable to most (yet not all) practical applications.

In this work, we focus only on distillation and in restricting the possible convolutions in DNNs and refer the reader to~\citep{hacene2019processing} for a more detailed review on this subject. 

\subsection{Distillation}\label{chap2:distillation}

Distillation based approaches aim at distilling knowledge from a pre-trained larger network that we call teacher to a smaller yet to be trained network called student. More formally, let $T$ and $S$ denote the architectures of the teacher and student. The goal is to transfer knowledge from $T$ to $S$. For presentation simplicity, we assume that both architectures always generate the same number of intermediate representations, even if they are not from the same depth in the network architecture. 

Distillation methods can be divided into two groups depending on how they perform the distillation: \begin{inlinelist} \item Individual Knowledge Distillation (IKD) \item Relational Knowledge Distillation (RKD) \end{inlinelist}. IKD methods consider each example of the training set separately, while RKD methods use information between sets of examples in order to distill knowledge. More formally, we can define the objective function of the student networks trained with knowledge distillation as:

\begin{equation}
  \mathcal{L} = \mathcal{L}_\text{task} + \lambda_{\text{KD}} \cdot \mathcal{L}_\text{KD}\;,\label{distill_loss}
\end{equation}
where $\mathcal{L}_\text{task}$ is typically the same loss that was used to train the teacher (e.g., cross-entropy), $\mathcal{L}_\text{KD}$ is the distillation loss and $\lambda_{\text{KD}}$ is a scaling parameter to control the importance of the distillation with respect to that of the task. We now describe some IKD techniques. The first we describe is often called HKD (Hinton Knowledge Distillation)~\citep{hinton2014distillation}, and it is focused on the output $\hat{\vy}$ of the networks, forcing $S$ to mimic the output of $T$. The knowledge acquired by the teacher during the training phase is therefore diffused throughout the student network due to the backpropagation of weights.

Recent works~\citep{romero2015fitnets,koratana2019lit} advise that even if the knowledge is diffused through backpropagation, it is better also to force $S$ to mimic the intermediate representations of $T$ in order to improve the performance of $S$. However, as IKD treats each example individually, it can only do layer-wise mimicking if the student and the teacher have intermediate data representations with the same dimension~\citep{koratana2019lit}. This drawback restricts the architecture choice of the student networks. In an attempt to avoid this limitation, the authors of~\citep{romero2015fitnets} propose to include affine transformations to ensure that the intermediate representations have the same size. The authors introduced extra layers meant to perform distillation during training. These transformations are therefore discarded during the inference of $S$, which could be a problem as these transformations will encode part of the knowledge that comes from the teacher network. In Figure~\ref{chap2:fig-lit}, we depict the architecture from~\citep{koratana2019lit} where the output of each block is compared.

\begin{figure}[ht]
  \centering
  \includegraphics[width=\linewidth]{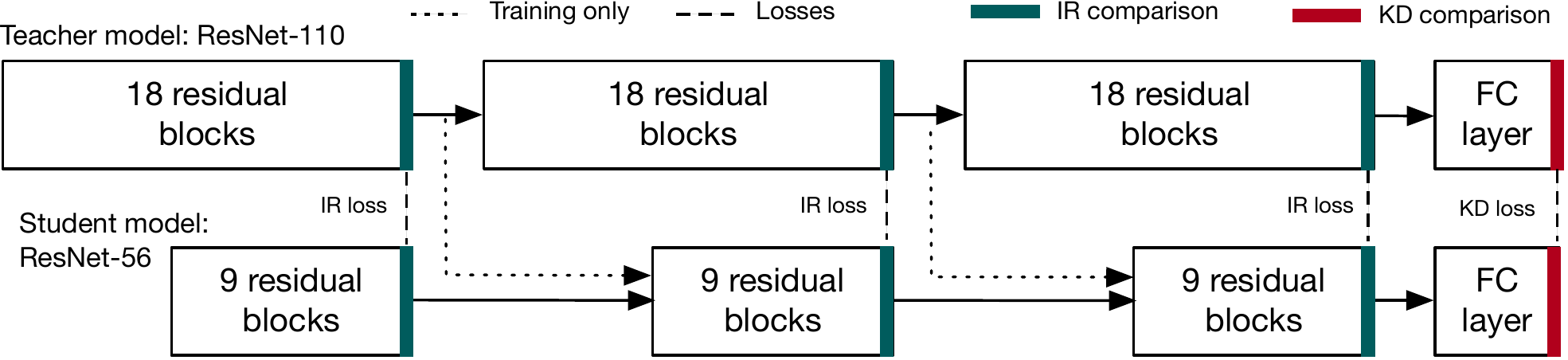}
  \caption{Depiction of the LIT architecture, IR comparison refers to a sample-wise L2 loss between the blocks, while KD comparison refers to the loss proposed in~\citep{hinton2014distillation}. Figure extracted from~\citep{koratana2019lit} ©2019 PMLR.}
  \label{chap2:fig-lit}
\end{figure}

More formally, we denote $\mX \in \trainset$ a batch of input examples and $\sX'$ the set of intermediate representations generated using $\mX$ that are used for inferring knowledge. As previously described in Section~\ref{chap2:definition}, when an input $\vx$ is processed, a series of inner representations $\vx', \vx'', \dots$ is generated. IKD approaches will directly compare the inner representations of both teacher and student when processing $\vx$. We therefore define the IKD loss from~\citep{hinton2014distillation,romero2015fitnets,koratana2019lit} as follows: 
\begin{equation}
\mathcal{L}_\text{IKD} = \sum_{\vx' \in \sX'}{\mathcal{L}_d(\vx'^T,\vx'^S)},
\end{equation}
where $\vx'^A$ is the intermediate representation of architecture $A$ and $\mathcal{L}_d$ is in most cases a measure of the distance between its arguments, which requires that they have the same dimensions.

However, forcing architectures for teacher and student to have intermediate representations with the same dimensions is not always desirable. Indeed, recent recommendations on DNN architecture design~\citep{tan2019efficientnet} show that efficient neural network scaling considers three main aspects: \begin{inlinelist} \item network depth (number of layers); \item network width (number of $\featuremaps$ per layer); \item resolution ($w$ by $h$ size of the input and intermediate representations).\end{inlinelist}. The two latter points are directly related to the dimensions of the intermediate representations and are therefore incompatible with IKD techniques.
  
To mitigate this drawback, recent works such as~\citep{park2019rkd,lassance2019deep} have introduced distillation that can be performed in both dimension-agnostic fashion and without adding extra transformations to the architecture of the DNNs. These methods are a part of the RKD group, where the focus is in the relative distances between the intermediate representations rather than on their exact positions. We delve into more details in RKD based compression in Section~\ref{chap5:gkd}, where we describe Graph Knowledge Distillation (GKD), a method that we proposed in~\citep{lassance2019deep} to improve performance of RKD based approaches.

\subsection{Shift attention layers (SAL)}~\label{chap2:SAL}

In~\citep{hacene2019attention}, we propose a more memory and computation efficient variation of convolutional layers that we call SAL, which we will detail in this subsection. Recently, the authors of~\citep{wu2018shift} have proposed to replace the convolution operator with the combination of shifts and 1x1 convolutions, an approach they called shift convolutions. In other words, shift convolutions propose to limit the filter construction of convolutions to just one weight per filter, while still keeping the original filter shape. In this first work, all the shifts were hand-crafted (i.e., decided arbitrarily before training). Note that previous works have shown that these shift convolutions are well suited for computationally constrained devices~\citep{hacene2018quantized}.

In order to increase the performance of the shift convolution, we introduced the Shift Attention Layer (SAL), which can be seen as a selective shift layer. SAL starts with vanilla convolution (i.e., with all the weights kept for each filter) and learns to transform it into a shift layer throughout the training of the network function. The introduced SAL use an attention mechanism~\citep{vaswani2017attention} that selects the best shift for each feature map of the architecture. Note that this could also be considered a pruning technique as we start with all the weights for each filter and then choose the one to keep. It can significantly outperform the original shift layers from~\citep{wu2018shift} at the cost of requiring more parameters during the training phase. We note that it still ends with fewer parameters during the inference phase.

We depict SAL in Figure~\ref{chap2:SAL_figure} and provide the code for reproducing our experiments at~\url{https://github.com/eghouti/SAL}. In the next paragraphs, we give more details about their core principle.

\begin{figure}[ht!]
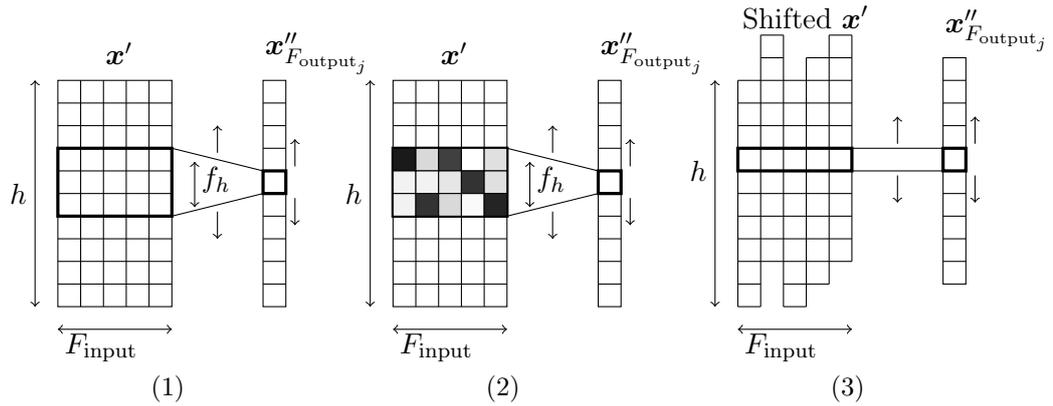

  \begin{center}
      \begin{subfigure}[b]{.3\linewidth}
        \centering
  \tikzsetnextfilename{chapter2/tikz/SAL1}%
  \input{chapter2/tikz/SAL1.tex}%

        \vspace{-0.9cm}
        \caption{(1)}
      \end{subfigure}%
      \begin{subfigure}[b]{.3\linewidth}
        \centering
  \tikzsetnextfilename{chapter2/tikz/SAL2}%
  \input{chapter2/tikz/SAL2.tex}%

        \vspace{-0.9cm}
        \caption{(2)}
      \end{subfigure}
      \begin{subfigure}[b]{.3\linewidth}
        \centering
  \tikzsetnextfilename{chapter2/tikz/SAL3}%
  \input{chapter2/tikz/SAL3.tex}%

        \vspace{-0.9cm}
        \caption{(3)}
      \end{subfigure}
    \caption{Overview of the proposed method: we depict here the computation for a single output feature map $\featuremaps_{\text{output}_i}$, considering a 1d convolution and its associated shift version. Panel (1) represents a standard convolutional operation: the weight filter $f \in \sH$ containing $f_h \featuremaps_\text{input}$ weights is moved along the spatial dimension ($h$) of the input to produce each output in $\vx''$. In panel (2), we depict the attention tensor $\mA$ on top of the weight filter: the darker the cell, the most important the corresponding weight has been identified to be. At the end of the training process, $\mA$ should contain only binary values with a single 1 per slice $\mA_{f,\cdot}$. In panel (3), we depict the corresponding obtained shift layer: for each slice along the input feature maps ($\featuremaps_\text{input}$), the cell with the highest attention is kept and the others are discarded. As a consequence, the initial convolution with a feature height $f_h$ has been replaced by a convolution with a feature height 1 on a shifted version of the input $\vx'$. As such, the resulting operation in panel (3) is exactly the same as the shift layer introduced in~\citep{wu2018shift}, but here the shifts have been trained instead of being arbitrarily predetermined. Figure and caption adapted from~\citep{hacene2019attention}}.
    \label{chap2:SAL_figure}
  \end{center}
\end{figure}

\subsubsection{Methodology}

We propose to add a selective tensor $\mA$ to standard convolutional layers, in order to identify which weight should be kept for each filter $f \in \sH$ where $\sH$ is the set of filters of the convolutional layer. As such, we introduce $\mA \in  \mathbb{R}^{\sH \times (f_w \times f_h)}$ a tensor that for each filter in $\sH$ contains a matrix of the same size of the filter ($f_w$ by $f_h$). Each submatrix of $\mA$ is then normalized so that each value is between 0 and 1, with the sum of values in the matrix being equal to 1. The values in  each submatrix from $\mA$ represent how important is the corresponding weight $\mW$ in the filter from $\sH$. During the training process, the values of $\mA$ are pushed to binarization, until the end of the training process when they are binarized. With the weights in $\mA$ binarized, only the corresponding weight for each filter $f \in \sH$ are kept.

More precisely, each slice $\mA_{\sH_i,\cdot,\cdot}$ is normalized with the softmax function from Equation~\ref{chap2:softmax_equation} with a temperature $T$. The temperature is decreased smoothly along the training process in order to force the binarization of the softmax outputs. Note that in order to force the mask $\mA$ to be selective, we first normalize each slice $\mA_{\sH_i,\cdot,\cdot}$ so that it has a standard deviation ($\sigma^2$) of 1. We summarize the training process of one SAL layer in Algorithm~\ref{chap2:algo_SAL}. At the end of training, the selected weight in each filter $\sH_i$ corresponds to the maximum value in $\mA_{\sH_i,\cdot,\cdot}$.

\begin{algorithm}[ht!]
\caption{Pseudo-algorithm for one SAL layer, adapted from~\citep{hacene2019attention}}
\textbf{Inputs}: Input tensor $\vx'$,\\ Initial softmax temperature $T$, Constant $\alpha < 1$.
\begin{algorithmic}
\For{ each training iteration}
\State $T \leftarrow \alpha T$
\State $\mA' \leftarrow \mA$
\For{$i:=1$ to $|\sH|$}
\State $\mA'_{\sH_i,\cdot,\cdot} \leftarrow \frac{\mA_{\sH_i,\cdot,\cdot}}{\sigma^2(\mA_{\sH_i,\cdot,\cdot})}$

\State $\mA'_{\sH_i,\cdot,\cdot} \leftarrow Softmax ({\mA'_{\sH_i,\cdot,\cdot}})$

\State $\sH'_i \leftarrow \sH_i \cdot \mA'_{\sH_i}$

\EndFor

\State Compute standard convolution as described in Section~\ref{chap2:convolutional_layer} using input tensor $\vx'$ and the set of filters $\sH'$ instead of $\sH$ .
\State  Update $\sH$ and $\mA$ via back-propagation.

\EndFor
\end{algorithmic}
\label{chap2:algo_SAL}
\end{algorithm}

Note that an advantage of SAL is that the number of filters per direction is not fixed as it was the case in the vanilla shift layers. However, this advantage comes with a drawback: it increases memory usage in order to retain which shift kept for each feature map. We note that this drawback is taken into account in our experiments.

\subsubsection{Experiments on CIFAR-10/100}

Now, we present the benchmarking protocol and a performance comparison of SAL and other shift layers on CIFAR-10/100. We refer the reader to~\citep{hacene2019attention} for a more extensive experimental discussion, including tests on the Imagenet dataset and comparison against pruning methods. 

In order to promote a fair comparison with the other shift layers~\citep{wu2018shift,jeon2018constructing}, we use the same hyperparameters when possible, for example: 
\begin{itemize} 
  \item Epochs: 300 epochs;
  \item Learning rate scheme: the learning rate starts at $0.1$ and is divided by $10$ after each $100$ epochs;
  \item Batch size: 128;
  \item Temperature $T$: $T$ starts at 6.7 and after each parameter update (step) it is updated so that it ends at $T_\text{final}=0.02$.
\end{itemize}

We present in Table~\ref{chap2:table_sal} a comparison of SAL against the vanilla shift layer in terms of accuracy and number of parameters needed during inference. We observe that our method achieves a better accuracy with fewer parameters than the baseline and other shift-module based methods. 

\begin{table}[ht]
  \begin{center}      
    \caption{Comparison of accuracy and number of parameters between 3x3 convolution, vanilla shift~\citep{wu2018shift}, interpolation shift~\citep{jeon2018constructing}, and ours on CIFAR10 and CIFAR100.}
    \label{chap2:table_sal}
    {\renewcommand{\arraystretch}{1.3}%
      \begin{adjustbox}{max width=\columnwidth}        
        \begin{tabular} { | c | c || c | c | c | c | c |} 
    
        \cline{4-7}
        \multicolumn{3}{l|}{} &  \multicolumn{2}{c|}{CIFAR10} & \multicolumn{2}{c|}{CIFAR100} \\
        \hline
        Network & $\featuremaps_\text{initial}$ & Convolutional layer & Accuracy & Params (M) &  Accuracy & Params (M) \\
      \hline
        \hline
        Resnet20  & 16 & 3x3 Convolution & $94.66\%$ & $1.22$ & $73.7\%$ & $1.24$ \\
        \hline
        Resnet110 & 16 & Vanilla shift~\citep{wu2018shift}   & $93.17\%$  & $1.2$ &  $72.56\%$ & $1.23$ \\
        \hline
      
        Resnet20  & 88& Interpolation shift~\citep{jeon2018constructing}   & $94.53\%$  & $0.99$ &  $76.73\%$ & $1.02$ \\
        \hline
        Resnet20  & 83  & SAL (ours)   & $\mathbf{95.52}\%$  & $\mathbf{0.98} $ &  $\mathbf{77.39}\%$ & $\mathbf{1.01}$ \\
        \hline
     
      \end{tabular}
      \end{adjustbox}

      }
  \end{center}
\end{table}

\section{Robustness}\label{chap2:robustness}

As we previously discussed in Chapter~\ref{chap1}, DNNs can provide state-of-the-art performance in many machine learning challenges. This success can be justified based on their universal approximation properties~\citep{hornik1989approximator}, which allow them to approximate any function that associates each training set input to its corresponding class. However, this is also a double-edged sword, as the resulting function may not handle well domain shifts (i.e., it does not generalize well to previously unseen inputs). We have previously described this phenomenon in Defintion~\ref{chap2:overfitting}.

Indeed, adversarial attacks, i.e., imperceptible changes to the input explicitly built to fool the network function~\citep{szegeny2013intriguing,goodfellow2014adversarial}, illustrate the risks of overfitting to $\dataset$. More realistic scenarios include isotropic noise~\citep{mallat2016understanding} and standard corruptions~\citep{hendrycks2019robustness} that are also likely to produce similar misclassifications. Robustness to such deviations is, therefore, a key challenge, especially in applications that are very sensitive to errors, such as autonomous vehicles or robotic-assisted surgery.

Note that by using a definition linked to $\dataset$, robustness is therefore defined as the resiliency of the network to \textbf{corrupted inputs} $\hat{\vx}$.
\begin{definition}[corrupted input $\hat{\vx}$]\label{chap2:corrupted_inputs}
We aim to train DNNs to be robust to corrupted inputs $\hat{\vx} \not \in \dataset$. These inputs are defined in such a way that there exists a $\vx \in \dataset$ and that $\|\hat{\vx}-\vx \| \approx \varepsilon$ where $\|\cdot\|$ is a measure of distance and $\epsilon$ is a small enough threshold. The most common measures of distance in the literature are the $\normltwo$~\citep{qian2019l2nonexpansive} and the $\normmax$~\citep{madry2018towards}, but it may also refer to more abstract concepts such as the same image but with different levels of contrast/brightness~\citep{hendrycks2019robustness}. 
\end{definition}

We note that recent works have also studied the concept of deep neural network robustness in the context of graph neural networks. In this case we have to consider not only corrupted inputs $\hat{\vx}$ but also corrupted support graphs. We will not delve into the concept of graph neural network robustness, but we refer the reader to~\citep{bojchevski2019graphrobustness} for a more in depth discussion. More details about graph-supported deep neural network methods are available in Chapter~\ref{chap4}.

Given our definition of the corrupted inputs $\hat{\vx}$, a common approach to increasing the robustness of DNNs is to concentrate on the Lipschitz constant of the network. Recall that a function $f$ is said to be $\alpha$-Lipschitz with respect to a norm $\|\cdot\|$ if $\|f(\vx_i)-f(\vx_j)\| \leq \alpha \|\vx_i-\vx_j\|, \forall \vx_i,\vx_j$. Provided $\alpha$ is small, such a function is robust to small deviations around correctly classified inputs, as it holds that: $\|f(\vx+\varepsilon)-f(\vx)\| \leq \alpha\|\varepsilon\|$. One example of such a method that focuses in the Lipschitz constant of $f$ is Parseval Networks~\citep{cisse2017parseval}, where the authors softly enforce the network $\normltwo$ and $\normmax$ Lipschitz constants to be bounded. Another example is~\citep{qian2019l2nonexpansive} where the authors propose to bound only the $\normltwo$ norm of the network. 

However, imposing a small Lipschitz constraint may be too restrictive of a constraint for the network function $f$. Indeed, the Lipschitz constant defines the slope of the function everywhere. Nonetheless, given the context of DNNs for classification, it is not unreasonable to expect sharp transitions in the output of $f$ if we are near the class boundaries. In other words, ideally, we would like for the smoothness properties of the network function to be location-dependent (e.g., different behavior close to class boundaries), meaning that global Lipschitz metrics may not be as meaningful. 

To illustrate our point, consider a function $f$ that outputs binary label indicator vectors. We can then compute the minimal Lipschitz constraint that allows for outputting these vectors, given the distance between each pair of examples, i.e., the closer a pair of examples of different classes is in the input space, the higher the Lipschitz constant of the network would have to be to allow for outputting label indicator vectors. We are interested in this measure, as the training objective of most classification DNNs is to be able to output this type of vector. We depict in Figure~\ref{chap2:Lipschitz_estimations} the proportion of pairs of training set inputs that are possible for a given Lipschitz constraint. The figure depicts information from the previously introduced datasets, CIFAR-10 and Imagenet32, using the $\normmax$ norm. We note that for the $\normmax$ norm a very high Lipschitz constraint is needed in order to correctly output binary label indicator vectors.

\begin{figure}[ht!]
  \begin{center}
  \tikzsetnextfilename{chapter2/tikz/incompatible_pairs}%
  \input{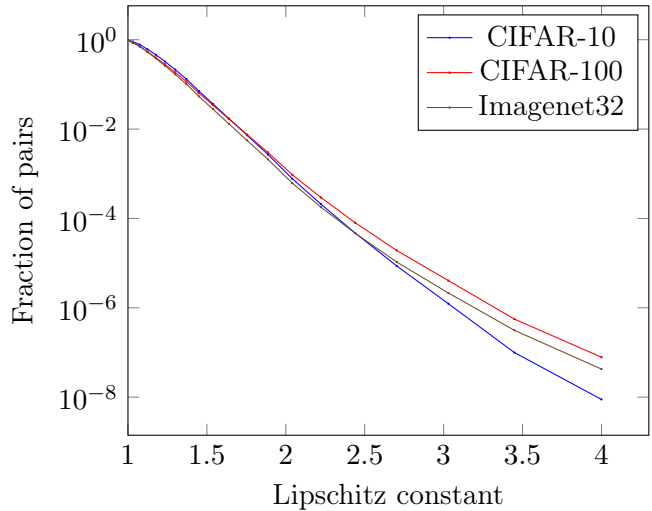}%

    \caption{Depiction of the proportion of pairs of training examples of distinct classes incompatible with a given Lipschitz constraint on the network function, for various datasets and the $\normmax$ norm. Figure and caption adapted from~\citep{lassance2019robustness} @2019 IEEE. }
    \label{chap2:Lipschitz_estimations}
  \end{center}
\end{figure}

In this section we describe our definition of robustness, that was first introduced in our previous work~\citep{lassance2019robustness}. This definition can be viewed as a \emph{localized} Lipschitz constant of the network function in $\trainset$. 

We ensure that any small deviation around a correctly classified training input should not dramatically impact the decision of the network function. Therefore our definition can be seen as a refinement over the previously proposed Lipschitz-based definitions, where we only enforce the Lipschitz constant on the previously defined $\hat{\vx}$. We will derive reasonable sufficient conditions to enforce the robustness of a deep learning architecture in the following paragraphs. Using experiments on CIFAR-10 (Section~\ref{chap2:cifar10}) and Imagenet32 (Section~\ref{chap2:imagenet}) we demonstrate that our proposed definition of robustness is correlated to the empirical robustness observed in a series of existing network training methods~\citep{madry2018towards,cisse2017parseval,qian2019l2nonexpansive,lassance2018laplacian}.

Note that the example depicted in Figure~\ref{chap2:Lipschitz_estimations} illustrates that for such a sharp network function, a global Lipschitz constraint is not meaningful: unless the Lipschitz constant is large (e.g., greater than 4) imposing a constraint will prevent the training error from converging to zero.  This example also suggests two related principles that can lead to better robustness and motivate our proposed robustness metric: \begin{inlinelist} \item robust network functions should be able to yield sharp transitions in boundary regions \item smoothness metrics should be localized. \end{inlinelist}

\subsection{Defining robustness}

We recall that our objective is to train a function $f$, which maps data from an input space $\sD$ into a softmax decision for classification. Therefore $f$ is a function from an input vector space (or tensor space) to $\mathbb{R}^c$, where $c$ is typically the number of classes. We are interested in the robustness of the network function~$f$. We introduce here a notion of robustness that should account for:
\begin{enumerate}
    \item A restricted domain $R$ on which it is defined,
    \item A locality $r$ around each point in $R$ on which it should be enforced.
\end{enumerate}

More formally, we define robust DNN behavior as follows: 

\begin{definition}[$\alpha$-robustness]
\label{chap2:def_robustness}
We say a network function $f$ is \textbf{$\alpha$-robust} over a domain $R$ and for $r>0$, and denote $f\in {\rm Robust}_\alpha(R,r)$, if:
\begin{equation}
     \|f(\vx+\varepsilon) - f(\vx)\| \leq \alpha \|\varepsilon\|,\forall \vx \in R, \forall \varepsilon \text{ s.t. } \|\varepsilon\| < r\;.
\label{chap2:eq_robustness}
\end{equation}
\end{definition}

In words, $f\in \rm{Robust}_\alpha(R,r)$ if $f$ is locally $\alpha$-Lipschitz within a radius $r$ of any point in domain $R$. Note that the following holds: $f \in \rm{Robust}_\alpha(\Omega,+\infty)$ if and only if $f$ is $\alpha$-Lipschitz. Note that as it was previously discussed, we are interested in enforcing robustness for a small radius $r$ around $\dataset$ that is still inside of $\sD$. 

We also define: $\alpha_{\lim}(f,R,r) = \inf\{\alpha: f\in \rm{Robust}_\alpha(R,r)\}$, where $\alpha_{\lim}(f,r)$ represents the minimum value $\alpha$ for which a region of radius $r$ is robust. Therefore our robustness definition can leverage a trade-off between the smoothness slope $\alpha$ and a radius $r$.

\emph{Illustrative example}: Figure~\ref{chap2:example_robustness} (Left) depicts the evolution of $\alpha_{\lim}(\sigma , R,r)$ as function of $r$ for the sigmoid function $\sigma: x\mapsto \frac{1}{1 + \exp(-x)}$ and $R = \{-10,10\}$. Observe that the sigmoid function yields an almost 0-Lipschitz constant around the two points $-10$ and $10$ and for a very small radius $r$. When the radius starts to increase, the Lipschitz constraint needs to be less restrictive (as the Lipschitz constant increases). The fact that $\alpha$ is almost 0 when $r$ is small is an illustration of robustness around $R$. The sharp transition occurring for $r\approx 10$ corresponds to a possible boundary between classes.

\begin{figure}[ht]
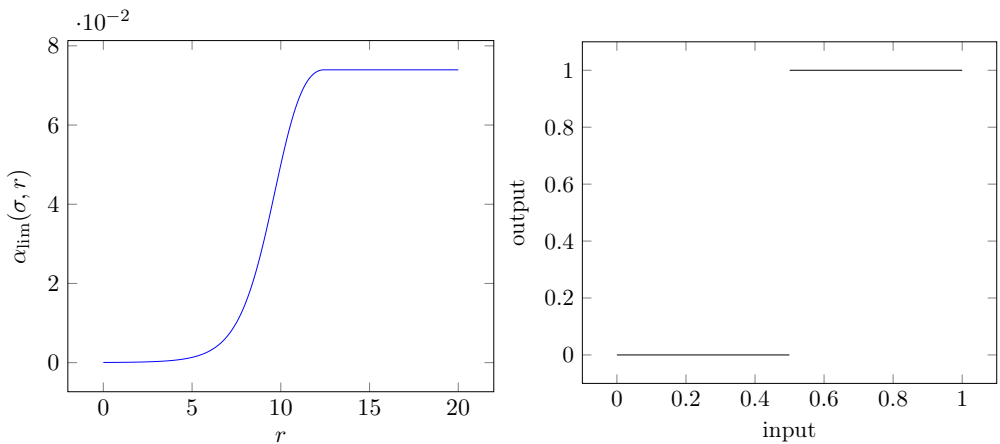

  \begin{center}
    \begin{subfigure}[b]{.45\linewidth}
      \centering
      \begin{adjustbox}{max width=\columnwidth}        
  \tikzsetnextfilename{chapter2/tikz/example_robustness_left}%
  \input{chapter2/tikz/example_robustness_left.tex}%

      \end{adjustbox}
  \end{subfigure}%
    \begin{subfigure}[b]{.45\linewidth}
      \centering
      \begin{adjustbox}{max width=\columnwidth}        
  \tikzsetnextfilename{chapter2/tikz/example_robustness_right}%
  \input{chapter2/tikz/example_robustness_right.tex}%

      \end{adjustbox}
    \end{subfigure}
    \caption{\textbf{Left}: Evolution of $r \mapsto \alpha_{\lim}(\sigma, R, r)$. \textbf{Right}: Representation of the decision of a hyperplane separator between points $0$ and $1$. Figure and caption extracted from~\citep{lassance2019robustness} @2019 IEEE.}
    \label{chap2:example_robustness}
  \end{center}
\end{figure}

\subsection{Relation with Lipschitz constants}

We now describe the relation between our definition of robustness and the Lipschitz constant of $f$. Note that by particularization, if $f$ is $\alpha$-Lipschitz then $f\in \rm{Robust}_\alpha(R,r),\forall r$. On the other hand, $f\in \rm{Robust}_\alpha(R,r)$ for some $r$ does not imply that $f$ is $\alpha$-Lipschitz. An example of this would be a trivial classification problem where $\trainset$ is composed of two distinct vectors $\vx_1$ and $\vx_2$ of distinct classes. A network function $f$ that uses a hyperplane to separate the space into two halves has no Lipschitz constant because $\alpha\approx\infty$ close to the hyperplane, despite $\alpha_{\lim}(f, R, \|\mathbf{x}-\mathbf{y}\|_2/2) = 0$. See Figure~\ref{chap2:example_robustness} (Right) for a 1D depiction of this example.

Note that this relation to the Lipschitz constant is a fundamental result, given that the best Lipschitz constant $\alpha$ of a function $f$ is going to be constrained by $\dataset$, i.e., if two training points of different classes are very close to each other then a zero training error classifier will by definition have a large Lipschitz constant near those points (as suggested by~Figure~\ref{chap2:Lipschitz_estimations}). The proposed robustness described in Definition~\ref{chap2:def_robustness} is also going to be limited by the construction of $\dataset$, but given a small enough $r$ it is easy to imagine that we will be able to reach any small $\alpha$. Indeed, denote by~$\vc^\vx$ the class corresponding to training example $\vx$. Then, if $f$ matches a 1-nn classifier, we obtain that: 
\begin{equation}\label{chap2:0alpha}
f\in \rm{Robust}_0\left(\min_{\genfrac{}{}{0pt}{2}{\vx_1,\vx_2\in T} {\vc^{\vx_1}\neq\vc^{\vx_2}}}{\frac{\|\vx_1 - \vx_2\|}{2}}\right)\;,
\end{equation}
and thus any small value for $\alpha$ is achievable within a small radius around examples.

\subsection{Compositional robustness}

Note that directly enforcing a robustness criterion on the entire DNN function $f$ can be hard in practice, because of the numerous intermediate representations that are implied in the process. This is why several works in the literature consider each layer of the architecture separately~\citep{cisse2017parseval,qian2019l2nonexpansive,lassance2018laplacian}.

Here we will focus on the simple case where $f$ is obtained as the composition of intermediate functions $f^\ell$. Note that the results presented here could easily be extended to more generic cases. So we denote by $f^\ell$ the function corresponding to layer $\ell$, by $F^\ell$ the function $F^\ell = f^\ell \circ \dots \circ f^1$, and we suppose that $F^{\ell_{\max}} = f$ is the DNN function. We define \emph{layer-robustness} as:
\begin{definition}[layer-robustness]
We say that an intermediate function $f^{\ell+1}$ is $\alpha$-robust over $F^\ell(R)$ and for $r>0$ at depth ${\ell+1}$ and we denote $f^{\ell+1} \in Robust_\alpha(F^\ell(R),r)$ if:

\begin{equation}
\begin{array}{l}\|f^{\ell+1}(\vx^{\ell}+\varepsilon)-f^{\ell+1}(\vx^{\ell})\|\leq \alpha\|\varepsilon\|,\\ \vspace{0.3cm} \forall \vx^{\ell} \in F^{\ell}(R), \forall \varepsilon \text{ s.t. } \|\varepsilon\|< r.
\end{array}
\end{equation}

Note that we consider $f^0$ to be the identity function.
\end{definition}

There is a direct relationship between robustness of functions $f^{\ell}$ at the various layers of the architecture and that of $f$, as expressed in the following proposition.

\begin{proposition}
\label{parts_proposition}
Suppose that: \begin{equation}
f^{\ell+1} \in \rm{Robust}_{\alpha^{\ell+1}}(F^\ell(R),r \prod_{\lambda\leq \ell}{\alpha^{\lambda}}),\forall \ell \text{ s.t. } 0\leq \ell < \ell_{max}\;. 
\end{equation}

Denote $\alpha = \prod_{\lambda\leq \ell_{\max}}{\alpha^\lambda}$, then 
\begin{equation}
f = f^{\ell_{\max}} \circ \dots \circ f^{1} \in \rm{Robust}_\alpha(R,r)\;.
\end{equation}
\end{proposition}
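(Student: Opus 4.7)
The plan is a straightforward induction on the depth $\ell$, tracking how a perturbation of the input propagates through the cascade of intermediate functions. The scaling of the radii in the hypothesis, $r \prod_{\lambda \leq \ell}\alpha^\lambda$, is precisely what is needed to guarantee that at each stage the perturbed intermediate representation still falls within the domain on which the next layer's robustness is known.

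More concretely, fix $\vx \in R$ and $\varepsilon$ with $\|\varepsilon\| < r$. Define $\vx^0 = \vx$, $\tilde{\vx}^0 = \vx + \varepsilon$, and recursively $\vx^{\ell+1} = f^{\ell+1}(\vx^\ell)$ and $\tilde{\vx}^{\ell+1} = f^{\ell+1}(\tilde{\vx}^\ell)$. The inductive claim I would prove is
\[
\|\tilde{\vx}^\ell - \vx^\ell\| \;\leq\; \|\varepsilon\| \prod_{\lambda \leq \ell} \alpha^\lambda,
\]
with the convention that the empty product at $\ell = 0$ equals $1$, so the base case is the identity $\|\tilde{\vx}^0 - \vx^0\| = \|\varepsilon\|$. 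For the inductive step, assume the bound holds up to depth $\ell$. Then $\vx^\ell \in F^\ell(R)$, and the perturbation $\tilde{\vx}^\ell - \vx^\ell$ has norm at most $\|\varepsilon\|\prod_{\lambda \leq \ell}\alpha^\lambda < r \prod_{\lambda \leq \ell}\alpha^\lambda$. This is exactly the radius within which $f^{\ell+1}$ is assumed to be $\alpha^{\ell+1}$-robust over $F^\ell(R)$, so applying the layer-robustness definition yields
\[
\|\tilde{\vx}^{\ell+1} - \vx^{\ell+1}\| \;\leq\; \alpha^{\ell+1}\, \|\tilde{\vx}^\ell - \vx^\ell\| \;\leq\; \|\varepsilon\| \prod_{\lambda \leq \ell+1} \alpha^\lambda,
\]
which closes the induction.

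Taking $\ell = \ell_{\max}$ gives $\|f(\vx + \varepsilon) - f(\vx)\| \leq \alpha \|\varepsilon\|$ for every $\vx \in R$ and every $\varepsilon$ with $\|\varepsilon\| < r$, which is the desired conclusion $f \in \mathrm{Robust}_\alpha(R,r)$.

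The only subtle point, and what I would flag as the main thing to get right, is the bookkeeping of the radius conditions: one must check at each step that the perturbation $\tilde{\vx}^\ell - \vx^\ell$ is strictly inside the radius on which $f^{\ell+1}$ is assumed to be robust. This is guaranteed because the hypothesis inflates the radius at depth $\ell+1$ by exactly the cumulative Lipschitz factor $\prod_{\lambda \leq \ell} \alpha^\lambda$, matching the worst-case growth of the perturbation. Everything else is a routine chain of inequalities, and the strict inequality $\|\varepsilon\| < r$ is preserved at each step.
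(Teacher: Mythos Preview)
Your proof is correct and follows essentially the same approach as the paper's: an induction on the layer index $\ell$ showing that the perturbation at depth $\ell$ is bounded by $\|\varepsilon\|\prod_{\lambda\leq\ell}\alpha^\lambda$, so that it stays within the scaled radius on which the next layer's robustness applies. The paper phrases the inductive hypothesis as $F^\ell \in \mathrm{Robust}_{\prod_{\lambda\leq\ell}\alpha^\lambda}(R,r)$ rather than tracking $\tilde{\vx}^\ell - \vx^\ell$ explicitly, but the content is identical.
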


\begin{proof}
  Let us fix $\vx\in R$.  We proceed by induction. Let us show that if:
  \begin{equation}
  F^{\ell} \in \rm{Robust}_{\prod_{\lambda\leq \ell}{\alpha^\lambda}}(R,r)\;,
  \end{equation}
  then 
  \begin{equation}
  F^{\ell+1} \in \rm{Robust}_{\prod_{\lambda\leq \ell+1}{\alpha^\lambda}}(R,r)\;.
\end{equation}
  Indeed, let us fix  $\varepsilon \text{ s.t. } \|\varepsilon\| < r$, then:
  \begin{equation}
      \|F^{\ell+1}(\vx+\varepsilon) - F^{\ell+1}(\vx)\|= \|f^{\ell+1}(F^{\ell}(\vx+\varepsilon)) - f^{\ell+1}(F^{\ell}(\vx))\|\;.
  \end{equation}
  Note that as $F^{\ell} \in \rm{Robust}_{\prod_{\lambda\leq \ell}{\alpha^\lambda}}(R,r)$, it holds that:
  \begin{equation}
  \|F^{\ell}(\vx + \varepsilon) - F^{\ell}(\vx)\| \leq \prod_{\lambda\leq {\ell}}{\left(\alpha^\lambda\right)} \|\varepsilon\|;.
  \end{equation}
  So we can write:
  \begin{equation}
  F^{\ell}(\vx+\varepsilon) = F^{\ell}(\vx)+ \varepsilon'\;,
  \end{equation} where: 
  \begin{equation}
\|\varepsilon'\| \leq \prod_{\lambda\leq {\ell}}{\left(\alpha^{\lambda}\right)} \|\varepsilon\| \leq r \prod_{\lambda\leq {\ell}}{\left(\alpha^{\lambda}\right)}   \;.
  \end{equation}
  Finally, we obtain:
  \begin{equation}
    \begin{array}{lll} 
    &      & \|F^{\ell+1}(\vx+\varepsilon) - F^{\ell+1}(\vx)\|\\
    & =    & \|f^{\ell+1}(F^{\ell}(\vx)+\varepsilon') - f^{\ell+1}(F^{\ell+1}(\vx^{\ell}))\| \\
    & \leq & \alpha^{\ell+1}\|\varepsilon'\| \leq \prod_{\lambda\leq \ell+1}{\left(\alpha^\lambda\right)} \|\varepsilon\| \;.
    \end{array}  
  \end{equation}
\end{proof}

We note that conditioning the intermediate function $f^{\ell+1}$ is less strict if all the previous layers were already yielding small values of $\alpha$. In other words, the demanded radius for $f^{\ell+1}$ robustness is smaller. We thus observe there would be multiple possible strategies to enforce the compositional robustness of $f$ in practice: \begin{inlinelist} \item forcing all layers to provide similar robustness \item focusing only on a few layers of the architecture. \end{inlinelist} Note that most proposed methods in the literature~\citep{cisse2017parseval,qian2019l2nonexpansive,lassance2018laplacian} opt for enforcing the former property as the latter would be too restrictive and would probably prevent the learning procedure from converging. We note that ii) would also be incompatible with the previously described Resnet architecture as it would restrict the ability of residual connections to ignore individual blocks.

\subsection{Sources of noise}

We now present some of the sources of noise that we study in this work and that fulfill Definition~\ref{chap2:corrupted_inputs}. These sources may be deliberate (adversarial attacks), or they could be just circumstantial (Gaussian noise added to a $\vx \in \dataset$). 

\subsubsection{Adversarial attacks}

In the literature, several methods have been proposed to measure the robustness of network functions. The first set of approaches~\citep{szegeny2013intriguing, goodfellow2014adversarial, madry2018towards} proposes to generate perturbations that both maximize the training loss and minimize the distance from the original inputs. This perturbation is generated by backpropagating the gradients through the networks to the inputs. These adversarially generated images are very potent against unprotected DNNs even for very small $\varepsilon$. 

Note that in this same vein, various methods were proposed to increase the robustness of DNNs by using adversarial examples as a data augmentation procedure. They do so by adding these adversarial examples to $\trainset$~\citep{goodfellow2014adversarial, madry2018towards}. Thus, during the training phase, the network function becomes increasingly robust to the corresponding corruptions. However, no guarantee exists that by increasing robustness to a specific type of corruption leads to better performance on other types of corruption, as discussed in~\citep{hendrycks2019robustness,engstrom2018evaluating}.

In this work, we consider three gradient-based adversarial attacks: 
\begin{enumerate}
\item FGSM method~\citep{goodfellow2014adversarial}, that we consider a mean case of adversarial noise, where the adversary can only use one forward and one backward pass to generate the perturbations. This approach is called Fast Gradient Sign Method (FGSM);
\item The DeepFool method~\citep{moosavi2016deepfool}, that is considered to be a worst case scenario, where the adversary can use multiple forward and backward passes to try to find the smallest perturbation that will fool the network;
\item The PGD method~\citep{madry2018towards}, that can be seen a compromise between the mean case and the worst case, where the adversary can do a predefined number of forward and backward passes with a perturbation threshold limit.
\end{enumerate} 

A main criticism about adversarial attacks is that they require access to the network function $f$ and its derivative. This is highly improbable for many application cases. As such, some authors prefer to concentrate their studies on natural classifier-agnostic corruptions of data, as described in the next section.

\subsubsection{Corrupted inputs}
\label{chap2:robustness_benchmark}
Multiple types of corruption may be applied to the inputs during the capture of the data. Images are especially affected by this, as just changing the camera lens or sensor could generate specific artifacts that change the distribution of $\sD$ and $\dataset$. To deal with this, we introduce 15 common image corruptions, with five levels of severity each. These corruptions were first organized as a benchmark in ~\citep{hendrycks2019robustness}, with releases for the CIFAR-10 and Imagenet datasets. The goal of this benchmark is to compare the performance on the corrupted $\testset$ and the clean $\testset$ in order to isolate the original clean set performance from the analysis. We depict the 15 different corruptions in Figure~\ref{chap2:figure_dogs}.

\begin{figure}
  \begin{center}
  \tikzsetnextfilename{chapter2/tikz/dogs}%
  \begin{tabular}{ccccc}
    \footnotesize{Gaussian Noise}&\footnotesize{Shot Noise}& \footnotesize{Impulse Noise} & \footnotesize{Defocus Blur} & \footnotesize{Frosted Glass Blur}\\ 
      \includegraphics[scale=0.3]{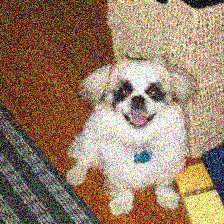}
      &\includegraphics[scale=0.3]{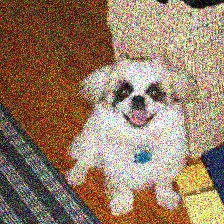}
      &\includegraphics[scale=0.3]{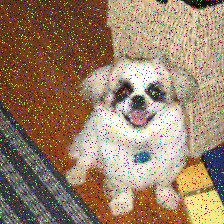}
      &\includegraphics[scale=0.3]{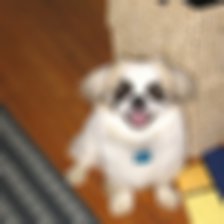}
      &\includegraphics[scale=0.3]{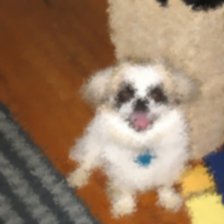} \\ 
      \footnotesize{Motion Blur}&\footnotesize{Zoom Blur}& \footnotesize{Snow} & \footnotesize{Frost} & \footnotesize{Fog}\\ 
      \includegraphics[scale=0.3]{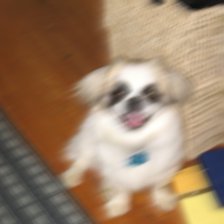}
      &\includegraphics[scale=0.3]{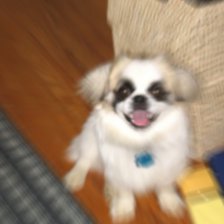}
      &\includegraphics[scale=0.3]{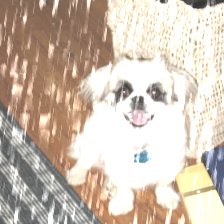}
      &\includegraphics[scale=0.3]{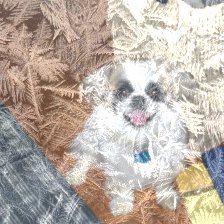}
      &\includegraphics[scale=0.3]{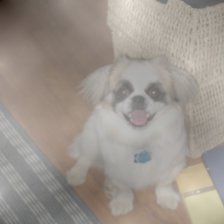} \\ 
      \footnotesize{Brightness}&\footnotesize{Contrast}& \footnotesize{Elastic} & \footnotesize{Pixelate} & \footnotesize{JPEG}\\
      \includegraphics[scale=0.3]{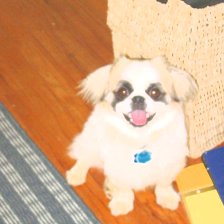}
      &\includegraphics[scale=0.3]{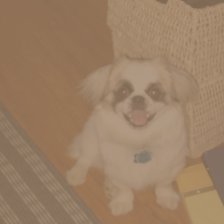}
      &\includegraphics[scale=0.3]{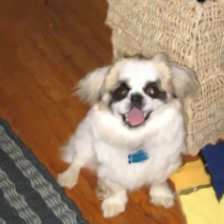}
      &\includegraphics[scale=0.3]{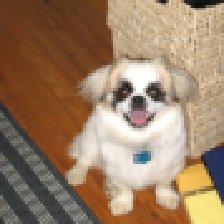}
      &\includegraphics[scale=0.3]{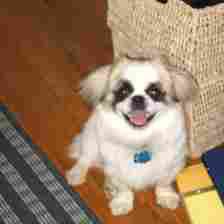} \\ 
\end{tabular}

    \caption{The 15 different types of corruptions from~\citep{hendrycks2019robustness} applied to a random Imagenet dog image.}
    \label{chap2:figure_dogs}
  \end{center}
\end{figure}

To measure the robustness of the networks to these corruptions, we use the relative mean corrupted error (relative mCE). To compute this measure of performance, one has to follow several steps:
\begin{enumerate}
  \item Take a trained classifier $f$ and compute the clean $\testset$ top-1 error rate. We denote this measure clean error rate: $E^f_\text{clean}$;
  \item Now test the classifier $f$ on each corruption $c$ and every severity $s$. We denote this by $E_{s,c}^f$;
  \item As different corruptions have different difficulty levels even with the 5 severities $s$, we now normalize each score using a pre-trained baseline network that we call $f_V$. This normalized score is called the corruption error or CE and it is defined by the following equation: 
  \begin{equation}
    \text{CE}_c^f = \frac{\sum_{s=1}^5 E_{s,c}^f}{\sum_{s=1}^5 E_{s,c}^{f_V}} \;
  \end{equation}
  \item We can therefore summarize model robustness by averaging over the different $\text{CE}_c$ for the same $f$. This leads to the mean corruption error (mCE);
  \item Finally, to ensure that gains in robustness come from the strengths of network $f$ and not as a by product of better performance on the clean $\testset$ we compute the relative CE 
  \begin{equation}
    \text{Relative CE}_c^f = \frac{ \sum_{s=1}^5 E_{s,c}^f - E^f_\text{clean} } { \sum_{s=1}^5 E_{s,c}^{f_V} - E^{f_V}_\text{clean} } \; .
  \end{equation}   
  As it was the case CE, we can also average the relative corruption errors to obtain the relative MCE.   
\end{enumerate}

Note that this is the standard measure of robustness for this benchmark as introduced in~\citep{hendrycks2019robustness}.

\subsection{Existing methods in the lens of our definition of robustness}

We now present four of the prior works in the literature from the perspective of our proposed robustness measure, namely: Parseval networks (P)~\citep{cisse2017parseval}, $\normltwo$ non-expansive networks (L2NN)~\citep{qian2019l2nonexpansive}, Laplacian networks (L)~\citep{lassance2018laplacian} and Projected Gradient Descent adversarial training (PGD)~\citep{madry2018towards}. See Table~\ref{chap2:robustness_summary} for a summary.

We recall that in~\citep{cisse2017parseval}, networks are trained to be $\alpha$-Lipschitz for both $\normmax$ and $\normltwo$. They do so by adding a regularizer to the training scheme that tries to make each weight matrix of the DNN a Parseval tight frame~\citep{kovavcevic2008introduction}. Among the four methods we consider, this Parseval was the only one that led to improved performance on the clean test set. However, we note that \citep{cisse2017parseval} does not seem to strictly enforce the $\alpha$-Lipschitz constraint, as it does not consider batch normalization layers. This is why it can achieve excellent memorization (which theoretically, as seen in Figure~\ref{chap2:Lipschitz_estimations}, could only be achieved if $\alpha$ is large). This also explains why this method achieves worse results in robustness than L2NN~\citep{qian2019l2nonexpansive}. In terms of our proposed definition of robustness, this is a global method that targets the $\text{Robust}_\alpha(r)$ metric for $r \rightarrow +\infty$, penalizing large slopes in the network function between any two points. We will see that more localized approaches (targeting finite $r$) achieve improved robustness when compared to Parseval. We denote this method P in the remainder of this work.

L2NN~\citep{qian2019l2nonexpansive}, on the other hand, enforces the network to be $\alpha$-Lipschitz only in terms of the $\normltwo$ norm but does it with a stricter criterion. The Lipschitz condition is built into the structure of the network itself. The authors from~\citep{qian2019l2nonexpansive} admit that enforcing a global $\alpha$-Lipschitz constant is by itself too hard and that the distances between examples should not collapse throughout the network architecture. As such, they also limit the contraction of space. L2NN seems to be the most robust method against $\normltwo$ attacks of the four methods we consider. It has also been shown to combine well with PGD training. However, it is also the method that was the worst-performing on the clean test set.

In~\citep{lassance2018laplacian}, we applied a regularization at each ReLU activation in the architecture to enforce that the average distance between examples of different classes remains almost constant from layer to layer. We exploit the smoothness of the label indicator vector across the graph generated by intermediate representations at a given layer to enforce this smooth transformation. We detail this contribution in Chapter~\ref{chap5:laplacian}. In terms of Definition~\ref{chap2:def_robustness}, this method focuses on pairs of examples of distinct classes and tries to restrict changes in their $\normltwo$ distance. Thus, \citep{lassance2018laplacian} indirectly penalizes changes in local smoothness. If we consider Definition ~\ref{chap2:def_robustness} with $f(.)$ chosen to be the function that assigns to each example its correct label, and we do not allow the average $r$ between opposite class examples to change much, then the corresponding $\alpha$ will change slowly with the training.

Finally, PGD adversarial training~\citep{madry2018towards} is a data augmentation procedure that generates adversarial examples during the training phase, as described in the previous subsection. Using the PGD data augmentation leads to a min-max game between the network and the examples generation. It works mostly on the domain $\trainset$, as it increases its size and also decreases the difference between $\trainset$ and a noisy test domain. The data augmentation scheme leads to less domain shift to corrupted inputs, but on the other hand, it increases the domain shift to clean images. As a result, the networks perform well against noise (isotropic or adversarial) but have problems with clean examples. 

\begin{table}[ht!]
    \centering
    \begin{tabular}{c|c|c|c|c}
\hline
         Method & Domain ($R$) & Slope ($\alpha$) & Locality ($r$) & Metric \\
        \Xhline{2\arrayrulewidth}
         
         P & $\Omega$ & Yes & No & $\normltwo$ + $\normmax$\\
         L2NN & $\Omega$ & Yes & No & $\normltwo$\\
         L & $T$ & Approx. & Yes & $\normltwo$ + cos\\
         PGD & augmented $T$ & No & Yes & $\normmax$\\ \hline
    \end{tabular}
    \caption{Summary of the methods and the notions of the introduced robustness they consider. Table extracted from~\citep{lassance2019robustness} @2019 IEEE.}
    \label{chap2:robustness_summary}
\end{table}

\subsection{Empirical evaluation of the proposed robustness metric}

We perform experiments to evaluate the relevance of the proposed robustness definition (\ref{chap2:def_robustness}) empirically. Vanilla (V), Parseval (P), and Laplacian (L) refer to the networks that were trained in~\citep{lassance2018laplacian}, PGD, and L2NN refer to the networks trained in their original papers~\citep{madry2018towards,qian2019l2nonexpansive}. Note that this direct comparison with the baseline is not entirely fair, given that the networks and hyperparameters for different papers are not the same. For example, PGD has more layers and parameters and uses non-adversarial data augmentation during training, while L2NN does not use residual architectures. 

We depict in Figure~\ref{chap2:incompatibility_robustness}, as a function of $\alpha$, the ratio between \begin{inlinelist} \item the number of examples within distance $d$ of each other that are not $\alpha$-robust for $\normmax$ \item the total number of example pairs.\end{inlinelist} Note that $d$ can be roughly interpreted as a diameter ($2r$) in our definition of robustness.  As in the previous Figure~\ref{chap2:Lipschitz_estimations}, the output of the network function is a label indicator vector of the corresponding classes.  Note that for each choice of $d$, the curve is initially flat. The flat section means that \emph{all the pairs within $d$ are $\alpha$-robust}. Note that the number of pairs of examples in distinct classes that are closer than $d$ drops very fast and becomes negligible for $d=0.3$. In other words, for $d=0.3$, it is theoretically possible to find a robust $f$ that is compatible with {\em almost all} pairs of the training set.

\begin{figure}[ht]
  \begin{center}
    \begin{adjustbox}{max width=\columnwidth}      
  \tikzsetnextfilename{chapter2/tikz/incompatibility_robustness}%
  \input{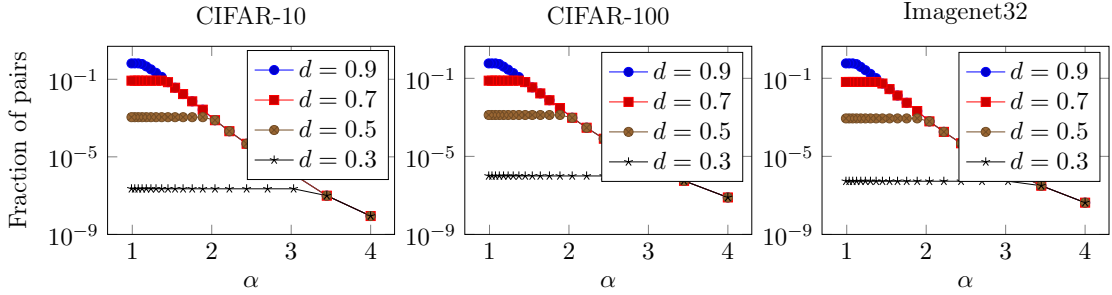}%

  \end{adjustbox}
  \caption{Depiction of the proportion of pairs of training examples of distinct classes incompatible with Definition~\ref{chap2:def_robustness} for the $\mathcal{L}_{\infty}$ norm, as a function of $\alpha$ and for various values of $d$.}
  \label{chap2:incompatibility_robustness}
  \end{center}
\end{figure}

In Figure~\ref{chap2:ralpha} we depict the evolution of $\alpha_{\lim}(r)$ as a function of $r$ for the various methods. We use 100 training examples with 1000 Gaussian noise realizations as a proxy to estimate $\alpha_{\lim}(\cdot)$ on the CIFAR-10 dataset. Note that for all methods, $\alpha$ increases as a function of $r$ and then achieves its maximum value. 

\begin{figure}[ht]
  \begin{center}
  \tikzsetnextfilename{chapter2/tikz/ralpha}%
  \input{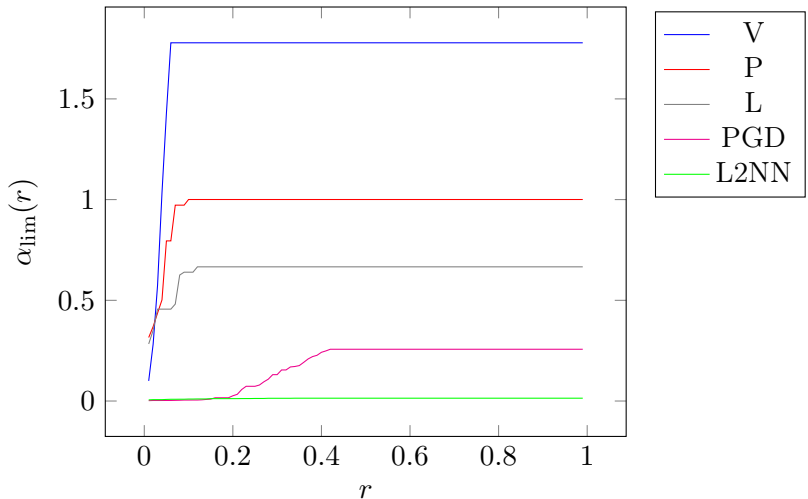}%

    \caption{Estimations of $\alpha_{\lim}(r)$ obtained for different radius $r$ over training examples with the $\normmax$ norm. Figure extracted from~\citep{lassance2019robustness} @2019 IEEE.}
    \label{chap2:ralpha}
  \end{center}
\end{figure}

Vanilla (V) is the fastest to saturate and achieves the largest value of $\alpha$ for two reasons \begin{inlinelist} \item sharp transitions in the network function over short distances are allowed \item the network function produces outputs closest to the one-hot-bit encoded vector (since V can achieve zero error on the training set).\end{inlinelist} In contrast, the other methods grows at a slower pace and reach smaller maximum values of $\alpha_\text{lim}$. This behavior indicates that transitions are not as sharp and that some examples may be misclassified. 

Moreover, the fact that, for both P and L, $\alpha$ saturates at larger values of $r$ suggests that the margin between the examples and the boundary is increased compared to Vanilla. L2NN and PGD saturate at the lowest $\alpha$ values. 

We also observe a transition for PGD occurring at around $r=0.3$ (which is an excellent radius given the aforementioned result that theoretically $d=0.3$ would be the limit), whereas L2NN remains almost constant. We believe that the L2NN behavior is due to the fact L2NN enforces a strong Lipschitz constraint, on the $\normltwo$ norm, everywhere on the function. Therefore the network function described by the L2NN network is almost linear between all training samples, leading to a network that is not as accurate on the clean set. As seen in Figure~\ref{chap2:Lipschitz_estimations}, this lower value of $\alpha$ creates strong incompatibilities with the training dataset, which could be the reason why L2NN achieves the worst performance on the clean set (c.f. Table~\ref{chap2:robustness_accuracy_table}).

We now compare methods in terms of robustness on the recently proposed benchmark of image corruptions~\citep{hendrycks2019robustness}. PGD achieves the best trade-off between accuracy and robustness, as seen in Table~\ref{chap2:robustness_accuracy_table}. We note that for PGD, our robustness metric has a relatively small value for $\alpha_{\max}$, and the slope starts at $0.2\leq r \leq0.4$, which corresponds to appropriate values of $d$ as seen in Figure~\ref{chap2:Lipschitz_estimations}. The results described in both Table~\ref{chap2:robustness_accuracy_table} and the behavior described in Figure~\ref{chap2:ralpha}, seem to suggest that improved robustness is achievable when the network function is smooth locally near the examples, i.e., network functions that favor Definition~\ref{chap2:def_robustness} were empirically more robust. 

\begin{table}[ht]
  \begin{center}
  \caption{Test set error on the CIFAR-10 dataset under different image conditions. Table and caption extracted from~\citep{lassance2019robustness} @2019 IEEE.}
  \begin{tabular}{c|ccccc}
  \hline
  Dataset             & V        & P           & L       & PGD             & L2NN          \\ 
  \Xhline{2\arrayrulewidth}
  Clean           & 11.9\%  & \textbf{10.2\%}  & 13.2\%  & 12.8\%          & 20.9\%                   \\
  MCE       & 31.6\%  & 30.5\%           & 31.3\%  & \textbf{18.8\%} & 28.5\%                 \\
  relativeMCE & 100    & 103             & 92    & \textbf{30}            & 39                  \\  \hline
  \end{tabular}
  \label{chap2:robustness_accuracy_table}
  \end{center}
\end{table}

  Finally, we depict in Figure~\ref{chap2:figure_snr} the relative robustness performance of each method under the same Gaussian Noise parameters used to generate Figure~\ref{chap2:ralpha}\footnote{We do not report the results for P in the Imagenet32 dataset since we did not find right hyperparameters to obtain a good accuracy on the clean test set. Also, PGD and L2NN results are not reported in the case of CIFAR-100 and Imagenet32 as pre-trained networks were not available.}.

\begin{figure}[ht]
  \begin{center}
  \begin{adjustbox}{max width=\columnwidth}    
  \tikzsetnextfilename{chapter2/tikz/figure_snr}%
  \input{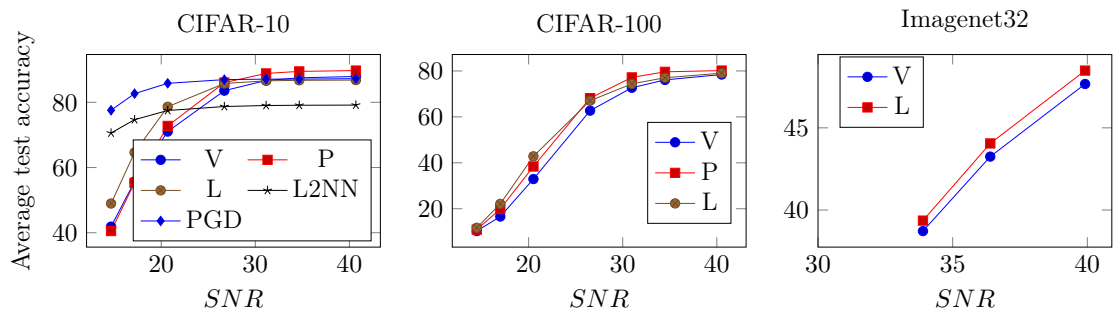}%

  \end{adjustbox}
  \caption{Average test set accuracy under Gaussian noise for various datasets and methods. Figure and caption extracted from~\citep{lassance2019robustness} @2019 IEEE.}
    \label{chap2:figure_snr}
  \end{center}
\end{figure}

\section{Summary of the chapter}

In this first chapter, we introduced Deep Neural Networks (DNNs), with a specific focus on Residual architectures. We also introduced the concept of intermediate representations in DNNs, which will be a central part of the following chapters of this thesis. These intermediate representations can be used in order to perform transfer learning as introduced in Section~\ref{chap2:feature_extraction} and also in order to abstract DNNs as feature extractor followed by classifiers.

We presented various problems where DNNs are relevant. These problems are going to be further studied in the next chapters and include the following tasks: \begin{inlinelist} \item image retrieval \item vision based localization \item image classification  \item neurological task classification \item document classification\end{inlinelist}. 

We also reviewed some of the literature in neural network compression, including distillation methods and more efficient convolution layers. We introduced SAL, a contribution on the subject of efficient convolution layers, which was subject to the following archival paper:
\begin{itemize}
\item \bibentry{hacene2019attention}
\end{itemize}

Further, we introduced the concept of robustness of a classifier, and demonstrated empirically how it can be connected with an ability to perform well in presence of corrupted inputs. This concept of robustness and its empirical experiments were published in the following conference paper:
\begin{itemize}
\item \bibentry{lassance2019robustness}
\end{itemize}

In the following chapter, we introduce the framework of Graph Signal Processing (GSP) which defines a series of concepts and analytical tools. These tools are going to allow us to analyse the intermediate representations of DNNs, and are going to be the cornerstone for our contributions in both Chapter~\ref{chap4} and Chapter~\ref{chap5}. Indeed, in Chapter~\ref{chap4} we use the tools of GSP to introduce Graph Neural Networks. In Chapter~\ref{chap5}, the framework of GSP will allow us to propose analytical tools to better understand the inner workings of DNNs. We also introduce improvements in the accuracy, robustness and compression of these architectures.
\chapter{Graph Signal Processing}\label{chap3}
\localtableofcontents
\vspace{1.0cm}

We present in this Chapter the concepts of graphs and Graph Signal Processing (\textbf{GSP}) that are exploited in this work in order to study the topology of intermediate representations of DNNs. Studying the topology of DNNs will allow us to propose new methods to improve DNNs in Chapter~\ref{chap4} and Chapter~\ref{chap5}. This chapter is organized as follows: first we define graphs, graph signals and GSP in Section~\ref{chap3:definitions}. Then we introduce the two types of graphs that we consider in this work in Section~\ref{chap3:graph_uses}, graph topology inference from data~\ref{chap3:graph_inference} and graph filters~\ref{chap3:graph_filter}. We refer the reader to~\citep{shuman2013emerging} for a more detailed introduction to GSP.

\section{Definitions}\label{chap3:definitions}

In general, \emph{graphs} are used as a formalism to represent data and its relationships. More precisely we define a graph as:

\begin{definition}[graph]
A graph $\gG$ is a tuple of sets $\langle \sV , \sE \rangle$, such that:  
\begin{enumerate}
 \item The set $\sV$ is composed of vertices $v_1, v_2, \dots $;
 \item The set $\sE$ is composed of pairs of vertices of the form ($v_i$,$v_j$) called edges.    
\end{enumerate}
\end{definition}

It is common to represent the set $\sE$ using an edge-indicator symmetric adjacency matrix $\adjmatrix \in \R^{|\sV|\times |\sV|}$. Note that being symmetric means that in this work we consider only undirected graphs, which allow us to simplify most of our notations. As a quick recall to the reader, in an undirected graph, there is no distinction between edges $(v_i,v_j)$ and $(v_j, v_i)$. 

In some cases, the matrix $\adjmatrix$ is weighted (it takes values other than 0 or 1) because it not only represents the fact that a pair of vertices $(v_i,v_j) \in \sE$ but also the weight associated with that representation, where typically a value closer to 0 corresponds to a vanishing relationship. In other words, each element $\emAdjacency_{i,j}$ represents the weight of the edge between $v_i$ and $v_j$. 

As is the case in most works in the literature, we consider only graphs with nonnegative weights, and say that two vertices are not connected if their weight is equal to $0$.

We can use the $\adjmatrix$ to define the diagonal \textbf{degree matrix} $\mD$ of the graph as follows:
\begin{equation}
    \emD_{i,j} = \left\{ \begin{array}{cl}\displaystyle{\sum_{j' \in \sV}{\emAdjacency_{i,j'}}} & \text{if } i = j\\ 0 & \text{otherwise}\end{array}\right.\;.
\end{equation} 
We also define the r-neighborhood $\sN_r(v)$ of a vertex $v \in \sV$ as the set of vertices that are at most $r$-hop away from a vertex $v \in \sV$, that is to say that $v'\in \sN_r(v)$ if and only if it exists a sequence of at most $r$ vertices $v_{i_1}, v_{i_2}, \dots, v_{i_\rho}$, such that $v_{i_\rho} = v'$, $(v, v_{i_1})\in \sE$ and $(v_{i_j},v_{i_{j+1}})\in \sE, \forall j$.

A graph typically represents a relation between its vertices. When those vertices are associated with measures (typically scalars), we talk of graph signals. In this thesis we only consider signals supported on the vertices of the graph, but there are also studies that consider signals supported on the edges~\citep{schaub2018flow}. In other words, we consider graph signals $\vs \in \R^{|\sV| \times F}$ where F is the number of realizations of the signal $\vs$ for each vertex of the graph. In Figure~\ref{chap3:fig-examples-ml} we depict examples of graphs in various machine learning scenarios. 

\begin{figure}[ht]
  \begin{center}
    \begin{subfigure}[ht]{.48\linewidth}
      \centering
      \resizebox{\linewidth}{.25\textheight}{%
  \tikzsetnextfilename{chapter3/tikz/karate}%
  \input{chapter3/tikz/karate.tex}%
}
      \caption{Zach's karate club community graph~\citep{girvan2002community}.}
  \end{subfigure}
  \begin{subfigure}[ht]{.48\linewidth}
      \centering
      \resizebox{\linewidth}{.25\textheight}{%
  \tikzsetnextfilename{chapter3/tikz/toronto}%
  \input{chapter3/tikz/toronto.tex}%
}
      \caption{Toronto road network~\citep{irion2016efficient}.}
  \end{subfigure}        
  \begin{subfigure}[ht]{\linewidth}
      \centering
      \begin{subfigure}[ht]{.48\linewidth}
            \includegraphics[width=\linewidth]{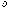}
      \end{subfigure}
      \begin{subfigure}[ht]{.48\linewidth}
        \resizebox{\linewidth}{!}{%
  \tikzsetnextfilename{chapter3/tikz/usps}%
  \input{chapter3/tikz/usps.tex}%
}
      \end{subfigure}
      \caption{Left: USPS digit~\citep{friedman2001elements}; Right: Grid graph representation of the digit image.}
  \end{subfigure}
      \caption{Depiction of various examples of graphs commonly used in machine learning problems.}
      \label{chap3:fig-examples-ml}
  \end{center}
\end{figure}

As a natural representation of complex data structures, graphs and graph signals are ubiquitous, in particular in the field of machine learning. In this thesis we focus on two main uses of graphs: \begin{inlinelist}\item Graphs that model the inner dependencies of observations \item Graphs that model the relationship between data samples\end{inlinelist}. More details on this subtle distinction are available in Section~\ref{chap3:graph_uses}.

In the following paragraphs we introduce operations on graphs that are going to be useful in the rest of this thesis.

\subsection{Translations on graphs}\label{chap3:graph_translation}

In the previous chapter we introduced the ideas of translation-equivariance and translation-invariance and their uses in convolutional networks and computer vision. In the same vein, graphs and graph signals can be subjected to translationse. In this subsection we will introduce the concept of graph translations, as was previously defined in the laboratory~\citep{pasdeloup2017extending}, and define how they are used in this thesis.

First off, we must start by recalling what is the translation operator. Indeed in a discrete euclidean space, translations are quite straightforward to define. For example consider a signal $s(t)$ that evolves over time. This signal $s(t)$ is defined in an 1D euclidean space and can therefore be translated either forward (``advancing in time'') or backwards (``going back in time''). In the same vein, if our signal $s$ is supported on an 2D euclidean space, as it is the case with images, it is quite direct to infer four types of translations, by sending all pixels downwards, upwards, to the right or to the left. 

In order to transfer this same translation concept from euclidean spaces to graphs, we choose to use in this work the translation definition and inference methods introduced in~\citep{pasdeloup2017extending} that we extend in~\citep{lassance2018matching}. 

\begin{definition}[translation]\label{chap3:def-translation}
A \emph{translation} on a graph is a function $\phi: \sU \to \sV$, where $\sU \subset \sV$ and that is:
\begin{enumerate}
  \item \emph{injective}: $\forall v,v' \in \sU, \phi(v) = \phi(v') \Rightarrow v = v',$
  \item \emph{edge-constrained}: $\forall v \in \sU, (v,\phi(v)) \in \sE,$
  \item \emph{strongly neighborhood-preserving}:  $\forall v,v' \in \sU, (v,v')\in \sE \Leftrightarrow (\phi(v),\phi(v')) \in \sE.$\\
\end{enumerate}
\end{definition} 

We also define the \emph{loss} of a translation as the cardinal $|\sV-\sU|$ that counts the vertices that are not a part of the translation. We also say that two translations $\phi$ and $\phi'$ are \emph{aligned} if $\exists v\in \sU, \phi(v) = \phi'(v)$. 

Ideally we would also add that translations should be lossless, i.e., $|\sV|=|\sU|$. Unfortunately it is not possible to guarantee that a given graph $\gG$ will be able to admit lossless translations~\citep{pasdeloup2017extending}. Therefore in this work we compromise by considering only minimal translations:

\begin{definition}[minimal-translation]
A translation is said to be minimal if there is no aligned translation with a strictly smaller loss.
\end{definition} 

We depict in Figure~\ref{chap3:fig-minimal-translations} the minimal translations of a grid-graph representing a 2D discrete euclidean space. We note that the inferred translations are exactly the same as previously defined for a 2D euclidean space. We will use this property in Section~\ref{chap4:classify_graph_signals} to define convolutions on graphs based on the 2D convolutional layers described in the previous chapter. 

\begin{figure}[ht]
  \begin{center}
  \tikzsetnextfilename{chapter3/tikz/translations}%
  \input{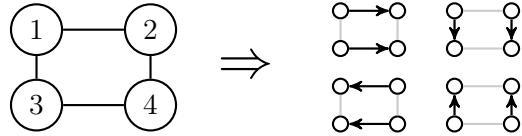}%

      \caption{Depiction of the minimal translations of a 2x2 grid graph.}
      \label{chap3:fig-minimal-translations}
  \end{center}
\end{figure}

\subsection{Graph Fourier Transform (GFT)}

In the domain of signal processing, the \textbf{frequency} is one of the most important concepts to analyse a signal and a starting point to introduce many useful tools including filtering. The frequency of a signal can be simplified as its rate of change, or in other words how it varies from one sample to another. Generally speaking, any signal can be decomposed into a continuous sum of sines/cosines by performing the Fourier transform. This decomposition can also be considered as expressing the signal in the frequency domain, as each sine/cosine is characterized by a proper frequency, and the original representation of the signal as its expression on the time domain.

Both the classical and graph Fourier transforms were first introduced to deal with signals defined on time, and then a posteriori extended to deal with other domains. The classical Fourier transform leads itself well to clearly structured domains such as the discrete Euclidean space, while tries to extend the framework developed using the Fourier transform to more loosely defined structures that can be supported on graphs. 

We present an example of the Fourier transform and a simple filtering application in a series of figures, from Figure~\ref{chap3:fig-examples-signals} to Figure~\ref{chap3:fig-examples-result}. First, in the left part of Figure~\ref{chap3:fig-examples-signals} we depict an original signal ($\vs(t)$), and in the right part of the Figure we depict the same signal with added white noise ($\breve{\vs}(t)$). We then use the Fourier transform to decompose both signals in the frequency domain ($f$) that we depict in Figure~\ref{chap3:fig-examples-fft}. We can observe that, since the white noise has no specific frequency, it has very little impact on the Fourier transform of the signal. As such, it would for example be much easier to perform classification in the frequency domain.

\begin{figure}[ht]
  \begin{center}
    \begin{subfigure}[ht]{.48\linewidth}
      \centering
      \begin{adjustbox}{max width=\linewidth}
  \tikzsetnextfilename{chapter3/tikz/signal}%
  \input{chapter3/tikz/signal.tex}%

      \end{adjustbox}
  \end{subfigure}
  \begin{subfigure}[ht]{.48\linewidth}
      \centering
      \begin{adjustbox}{max width=\linewidth}
  \tikzsetnextfilename{chapter3/tikz/noisy_signal}%
  \input{chapter3/tikz/noisy_signal.tex}%

      \end{adjustbox}
  \end{subfigure}        
  \caption{Depiction of a signal $\vs(t)$ and its noisy version $\breve{\vs}(t)$.}
  \label{chap3:fig-examples-signals}
  \end{center}
\end{figure}

Note how the original signal is mostly defined in the low-frequency side, while the noisy signal has many more high-frequency components. It is therefore straightforward to say that performing a filtering operation to remove the high frequency components in the noisy signal $\breve{\vs}(t)$ will allow us to retrieve a signal $\hat{\vs}(t)$ that is more inline with the original one. The retrieved signal $\hat{\vs}(t)$ is also commonly called \textbf{denoised signal}. We depict the original signal, the noisy signal and the retrieved signal in Figure~\ref{chap3:fig-examples-result}. Note that the retrieved signal is much more inline with the original signal, even if it is not exactly the same. Indeed, this denoising operation had the unfortunate effect of lowering the variations of the signal even for small frequencies.

\begin{figure}[ht]
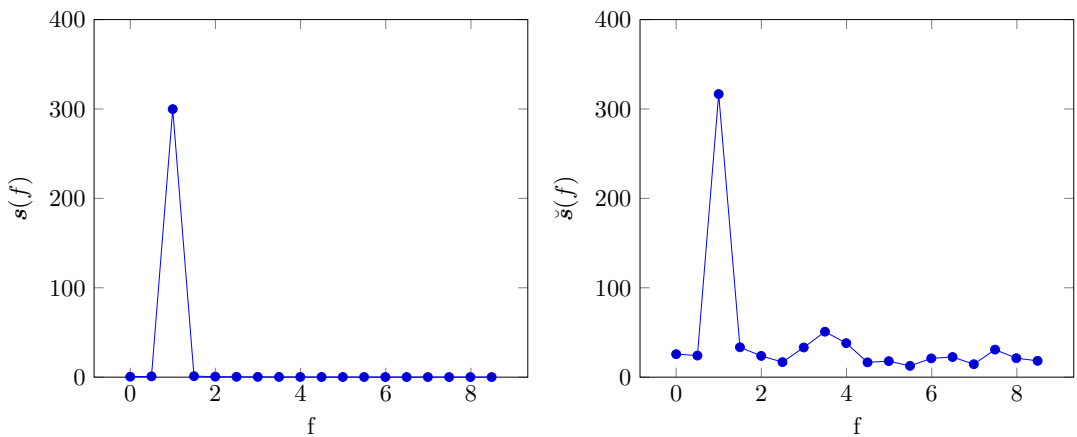

  \begin{center}
    \begin{subfigure}[ht]{.48\linewidth}
      \centering
      \begin{adjustbox}{max width=\linewidth}
  \tikzsetnextfilename{chapter3/tikz/fft_signal}%
  \input{chapter3/tikz/fft_signal.tex}%

      \end{adjustbox}
  \end{subfigure}
  \begin{subfigure}[ht]{.48\linewidth}
      \centering
      \begin{adjustbox}{max width=\linewidth}
  \tikzsetnextfilename{chapter3/tikz/fft_noisy_signal}%
  \input{chapter3/tikz/fft_noisy_signal.tex}%

      \end{adjustbox}
  \end{subfigure}        
  \caption{Depiction of the Fourier transform of the signal $\vs(t)$ and its noisy version $\breve{\vs}(t)$.}
  \label{chap3:fig-examples-fft}
  \end{center}
\end{figure}

\begin{figure}[ht]
  \begin{center}
    \begin{subfigure}[ht]{.3\linewidth}
      \centering
      \begin{adjustbox}{max width=\linewidth}
  \tikzsetnextfilename{chapter3/tikz/signal}%
  \input{chapter3/tikz/signal.tex}%

      \end{adjustbox}
  \end{subfigure}
  \begin{subfigure}[ht]{.3\linewidth}
      \centering
      \begin{adjustbox}{max width=\linewidth}
  \tikzsetnextfilename{chapter3/tikz/noisy_signal}%
  \input{chapter3/tikz/noisy_signal.tex}%

      \end{adjustbox}
  \end{subfigure}        
  \begin{subfigure}[ht]{.3\linewidth}
    \centering
    \begin{adjustbox}{max width=\linewidth}
  \tikzsetnextfilename{chapter3/tikz/retrieved_signal}%
  \input{chapter3/tikz/retrieved_signal.tex}%

    \end{adjustbox}
\end{subfigure}        
\caption{Depiction of a signal $\vs(t)$, its noisy version $\breve{\vs}(t)$ and the denoised version $\sfilter(t)$.}
  \label{chap3:fig-examples-result}
  \end{center}
\end{figure}

As we have previously defined, the goal of the \textbf{Graph Fourier Transform (GFT)} is to extend the same type of analysis and tools to signals that are described in the domain of the vertices of graphs (i.e. graph signals) instead of the time domain. In the case of graph signals the rate of change will not be evaluated as time evolves\footnote{at least in this thesis, we note that other works consider graph signals that vary on both graph and time domain.} but as the vertices evolve (i.e. the relationship between a vertex and its neighbors). In the case of GFT the transform is defined using the \textbf{graph Laplacian}.

The graph Laplacian $\mL$ is defined as:
\begin{equation}
    \mL = \mD - \adjmatrix \; ,
\end{equation} 
where $\mD$ is the diagonal degree matrix of the graph defined by $\adjmatrix$. As the Laplacian matrix is both real and symmetric it can be eigendecomposed into:
\begin{equation}\label{chap3:laplacian_spectral}
    \mL = \mF \mLambda \mF^\top \; ,
\end{equation} 
where $\mF$ are the eigenvectors of $\mL$ and $\mLambda$ are the eigenvalues in crescent order of magnitude. The GFT of a graph signal $\vs$ can then be defined as:
\begin{equation}
    \tilde{\vs} = \mF^\top s \; ,
\end{equation} 
where $\tilde{\vs} \in \R^{|\sV|}$ is the graph signal $\vs$ decomposition in the frequency domain, with each dimension corresponds to a specific frequency. Note that the inverse GFT can be similarly defined as:
\begin{equation}
    \vs = \mF \tilde{\vs} \; .
\end{equation} 

To illustrate the GFT, let us retake our previous example that used the Fourier transform in the time domain. The time domain could be represented using a simple line graph where each vertex corresponds to a specific time sample and is connected to the consecutive time samples. In that case, the GFT is very similar to the usual Fourier transform in the time domain. This can be seen in Figure~\ref{chap3:fig-signal-graph}, where we depict the same signal from Figure~\ref{chap3:fig-examples-signals} in the left part, we then show a discretization of the signal $\vs(t)$ in the center part, where each sample would correspond to a vertex in the line graph, and finally the GFT transform in the right part. As we had done in the previous example, we add white noise to both representations of our signal, and depict the noisy version, the sampled noisy version of the signal and its GFT in Figure~\ref{chap3:fig-noisy-graph}. We can observe very similar behaviors than in the previous ``classical'' Fourier domain.

\begin{figure}[ht]
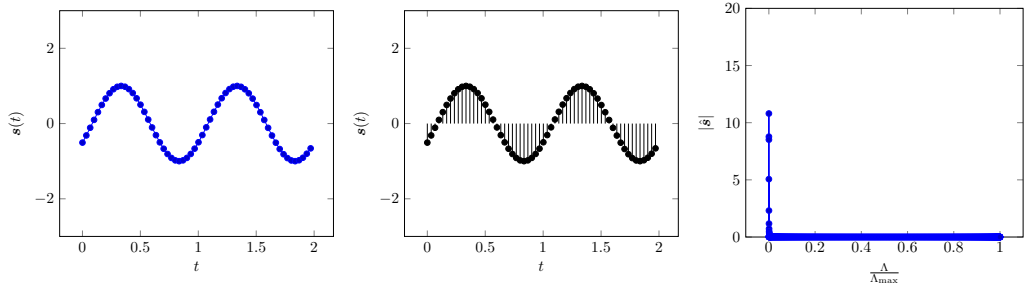

  \begin{center}
    \begin{subfigure}[ht]{.3\linewidth}
      \centering
      \begin{adjustbox}{max width=\linewidth}
  \tikzsetnextfilename{chapter3/tikz/signal}%
  \input{chapter3/tikz/signal.tex}%

      \end{adjustbox}
  \end{subfigure}
  \begin{subfigure}[ht]{.3\linewidth}
      \centering
      \begin{adjustbox}{max width=\linewidth}
  \tikzsetnextfilename{chapter3/tikz/signal_sampled}%
  \input{chapter3/tikz/signal_sampled.tex}%

      \end{adjustbox}
  \end{subfigure}        
  \begin{subfigure}[ht]{.3\linewidth}
    \centering
    \begin{adjustbox}{max width=\linewidth}
  \tikzsetnextfilename{chapter3/tikz/gft_signal}%
  \input{chapter3/tikz/gft_signal.tex}%

    \end{adjustbox}
\end{subfigure}        
\caption{Depiction of a signal $\vs(t)$, its sampled version and the GFT of the graph signal.}
  \label{chap3:fig-signal-graph}
  \end{center}
\end{figure}

\begin{figure}[ht]
  \begin{center}
    \begin{subfigure}[ht]{.3\linewidth}
      \centering
      \begin{adjustbox}{max width=\linewidth}
  \tikzsetnextfilename{chapter3/tikz/noisy_signal}%
  \input{chapter3/tikz/noisy_signal.tex}%

      \end{adjustbox}
  \end{subfigure}
  \begin{subfigure}[ht]{.3\linewidth}
      \centering
      \begin{adjustbox}{max width=\linewidth}
  \tikzsetnextfilename{chapter3/tikz/sampled_noisy_signal}%
  \input{chapter3/tikz/sampled_noisy_signal.tex}%

      \end{adjustbox}
  \end{subfigure}        
  \begin{subfigure}[ht]{.3\linewidth}
    \centering
    \begin{adjustbox}{max width=\linewidth}
  \tikzsetnextfilename{chapter3/tikz/gft_noisy_signal}%
  \input{chapter3/tikz/gft_noisy_signal.tex}%

    \end{adjustbox}
\end{subfigure}        
\caption{Depiction of a noisy signal $\breve{\vs}(t)$, its sampled version and the GFT of the graph signal.}
  \label{chap3:fig-noisy-graph}
  \end{center}
\end{figure}

We are then able to perform the same filtering operation from before, and depict the results in Figure~\ref{chap3:fig-result-gft}. In the left we have the original signal, in the middle the signal that was filtered in the time domain and in the right the signal that was filtered in the graph domain. Note that the results are not exactly the same, as our graph signal is a discretization of the real time signal, but the results are very close. We will present in the next sections some of the uses of the GFT and its abstractions.  

\begin{figure}[ht]
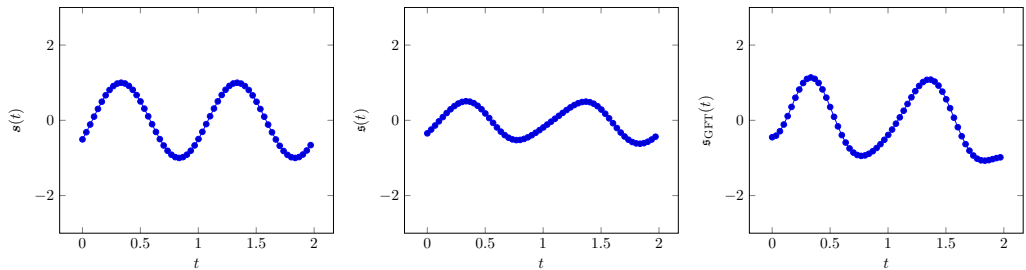

  \begin{center}
    \begin{subfigure}[ht]{.3\linewidth}
      \centering
      \begin{adjustbox}{max width=\linewidth}
  \tikzsetnextfilename{chapter3/tikz/signal}%
  \input{chapter3/tikz/signal.tex}%

      \end{adjustbox}
  \end{subfigure}
  \begin{subfigure}[ht]{.3\linewidth}
      \centering
      \begin{adjustbox}{max width=\linewidth}
  \tikzsetnextfilename{chapter3/tikz/retrieved_signal}%
  \input{chapter3/tikz/retrieved_signal.tex}%

      \end{adjustbox}
  \end{subfigure}        
  \begin{subfigure}[ht]{.3\linewidth}
    \centering
    \begin{adjustbox}{max width=\linewidth}
  \tikzsetnextfilename{chapter3/tikz/retrieved_signal_gft}%
  \input{chapter3/tikz/retrieved_signal_gft.tex}%

    \end{adjustbox}
\end{subfigure}        
\caption{Depiction of a signal $\vs(t)$ and its retrieved time FT ($\sfilter(t)$) and GFT ($\sfilter_\text{GFT}(t)$) filtered signals.}
  \label{chap3:fig-result-gft}
  \end{center}
\end{figure}

\subsection{Smoothness of graph signals}\label{chap3:smoothness}

Now that we have introduced graphs and the GFT, we can now talk about the \textbf{smoothness} of graph signals. This concept is the cornerstone for most of the methods we introduce in Chapter~\ref{chap5}. In this subsection we will first define the concept of smoothness of graph signals and then we present an illustrative example of an application of graph signal smoothness for noise detection/removal.

\subsubsection{Definition}

The smoothness $\mathcal{\sigma}$ of a graph signal $\vs$ supported on a graph $\gG$ is defined as:
\begin{equation}
    \sigma = \vs \mL \vs^\top.
\end{equation} 
This is also called the quadratic form of the Laplacian. We note that while we call this measure ``smoothness'' to be coherent with GSP literature a more adequate name would be ``anti-smoothness'' as lower values of the smoothness metric are said to be very smooth in respect to the graph and higher values of the smoothness metric are said to be very rough (or unsmooth) in respect to the graph. If we consider the eigendecomposition of the Laplacian matrix (c.f. Equation~\ref{chap3:laplacian_spectral}), we can also rewrite $\sigma$ as:

\begin{equation}
    \sigma = \vs \mL \vs^\top = \sum_{i=1}^{|\sV|} \emLambda_{i,i} \vs_{i} \; ,
\end{equation} 
where $\mLambda$ is a diagonal matrix of the eigenvalues of $\mL$ in crescent order. In this way we can see that a smooth signal will be one that is aligned with the first eigenvectors of the Laplacian. Finally, we observe that the smoothness is strongly related to the rate of change of the signal values from one vertex to its neighbors, by rewriting $\sigma$ as:  
\begin{equation}
    \sigma = \vs \mL \vs^\top = \emAdjacency_{i,j} (\vs_i - \vs_j)^2 \;.
\end{equation} 
Considering the smoothness as the rate of change is a very useful abstraction, especially when the signal $\vs$ is binary. Indeed if $\vs$ is binary, the smoothness can be simplified as the sum of the weights between nodes with different values in $\vs$. If we consider the example where each entry of $\vs$ is a binary label indicator vector first defined in Definition~\ref{chap2:label_indicator_vector} and the graph vertices correspond to samples, the smoothness will be the sum of the weights of edges connecting samples of different classes and the smoothest graph possible would be one that connects only examples of the same class\footnote{A graph that has $\sE=\emptyset$ would also have $\sigma=0$, but we do not consider this type of graph in this thesis.}. In other words, smoothness is a measure of discrepancy between a signal and a graph structure.

In the following paragraphs we introduce an application of graph signal smoothness as an illustrative example.
 
\subsubsection{Illustrative example}

Now that we have defined what is the smoothness of a graph signal, an illustrative example is in order to illustrate its usefulness. Let us reuse the example signal from Figure~\ref{chap3:fig-signal-graph} where a signal in the time domain is discretized and converted into a graph signal on a line graph. In Figure~\ref{chap3:fig-smoothness-graph} we depict the original graph signal in the left part, a noisy version of the graph signal in the center and a low-pass filtered version of the noisy graph signal in the right and their smoothness. Note how both the original graph signal and the filtered version are smoother than the noisy version of the signal. 

\begin{figure}[ht]
  \begin{center}
    \begin{subfigure}[ht]{.3\linewidth}
      \centering
      \begin{adjustbox}{max width=\linewidth}
  \tikzsetnextfilename{chapter3/tikz/signal_sampled}%
  \input{chapter3/tikz/signal_sampled.tex}%

      \end{adjustbox}
      \caption{$\sigma = 0.132$}
    \end{subfigure}
  \begin{subfigure}[ht]{.3\linewidth}
      \centering
      \begin{adjustbox}{max width=\linewidth}
  \tikzsetnextfilename{chapter3/tikz/sampled_noisy_signal}%
  \input{chapter3/tikz/sampled_noisy_signal.tex}%

      \end{adjustbox}
      \caption{$\sigma = 1270.902$}
    \end{subfigure}        
  \begin{subfigure}[ht]{.3\linewidth}
    \centering
    \begin{adjustbox}{max width=\linewidth}
  \tikzsetnextfilename{chapter3/tikz/retrieved_signal_gft}%
  \input{chapter3/tikz/retrieved_signal_gft.tex}%

    \end{adjustbox}
    \caption{$\sigma = 0.144$}
  \end{subfigure}        
\caption{Depiction of a sampled signal $\vs$, its noisy version $\breve{\vs}$ and its filtered version $\sfilter$, with their respective smoothness ($\sigma$) values.}
  \label{chap3:fig-smoothness-graph}
  \end{center}
\end{figure}

In other words the smoothness of a graph signal can also be used to detect if there is noise present in it. In the following of this document we will introduce how this measure can be used to infer a graph from a signal in Section~\ref{chap3:graph_inference}, how it can be used to improve the robustness of DNNs in Section~\ref{chap5:laplacian} and finally how it can be used to train DNNs for classification in Section~\ref{chap5:smoothness_loss}. 

\section{Graphs for samples of features}\label{chap3:graph_uses}

In this thesis we focus on two main uses of graphs: \begin{inlinelist}\item Graphs that model the inner dependencies of observations \item Graphs that model the relationship between data samples.\end{inlinelist}. In the following paragraphs we detail exactly what we understand as each type of graph and give practical examples to illustrate each case.

\subsection{Modelling inner dependencies of observations with graphs}\label{chap3:inner_dependencies}

One of the uses of graphs is modelling the inner dependencies of the observations. In this case each vertex is a coordinate of an observation and the relationship between vertices encode the relationship between the different coordinates. Encoding the relationship between different coordinates can also be seen as exploiting the intrisic structure of the data samples. This is mostly used for supervised classification of graphs (e.g., protein-protein iteration) and supervised classification of graph signals (e.g., classification of scrambled images as described in Section~\ref{chap4:classify_graph_signals}).

For example consider that our observations/elements of interest are images. In this case we can create a grid-graph to emulate the intrisic euclidean 2D structure. The grid representation creates a graph that model the inner dependencies between the pixels (coordinates of an observation) and therefore allow one to extract important information from the intrisic structure (c.f. Section~\ref{chap4:classify_graph_signals}). We depict in Figure~\ref{chap3:fig-grid-graph} an image and its grid graph representation. 

\begin{figure}[ht]
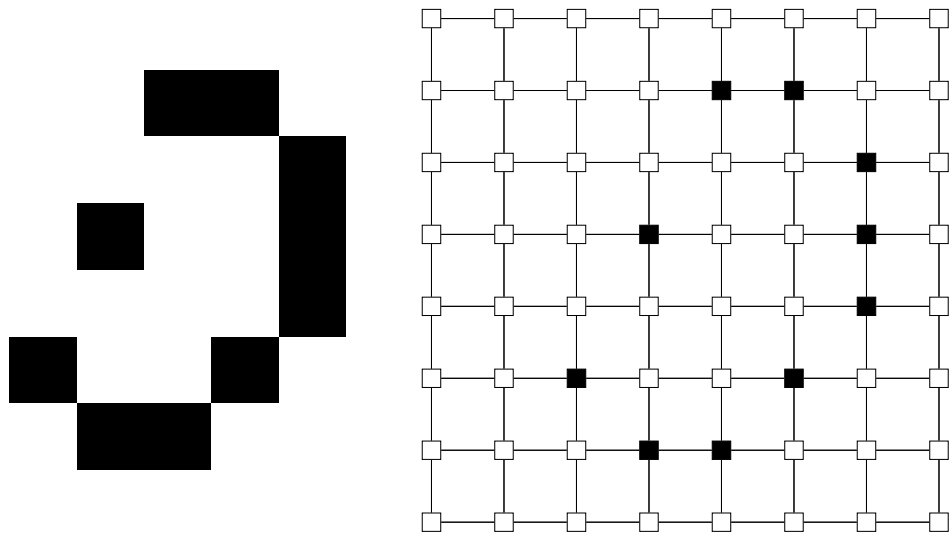

  \begin{center}
      \begin{subfigure}[ht]{.48\linewidth}
            \includegraphics[width=\linewidth]{chapter3/usps.png}
      \end{subfigure}
      \begin{subfigure}[ht]{.48\linewidth}
        \resizebox{\linewidth}{!}{%
  \tikzsetnextfilename{chapter3/tikz/usps}%
  \input{chapter3/tikz/usps.tex}%
}
      \end{subfigure}
      \caption{Depiction of an image (left) and its grid graph representation (right).}
      \label{chap3:fig-grid-graph}
  \end{center}
\end{figure}

\begin{definition}[2D grid graph]\label{chap3:def-gridgraph}
We call (2D) \emph{grid graph} a graph whose vertices are of the form $\{1,\dots,\ell\}\times \{1,\dots,h\}$ where $\ell,h\geq 3$, and edges are added between vertices at Manhattan distance one from each other.     
\end{definition}

This grid graph image representation has been exploited in multiple GSP works as a basis for explanation and visualization~\citep{pasdeloup2017extending, shuman2013emerging}. In the case of an image grid graph, each pixel is a vertex $v_i$ and the RGB values form a graph signal $\mS \in \R^{|w h| \times 3}$ where $w$ and $h$ represent the image width and height ($|\sV|$) respectively and 3 is the number of the RGB color channels ($|F|$). In other words, we have $|\sV| = |w h|$ coordinates and $|F|=3$ the number of observations or realizations per coordinate.

Using a low-pass graph filter (that we introduce more formally in Section~\ref{chap3:graph_filter}) it is possible to exploit the structure of the pixels to remove noise from the image as shown in Figure~\ref{chap3:fig-denoising}. Note how by removing the high frequencies of the graph signal, we obtain an image that is more inline with the original image. Also note that the smoothness value of the recovered graph signal is the same as the original image, while the images are very different. Indeed as the smoothness of a graph signal is a global measure and not a localized one it is easy to see that multiple image configurations are possible for the same value of smoothness.

\begin{figure}[ht]
  \begin{center}
      \begin{subfigure}[ht]{.3\linewidth}
        \includegraphics[width=\linewidth]{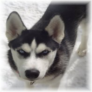}
        \caption{$\sigma = \sigma_1$}
      \end{subfigure}
      \begin{subfigure}[ht]{.3\linewidth}
        \includegraphics[width=\linewidth]{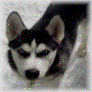}
        \caption{$\sigma = 1.69\sigma_1$}
      \end{subfigure}
      \begin{subfigure}[ht]{.3\linewidth}
        \includegraphics[width=\linewidth]{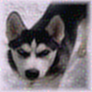}
        \caption{$\sigma = \sigma_1$}
      \end{subfigure}
      \caption{Depiction of a dog (left), a noisy realization of the image (center) and the graph filtered image (right).}
      \label{chap3:fig-denoising}
  \end{center}
\end{figure}

Finally, it is also possible to infer the graph structure solely from the signal. One naive approach is to compute a similarity metric between the signal coordinates to generate a weighted adjacency matrix and then threshold the $k$ most connected neighbors of each coordinate to generate the graph. Note that the adjacency matrix that is obtained by this approach needs to be then symmetrized. The resulting graph is commonly called a $k$-nn similarity graph. We delve into more details in graph inference in Section~\ref{chap3:graph_inference} 

\subsection{Graphs that model the relationship between data samples}\label{chap3:relationship_samples}

The other type of graphs that we consider in this work are graphs that model relationship between different observations. In this case each observation is a vertex and the relationship between the vertices encodes a relationship between them. This relationship depends on the task at hand, for example in the document classification datasets presented in Section~\ref{chap2:citation_networks}, an edge ($v_1,v_2$) will encode the fact that document $v_1$ cites $v_2$ or is cited by $v_2$. 

In a graph that models the relationship between data samples, the goal is to exploit the relationship between the different samples rather than the intrisic structure of each one. This allows us to consider other scenarios such as semi-supervised classification of vertices in the graph. We introduce an example of semi-supervised classification of vertices in the following paragraphs.

Consider that we have a subset of the CIFAR-10 dataset (c.f. Section~\ref{chap2:cifar10}), where we only consider the cat and truck classes, with 10 training examples per class (labeled examples) and 50 test examples per class (unlabeled examples). We can then apply a naive graph inference technique using the cosine similarity between the samples and threshold the $k$-neighbors in order to generate a graph. We depict the generated graphs in Figure~\ref{chap3:fig-cifar10-graphs}, where on the left we depict the graph masking the unlabeled examples, on the center the we depict the graph with labels retrieved using the label propagation algorithm (c.f. Section~\ref{chap3:benchmarking} for more details) and finally on the right we depict the ground-truth. 

\begin{figure}[ht]
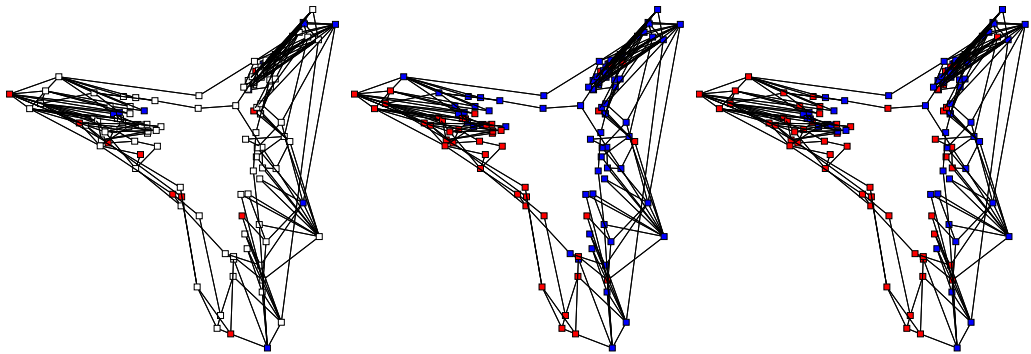

  \begin{center}
    \begin{subfigure}[ht]{.3\linewidth}
      \resizebox{\linewidth}{!}{%
  \tikzsetnextfilename{chapter3/tikz/cifar-10_graph}%
  \input{chapter3/tikz/cifar-10_graph.tex}%
}
    \end{subfigure}
    \begin{subfigure}[ht]{.3\linewidth}
      \resizebox{\linewidth}{!}{%
  \tikzsetnextfilename{chapter3/tikz/cifar-10_graph_retrieved}%
  \input{chapter3/tikz/cifar-10_graph_retrieved.tex}%
}
    \end{subfigure}
    \begin{subfigure}[ht]{.3\linewidth}
      \resizebox{\linewidth}{!}{%
  \tikzsetnextfilename{chapter3/tikz/cifar-10_graph_groundtruth}%
  \input{chapter3/tikz/cifar-10_graph_groundtruth.tex}%
}
    \end{subfigure}
  \caption{Depiction of a graph connecting samples from a subset of the CIFAR-10 dataset, where red represents images of cats, blue represents images of trucks and white represents unlabeled data. On the left we have masked the unlabeled data, while on the center we use a label propagation algorithm to retrieve the labels of the unlabeled data with 70\% accuracy and finally on the right we depict the ground truth labels.}
      \label{chap3:fig-cifar10-graphs}
  \end{center}
\end{figure}

Note how just by organizing the data in a graph structure and using Laplacian eigenmaps~\citep{belkin2003laplacian}, i.e. taking advantage of the first two eigenvectors of the laplacian\footnote{we consider the first two eigenvectors to be the eigenvectors associated with the two smallest nonzero eigenvalues.} to position the nodes in a 2D space, we can better understand our data and retrieve a very simple semi-supervised classification baseline. In the following of this thesis we will use this representation in order to solve various tasks.

\section{Inferring graph topology from signals}\label{chap3:graph_inference}

In the previous sections we defined graphs, graph signals and gave examples on how they are going to be used in this thesis. However even if a signal has an underlying structure, this support structure is not necessarily explicitly available.  If we do not have access to the adjacency matrix that supports a signal, inferring this graph topology is required. We published a large portion of the results in this section in~\citep{lassance2020benchmark}.

Many recent works have tackled the problem of inferring the graph topology using graph signals~\citep{pasdeloup2017characterization,kalofolias2018large,shekkizhar2019graph}; see also~\citep{gonzalo_spmag_19} for a recent tutorial treatment. Inferring a graph structure can be performed in a task-agnostic manner, i.e., labeling data is not used. Priors are used to relate observations to the sought task-agnostic graph structure: e.g. smoothness~\citep{kalofolias2018large} (c.f. Section~\ref{chap3:smoothness}), stationarity~\citep{pasdeloup2017characterization}, sparsity~\citep{shekkizhar2019graph}, and probabilistic~\citep{egilmez2017jstsp} as well as graph filtering-based~\citep{shafipour2018topoidnsTSP18} generative models, just to name a few. 

On the other hand, sometimes it is interesting to consider graphs that are specific to the task at hand. For example in~\citep{koopman2016information} the authors infer graphs for medical search and in~\citep{iscen2019label,lassance2019improved,hu2020exploiting}, the authors aim at improving the accuracy of various classification tasks using inferred graphs: semi-supervised learning, visual based localization and few-shot learning.

As it is more general, task-agnostic graph inference is of particular interest. In this context, the two most common goals are visualization~\citep{kalofolias2018large} and interpretation~\citep{anirudh2017influential} using the graph support. On the other hand, it is challenging to compare methods for which there is no ground truth, i.e., that are task agnostic. Unsurprisingly, many works rely on synthetic data to evaluate the ability of their proposed methods in unveiling the topology from the observations. We present a benchmark and delve into more details on the discussion of evaluating task-agnostic methods in Section~\ref{chap3:benchmark_graph}.

In the following subsections we present a simple framework of graph inference that we call ``naive baselines'' and we go into more details in the two methods that we are going to use in this work: \begin{inlinelist}\item Kalofolias~\citep{kalofolias2018large} \item NNK~\citep{shekkizhar2019graph}. \end{inlinelist}

\subsection{Naive baselines for graph topology inference}

In this subsection we present a simple framework that allows us to infer graph structures from data with 3 quick steps:

\begin{enumerate}
    \item Choosing a similarity measure to be applied to the features of each vertex in order to determine their connectivity. In more details, we consider cosine similarity, sampled covariance or an RBF kernel applied on the $L_2$ distance between considered items. The result is a square matrix homogeneous to the desired graph structure. Note that some of the similarity measures may require extra hyperparameters such as the $\gamma$ in an RBF kernel (c.f., Equation~\ref{chap2:eq-rbf}).
    \item Choosing a number of neighbors to be kept for each vertex. We simply use a $k$-nearest neighbor selection. Note that we symmetrize the resulting graph, so that each vertex has at least $k$ neighbors. Note that as it was the case with the similarity measure, in this step we are also adding an extra hyperparameter, which is the number of neighbors to keep.
    \item Normalizing the obtained graph adjacency matrix. Note that the decision of normalizing (and which normalization to use) or not the adjacency matrix adds another hyperparameter to the framework.
\end{enumerate}

Note that while this framework is very simple, the amount of decisions and possible combinations is very large. In this subsection as we want to investigate the maximum ability of the methods we test a very extensive number of combinations and always report the best one. In the remaining of this work (excluding the present Section), we will either use a fixed decision on how to infer our graphs from data or perform minimal experiments to show the impacts of this decision.

\subsection{Graph construction with Non Negative Kernel (NNK) regression}

We now consider a recently proposed graph inference method. We choose NNK (Non Negative Kernel regression)~\citep{shekkizhar2019graph}, due to its simplicity and its demonstrated results on semi-supervised learning tasks. This method can be interpreted as producing representations with orthogonal approximation errors, which in turn favors sparser representations. It has two parameters: $k$, the maximum degree for each vertex, and $\sigma$ the minimum value for an edge weight (threshold). In this work we test multiple values of $k$ and fix $\sigma=10^{-4}$ following~\citep{kalofolias2018large}. In our experiments we use the authors implementation from~\url{https://github.com/STAC-USC/PyNNK_graph_construction}.

\subsection{Graph learning from smooth signals}

Graphs can be inferred from graph signals using certain priors. In this subsection we consider that a signal should be smooth (c.f., Section~\ref{chap3:smoothness}) on its graph support. In this thesis we rely on a state-of-the-art approach in~\citep{kalofolias2018large}. It consists in a framework that infers the graph from an underlying set of smooth signals. As it was the case with NNK, it has two parameters: $k$ the desired mean sparsity and $\sigma$ the minimum value for an edge weight. In this work we test the same values for these two parameters as we did for NNK and keep the best combination. In our experiments we use the implementation from the GSP toolbox~\citep{perraudin2014gspbox}.

\subsection{Benchmarking graph topology inference}\label{chap3:benchmark_graph}

Now that we have presented the graph topology, we can discuss the problem of comparing these methods. Most methods have to rely on synthetic data in order to evaluate their efficacy and to put forth their pros and cons. While synthetic data are always useful to perform controlled scalability experiments as well as reveal the emerging statistical and computational trade-offs, this validation protocol comes with two shortcomings. First, the models used to generate synthetic data are likely to be biased in favor of the proposed methods. Second, the ability of the proposed method to handle hard real-world problems is often not demonstrated convincingly, e.g., very small datasets or toy data such as black and white digit recognition.

In order to address this problem, standardized benchmarks are required. The main challenge is that benchmarks are necessarily task-specific, and as such they do not encompass the whole potential offered by task-agnostic graph inference state-of-the-art methods. To fill in this gap, in this section we detail a broad collection of benchmarks, that we first introduced in~\citep{lassance2020benchmark} that are specifically designed to compare graph inference algorithms. To this end, we consider three timely problems arising with network data: \begin{inlinelist}\item unsupervised clustering of vertices \item semi-supervised classification of vertices (with or without vertex features) and \item graph signal denoising.\end{inlinelist} For each problem we introduce an easy-to-use dataset that we release publicly\footnote{\url{https://github.com/cadurosar/benchmark_graphinference}}.

We also introduce measures of performance to confront methods and better understand their adequacy to the aforementioned tasks. Furthermore, the released datasets comprise various types of signals, namely natural images, audio, texts, and traffic information. Note that we do not include brain data and protein-protein interactions that are two of the most interesting use-cases of graph inference and classification. Our choice is informed by recent developments in the literature~\citep{he2020deep,Errica2020A}, that have found no significant performance gains when graph-based machine learning techniques are brought to bear for some tasks in these areas. We will further discuss this in Chapter~\ref{chap4}.

Our benchmarks are divided into three tasks that encompass two types of machine learning problems. In Tasks 1 and 2, the graph model dependencies between observations, c.f. Section~\ref{chap3:relationship_samples}. In the second one (Task 3), the graph models relationships between features as seen in Section~\ref{chap3:inner_dependencies}. We expect some of the previously presented methods to perform better on the first series of tasks and others to be more adequate to the second one.

\subsubsection{Benchmarking tasks}\label{chap3:benchmarking}

\paragraph{Task 1: Unsupervised Clustering of Vertices (UCV): }

Consider a dataset composed of $|\sV|=N$ observations, each one containing $F$ features. Given a number of classes $c$, we consider the task of partitioning the $N$ observations into $c$ classes, such that the variability inside classes is smaller than the variability between classes. In practice, variability can be measured using various metrics. For the purpose of obtaining quantified benchmarks, we consider here that the observations belong to $c$ categories (e.g. classes of images or sounds), and that this information is not available when processing the considered methods. So, the performance of a considered method is evaluated by computing the Adjusted Mutual Information score~\citep{vinh2009information} based on the ground truth.

Note that this clustering problem can be treated without a graph structure. Examples are using $c$-means or DB-Scan algorithms. In the context of this work, we consider using spectral clustering. Spectral clustering consists in creating a graph linking the observations where the edges are inferred from the corresponding features. Then, vertices are projected using the first eigenvectors of the graph Laplacian and clustered using standard non-graph methods. In our work, we use the discretization method first proposed in~\citep{yu2003clustering} when features have been projected onto the first $c$ eigenvectors of the graph Laplacian except the very first one. We use the default SciKit-Learn~\citep{scikit-learn} implementation of spectral clustering and of the $c$-means algorithm in our experiments. 

\paragraph{Task 2: Semi-Supervised Classification of Vertices (SSCV): }

Consider a dataset composed of $|\sV|=N$ observations, each one containing $F$ features. Here, a portion of the $N$ observations are labeled. The task consists in inferring the labels of the other portion of observations. Again, we consider datasets where we have access to the ground truth, and artificially hide the labels of part of the observations when processing the data. The score consists in measuring the accuracy of the classification on initially unlabeled observations.

This problem can be solved without relying on graphs. For example, a common solution would consist in performing a supervised classification using only the labeled observations. In this work, we consider inferring a graph connecting observations from the features. Then, we use this graph in two settings. In the first setting, we want the graph to fully encompass the information contained in the features, and therefore perform label propagation. Label propagation consists in diffusing the labels from the known observations to the other ones using the inferred graph structure. In a second setting, we use both the graph structure and the features to perform classification. We use the methodology described in~\citep{wu2019simplifying}, called Simplified Graph Convolution (SGC), where the goal is to combine feature diffusion with logistic regression. We delve into more details for this type of model in Section~\ref{chap4:classify_graph_vertices}.

In more details, we use two layers of feature diffusion ($\hat{\vx} = \adjmatrix^2 \vx$), followed by a logistic regression. The models are trained for 100 epochs, using Adam optimization with a learning rate of $0.001$. We use the average over 100 runs of the accuracy using random splits of 5\% training set and 95\% test set. We always report the average accuracy and standard deviation. To propagate labels, we simply diffuse the label signal one time using the exponential of the adjacency matrix. We note that SGC models tend to use the ``normalized augmented adjancency matrix'' $\tilde{\adjmatrix} = \identity + \adjmatrix$ where $\identity$ is the identity matrix. This augmented adjacency matrix is then normalized $\tilde{\adjmatrix} \leftarrow \mD_{\tilde{\adjmatrix}}^{-1/2} \tilde{\adjmatrix} \mD_{\tilde{\adjmatrix}}^{-1/2}$. In our work we test both the adjacency matrix and the augmented adjacency matrix and their respective normalizations and we report the best possible combination in terms of mean accuracy.

\paragraph{Task 3: Denoising of Graph Signals (DGS): }

Consider a dataset comprising $N$ observations, each one consisting of $|\sV|=F$ features. Consider some additive noise generated according to a distribution $\mathcal{N}$. The task consists in recovering initial observations from their noisy versions. We measure performance by looking at the Signal to Noise Rate.

Here, the graph connects features of observations. The idea is to use the graph structure to easily segregate components of the noise from components of the initial signals. In our work, we use a Simoncelli low-pass filter (c.f., Section~\ref{chap3:graph_filter} for more details on graph filtering) on the graph to perform denoising. Note that this filter has a parameter $\tau \in [0,1]$ that we vary from 0 to 1 in increments of 0.025. We use the noisy signal realization with a SNR (Signal to Noise Ratio) of 7, from~\citep{irion2016efficient}, and report the best SNR found for each graph construction.

\subsubsection{Datasets}

For the only purpose of benchmarking graph inference methods, we introduce here a few datasets. For Tasks 1 and 2, we use datasets of images, audio and texts (documents). To reduce the difficulty of the tasks in the image and audio domains, we choose to use features extracted from pretrained deep neural networks. Task 3 (DGS) data comes from real life traffic information. Additional details are given in the coming paragraphs.

\paragraph{Image dataset - flowers102: }

For the image dataset we use the training set portion of the ``102 Category Flower Dataset'' (shortened as flowers102)~\citep{Nilsback08}. This split contains $N=1020$ images of $C=102$ classes of flowers (10 images per class). The features are extracted from the final pooling layer of the Inceptionv3 architecture~\citep{szegedy2016rethinking}, which has a size of $F=2048$ dimensions. Note that Inceptionv3 was trained on the 2012 split of ImageNet challenge, so that the features we obtain are a case of transfer learning. This should be one of the most challenging scenarios we consider, as it provides the highest number of classes and has the highest signal dimension to number of items ratio: 2.

\paragraph{Audio dataset - ESC-50: }

For audio data, we use ``ESC-50: Dataset for Environmental Sound Classification''~\citep{piczak2015dataset}. This dataset contains $c=50$ classes, with 40 audio signals each (2000 in total). It also contains 5 standard splits that are not used here (as we do unsupervised and semi-supervised classification). We use the feature extractor introduced in~\citep{kumar2018knowledge} to generate our dataset, that was trained on AudioSet. Similar to the images data, this can be considered as transfer. At the end we have $N=2000$ items with $F=1024$ dimensions each. The signal dimension to number of items ratio is 0.512.

\paragraph{Text dataset - cora: }

We use the cora dataset~\citep{sen2008collective} that we have presented in Section~\ref{chap2:cora}, which is composed of $N=2708$ scientific articles of $c=7$ different domains for document clustering or classification. The features come from a word indicator vector (i.e. bag of words) that indicates if one of the words in the dictionary ($F=1433$ in total) is present on the title or abstract of the document. The dictionary is built with the most common words in the dataset. The signal dimension to number of items ratio is: 0.53. Note that this dataset is classically used for graph semi-supervised learning as it comes with a citation graph. But in our work we completely disregard this graph. Comparisons between the ground truth graph and inferred ones could be an interesting addition to this work. But since the citation graph is not exactly redundant with the signals, it is expected that inferred graphs and citation ones are quite different. 

\subsection{Toronto traffic data denoising (Toronto)}
We use data from the road network of the city of Toronto, from~\citep{irion2016efficient}. It describes traffic volume data over a 24 hour period at intersections in the road network of Toronto for a total of $F=2202$ vertices and $N=1$ observation. Note that extra information is available, such as the position of each road and intersection, but our baselines only consider the raw signal data. This graph is depicted in Figure~\ref{chap3:fig-examples-ml}

\subsubsection{Empirical evaluation of benchmarks}

We now present the results for our benchmark evaluation. The tested graph topology methods and the parameters we vary are summarized in Table~\ref{chap3:summarytests}. For every test we only display the results obtained by the best combination and we further discuss the effects of parameter choice in Section~\ref{chap3:benchmark_discussion}.

\begin{table}[ht]
    \begin{center}
   \caption{Summary of the tested graph topology inference methods. Table and caption extracted from~\citep{lassance2020benchmark}.}
   \label{chap3:summarytests}
   
   \begin{adjustbox}{max width=\columnwidth}    

   \begin{tabular}{|c|c|c|c|c|}
   \hline
   Method     & Similarity/Distance               & $k$                                        & $\sigma$       & Adjacency matrices     \\ \hline
   Naive      & \multirow{2}{*}{Cosine, Covariance, RBF}           & \multirow{3}{*}{\parbox{3.2cm}{5, 10, 20, 30, 40, 50, 100, 200, 500, 1000}} & None                  & \multirow{3}{*}{\parbox{3.3cm}{$\adjmatrix$, $\mD_{\adjmatrix}^{-1/2} \adjmatrix \mD_\adjmatrix^{-1/2}$, $\tilde{\adjmatrix}$, $\mD_{\tilde{\adjmatrix}}^{-1/2} \tilde{\adjmatrix} \mD_{\tilde{\adjmatrix}}^{-1/2}$}} \\ \cline{1-1} \cline{4-4}
   NNK~\citep{shekkizhar2019graph}        &  &                                                    & \multirow{2}{*}{$10^{-4}$} &                           \\ \cline{1-2}
   Kalofolias~\citep{kalofolias2018large} & Square Euclidean distance         &                                                    &                       &                           \\ \hline
   \end{tabular}
  \end{adjustbox}
   \end{center}
   \end{table}

\paragraph{Task 1: }

For the UCV task, we display both the results obtained with the inferred graph structures and with a $c$-means baseline.  The results are presented in Table~\ref{chap3:tab-unsup}. We can see that both naive and NNK get the most consistent results, with Kalofolias having difficulties with the cora dataset.  

\begin{table}[ht]
 \begin{center}
\caption{Results for Task 1. Here we present the best AMI score for each inference method. Table and caption extracted from~\citep{lassance2020benchmark}.}
\label{chap3:tab-unsup}
  \begin{adjustbox}{max width=\columnwidth}
\begin{tabular}{|c|c|c|c|c|}
\hline
\multicolumn{1}{|l|}{Method}         & \multicolumn{1}{l|}{Inference/Dataset} & \multicolumn{1}{l|}{ESC-50} & \multicolumn{1}{l|}{cora} & \multicolumn{1}{l|}{flowers102} \\ \hline
\multicolumn{2}{|c|}{$C$-means}                                                  & 0.59                    & 0.10                    & 0.36                          \\ \hline
\multirow{3}{*}{Spectral clustering} & Naive                                     & \textbf{0.66}                      & \textbf{0.34}           & \textbf{0.45}                          \\ \cline{2-5}
                                     & NNK                                     & \textbf{0.66}             & \textbf{0.34}           & 0.44                          \\ \cline{2-5}
                                     & Kalofolias                              & 0.65                      & 0.27                    & 0.44                           \\ \hline
\end{tabular}
\end{adjustbox}
\end{center}
\end{table}

\paragraph{Task 2:}

For the SSCV task, the results are presented in Table~\ref{chap3:tab-semisup}. We can see that using a similarity graph as support helps when compared to a simple logistic regression. Note that unfortunately, this is not a 100\% fair comparison as the logistic regression is not able to exploit the unsupervised data. In this task we have two methods, Label Propagation and SGC. In the first one, Kalofolias presents the best results for both flowers102 and ESC-50, but still struggles with the cora dataset. In SGC both Kalofolias and NNK seem to not be able to improve that much over the naive baselines.

\begin{table}[ht]
    \begin{center}
   \caption{Results for Task 2. Here we present the best mean test accuracy and its standard deviation for each inference method. Table and caption extracted from~\citep{lassance2020benchmark}.}
   \label{chap3:tab-semisup}
   \begin{adjustbox}{max width=\columnwidth}    
   
   \begin{tabular}{|c|c|c|c|c|}
   \hline
   \multicolumn{1}{|l|}{Method}         & \multicolumn{1}{l|}{Inference/Dataset} & \multicolumn{1}{l|}{ESC-50} & \multicolumn{1}{l|}{cora} & \multicolumn{1}{l|}{flowers102} \\ \hline
   \multicolumn{2}{|c|}{Logistic Regression}                                      & 52.92\% $\pm 1.9$              & 46.84\% $\pm 1.6$                 & 33.51\% $\pm 1.7 $                        \\ \hline
   \multirow{3}{*}{Label Propagation}   & Naive                                     & 59.05\% $\pm 1.8$              & \textbf{58.86\%} $\pm 2.9$                 & 36.73\% $\pm 1.6$                         \\ \cline{2-5} 
                                        & NNK                                     & 57.44\% $\pm 2.2$              & 58.66\% $\pm 2.9$                 & 33.57\% $\pm 1.6$                         \\ \cline{2-5} 
                                        & Kalofolias                              & \textbf{59.16\%} $\pm 1.8$              & 58.60\% $\pm 3.4$                 & \textbf{37.01\%} $\pm 1.7$                      \\ \hline
   \multirow{3}{*}{SGC}                 & Naive                                     & 60.48\% $\pm 2.0$            & \textbf{67.19\%} $\pm 1.5$                 & \textbf{37.73\%} $\pm 1.5$                         \\ \cline{2-5}
                                        & NNK                                     & \textbf{61.38\%} $\pm 2.0$    & 66.58\% $\pm 1.5$                 & 36.81\% $\pm 1.5$                         \\ \cline{2-5} 
                                        & Kalofolias                              & 59.36\% $\pm 2.0$              & 66.28\% $\pm 1.5$                 & 37.5\% $\pm 1.5$                         \\ \hline
   \end{tabular}
  \end{adjustbox}
   \end{center}
   \end{table}

\paragraph{Task 3: }

For the graph signal denoising task, the results are presented in Table~\ref{chap3:tab-denoising}. In this scenario we are not able to use neither cosine or covariance similarity. We compare our results with the ones we would obtain using the ground truth road map graph. Our RBF baselines were able to reduce the amount of noise, but not at the same level as of the real road graph. The Kalofolias smooth graph was able to achieve a better SNR than the real road graph.

\begin{table}[ht]
 \begin{center}
\caption{Results for Task 3. Here we present the best test accuracy for each baseline. Table and caption extracted from~\citep{lassance2020benchmark}.}
\label{chap3:tab-denoising}
  \begin{adjustbox}{max width=\columnwidth}
\begin{tabular}{|c|c|c|c|c|}
\hline
\multirow{2}{*}{Best SNR}               & Road graph & Kalofolias & RBF NNK & RBF $k$-NN \\ \cline{2-5}
          & 10.32 & \textbf{10.41} & 9.99 & 9.80    \\ \hline
\end{tabular}
\end{adjustbox}
\end{center}
\end{table}

\subsubsection{Discussion}\label{chap3:benchmark_discussion}

Over all tasks we can extract some lessons on graph inference:
\begin{enumerate}
    \item \textbf{Similarity choice:} If we have multiple non-negative realizations of the signal, cosine seems the best choice. It has competitive results on all benchmarks and it does not come with a parameter (as does RBF with $\gamma$).
    \item  \textbf{Choosing parameter $k$}: The best amount of sparsity depends not only on the dataset and task, but on the similarity that was chosen. We consider the ESC-50 dataset as an example. In the spectral clustering the best $k$ value for the $k$-NN graph was 30 for cosine, 5 for RBF and 20 for covariance. We note that in the graph denoising task, the best case was to not perform $k$-neighbors thresholding.
    \item \textbf{Normalization:} Note that only our graph denoising task does not expect a normalized graph, therefore most of our better results used normalized graphs. On the graph denoising task, normalized and non-normalized graphs had similar results.
    \item \textbf{Cora dataset}: The cora dataset is challenging not only because it is not class-balanced, but also because its features are binary (a bag of words, containing 1 if the word is present in the article and 0 if not). This could be a reason for the bad performance of both NNK and Kalofolias in this dataset.
    \item \textbf{Sparse graphs in semi-supervised problems:} In the semi-supervised tasks, the test accuracy standard deviation over the splits was very high. This could possibly be caused by the fact the sparse graphs we use here have more than one connected component, meaning that sometimes there could be sections of the graph that do not have any labeled vertices. One possible future direction would be to integrate a graph sampling algorithm to the problem in order to select which vertices we should label, instead of doing so randomly.
    \item \textbf{Naive Baselines vs. optimization approaches:} Over our tests there was no clear winner between simply doing a naive $k$-NN approach and more advanced graph topology inference techniques. Kalofolias had very good performance on the Label Propagation and Denoising tasks, while NNK was consistent in SGC and Spectral Clustering, but both were not able to consistently beat the naive baseline. On the other hand, there was a clear advantage of both Kalofolias and NNK over the naive baselines when we consider the robustness of both methods to the parameter $k$ selection.
    \end{enumerate}

\section{Graph filters}\label{chap3:graph_filter}

In the previous sections we touched on graph filters and their applications, without properly defining them. We do so now in this section. Graph filters are modeled using the same abstraction as traditional signal processing filters and in this work we detail and derive three possible representations of these filters:
\begin{enumerate}
  \item directly defined in the spectral domain by a diagonal matrix;
  \item defined as a function of the filter spectral response;
  \item as diffusion operator based on the graph adjacency or Laplacian matrix. 
\end{enumerate}

We describe in the following paragraphs the three different types of definitions, the advantages/drawbacks of using each representation  and describe some of their applications. We refer the reader to~\citep{hammond2011wavelets, tremblay2018design} for a more in depth discussion on graph filters.

\subsection{Defining filters in the spectral domain}

The simplest and most general way to define a filter is simply to describe its response to each frequency. In the case of graph signals the frequencies are defined by the diagonal values of the eigenvalue matrix $\mLambda$. For notation simplicity we consider the frequency vector $\vlambda$ where $\evlambda_i = \emLambda_{i,i}$. Recall that we consider that the eigenvalues are ordered in crescent order of magnitude. 

We thus define a filter on a graph $\gG$ using a diagonal matrix $\mH_{\gG} \in \R^{|\sV|\times |\sV|}$, where we call each element $\emH_{i,i}$ the response of the filter to the frequency $\vlambda_i$. The filter can then be applied to a graph signal $\vs$ by first converting the signal to the frequency domain using the GFT:
\begin{equation}
\tilde{\vs} = \mF^\top \vs \;.
\end{equation}
Now that the signal is on the frequency domain, we can apply the filter and obtain the filtered signal $\tilde{\sfilter}$ by a simple matrix multiplication:
\begin{equation}
\tilde{\sfilter} = \mH_{\gG} \tilde{\vs} \;.
\end{equation}
Finally, we can convert the signal to the vertex domain using the inverse GFT:

\begin{equation}
  \sfilter = \mF \tilde{\sfilter} \;.
\end{equation}
  
Defining filters directly in the spectral domain generates filters that are graph-specific. Indeed, as the frequencies $\evlambda_i$ are discretized, the same filter $\mH$ will be applied to very different frequencies for two distinct graphs $\gG$ and $\gG'$. Therefore this type of filter tends to be mostly used to remove the lowest or highest frequencies of the graph, without considering their ``true'' value. Another drawback is that it is unlikely that this type of filter may be represented with a low order polynomial filter which impacts the complexity (i.e., the possibility of scaling to larger graphs) of applying the filter.

\subsection{Defining filters using their spectral response}

Another more straightforward way to define a filter is by its spectral response, i.e., as a function of the frequency. In this way the filter becomes less graph dependent and more general. One such design is the Simoncelli filter, that we depict in Figure~\ref{chap3:fig-simoncelli} and that is defined by the following function:

\begin{equation}
  h(\evlambda_i)=\begin{cases} 1 & \mbox{if }\evlambda_i\leq \frac{\tau}{2}\\
              \cos\left(\frac{\pi}{2}\frac{\log\left(\frac{\evlambda_i}{\tau}\right)}{\log(2)}\right) & \mbox{if }\frac{\tau}{2}<\evlambda_i\leq\tau\\
              0 & \mbox{if }\evlambda_i>\tau \end{cases},
\end{equation}
where $\tau\in [0,1]$ is a user-defined threshold and $\lambda_i$ the $i$-th Laplacian eigenvalue. We consider that the eigenvalues are always normalized by dividing by the largest one, so that $0\leq \lambda_i \leq 1$. Defining a filter by its spectral response allow for more universal filters (i.e., filters that do not heavily depend on the graph support) and also to more easily represent the filter by a low order polynomial function. Indeed in this thesis we use the PyGSP~\citep{pygsp} toolbox to implement this type of graph filters which uses the Chebyshev polynomial approximation in order to apply the filters.

\begin{figure}[ht]
  \begin{center}
  \tikzsetnextfilename{chapter3/tikz/simoncelli}%
  \input{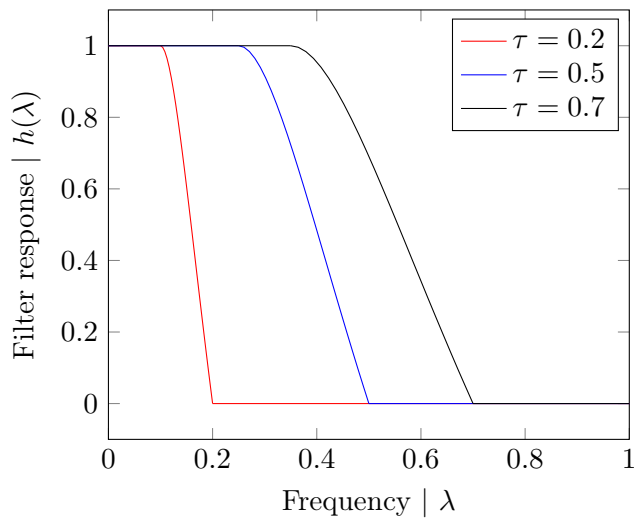}%

     \caption{Depiction of the Simoncelli filter spectral response for various values of $\tau$. }
     \label{chap3:fig-simoncelli}
  \end{center}
\end{figure}

\subsection{Defining filters using diffusion operators}

It is also possible to define a graph filter using a diffusion operator $\mS \in \R^{|\sV|\times |\sV|}$, sometimes also called graph shift operator. This operator is directly applied to a graph signal as follows:
\begin{equation}
  \sfilter = \mS \vs \;.
\end{equation} 
Note that by being applied with just a simple matrix multiplication, this type of filter is easily integrated in deep learning scenarios, where matrix multiplication is king. Indeed, most of the recent developments in graph convolutional layers use this design. We delve into this with more details in Section~\ref{chap4:classify_graph_vertices}.

We have previously introduced the concepts of transfer learning and using DNNs as feature extractors (c.f. Section~\ref{chap2:feature_extraction}). In the following subsections we consider the features extracted with the DNNs as graph signals where the graph support is not explicitly provided (i.e., we need to infer the graph structure). We then apply low-pass graph filters to reduce the amount of noise in the extracted features and improve the performance in a downstream task.

\subsection{Reducing the noise of DNN extracted features for supervised learning and few-shot learning}\label{chap3:reducing_noise}

In this section we analyze the extracted features in two scenarios: 
\begin{enumerate} 
\item \emph{Few-shot classification}: in this task a network has been trained on classes that do not belong to the problem (base) and we want to perform transfer learning to classify new (novel) classes;
\item \emph{Classification}: in this case both base and novel are the same, \emph{i.e} we aim to improve the performance of the original network. 
\end{enumerate} 

In the case of few-shot classification we evaluate our method three commonly used datasets for this problem:
\begin{enumerate}
    \item MiniImagenet~\citep{ravi2017optimization}: A subset of Imagenet, specifically designed for few-shot classification;
    \item CIFAR-FS\citep{bertinetto2018metalearning}: A reorganization of the CIFAR-100 dataset in order to transform into a few-shot problem (FS = Few-Shot);
    \item Caltech-UCSD Birds 200 (CUB)~\citep{welinder2010cub}: A dataset of bird classification.
\end{enumerate}

We also add a test called CUB-Cross, where the base classes come from the MiniImagenet dataset, while the novel classes come from the CUB dataset. Finally, we evaluate the classification problem using the previously described CIFAR-10 dataset.

We consider that the features that are extracted from the DNN are signals on graphs, where each node is an example of $\trainset$. The classes are separated in different graphs (1 per class). Our intuition is that by removing the high frequencies from the graph of each class, we can reduce the intra-class noise. With less intra-class noise, we believe that we will be able to increase the classification performance when using these features. In the following subsection we detail our methodology.

\subsubsection{Methodology}

Recall that we can split a deep neural network $f$ into a feature extractor ($\featureextractor$) followed by a a classifier ($\classifier$), such that $f(\vx) = \classifier(\featureextractor(\vx))$. We call $\hat{\vx}$ the feature tensor extracted using $\featureextractor$. We consider that the input tensor $\vx$ contains all elements of the training set $\trainset$. It can be divided in non-intersecting subsets $\vx_c$ which contain all elements of class $c$. Therefore, $\hat{\vx}_c = \featureextractor(\vx_c)$. 

Using the cosine similarity we can define the adjacency matrix for the graph of each class $c$ as $\adjmatrix_c$. We also call $L_c$ the normalized Laplacian matrix for each graph. This allow us to define low-pass graph class filters. In this section we test two of such filters, one defined directly on the spectral domain and one by its spectral response.

\paragraph{Filter on the spectral domain: } 

We define our low pass filters on the spectral domain using a diagonal matrix $\mH_{\gG_c}$ defined as:

\begin{equation}
     \mH_{\gG_c}[i,i] = \left\{ \begin{array}{cl}1.0 & \text{if } i \leq F_1\\ 0.2 & \text{if } F_1 < i \leq F_2\\ 0 & \text{otherwise}\end{array}\right.\;.
\end{equation}
Where $(F_1,F_2)$ are fixed for each scenario. In the few shot scenario we have graphs with 5 nodes and fixed the values to $(1,3)$. On the other hand, on the classification scenario graphs have 5000 nodes, so we have fixed our values to $(20,55)$. These filters were designed by hand and a possible extension of this work would be to have an automatic way of choosing these filters or even integrating them as parameters during the learning phase.

\paragraph{Filter defined by its spectral response: }

We use the previously defined Simoncelli filter, choosing an $\tau\in [0,1]$ that is inline with the task at hand. We use $\tau=0.025$ for the classification scenario (CIFAR-10 dataset) and for the few-shot scenario we use a different value of $\tau$ per dataset: \begin{inlinelist} \item MiniImagenet, $\tau=0.3$ \item CUB, $\tau=0.3$ \item CIFAR-FS, $\tau=0.3$ \item CUB-Cross, $\tau=0.35$ \end{inlinelist}.

In the following subsection we use empirical experiments to verify our intuition and show that denoising the representations with low-pass graph filters improves the classification accuracy on both scenarios, allowing us to even beat the state of the art in a competitive image classification benchmark.

\subsubsection{Empirical experiments}

We evaluate the proposed method using two scenarios, Few shot classification (transfer learning) and image classification (improving the original network). The code for all experiments is available at:~\url{https://github.com/cadurosar/graph_filter}.

\paragraph{Few shot classification - Transfer learning: }

In this first case we follow the framework from~\citep{mangla2019manifold} called few shot with backbone network. In this framework we first pre-train a DNN (backbone) in a self-supervised way on a bigger dataset of base classes. We then use the backbone to perform transfer learning on the novel classes. Recall that the base classes and novel classes do not intersect. In this scenario, in each iteration our training (support) set is composed of 5 examples (shots) for each of 5 classes (ways) that are taken at random from novel classes. The test (query) set is composed of 595 images of each class for the MiniImagenet and CIFAR-FS datasets and 15 images for CUB and CUB-Cross. We perform 100,000 iterations and report the mean accuracy and 95\% confidence intervals for each test. We use the pretrained networks that the authors published on~\url{https://github.com/nupurkmr9/S2M2_fewshot}. We test our method on 3 datasets (MiniImageNet, CUB, CIFAR-FS) for the in-domain Transfer Learning (TL) scenario and on one dataset for a cross domain TL. In the former the pre-trained network base classes come from the same dataset as do the novel classes, while on the latter the base classes come from MiniImagenet and the novel from CUB.

For each iteration we first extract the features $\hat{\vx}$ using the pre-trained feature extractor. We then infer 5 graphs of 5 nodes (as we have 5 classes and 5 examples per class), and apply the graph filter. This generates our filtered features $\xfilter$, that are classified with a simple 1-NN classifier. This is a stress test of the method as it has to be robust to a multitude of different graphs and different features. We compare our method against an 1-NN classifier on $\hat{\vx}$, a Nearest Mean Classifier (NMC) on $\hat{\vx}$, a Logistic Regression\footnote{using the default parameters of Scikit-Learn} (LR) trained on $\hat{\vx}$ and to the original results from~\citep{mangla2019manifold}. Note that none of these approaches change the test features. Results are described in Table~\ref{chap3:tab-resultsfilterfew}. We were able to improve the performance using our filter and an 1-NN classifier in almost all scenarios. We also note that NCM obtained results that are better than the results from the original paper in some scenarios.

\begin{table}[ht]
  \begin{center}
  \caption{Test error comparison on the few-shot learning task.}
  \label{chap3:tab-resultsfilterfew}
  \begin{adjustbox}{max width=\columnwidth}    

  \begin{tabular}{|c|c|c|c|c|c|}
  
  \hline
  \multicolumn{2}{|c|}{Method} & \multicolumn{3}{c|}{In-Domain TL} & Cross domain TL \\ \hline 
  $\mathcal{C}$  & Data                                     & MiniImageNet              & CUB                      & CIFAR-FS         & CUB-Cross                 \\ \hline
  Original Paper & $\hat{\vx}$                              & 16.82 $\pm$  0.11         & 9.15 $\pm$ 0.44          & 12.53 $\pm$ 0.13 & 29.56 $\pm$ 0.75          \\ \hline
  1-NN           & $\hat{\vx}$                              & 21.92 $\pm$ 0.04          & 11.06 $\pm$ 0.04         & 15.86 $\pm$ 0.04 & 36.49 $\pm$ 0.06          \\ \hline
  NCM            & $\hat{\vx}$                              & 16.73 $\pm$ 0.03          & 8.94 $\pm$ 0.03          & 12.58 $\pm$ 0.04 & 30.00 $\pm$ 0.05          \\ \hline
  LR             & $\hat{\vx}$                              & \textbf{16.49 $\pm$ 0.03} & 8.92 $\pm$ 0.03          & 12.62 $\pm$ 0.04 & 29.17 $\pm$ 0.05          \\ \Xhline{2\arrayrulewidth}
  1-NN           & Spectral Filter $\xfilter$               & 16.53 $\pm$ 0.03          & \textbf{8.86 $\pm$ 0.03} & 12.51 $\pm$ 0.04 & 29.13 $\pm$ 0.05          \\ \hline
  1-NN           & Simoncelli $\xfilter$                    & 16.55 $\pm$ 0.03          & 8.92 $\pm$ 0.03          & 12.53 $\pm$ 0.04 & \textbf{29.00 $\pm$ 0.05} \\ \Xhline{2\arrayrulewidth}
  LR             & $\text{concatenate}(\xfilter,\hat{\vx})$ & \textbf{16.29 $\pm$ 0.03} & \textbf{8.84 $\pm$ 0.03} & 12.5 $\pm$ 0.04 & \textbf{28.74 $\pm$ 0.05} \\ \hline
  \end{tabular}
  \end{adjustbox}
  \end{center}
  \end{table}

\paragraph{Image classification - Improving the original network results: }

On this second case we use the well known CIFAR-10 dataset and three pre-trained architectures, WideResNet 26-10~\citep{zagoruyko2016wide}, ShakeNet~\citep{gastaldi2017shake} and PyramidNet~\citep{han2017pyramid}. The first model is trained with traditional data augmentation techniques (namely random crop and horizontal flip) while the latter two\footnote{available at: \scriptsize{\href{https://github.com/kakaobrain/fast-autoaugment}{github.com/kakaobrain/fast-autoaugment}.}} use a stronger learned policy called fast-autoaugment~\citep{lim2019fast} on top of traditional data augmentation. We extract the features $\hat{\vx}$, create $k$-nearest neighbor graphs for each class ($k=10$), apply the graph filter on each graph and generate our filtered features $\xfilter$. We now compare the performance of a 1-NN classifier applied to the filtered features $\xfilter$ to the same classifiers as in the previous section. The results are described in Table~\ref{chap3:tab-resultsfiltercifar}. The 1-NN classifier on the filtered features was able to improve the performance over both the 1-NN classifier, NCM and even beats the performance of the original network. By using three networks we can show that our filter has to be adapted mostly to the dataset and not exactly to the features that are provided. We also note that we are able to beat the state-of-the art without retraining.

\begin{table}[ht]
  \begin{center}
  \caption{Test error comparison on the classification task. We note that this task does not have confidence intervals as the objective is to improve the original network that is expensive to train.}
  \label{chap3:tab-resultsfiltercifar}
  \begin{tabular}{|c|c|c|c|c|}
  
  \hline
  \multicolumn{2}{|c|}{Method} & \multicolumn{3}{c|}{DNN architecture} \\ \hline 
  $\mathcal{C}$  & Data          & WideResNet    & ShakeNet  & PyramidNet          \\ \hline
  Original Paper & $\hat{\vx}$   & 4.18          & 2.04      & 1.44     \\ \hline
  1-NN           & $\hat{\vx}$   & 4.19          & 2.05      & 1.46         \\ \hline
  LR           & $\hat{\vx}$     & 4.18          & 2.02      & 1.46         \\ \hline
  NMC            & $\hat{\vx}$        & 4.19          & 2.03      & 1.48       \\ \hline
  1-NN           & Spectral Filter $\xfilter$ & \textbf{4.09} & 2.03      & 1.39 \\ \hline
  1-NN           & Simoncelli $\xfilter$     & 4.12 & 2.02      & \textbf{1.37} \\ \hline
  \end{tabular}
  \end{center}
  \end{table}

In the following subsection we present a similar method that uses graph filters to improve results in the contexts of Visual-Based Localization (VBL) and Image Retrieval (IR).

\subsection{Improving VBL and IR using graph filter}\label{chap3:vbl}

We have previously described the tasks of Visual-Based Localization (VBL) and Image Retrieval(IR) in Section~\ref{chap2:vbl} and Section~\ref{chap2:ir} respectively. In this section we introduce our contribution~\citep{lassance2019improved} that aims at combining DNN feature extractors and low-pass graph filters to improve performance on these downstream tasks. 

Indeed, using Deep Learning (DL) methods for VBL approaches has recently received a lot of attention. DL can be used to directly map images to poses~\citep{brahmbhatt2018geometry,kendall2015posenet} or to generate latent representation (i.e., use the DNNs as feature extractors) that are resilient to appearance changes in the images~\citep{arandjelovic2016netvlad}. The former comes with major drawbacks. For example, such methods are unable to generalize to previously unseen locations. Furthermore, small differences in query poses can cause significant localization errors. Also, appending new locations to the dataset will require retraining the whole network. On the contrary, representation methods generalize well to new data without the need for this retraining. 

Therefore in this section we focus on situations where pose information from a visual query is inferred using $\supportset$, in which images are associated with a pose. This can be seen as an image retrieval problem where the aim is to find images in a set that might have been taken from the same location as that of the query image. Once a match or set of matches is found, the pose for the query image is computed as a function of the poses of the retrieved ones. 

The method we propose in this Section takes advantage of the additional information that might be available for each image in the support set, including GPS coordinates, consecutiveness in the acquisition process or similar latent representations. This is particularly interesting for a robotics setting, where images are almost always acquired sequentially from a camera mounted on a vehicle. This sequential nature of the acquisition process suggests that images closer in time should also have close representations. Additional information such as GPS coordinates, if available, can aid in encoding global relationships between images in the database. We show that by considering such relationships between images, localization accuracy can be increased.

Moreover, enhancements can be achieved using only minor adjustments to the inference process. Specifically, we exploit relationships via a graph filter on top of pre-learned deep representations extracted from deep neural networks~\citep{arandjelovic2016netvlad,radenovic2019finetune}. In this graph, each vertex is associated one-to-one with an image (i.e., a graph that models the relationship between data samples). Edges model relations between images and are derived from the additional source of information (e.g. temporal adjacency, GPS, similar latent representations). Interestingly, the proposed method can be seen as a fine-tuning of the representations that does not require additional learning, allowing this operation to be possibly executed on a resource constrained system.

In the following subsections we first introduce how we infer graphs from the extracted features, then we introduce the graph filter that will allow us to improve the performance on the downstream tasks, and finally we derive and discuss experiments. 

\subsubsection{Graph inference}

In order to make the graph filter improve the accuracy of VBL, we first need to be sure that the edges of the graph are well chosen to reflect the similarity between two images represented as vertices, as our main goal is to exploit extra information available in the database. In this work, we consider three different sources:

\begin{itemize}
    \item Metric distance (\texttt{dist}): the distance measured by the GPS coordinates between two vertices;
    \item Sequence (\texttt{seq}): the distance in time acquisition between two images (acquired as frames in videos);
    \item Latent similarity (\texttt{latent\_sim}): the cosine similarity between latent representations.
\end{itemize}

The matrix $\adjmatrix$ can therefore be derived from the three sources as: 
\begin{equation}
  \adjmatrix = \adjmatrix_{\texttt{dist}} + \adjmatrix_{\texttt{seq}} + \adjmatrix_{\texttt{latent\_sim}}.
\end{equation}

\subsubsection{Metric distance}

In order to transform the metric distance into a similarity, we use an exponential kernel. This is parametrized by a scalar $\gamma$ that controls the sharpness of the exponential and a threshold parameter $max_{distance}$ that cuts edges between distant vertices:
\begin{equation}
\adjmatrix_{\texttt{dist}}[i,j] = \left\{ \begin{array}{ll} e^{-\gamma dist_{i,j}} & \text{if } dist_{i,j} < max_{distance}\\ 0 & \text{otherwise}\end{array}\right..
\end{equation}

\subsubsection{Sequence}

To exploit the information of time acquisition of frames, we use the function $seq(k, \mu, \nu)$ which returns 1 if the frame distance between $\mu$ and $\nu$ is exactly $k$ and 0 otherwise. We then build a matrix $\adjmatrix_{\texttt{seq}}$ parametrized by scalars $\beta_k$ and $k_{max}$:

\begin{equation}
  \adjmatrix_{\texttt{seq}}[\mu \nu] = \sum_{k=1}^{k_{max}} \beta_{k} seq(k,\mu, \nu).
\end{equation}

\subsubsection{Latent similarity}

Finally, we define a matrix $\adjmatrix_{\texttt{latent\_sim}}$ for the latent representations cosine similarity. This is parametrized by a scalar $\alpha$ that controls the importance of the latent similarity. We only compute this similarity if either the distance similarity or the sequence similarity is nonzero:

\begin{equation}
\adjmatrix_{\texttt{latent\_sim}}[\mu \nu] = \left\{ \begin{array}{ll} \alpha sim(\mu,\nu) & \text{if } \adjmatrix_{dist}[\mu \nu] > 0 \\ & \text{or } \adjmatrix_{seq}[\mu \nu] > 0,\\ 0 & \text{otherwise}\end{array}\right.
\end{equation}

where $sim$ is the latent similarity function. In this work we use the cosine similarity ($sim_\text{cos}$), but any similarity function could be considered.

\subsubsection{Graph filter}

Given the signal $\vs$ and its normalized Laplacian matrix $\mL$, we define our graph low pass filter using a diffusion matrix $\mS$:
\begin{equation}
  \mS = \left(\identity - a\mL \right)^m,  
\end{equation} 
where $a=0.1$ and $m$ is an integer that we fix to 20. Note that when $m=0$ no filtering is performed ($\mS = \identity$).

\subsubsection{Experimental results}

We first present the results concerning VBL and then we extend our empirical test to the context of IR.

\paragraph{Visual-based localization: } In the context of VBL we can infer our graphs using all three subgraphs (distance, similarity and sequence). We thus have to first search and define the needed parameters. These parameters were obtained using a grid search and keeping the best score on the Adelaide validation query. We then use both the Adelaide test query and the Sidney dataset to ensure that the parameters are not overfitted to the validation query. Note that by using the same parameters for all cities we further validate the fact that our approach does not need to be updated when adding more cities. The parameters we use are $\gamma = 0.1, \; \beta_1 = 0.75, \; \beta_2 = 0.0625, \; \beta_3 = 0.015, \; k_{max} = 3, \; \alpha = 0.66, \; m=20$.

We test the graph filter in three different cases. First the extra data is available only for the support, second it is available only for the query and finally it is available in both cases. In each case we report two metrics, the median localization error over all the queries and the percentage of localizations that have less than 25m error.

First we perform the tests on the Adelaide dataset and present the results in table~\ref{chap3:results_adelaide}. The graph filter was able to increase performance, even when applied only on the query database, and as expected, adding the graph filter during both query and support gave the best results. Recall that the parameters were defined based on the validation query, under the case where the extra data is available only for the support database.

\begin{table}[ht]
\centering
\caption{Results under different graph filter conditions for the Mapiliary Adelaide dataset. GF means Graph Filtering.}
\begin{tabular}{|c|c|c|c|c|}
\hline
Measure                                & None & GF Support & GF Query & GF S+Q \\ \hline
\multicolumn{5}{|c|}{Validation}                           \\ \hline
acc $<$ 25m & 66.84\%           & 76.23\%                & 69.64\%              & \textbf{79.22\%}                \\ 
median distance      & 8.76m            & \textbf{6.90m}                 & 13.26m              & 8.90m                 \\ \hline
\multicolumn{5}{|c|}{Test}                           \\ \hline
acc $<$ 25m       & 44.63\%           & 50.44\%                & 46.25\%              & \textbf{52.13\%}                \\ 
median distance            & 110.66m          & 24.30m                & 42.03m              & \textbf{22.49m}                \\ \hline
\end{tabular}
\label{chap3:results_adelaide}
\end{table}

Second we validate that the operation can be used on other cities and that we do not need to perform an additional grid search for the new data. The results are presented in Table~\ref{chap3:results_sydney}. As expected the graph filter allowed us to get better performance in both median distance and accuracy, while using the parameters optimized for the Adelaide dataset. This is inline with our goal that is to have an operation that we do not have to retrain or re-validate parameters for a new dataset. We note that the performance of the hard query set is not inline with a good retrieval system (several kilometers from the correct point), but it is included to show that our method allows us to increase the performance both when the NetVLAD features are already very good for the task and when they are very bad. 

\begin{table}[ht]
\centering
\caption{Results under different graph filter conditions for the Mapiliary Sydney dataset. GF means Graph Filtering.}
\begin{tabular}{|c|c|c|c|c|}
\hline
Measure                                & None & GF Support & GF Query & GF S+Q \\ \hline
\multicolumn{5}{|c|}{Easy}                           \\ \hline
acc $<$ 25m & 49.45\%           & 55.69\%                & 57.21\%              & \textbf{64.33\%}                \\ 
median distance      & 28.25m            & 13.37m                 & 15.89m              & \textbf{11.70m}                 \\ \hline
\multicolumn{5}{|c|}{Hard}                           \\ \hline
acc $<$ 25m       & 13.87\%           & 17.29\%                & 17.29\%              & \textbf{22.28\%}                \\ 
median distance            & 4000m          & 3372m                & 3246m              & \textbf{2226m}                \\ \hline
\end{tabular}
\label{chap3:results_sydney}
\end{table}

Finally we perform ablation tests to ensure that each part of the graph is important, using the Adelaide test set. The results are presented in Table~\ref{chap3:ablation}. The table shows that different sources of information are important, with each one adding to increase in performance. Metric distance and sequence being the most important features and latent similarity being more of a complementary feature (this is expected, as it is being thresholded by the other two features). This is encouraging since in the absence of any other external information (GPS, etc), one can rely on the sequential nature of data collection to get a boost in localization performance. This information is readily available in a robotics setting. 

\begin{table}[ht]
\centering
\caption{Ablation study on the Mapiliary Adelaide test query. Table extracted from~\citep{lassance2019improved}}
\begin{tabular}{|c|c|c|c|c|}
\hline
$\adjmatrix_{\texttt{dist}}$ & $\adjmatrix_{\texttt{seq}}$ & $\adjmatrix_{\texttt{latent\_sim}}$ & median distance & acc $<$ 25m \\ \hline
     &     &     & 110.66m          & 44.63\%            \\ \hline
X    &     &     & 29.26m           & 49.42\%            \\ \hline
     & X   &     & 39.11m           & 47.47\%            \\ \hline
X    &     & X   & 28.41m           & 49.56\%            \\ \hline
X    & X   &     & 24.35m           & 50.17\%            \\ \hline
     & X   & X   & 37.34m           & 47.74\%            \\ \hline
X    & X   & X   & \textbf{24.30m}  & \textbf{50.44\%}   \\ \hline
\end{tabular}
\label{chap3:ablation}
\end{table}

We also want to demonstrate the effect of successive filtering. This is achieved by applying the filter $m$ times. Theoretically, this should help increase the performance until it hits a ceiling and then it should start to slowly decrease (as it enforces connected examples of the database to be too similar to each other). The results are presented in Figure.~\ref{chap3:ablation_m}. As can be seen, there is a clear pattern of increased performance until $m=20$ after which the performance starts to degrade. It should be noted that even for $m=40$ the graph filter still performs better than the baseline ($m=0$).

\begin{figure}[ht]
 \begin{center}
  \tikzsetnextfilename{chapter3/tikz/ablation_m}%
  \input{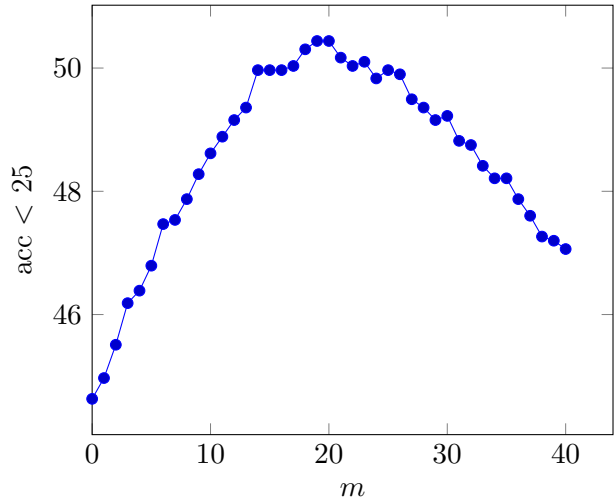}%

    \caption{Effect of the parameter $m$ on the retrieval accuracy under 25m for the Adelaide test query. Figure extracted from~\citep{lassance2019improved}. }
    \label{chap3:ablation_m}
 \end{center}
\end{figure}

\paragraph{Image retrieval: }

As a visual localization problem can be seen as an application of Image Retrieval, we test our method in classical Image Retrieval scenarios to verify its genericity. We use the revisited Oxford and revisited Paris datasets~\cite{radenovic2018revisiting} with the features from~\cite{radenovic2019finetune}. In this case we do not have the physical distance between the images to properly create $W_{\text{dist}}$ or the image sequence to generate $W_{\text{seq}}$. We therefore use the objects names as classes and our $W_{\text{dist}}$ is composed of only 1 (if $\mu and \nu$ are from the same object) and 0 otherwise. Note that in this way, we differ from traditional methods as they tend to not consider this additional information during training or testing and therefore comparison with other methods is not entirely fair. All the other parameters are the same as in the localization scenario.

In the scenario of Image Retrieval our approach can be categorized as diffusion-based. In the literature there are diffusion methods that are used during the ranking phase with $k$-NN graphs~\cite{iscen2017efficient} or that add an additional GCN~\cite{kipf2016semi} component that has to be trained in an unsupervised way~\cite{liu2019guided}. In summary, our main contribution is the graph construction (taking advantage of the class data that is available on the support set) and our smoothing/diffusion technique that is based on a low-pass filter.

The results are presented in Table~\ref{retrieval}. Our method was able to increase the mean average precision, with similar results to the approach from~\cite{iscen2017efficient}. When using in combination with~\cite{iscen2017efficient} we achieve a similar performance on the Paris dataset to a state of the art approach~\cite{liu2019guided} that requires training an additional GCN network.

\begin{table}[ht]
\begin{center}
\caption{mAP retrieval results comparison, results that do not include our filter are extracted as is from~\cite{liu2019guided}. Table extracted from~\citep{lassance2019improved}. }
\begin{tabular}{|c|c|c|c|c|c|}
\hline
                      & & \multicolumn{2}{c|}{rOxford}    & \multicolumn{2}{c|}{rParis}    \\ \hline
Features &  Ranking               & Medium         & Hard           & Medium         & Hard          \\ \hline
~\cite{radenovic2019finetune} & $sim_\text{cos}$              & 64.7           & 38.5           & 77.2           & 56.3          \\ \hline
~\cite{radenovic2019finetune} & \cite{iscen2017efficient}        & 69.8           & 40.5           & 88.9           & 78.5          \\ \hline
~\cite{radenovic2019finetune} + Our filter~\citep{lassance2019improved} & $sim_\text{cos}$ & 70.58 & 47.67 & 87.77 & 76.04 \\ \hline
~\cite{radenovic2019finetune} + Our filter~\citep{lassance2019improved} & \cite{iscen2017efficient} & 71.41 & 51.27 & 91.54 & 81.85 \\ \hline
~\cite{radenovic2019finetune} + ~\cite{liu2019guided} & ~\cite{liu2019guided}        & \textbf{77.8}           & \textbf{57.5}           & \textbf{92.4}           & \textbf{83.5}          \\ \hline
\end{tabular}
\label{retrieval}
\end{center}
\end{table}

In the previous paragraphs we showed that using techniques from Graph Signal Processing, the performance of visual based localization and image retrieval can be improved by incorporating additional available information. This additional information acts on the latent representation by making it smoother on a graph designed using all available information, leading to a boost in localization. One encouraging observation is that this additional information can take the form of a simple temporal relationship between surrounding images acquired in a sequence, and still lead to a significant increase in performance.

\section{Summary of the chapter}

In this chapter we have introduced the concepts of graphs and graph signals, alongside with the needed tools from the Graph Signal Processing (GSP) framework. These concepts and tools allow us to perform analysis of deep latent representations and to derive new contributions to the machine learning community that are going to be introduced in the following Chapters.

Some of the tools introduced in this section include the Graph Fourier Transform (GFT) and the analysis of the smoothness of graph signals. We also discuss methods of inferring graphs from data where the graph support structure is not explicitly available, including a novel contribution:

\begin{itemize}
  \item \bibentry{lassance2020benchmark}
\end{itemize}

Then we derived graph filters, that emulate traditional signal processing filters in the graph domain. These graph filters will be used to link convolutional layers and graph convolutional layers in the next chapter. We also present two applications of graph filters that allow us to reduce the amount of noise of features extracted using DNNs and improve the performance of downstream tasks including \begin{inlinelist}\item few-shot learning \item image classification \item visual-based localization (VBL) \item image retrieval (IR)\end{inlinelist}. The visual-based localization application was a subject of an archival contribution:
  
  \begin{itemize}
    \item \bibentry{lassance2019improved}
  \end{itemize}
  
In the following chapter we discuss DNNs that have their latent spaces supported on graphs. To do this we will introduce and discuss the field of Graph Neural Networks (GNNs). First we introduce the needed definitions in Section~\ref{chap4:definitions} using the concepts introduced in the current chapter and the previous one. These definitions will create a link between convolutional layers and graph convolutional layers that allow us to use an universal framework to represent both types of layers. 

We then discuss in Section~\ref{chap4:classify_graph_signals} the use of GNNs in the context of supervised classification of graph signals, first doing a sanity check using data that is defined in a regular 2D Euclidean space (e.g., images) and then using data that is defined in a slightly irregular 3D Euclidean space (e.g., neuroimaging) as a slightly more difficult test. Finally, we will then discuss applications of GNNs in a semi-supervised classification scenario and their pitfalls in Section~\ref{chap4:classify_graph_vertices}.
\chapter{Deep Learning for inputs supported on graphs}\label{chap4}
\localtableofcontents
\vspace{1.0cm}

In the previous chapters, we have introduced and discussed Deep Neural Networks and Graph Signal Processing. In this chapter, we build upon these methods and discuss DNNs that have their inputs supported on graphs. We first introduce the needed definitions in Section~\ref{chap4:definitions}, introducing the graph convolutional layers as graph filters. We also create a link between convolutional layers and graph convolutional layers. 

We then discuss in Section~\ref{chap4:classify_graph_signals} the use of GNNs in the context of supervised classification of graph signals. We do a quick review of the domain and then present a method to perform the supervised classification of graph signals. The method is evaluated first by a sanity check using data defined in a regular 2D Euclidean space (e.g., images). We then evaluate on a more real-world scenario by using data defined in a slightly irregular 3D Euclidean space (e.g., neuroimaging) as a slightly more difficult test. Finally, we will discuss applications of GNNs in a semi-supervised classification scenario and their pitfalls in Section~\ref{chap4:classify_graph_vertices}.

\section{Definitions}\label{chap4:definitions}

In this section, we will introduce some of the recent literature in Graph Neural Networks. Presenting all the methods in the literature is very complicated, given the speed of evolution of the field and the amount of already proposed methods~\citep{kipf2016semi,pasdeloup2017extending,lassance2018matching,grelier2018graph,velickovic2018graph,liao2019lanczosnet,isufi2020edgenets,wu2019simplifying,ruiz2019invariance,klicpera2019combining,balcilar2020bridging,defferrard2016convolutional}. In this thesis, we prefer to present this domain as six methodologies in a logical sequence of developments. Note that even if this can be seen as a logical sequence of developments, where at each time complexity increases, the methods themselves are not in chronological order. 

We first present in Section~\ref{chap4:sgc} the use of graph filter as feature extractors followed by a simple classifier. Note that this should not be considered a neural network, but it was introduced as so in~\citep{wu2019simplifying} that is called ``Simplifying Graph Convolutional Networks''. We then present a methodology that use this simple diffusion/filter on a graph inside each layer, leading to \emph{Graph Convolution Layers (GCL)}, and to networks that are called Graph Convolutional Networks (GCN)~\citep{kipf2016semi} in Section~\ref{chap4:gcn}. 

We then present two extensions to the GCL/GCN, first in Section~\ref{chap4:tagcn} we showcase methods that improve the GCL by applying multiple graph filters at the same layer in order to increase the degrees of freedom the layer, and second in Section~\ref{chap4:gat} we show how we can improve the GCNs by modifying the graph during the training, either via learning the weights or by adding additional information. 

Finally we present two additional extensions that are orthogonal to the first ones. In section~\ref{chap4:chebyshev} we show methods that can learn the graph filters directly instead of using predefined filters, and in Section~\ref{chap4:translation} we showcase methods that try to mimic traditional convolutional layers using the concept of graph translation introduced in Section~\ref{chap3:graph_translation}. 

Note that there are multiple ways to represent the above-mentioned methodologies~\citep{wu2020comprehensive,zhang2020deep,zhang2019graph,gilmer2017neural,vialatte2018convolution,bontonou2019formalism}. In this manuscript, we follow a different path to be more inline with the rest of the document. The reader should be informed that similar analysis were proposed in the first months of this year~\citep{isufi2020edgenets,balcilar2020bridging}.

\subsection{Using graph filters as feature extractors}\label{chap4:sgc}

As we present the considered methods in ascending order of complexity, we first recall the previously introduced concept of graph filters (c.f. Section~\ref{chap3:graph_filter}) and then introduce a very simple methodology that consists in using a graph filter as feature extractor. The most relevant part is that the graph filter does not intervene in the training phase of the classifier and can be seen as a fast pre-processing method. This methodology was popularized by SGC~\citep{wu2019simplifying} and has been shown to be very efficient not only in the context of inputs supported on graphs but also in contexts where the graph support has to be inferred such as visual based localization (c.f. Section~\ref{chap3:vbl} and few-shot learning~\citep{hu2020exploiting}.

In the case of SGC, the graph filter that is used is based on the adjacency matrix $\adjmatrix$. The first step is to add self-connections to $\adjmatrix$, generating the augmented adjacency matrix $\tilde{\adjmatrix}$: 
\begin{equation} 
    \tilde{\adjmatrix} = \identity + \adjmatrix \;.
\end{equation}
then the adjacency matrix is normalized in order to create the diffusion matrix $\mS$ as follows:
\begin{equation}\label{chap4:eq-augadj}
\mS = \tilde{\mD}^{-\frac{1}{2}} \tilde{\adjmatrix} \tilde{\mD}^{-\frac{1}{2}} = \dot{\tilde}{\adjmatrix} \; ,
\end{equation} 
where $\tilde{\mD}$ is the degree matrix of the augmented adjacency matrix. Note that it is common in the literature to represent $\mS$ as the $\tilde{\adjmatrix}$ itself, but here we prefer to use two different symbols. We can then apply the graph filter to the graph signal $\vx$ and generate the filtered signal $\xfilter$ as follows:
\begin{equation}\label{chap4:eq-SGC}
    \xfilter = \mS^m \vx \; ,
\end{equation}
where $m$ represents the amount of ``SGC layers'' that are applied to the signal. The filtered signal is then used to train a simple logistic regression classifier or even a $k$-neighbors classifier. In~\citep{wu2019simplifying} the authors have shown that for various applications the results are as good as multi-layered GCNs, in a fraction of the time (as the graph filter is only applied as a pre-processing). We will delve in more details on this in the following sections.

One advantage of considering this a filtered signal and a graph filter is that we can perform spectral analysis to better understand the underlying functioning of our feature extractor. Indeed we can reformulate~\eqref{chap4:eq-SGC} in order to represent our filter using the $h(\lambda)$ function and the graph Fourier transform. First we have to define the symmetric Laplacian of the diffusion matrix as:
\begin{equation}
    \tilde{\mL} = \identity - \mS \; .
\end{equation}
We can then redefine the filtered signal as:
\begin{align}
    \xfilter &= \mS^m \vx \\
             &= (\identity - \tilde{\mL})^m \vx \\
             &= (\identity - \mF \tilde{\mLambda} \mF^\top)^m \vx \\ 
             &= \mF (\identity - \tilde{\mLambda} )^m \mF^\top \vx \\
             &= \mF \mH \mF^\top \vx \; , \label{chap4:mh-filter}
\end{align}
where $\mH = (I - \tilde{\mLambda})^m$ is a diagonal matrix that determines the filter response for each discrete eigenvalue of $\tilde{\mL}$. We can then represent this filter by its spectral response with a function $h(\tilde{\lambda})$ as follows:
\begin{equation}
    h_\text{SGC}(\tilde{\lambda}) = (1-\tilde{\lambda})^m \;.
\end{equation}
We represent the spectral response in Figure~\ref{chap4:response_sgc} for various values of $m$. Note that while this should implement a low pass filter, it actually implements a band reject filter, where in the odd values the high frequencies are inverted. This behavior while unexpected is not necessarely bad, as recent theory in deep learning (both applied to computer vision and GSP) seem to converge to robustness is linked to the low frequencies of the signal, while generalization and overfitting are linked to the higher frequencies~\citep{wang2020high,ilyas2019adversarial,gama2019stability, yin2019fourier}. We will discuss this difference between the expected low pass and the obtained filter in more details in the application part of this chapter.

\begin{figure}[ht]
    \begin{center}
        \begin{adjustbox}{max width=.5\linewidth}
  \tikzsetnextfilename{chapter4/tikz/response_sgc}%
  \input{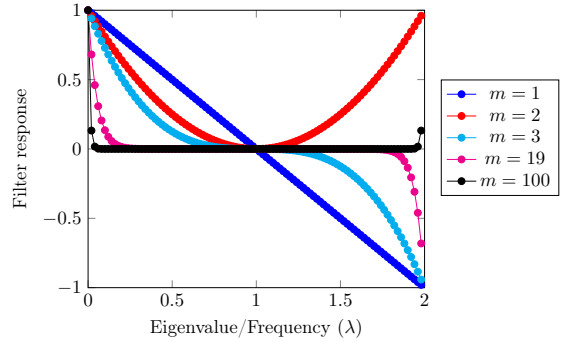}%

        \end{adjustbox}
    \end{center}
    \caption{Response of the filter used to extract features in SGC~\citep{wu2019simplifying}. Note that the eigenvalues here are the ones from the Laplacian of the augmented graph}
    \label{chap4:response_sgc}
\end{figure}

Note that this filter is based on the eigenvalues of the Laplacian of the augmented graph $\tilde{\adjmatrix}$ and not on the ones of the original graph $\adjmatrix$.

Let us now decompose the filter generated with the augmented graph in two filters $f_1$ and $f_2$:
\begin{align}
    \xfilter &= \mS \vx \\
    & = \frac{\vx}{\mD + \identity} + (\mD + \identity)^{-\frac{1}{2}} \adjmatrix (\mD + \identity)^{-\frac{1}{2}} \vx \; ; \\
    f_1(\vx) &= \frac{\vx}{\mD + \identity} \; ; \\
    f_2(\vx) &= (\mD + \identity)^{-\frac{1}{2}} \adjmatrix (\mD + \identity)^{-\frac{1}{2}} \vx \; ; \\
    \xfilter &= f_1(\vx) + f_2(\vx) \;.
\end{align}

By decomposing into $f_1$ and $f_2$ we can try to analyze them separately, indeed we can see that $f_1$ attenuates the signal of each sample $\vx$ by dividing it by its degree augmented by one. In the case of $f_2$, each node is affected differently depending on its degree and its neighbors. This means that the frequency profile of the filter will be perturbed, unless we have a regular graph (i.e. all nodes have the same degree).

Indeed if the graph represented by matrix $\adjmatrix$ is regular where each node has degree $d$, we can rewrite of the augmented graph $\tilde{\mL}$ as a function of the Laplacian of the original graph:
\begin{align}
    \mL & = \identity - \mD^{-\frac{1}{2}} \adjmatrix \mD^{-\frac{1}{2}} \\ 
    & = \identity - \frac{\adjmatrix}{d} \Rightarrow \adjmatrix = d\identity + d\mL \; ; \\
    \tilde{\mL} & = \identity - (\mD+\identity)^{-\frac{1}{2}} (\adjmatrix + \identity) (\mD + \identity)^{-\frac{1}{2}} \\
    & = \identity - \frac{\adjmatrix + \identity}{d+1} \\
    & = \frac{d \identity - \adjmatrix}{d+1} \\
    & = \frac{d\identity - (d\identity - d\mL) }{d+1} \\
    & = \frac{d \mL }{d+1} \; , \\
\end{align}
and by writing it as a function of the original graph, we can also write the eigenvalues of the augmented graph as a function of the eigenvalues of the original one:
\begin{equation}
    \tilde{\mLambda} = \frac{d}{d+1}\mLambda \; ,
\end{equation}
finally, we can now write the function $h(\lambda)$ as a function of the eigenvalues of the original graph as follows:
\begin{equation}
    h(\lambda) = \left(1 - \frac{d \lambda}{d+1}\right)^m \; ,
\end{equation}
which still has the same problem of not being exactly a low pass filter. As the maximum eigenvalue of $\mL$ is $\lambda_\text{max}=2$, this filter would only be low pass if $2d \leq d+1$ which is only possible for $d=0$ and $d=1$, which would yield graphs that are not fully connected and therefore are not interesting for a GCN application.

\subsubsection{Other filters proposed in the literature}

In the previous paragraphs we have described how one can use a graph filter as a preprocessing step for data, and have developed an analysis using the graph filter defined in~\citep{kipf2016semi,wu2019simplifying}. In this section, we will present some other possible choices of filters that were proposed in the literature under a common framework ($m$ is the attenuation factor and $\alpha$ controls the smoothness of the filter). First, we recall the filter used in Section~\ref{chap3:vbl}, which is a proper low pass filter if the paramater $\alpha$ is smaller than $0.5$ and that is defined as:
\begin{equation}
    \mS = (\identity - \alpha\mL)^m \; \Rightarrow  h_\text{VBL}(\lambda) = (1 - \alpha \lambda)^m \; \label{chap4:eq-VBL}
\end{equation}
Another more frequently used low pass filter is based on the Tikhonov regularization~\citep{shuman2013emerging} which aims at balancing the amount of change of the filter and the smoothness of the final filtered signal:
\begin{equation}
    \underset{h}{\mathrm{argmin}} \|| \xfilter - \vx \|| + \alpha \xfilter^\top \mL \xfilter \Rightarrow  h_\text{Tikhonov}(\lambda) = \frac{1}{1+\alpha \lambda} \; , \label{chap4:eq-Tikhonov}
\end{equation}
In~\citep{balcilar2020bridging} the authors propose to combine four different filters ($h_1$ to $h_4$), where $h_1$ is a low-pass filter, $h_2$ to $h_4$ are different band pass filters as follows:
\begin{align}
    h_\text{balcilar-lowpass}(\lambda) &= (\frac{\lambda_\text{max} - \lambda}{\lambda_\text{max}})^m \label{chap4:eq-balcilar} \\ 
    h_\text{balcilar-2}(\lambda) &= \exp(-\alpha (0.25\lambda_\text{max} - \lambda)^2) \\
    h_\text{balcilar-3}(\lambda) &= \exp(-\alpha (0.5\lambda_\text{max} - \lambda)^2) \\
    h_\text{balcilar-4}(\lambda) &= \exp(-\alpha (0.75\lambda_\text{max} - \lambda)^2) \\
\end{align}
Finally, there also exists a filter~\citep{klicpera2019combining} that is based on the personalized PageRank algorithm~\citep{page1999pagerank} and the normalized augmented adjacency matrix $\dot{\tilde{\adjmatrix}}$ from~\Eqref{chap4:eq-augadj}:
\begin{equation}
    \mS = \alpha(\identity - (1-\alpha)\dot{\tilde{\adjmatrix}})^{-1} \Rightarrow h_\text{page}(\tilde{\lambda}) = \frac{\alpha}{\alpha(1-\tilde{\lambda}) + \tilde{\lambda}} \; ; \label{chap4:eq-page}
\end{equation}
Also note that the authors originally proposed to apply the filter after the affine transformation of the logistic regression or after the MLP that reduces the number of features to the number of classes. It thus differs substantially from~\citep{wu2019simplifying}. We will further discuss this in Section~\ref{chap4:classify_graph_vertices} in order to explicit all the different decisions that are made for each model. We also discuss how to fairly compare them.

\begin{remark}
    While for analysis sake it is better to consider the filters in the spectral form, applying the filters as a diffusion matrix is much more computationally efficient: $O(|\sV|^3)$ vs $O(|\sE|)$. Therefore it is quite recommended to convert the spectral filter into matrix form as a pre-processing step when possible. 
\end{remark}

\subsection{Graph Convolutional Layers (GCL)}\label{chap4:gcn}

Now that we have introduced how to use graph filters to extract relevant features from graph signals, we can extend this to a layer in a neural network. Indeed in the original GCN paper~\citep{kipf2016semi} each layer (that we call GCL in this work) is defined as:
\begin{equation}\label{chap4:eq-gcn}
    f^{\ell}({\vx}) = g(\mS^m \vx \mW^\ell + \vb^\ell)\; ,
\end{equation}
where $f^\ell$ is the layer function of layer $\ell$, $\mW^\ell$ is the weight matrix of the afine transformation of layer $\ell$, $\vb^\ell$ is the bias associated with layer $\ell$, $\vx$ is the input of the layer, $g$ is the nonlinear activation function and $\mS$ is the diffusion matrix presented in the previous section based on the augmented adjacency matrix. Note that a GCN model can have more than one layer and that if we had removed the nonlinear activations we would have the same model as the one presented in the previous section as all the weights and bias would be compressed in a single matrix and a single vector that would be responsible for the logistic regression.

In other words, SGC is a specialization of GCNs where there is no concept of network or layers as all the layers are linear. On the other hand, by introducing the nonlinearity we can only approximate the true graph filter, specially in cases where the nonlinear function aggregates over the nodes of the graph as in~\citep{ruiz2019invariance}. In this case we define the filter using $\mS^m$ and apply the same filter for each layer. 

Note that even if in theory GCNs are able to represent more complex functions than SGC, it is not given which method will perform best. Indeed in~\citep{wu2019simplifying} the authors show that SGC may present equivalent or even better results than GCN in some tasks. In a recent paper~\citep{vignac2020choice}, the author perform a more careful evaluation between GCNs and SGC and show that choosing which method to use depends heavily on a simple metric: the higher is the number of samples $N$ per feature $F$ the better GCN performs in comparison with SGC. In other words, when you have $N>>F$ linear models such as SGC perform very well, on the other hand when $F>>N$ more complex models such as GCN and its improvements are better.

\subsection{How to combine multiple filters in each GCL}\label{chap4:tagcn}

Given the limitations of the GCN model that implements only one filter per layer and that we are not able to compose it to determine the global filter of the entire network given the nonlinearities, authors have proposed alternative ways to compose multiple filters in each GCL of the GCN. For example, one simple solution introducted in TAGCN~\citep{du2017topology} is to perform a sum over different filters in each layer as follows: 
\begin{equation}\label{chap4:eq-tagcn}
    f^{\ell}{\vx} = \sum_{m \in \sM} g(\mS^m \vx \mW^{\ell,m} + \vb^{\ell})\; ,
\end{equation}
where $\sM$ is the set of powers of $\mS$ that we want to use as filter and there is a different affine transformation for each filter. An extension to this framework is presented in~\citep{liao2019lanczosnet} where the authors use the Lanczos polynomial to accelerate the computation of $\mS^m$ in the case where $m$ is very large. 

Note that, as before, GCN may be seen as a specialization of this model that only uses one filter per GCL and that, as before, even if this model is more expressive it is not given that it will be better performing as seen in~\citep{vialatte2018convolution,vignac2020choice}.

\subsection{Modifying the support graph to improve GCNs}\label{chap4:gat}

Another possibility of improvement to GCNs is to change the graph itself instead of the graph filter. In this case one can even use a different graph for each layer of the architecture. One of the most relevant papers that uses this methodology is GAT~\citep{velickovic2018graph} where the authors propose to use multiple attention heads~\citep{vaswani2017attention} in order to learn multiple representations of the graph support. Note that even if the graph support is modified, they do not create new links, they only change the weights of the links that are already present. Their approach may be summarized as follows:
\begin{enumerate}
    \item For each attention head $k$, we compute a new diffusion matrix $\mS$ as $\mS_k$:
    \begin{enumerate}
        \item each weight of $\mS_k$ is equal either to $\alpha_{k_{i,j}}$ or to 0 if $\emS_{i,j} = 0$;
        \item $\alpha_{k_{i,j}}$ is computed with the softmax of $\mathit{e}_{i,j}$ so that each node has a similar contribution;
        \item $\mathit{e}_{i,j}$ is determined using a shared attention mechanism $a$ that takes the affine transformation $\mW_k$ of the features of each node $f_i$ and $f_j$ and outputs an attention coefficient: $\mathit{e}_{i,j} = g(a(\mW_k\vx^\ell_i || \mW_k\vx^\ell_j))$, where $g$ is a non-linear activation function.
    \end{enumerate}
    \item Then, after computing each diffusion matrix the output $\vx^{\ell+1}$ is defined as: $\vx^{\ell+1} = \text{aggregate}(\mS_k \mW_k \vx^\ell)$ where the aggregation function is either concatenation or average over each of the $k$ heads. 
\end{enumerate}

Note that while the way the graph is modified is complex, this methodology allows to use different graphs while still keeping the same overall graph filter that is applied to different eigenvalues. On the other hand, one could argue that keeping the same filter applied to different eigenvalues is not particularly different from applying a different graph filter to the same graph. A similar framework is developed in~\citep{isufi2020edgenets} in order to demonstrate that GAT layers are actually GCN layers with multiple filters per layer that learn the graph as well as the transformations. Note that while this would be the most expressive model of the bunch we presented, it is also the most difficult to train, as both the number of computations and the sensibility of each parameter increases compared to other methods.

\subsection{Learning the graph filter directly}\label{chap4:chebyshev}

Another possibility to improve the GCN/GCL model is to learn the graph filter $h(\lambda)$ directly during the learning of the parameters of the networks. Such methods are mostly constrained by the problem of computing the GFT. Indeed, obtaining the spectral decomposition can quickly become too expensive as the number of vertices in the graph increases. Therefore most of the proposed methods differ in which approximation they use, in ChebNets~\citep{defferrard2016convolutional} the authors use the Chebyshev polynomial to approximate any possible filter~\citep{hammond2011wavelets} without needing to explicitly compute the GFT. In Lanczosnet~\citep{liao2019lanczosnet} the authors propose to use the Lanczos approximation of the orthonormal decomposition in order to learn the filter during training. A major drawback of these methods is the extra time that it is needed to generate approximations good enough to learn interesting filters. Indeed, Chebnets are used as the basis of GCN. As a matter of fact, GCN is a faster and more efficient implementation of Chebnets that uses only a small amount of Chebyshev kernels, and can therefore approximate the filters we desire to extract. 

Finally~\citep{isufi2020edgenets} introduces methods to train edge varying filters, including ones based on ARMA graph filters~\citep{isufi2016autoregressive}. Note that all approaches cited in this section may be summarized as approximations of learning the matrix $\mH$ that represents the filter in~\eqref{chap4:mh-filter}.

\subsection{Using graph translations to generate graph convolutional networks}\label{chap4:translation}

In this section we present the graph convolutional layers proposed in~\citep{pasdeloup2017extending}. The authors propose a method that also learns the graph filter directly, but it vastly differs from the ones presented in the last subsection as the filters here are based on the graph translations instead of on diffusion matrices. In other words the filters described here are way closer to their CNN counterparts than to a spectral definition. Indeed the filters here are obtained as the sum of subfilters applied for each considered translation, which in the case of a 3 by 3 convolutional layer would be \begin{inlinelist} \item center (the original node) \item (node to the) right \item left \item up \item down \item inferior right diagonal \item inferior left diagonal \item superior left diagonal \item superior right diagonal \end{inlinelist}. In this case, we generate $k$ translation functions that when applied to $\adjmatrix$, generate a set of allocation matrices $\sT$. Each translation function follows Definition~\ref{chap3:def-translation}.

In this case the layer function is defined as:
\begin{equation}\label{chap4:eq-translation}
    f^{\ell}({\vx}) = \sum_{\allocationmatrix_k \in \sA} g(\allocationmatrix_k \vx \mW^{\ell,k} + \vb^{\ell})\; ,
\end{equation}
where each element $\allocationmatrix$ in $\sT$ has a different weight matrix $\mW$. This is very similar to the CNN filters, where each filter has a component for each translation. It is also very simple to extend this concept to multiple feature maps, where each feature map implements its own translation filter. Also note that as it is defined as a sum of translations it is very close to the concept we developed in Shift Attention Layers (SAL, c.f. Section~\ref{chap2:SAL}). Indeed in SAL the translations are defined in the 2D grid and the attention kernel chooses which translation should be used for each filter. It would be therefore simple to extend SAL to layers based on graph translations. 

\subsection{Summary of methods presented}

In the previous subsections we have presented the evolution of methods from a simple fixed feature extractor to methods that learn the graph filter or even the graph itself during training. We summarize the methods in Table~\ref{chap4:summary-methods}.

\begin{table}[ht]
    \begin{center}
        \caption{Summary of the methods presented in this section. The last two columns refer to the fact that the method is used or not in the two following sections.}
        \label{chap4:summary-methods}
        \begin{adjustbox}{max width=\linewidth}
            \begin{tabular}{|c|c|c|c|c|c|}
                \hline
                Methods     & DNN & Multiple filters per layer & Learn support graph & Learn Filter & Use translations \\ \hline
                SGC~\citep{wu2019simplifying}         &     &                            &                     &              &                         \\ \hline
                GCN~\citep{kipf2016semi}         & X   &                            &                     &              &                         \\ \hline
                TAGCN~\citep{du2017topology}       & X   & X                          &                     &              &                          \\ \hline
                Lanczosnet~\citep{liao2019lanczosnet}  & X   & X                          &                     & X            &                           \\ \hline
                GAT~\citep{velickovic2018graph}         & X   & X                          & X                   &              &                         \\ \hline
                ChebNet~\citep{defferrard2016convolutional}     & X   & X                          &                     & X            &                           \\ \hline
                DSGCN~\citep{balcilar2020bridging}       & X   &  X                          &                     & X            &                           \\ \hline
                Translation~\citep{pasdeloup2017extending,lassance2018matching} & X   & X                           &                     & X            & X                       \\ \hline
            \end{tabular}
        \end{adjustbox}
    \end{center}
\end{table}

While presenting an extensive comparison of all the presented methods would be desireable it would also be impossible to do so in a fair way given the number of methods and hyperparameters that are to optimize. We therefore rather base our discussion on recent contributions. First in Section~\ref{chap4:classify_graph_signals} we mainly discuss three papers~\citep{he2020deep,Errica2020A,lassance2018matching}, where the first two show that in some use-cases of supervised classification of graphs and graph signals no significant performance gains where found when graph-based machine learning techniques are brought, and the third one is a contribution of our own aiming at improving the performance of GNNs in the regular 2D-Euclidean space and in an irregular 3D-Euclidean space without using priors about the data. 

Then in Section~\ref{chap4:classify_graph_vertices} we focus on some recent findings from~\citep{shchur2018pitfalls,vialatte2018convolution,bontonou2019smoothness}. These papers show that most of the benchmark evaluation that is performed on semi-supervised scenarios is actually not as robust as once thought and that fair comparison is still not available. We build upon this work and propose a framework to verify which one of the graph filters described in Section~\ref{chap4:sgc} is the best performing.

\section{Supervised classification of graph signals}\label{chap4:classify_graph_signals}

Let us discuss some of the applications of the previously presented methods. The first one we consider is supervised classification of graph signals. 

There is a plethora of applications where one may perform this task, with the two most notable mentions being protein-protein interaction and applications on the medical domain. Unfortunately, recent papers have shown that deep learning techniques fail to surpass simpler methods in these scenarios~\citep{he2020deep,Errica2020A}. These recent findings corroborate with our analysis in the previous section that shows that the graph filter is not well defined in the spectral domain. More so, in~\citep{xu2019how} the authors have shown that GCNs lack the expressivity to distinguish some types of graphs. 

Given these recent developments, we choose to limit our discussion on the supervised classification of graph signals to two cases that serve as a ``sanity check'' of the expressivity of graph neural networks: \begin{inlinelist} \item the classification of images (defined as a 2D square), using oracle (grid) or inferred graphs instead of the underlying 2D structure \item the classification of neuroimaging 3D images (that are not defined on a cube), using inferred graphs.\end{inlinelist} Note that in the first case we would like for networks to be able to get results that are similar to the convolutions defined on the 2D space when we use the oracle (grid-graph) structure and to not lose much performance when we use the inferred graph (i.e. we do not have any prior about the data structure). On the second case, we would also desire a performance on the inferred graph that is close to the one using the 3D structure.

Note that the idea is not to surpass traditional 2D/3D convolutions, but to ensure that the graph convolution is able to represent the same type of functions as the 2D convolution. To do so, we propose to test two avenues: \begin{inlinelist} \item use a method to convert the grid/inferred graph structure into a 2D/3D regular structure (a square/cube defined on $\mathbb{Z}$) \item use graph convolutional networks to try to imitate the same functions defined by the CNNs \end{inlinelist}. For the former strategy we will use the embedding methodology we first proposed in~\citep{grelier2018graph}. To the later we will use graph convolutional networks using the convolutions from~\citep{defferrard2016convolutional,lassance2018matching}. Note that we do not include other popular models such as GCN~\citep{kipf2016semi} and SGC~\citep{wu2019simplifying} as it is quite straightforward to see that the models will not be able to express a similar function as they are only able to perform one specific convolutional kernel, while CNNs are normally composed of hundreds of kernels per layer. 

\subsection{Methodology}

In this section we describe the considered methodologies. 

In Section~\ref{chap4:graph_inference_convs}, we present how we infer graphs from the regular 2D/irregular 3D data, then we explain how we can use these graphs in two different approaches from~\citep{grelier2018graph,lassance2018matching} to solve the task of classifying graph signals.

\subsubsection{Graph Inference}\label{chap4:graph_inference_convs}

We have two separate methods for inferring graphs, one for the case of images (regular 2D data) and one for the case of fMRI data (irregular 3D data).

\paragraph{Inferring graphs from images}

In the case of images, a natural choice for the graph structure would be to use a grid-graph (Definition~\ref{chap3:def-gridgraph}). We call this an ``oracle inference'' since we suppose in the following that we ignore the fact we are dealing with images. This setting is only used to provide a best case scenario. Then, we also perform an empirical inference, where we first convert the pixels of all images in the $\trainset$ to a grey-scale. With the pixels now in gray-scale we compute the covariance between all pixels in the image, using each example in $\trainset$ as the different samples. We now have a 2D square matrix of the similarity between the pixels that we use as our adjacency matrix $\adjmatrix$. We then threshold the $\adjmatrix$ so that only the 4 closest neighbors of each vertex (in this case pixel) are connected, symmetrize the resulting matrix and binarize it so that connected elements are connected with weight 1 and unconnected elements have weight 0. Since we are dealing with natural images, we expect that the empirical inference graph is a noisy version of the oracle.

\paragraph{Inferring graphs from fMRI data}

For inferring graphs from fMRI data, we use a simple neighborhood graph where nodes are connected if they have are at most at distance $d$ in the 3D-space. Note that this is possible in this case as data was captured from physical sensors that were then masked on the MNI template and resampled on a 16mm cubic grid. The data resampled on the cubic grid is not regular as it does have a data point per integer coordinate of the cubic grid, but it would be possible to run a 3D CNN on it. 

\subsubsection{Converting a graph to a regular structure defined on $\mathbb{Z}$}\label{chap4:grelier}

In this subsection we present a first method to classify graph signals using DNNs. This idea was first introduced in~\citep{grelier2018graph} where we propose an embedding from the graph structure to $\mathbb{Z}^d$. The main goal was to define the GFT on $\gG$ as a particularization of the classical FT on $\mathbb{Z}^d$ but it may also be used in the context of machine learning to allow the use of 2D CNNs to process data defined on graphs. Indeed, once vertices of the graph have been projected to $\mathbb{Z}^d$, we can use regular convolutions (2D-ones in the case where $d=2$).

The method to embed a graph into $\mathbb{Z}^d$ consists in optimizing a cost. Let us first introduce a weighted graph $\gG = \langle \sV, \sE \rangle$.

\begin{definition}[embedding]\label{chap4:def-embedding}
    We call \textbf{embedding} a function $\phi: \sV \to \mathbb{Z}^d$, where $d\in \mathbb{N}^*$.
\end{definition}

We are specifically interested in embeddings that preserve distances. Specifically, we define the cost $c_\alpha(\phi)$ of an embedding $\phi$ as the following quantity:
\begin{equation}
  c_\alpha(\phi) \triangleq \sum_{v,v' \in \sV}{| \; \alpha\|\phi(v) - \phi(v')\|_1 - d_\gG(v,v') \;|}\;,
\end{equation}
where $d_\gG$ is the shortest path distance in $\gG$. In the remaining of this section, we denote $\delta(v,v') = | \; \alpha\|\phi(v) - \phi(v')\|_1 - d_\gG(v,v') \; |$.

\begin{definition}[optimal embedding]\label{chap4:def-optimalembedding}
    Given a fixed value of $\alpha$, we call \emph{optimal embedding} an embedding with minimum cost.
\end{definition}

The choices in this definition are motivated by 5 main reasons:
\begin{enumerate}
\item We consider all pairs of vertices and not only edges. Consider for example a ring graph where each vertex has exactly two neighbors. Then there are plenty of embeddings that would minimize the cost if considering only edges, but only a few that minimize the sum over all pairs of vertices.
\item We use a sum and not a maximum. This is because small perturbations of grid graphs would lead to dramatic changes in embeddings minimizing the cost if using a maximum. Consider for instance a 2D grid graph in which an arbitrary edge is removed.
\item We choose embedding in $\mathbb{Z}^d$ instead of in $\mathbb{R}^d$, as we want to particularize multidimensional \emph{discrete} Fourier transforms.
\item We use the Manhattan distance, as it is more naturally associated with $\mathbb{Z}^d$ than the Euclidean distance. It also ensures there exists natural embeddings for grid graphs with cost 0.
\item Finally, $\alpha$ is a scaling factor.
\end{enumerate}

Note that the question of finding suitable embeddings for graphs is not novel~\citep{Kobourov04forcedirecteddrawing}. But to our knowledge enforcing the embedding to be in $\mathbb{Z}^d$ was a novel contribution. Even though Definitions~\ref{chap4:def-embedding} and~\ref{chap4:def-optimalembedding} work for any $d$ we are going to focus on the consistency of these definitions by considering the particular case $d=2$, i.e., a regular 2D Euclidean space.

Note that in the case of grid graphs (Definition~\ref{chap3:def-gridgraph}) the embedding should be a perfect square, indeed it follows that: 

\begin{definition}[natural embedding]
    We call \emph{natural embedding} of a grid graph (as defined in Definition~\ref{chap3:def-gridgraph}) the identity function.
\end{definition}

Which leads to the following theorem:

\begin{theorem}
  The natural embedding of a grid graph is its only optimal embedding for $\alpha=1$, up to rotation, translation and symmetry.
  \label{chap4:theorem-grid}
\end{theorem}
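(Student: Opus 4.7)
The plan is to first observe that the natural embedding achieves cost $0$, since in a grid graph the shortest-path distance equals the Manhattan distance between the natural coordinates; hence any optimal embedding must also achieve cost $0$ and be an \emph{isometric} embedding of $(\sV, d_\gG)$ into $(\mathbb{Z}^2, \|\cdot\|_1)$. The whole argument then reduces to classifying isometric embeddings.

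I would work up to the allowed symmetries. Translating so that $\phi(1,1)=(0,0)$ removes translational freedom. The vertex $(2,1)$ satisfies $\|\phi(2,1)\|_1=1$, so after a suitable element of the symmetry group of the $\ell_1$-ball (rotations by multiples of $\pi/2$ and axial reflections), we may assume $\phi(2,1)=(1,0)$. For $\phi(1,2)$, there are three a priori candidates with $\|\phi(1,2)\|_1=1$ and $\|\phi(1,2)-(1,0)\|_1=2$, namely $(-1,0)$, $(0,1)$ and $(0,-1)$. The candidate $(-1,0)$ is eliminated by considering $\phi(2,2)$: it would need to lie at $\ell_1$-distance $1$ from both $(1,0)$ and $(-1,0)$, forcing $\phi(2,2)=(0,0)=\phi(1,1)$ and contradicting $d_\gG((2,2),(1,1))=2$. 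The remaining two cases are exchanged by the reflection about the $x$-axis, so without loss of generality $\phi(1,2)=(0,1)$, and then the same style of argument pins down $\phi(2,2)=(1,1)$.

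The core is then an induction showing $\phi(i,j)=(i-1,j-1)$ on the whole grid. I would first fix the first row by induction on $i$: assuming $\phi(k,1)=(k-1,0)$ for $k<i$, the distance constraints to $\phi(i-1,1)$ and $\phi(i-2,1)$ leave the three candidates $(i-1,0)$, $(i-2,1)$ and $(i-2,-1)$. The candidate $(i-2,1)$ is killed directly by the distance-to-$\phi(1,2)=(0,1)$ constraint, which demands $\|\phi(i,1)-(0,1)\|_1=i$. The candidate $(i-2,-1)$ is killed by carrying out the case analysis for $\phi(i,2)$, which has no admissible location once the already-fixed values of $\phi(1,2), \phi(2,2)$ and $\phi(i-1,2)$ (also determined by induction) are taken into account; this is precisely the mechanism already illustrated on the $3\times 2$ subgrid, where all three candidate positions for $\phi(3,2)$ collide with existing vertices or violate the distance to $\phi(1,1)$. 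The first column is treated symmetrically. Finally, for an interior vertex $(i,j)$, the three constraints $\|\phi(i,j)-(i-1,0)\|_1 = j-1$, $\|\phi(i,j)-(0,j-1)\|_1=i-1$, and $\|\phi(i,j)\|_1=i+j-2$ together admit only $(i-1,j-1)$ in $\mathbb{Z}^2$.

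The hard part is globally ruling out the diagonal "staircase" alternatives such as $\phi(i,1)=(i-2,\pm 1)$ or $\phi(1,j)=(\pm 1, j-2)$. Local constraints within a $2\times 2$ block are \emph{not} enough to rule these out: a staircase embedding of a single row or column is a perfectly valid $\ell_1$-isometry into $\mathbb{Z}^2$. The incompatibility only emerges when a third row or column is introduced, which is precisely why the definition of grid graph in Definition~\ref{chap3:def-gridgraph} requires $\ell,h\ge 3$. Once this hypothesis is exploited, the final "up to rotation, translation and symmetry" clause matches exactly the normalizations made at the start of the proof.
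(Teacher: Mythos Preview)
Your approach is correct and follows the same overall strategy as the paper: both observe that cost $0$ forces an $\ell_1$-isometric embedding and then argue rigidity. The paper's proof, however, is a two-sentence sketch: it simply asserts that any four grid vertices $\{(x,y),(x,y'),(x',y),(x',y')\}$ must map to an isometric copy of the corresponding rectangle in $(\mathbb{Z}^2,\|\cdot\|_1)$, and that once one such quadruple is fixed the rest follows. You instead carry out an explicit vertex-by-vertex induction along the first row, the first column and then the interior, which is more laborious but self-contained --- the paper's rectangle-rigidity assertion is itself a nontrivial $\ell_1$-geometry fact that it leaves unproved.

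Two small points. First, your remark that ``the incompatibility only emerges when a third row or column is introduced'' overstates the requirement: the statement already holds for $\ell,h\ge 2$, since your own mechanism for killing $\phi(i,1)=(i-2,-1)$ via $\phi(i,2)$ only needs a \emph{second} row. So the $\ell,h\ge 3$ hypothesis in Definition~\ref{chap3:def-gridgraph} is not what makes this particular argument go through. Second, your induction bookkeeping is slightly loose when you invoke $\phi(i-1,2)$ as ``also determined by induction'' while the stated hypothesis covers only the first row; this is easily repaired by noting that $\phi(i-1,2)$ is already pinned down by its distances to $\phi(1,1),\phi(1,2),\phi(i-1,1)$ --- precisely your interior-vertex computation specialized to $j=2$ --- all of which are known at that stage.
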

\begin{proof}
  The proof is straightforward, as the cost of the natural embedding is clearly 0. Reciprocally, a cost of 0 forces any group of vertices $\{(x,y),(x,y'),(x',y),(x',y')\}$ to be projected to a translation, rotation and/or symmetry of the corresponding rectangle in $\mathbb{Z}^2$. Then any remaining vertex is uniquely defined from these four ones.
\end{proof}

In more general settings where the graph we are dealing with is not a grid-graph, we have to solve an optimization problem with the expectation of finding a relevant embedding. Once the embedding is found, we simply consider graph signals as images and process them with regular CNNs.

\subsubsection{Matching CNNs without priors}\label{chap4:matching}

Another possibility is to build upon translations on graphs we defined in Section~\ref{chap3:graph_translation}, that we used to define convolutions in~\Eqref{chap4:eq-translation}. This is the idea we used in~\citep{lassance2018matching} to show that is is possible to approach the performance of CNNs without priors about the fact we are dealing with regular images. An overview of all the steps of the method is presented in Figure~\ref{chap4:fig-outline}.

\begin{figure}[!ht]
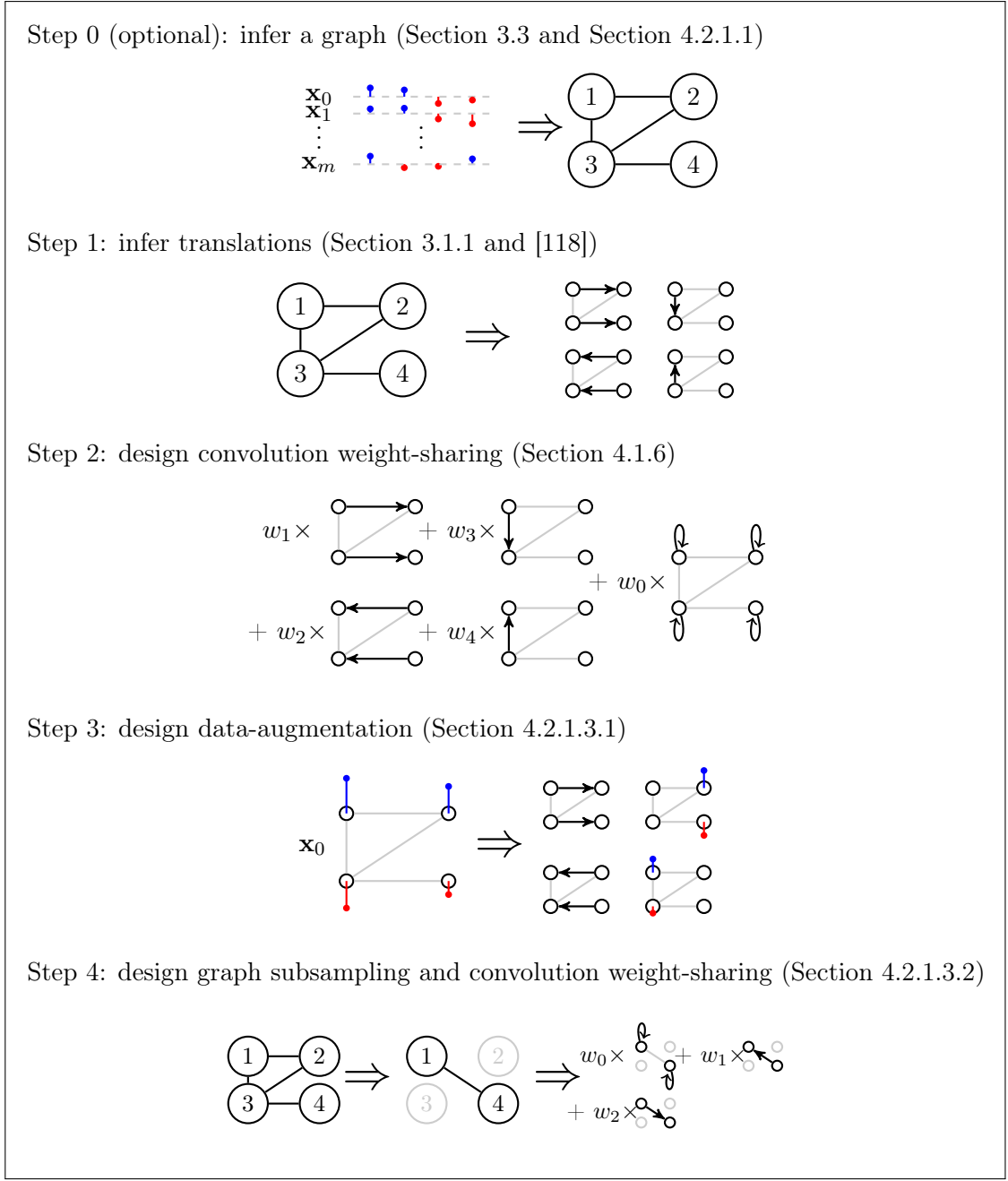

    \begin{framed}
      Step 0 (optional): infer a graph (Section~\ref{chap3:graph_inference} and Section~\ref{chap4:graph_inference_convs})
      \begin{center}
  \tikzsetnextfilename{chapter4/tikz/graphinference}%
  \input{chapter4/tikz/graphinference.tex}%

      \end{center}
      
      Step 1: infer translations (Section~\ref{chap3:graph_translation} and~\citep{pasdeloup2017extending})
      \begin{center}
  \tikzsetnextfilename{chapter4/tikz/translations}%
  \input{chapter4/tikz/translations.tex}%
      
      \end{center}
      
      Step 2: design convolution weight-sharing (Section~\ref{chap4:translation})
    
      \begin{center}
  \tikzsetnextfilename{chapter4/tikz/weight-sharing}%
  \input{chapter4/tikz/weight-sharing.tex}%

      \end{center}
      
      Step 3: design data-augmentation (Section~\ref{chap4:dataaug})
      \begin{center}
  \tikzsetnextfilename{chapter4/tikz/dataaugmentation}%
  \input{chapter4/tikz/dataaugmentation.tex}%

      \end{center}
      Step 4: design graph subsampling and convolution weight-sharing (Section~\ref{chap4:strided-convolutions})
    
      \begin{center}
  \tikzsetnextfilename{chapter4/tikz/subsampling}%
  \input{chapter4/tikz/subsampling.tex}%

      \end{center}
        \end{framed}
      \caption{Outline of the different steps in designing GNNs based on graph translations. Figure extracted from~\citep{lassance2018matching} @2018 IEEE.}\label{chap4:fig-outline}
      \vspace{-.5cm}
\end{figure}

If the next paragraphs, we explain the steps of the method that were not introduced before in the manuscript: data augmentation, strided convolutions and convolutions on the strided graph.
    
\paragraph{Defining data augmentation in graph space}\label{chap4:dataaug}

Once translations are obtained on $\gG$, one can use them to move training vectors, artificially creating new ones. Note that this type of data-augmentation is still not at the same level as the ones for images since no flipping, scaling or rotations are used and that it preceded more recent forms of data augmentation that could be applied to graph data such as~\citep{zhang2018mixup,verma2019graphmix}. Future work combining such approaches could reduce the gap even further between state of the art CNNs and GNNs. 

We depict an example of the translation-based data augmentation in Figure~\ref{chap4:fig-data_aug}. Note that in the case of the grid graph the augmented image closely resembles the image augmented using the original 2D support, and that in the inferred case we start losing information. 

\begin{figure}[ht]
    \begin{center}
      \begin{subfigure}[ht]{.2\linewidth}
        \centering
        \caption{Original image.}
        \resizebox{\linewidth}{!}{\includegraphics{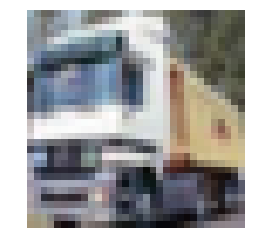}}
    \end{subfigure}
    \hspace{.02\linewidth}
    \begin{subfigure}[ht]{.2\linewidth}
        \centering
        \caption{Random crop.}
        \resizebox{\linewidth}{!}{\includegraphics{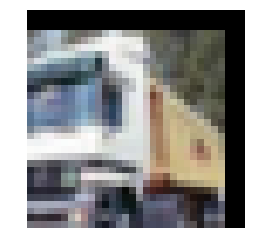}}
    \end{subfigure}        
    \hspace{.02\linewidth}
    \begin{subfigure}[ht]{.2\linewidth}
        \centering
        \caption{Grid graph.}
        \resizebox{\linewidth}{!}{\includegraphics{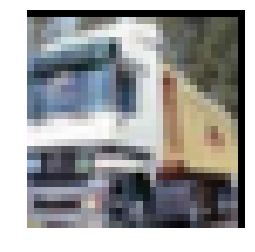}}
    \end{subfigure}        
    \hspace{.02\linewidth}
    \begin{subfigure}[ht]{.2\linewidth}
        \centering
        \caption{Inferred graph.}
        \resizebox{\linewidth}{!}{\includegraphics{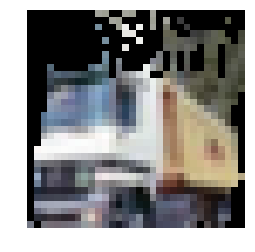}}
    \end{subfigure}        
        \caption{Comparison of image data augmentation and data augmentation using graph translations.}
        \label{chap4:fig-data_aug}
    \end{center}
  \end{figure}

\paragraph{Strided convolutions and convolutions on strided graphs}\label{chap4:strided-convolutions}

Downscaling is a tricky part of the process because it supposes one can somehow regularly sample vectors. As a matter of fact, a nonregular sampling is likely to produce a highly irregular downscaled graph, on which looking for translations irremediably leads to poor accuracy, as we noticed in our experiments. We rather define the translations of the strided graph using the previously found translations on $\gG$.

\noindent\textbf{First step: extended convolution with stride $r$}

Given an arbitrary initial vertex $v_0 \in \sV$, the set of kept vertices $\sV_{\downarrow r}$ is defined inductively as follows:
\begin{itemize}[noitemsep,nolistsep]
\item $\sV_{\downarrow r}^0 = \{v_0\}$,
\item $\forall t \in \mathbb{N}, \sV_{\downarrow r}^{t+1} = \sV_{\downarrow r}^t \cup \{v \in \sV, \forall v' \in \sV_{\downarrow }^t, v \not\in N_{r-1}(v') \land \exists v' \in V_{\downarrow r}^t, v \in N_{r}(v') \}$.
\end{itemize}

This sequence is nondecreasing and bounded by $\sV$, so it eventually becomes stationary and we obtain $\sV_{\downarrow r} = \lim_t{\sV_{\downarrow r}^t}$. Figure~\ref{chap4:fig-downscaling} illustrate the first downscaling $\sV_{\downarrow 2}$ on a grid graph.

\begin{figure}
    \begin{center}
  \tikzsetnextfilename{chapter4/tikz/downscaling}%
  \input{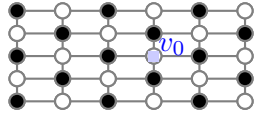}%

    \caption{Downscaling of the grid graph. Disregarded vertices are filled in. Figure and caption extracted from~\citep{lassance2018matching} @2018 IEEE.}
    \label{chap4:fig-downscaling}
    \end{center}
\end{figure}

The output neurons of the extended convolution layer with stride $r$ are $\sV_{\downarrow r}$.\\

\noindent\textbf{Second step: convolutions for the strided graph}

Using the proxy-translations on $\gG$, we move a localized $r$-hop indexing kernel over $\gG$. At each location, we associate the vertices of $\sV_{\downarrow r}$ with indices of the kernel, thus obtaining what we define as induced $_{\downarrow r}$-translations on the set $\sV_{\downarrow r}$. In other words, when the kernel is centered on $v_0$, if $v_1 \in \sV_{\downarrow r}$ is associated with the index $p_0$, we obtain $\phi_{p_0}^{\downarrow r}(v_0) = v_1$. Subsequent convolutions at lower scales are defined using these induced $_{\downarrow r}$-translations similarly to Section~\ref{chap4:translation}.

\subsection{Experiments}

In this section we present two types of experiments, first we try to achieve similar performance as 2D convolutions in the CIFAR-10 dataset, we then use the PINES dataset in order to challenge our methods in a less regular domain.

\subsubsection{CIFAR-10}

On the CIFAR-10 dataset, our models are based on the Resnet-18 architecture. We tested different combinations of graph support and data augmentation and present the results in Table~\ref{chap4:cifar-table}. In particular, it is interesting to note that results obtained without any structure prior (91.07\%) are only 2.7\% away from the baseline using classical CNNs on images (93.80\%). This gap is even smaller (less than 1\%) when using the grid prior. Also, without priors the method from Section~\ref{chap4:matching} significantly outperforms the others and the network defined on the translations of the grid graph obtains a result that is slightly better than the original network. Note that this advantage may come from the fact that the networks defined on the translations have slightly more parameters than the original network. 

\begin{table}[ht]
\begin{center}
\caption{CIFAR-10 result comparison table. Note that the results using the method from Section~\ref{chap4:grelier} and the original CNN are the same in the case of grid graphs (as the embedding of a grid graph is the 2D structure).}\label{chap4:cifar-table}
\resizebox{\columnwidth}{!}{%
\begin{tabular}{|l|c|c||c|c||c|c|}
\hline
\multirow{2}{*}{Support} & \multirow{2}{*}{MLP\citep{lin2015far}} & \multirow{2}{*}{CNN (Resnet-18)} & \multicolumn{2}{c|}{Grid Graph (Oracle)}                                  & \multicolumn{2}{c|}{Covariance Graph (Inferred)}         \\ \cline{4-7} 
                         &                                        &                                  & Chebnet~\citep{defferrard2016convolutional} Section~\ref{chap4:chebyshev} & Section~\ref{chap4:matching} & Section~\ref{chap4:grelier} & Section~\ref{chap4:matching} \\ \hline
Random crop + Flip       & 78.62\%                                & \textbf{93.80\%}                 & 85.13\%                                                                   & 93.94\%                                      & ------                      & 92.57\%                                          \\ \hline
Random crop              & ------                                 & 92.73\%                          & 84.41\%                                                                   & 92.94\%                                      & ------                      & 91.29\%                                          \\ \hline
Graph Data Augmentation  & ------                                 & ----                             & ------                                                                    & 92.81\%                                      & 89.25\%$^a$                 & \textbf{91.07\%}$^a$                             \\ \hline 
None                     & 69.62\%$^a$                                & 87.78\%                          & ------                                                                    & 88.83\%                                      & ----                        & 85.88\%$^a$                                      \\ \hline
\end{tabular}

}
\end{center}
\footnotesize{$^a$ No priors about the structure.}\\
\vspace{-.6cm}
\end{table}

\subsubsection{PINES}

We now test our methods on the previously introduced PINES dataset. We use a shallow network to evaluate our results and compare with a simple MLP, a CNN with 1x1 filters (i.e. that treats each sensor separately before the classification layer) and a 3D CNN with 9x9x9 kernels. In this way we are able to compare with methods that do not take into account the structure (MLP, CNN1x1) and with methods that do, but not optimally (3D CNN applied to irregular inputs). The results are presented in Table~\ref{chap4:tab-pines}, where we see that both methods based on geometric graphs achieve results that are close to the network that uses the original support. Yet neither the graph supported or the 3D supported methods were able to clearly outperform a method that does not use the structure (CNN 1x1).

\begin{table}[ht]
    \centering
    \caption{PINES fMRI dataset accuracy comparison table.}
    \label{chap4:tab-pines}
    \resizebox{\columnwidth}{!}{%
    \begin{tabular}{|l||c|c||c||c|c|c|}
    \hline
    Support                      & \multicolumn{2}{c||}{None} & Irregular 3D structure & \multicolumn{3}{c|}{Neighborhood Graph}     \\ \hline
    Method                       & MLP                        & CNN1x1                 & 3D CNN     & Chebnet~\citep{defferrard2016convolutional} Section~\ref{chap4:chebyshev} & Section~\ref{chap4:matching} & Section~\ref{chap4:grelier}                   \\ \hline
    Accuracy                     & 82.62\%                    & 84.30\%                & \textbf{85.47\%}    & 82.80\%                                                                   & 85.08\% & 84.78\% \\ \hline
    \end{tabular}
    }
\end{table}

\subsection{Conclusion}

In this section based on recent findings~\citep{he2020deep,Errica2020A,xu2019how} we have shown that graph convolution methods are not able to improve over simple baselines, which is inline with the discussion from the previous section. We then have introduced two methods that try to mitigate these problems by either embedding the graph in a $\mathbb{Z}^2$ space or infer graph translations that serve as a proxy to translations in the $\mathbb{Z}^2$ domain. We have demonstrated with experiments that we are able to close the gap between CNNs and GNNs in the case of regular 2D domains, but on the other hand the discussed methods do not seem to generalize well to more irregular graphs (irregular 3D domain). 

\section{Semi supervised classification of vertices}\label{chap4:classify_graph_vertices}

One of the most interesting use of data defined of graph is the semi supervised classification of vertices. Indeed many applications can be resumed into this framework, from social networks (identifying data about the users based on their connections) to citation networks (classifying the article based on its text and citations). It is thus easy to understand why it is the most chosen task to evaluate new methods. Indeed almost all methods discussed in Section~\ref{chap4:definitions} use this task on their benchmarks in order to empirically evaluate their abilities.

Unfortunately there is a two fold problem in the evaluation of these methods using the standard datasets. Before going deep into the problem, we would like to preface by saying that most of the time these problems do not arise from malice or lack of knowledge, they come simply from a lack of computational power/deadline rush from the authors. The first problem is the choice of the $\trainset,\validset,\testset$ split. While using the same split -- as it is done in image applications in order to perform a fair comparison -- seems to be a clear method to follow it has been shown in~\citep{shchur2018pitfalls} to be problematic as the samples are not independent. Indeed, as all the elements of $\dataset$ are in the graph $\gG$, just randomly choosing the samples is not enough: the relationships between them will bias the evaluation. 

Consider a simple example of a graph with multiple rings, where vertices from one ring are connected with the two successive vertices of the same ring, but with atmost one vertex of another ring. In this case the split is very important as methods that give more importance to close nodes would be more effective when vertices from the same ring are present in all sets, while methods that are able to use long connections would be better if only one vertex per ring was chosen for each split. We depict such a ring graph with two different splits in Figure~\ref{chap4:fig-ring}.

\begin{figure}[ht]
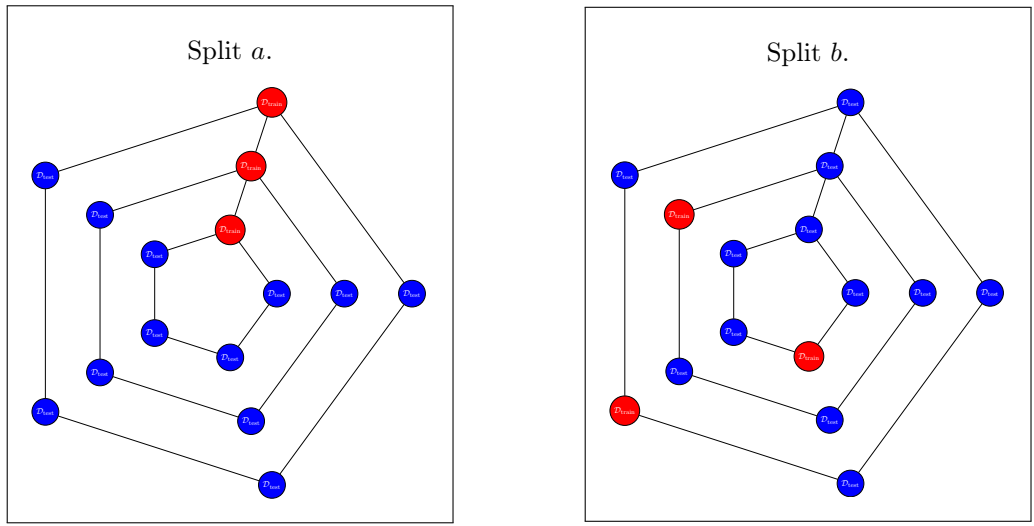

    \centering
    \begin{subfigure}[ht]{.4\linewidth}
        \begin{framed}
            \centering
            \caption{Split $a$.}
            \resizebox{\linewidth}{!}{%
  \tikzsetnextfilename{chapter4/tikz/ring_graph}%
  \input{chapter4/tikz/ring_graph.tex}%
}
        \end{framed}
    \end{subfigure}
    \hspace{0.1\linewidth}
    \begin{subfigure}[ht]{.4\linewidth}
        \begin{framed}
            \centering
            \caption{Split $b$.}
            \resizebox{\linewidth}{!}{%
  \tikzsetnextfilename{chapter4/tikz/ring_graph2}%
  \input{chapter4/tikz/ring_graph2.tex}%
}
        \end{framed}
    \end{subfigure}
    \caption{Two different train/test splits for the same multi-ring graph. Note how the label will propagate very differently depending on the split. Red vertices are on the training set and blue vertices are on the test set.}
    \label{chap4:fig-ring}
\end{figure}

This problem with the $\dataset$ split has led to the proposal of new benchmarks such as~\citep{hu2020open}, that already implement this multi-split setup by default. Unfortunately, this is only half of the problem with benchmarking GNNs.

There is also the problem of fairly comparing methods using the same underlying architecture/regularization in order to ensure that the improvements come from the proposed method and not from a more efficient regularization. This has already been discussed in several papers such as~\citep{shchur2018pitfalls,klicpera2019combining,vialatte2018convolution} where the authors have shown that by adding one simple regularization parameter, that we call edge-dropout (as it consists in adding dropout~\citep{srivastava2014dropout} to the edges of $S$) during training we can have similar performances between models that had been originally very far in performance (GCN~\citep{kipf2016semi} and GAT~\citep{velickovic2018graph}). More-so, this regularization was present in the GAT model and not in the original GCN.

Indeed this problem still happens, for example consider the recent work in~\citep{balcilar2020bridging} and their low-pass filter method. While we agree with the authors that it is more sound than the GCN filter, we do not agree that they were able to outperform the GCN model in equal conditions. If we take the optimized GCN from~\citep{vialatte2018convolution}, where the only difference to the original GCN~\citep{kipf2016semi} is the introduction of edge-dropout, their low-pass filter is not able to improve over the optimized GCN, even if the low-pass filter method also applies edge-dropout.

Note that this is only one example of how unfair comparison happens. Making them fair is hard. The easiest way would be to ensure that one only compares to methods that use the same regularization as oneself, but this could generate too big of a barrier, as it would not allow researchers to fully draw the capabilities of their methods. The harder way would be to correctly perform hyperparameter search to ensure that both the proposed method and the one compared with are on the best of their capabilities. This is not only hard in the theoretical sense of what are the hyperparameters one should optimize, but on the practical sense of executing all the computations needed to perform such a research/optimization. Also note that this hyperparameter problem is not only present in this task, but it is amplified by the split problem, which further biases methods for a set of hyperparameters.

In the following paragraphs we delve further in this two-fold problem, by proposing a framework to benchmark the different graph filters.

\subsection{A framework for comparing graph filters}

An ideal framework for benchmarking the task of semi-supervised classification of vertices in a graph would be one that takes into consideration both problems: \begin{inlinelist} \item models should be compared by their performance over multiple cross-validation splits \item models should be compared either to the maximum of their capabilities or by using the exact same architecture and regularization as the method they compare to \end{inlinelist}. Solving the first problem is easy in theory, but hard in practice as it involves running $s$ times more tests where $s$ is the number of splits. Note that $s$ should be a high number in order to avoid biasing the comparison, how high should it be depends on the confidence interval one expects to obtain from the measure.

In the worst case scenario this would lead to an increase of $sm$ to the overall computational time, where $m$ is the number of methods one is comparing too. Note that it could be relaxed to $s$ if the methods are already benchmarked correctly with a common set of $s$ splits. If we consider $s\ge50$ this would mean that just doing correct split procedure would increase the time needed for running the experiments 50 fold. If we add to this the fact that it is common to rerun the experiments $n$ times to ensure that the results are not obtained thanks to a lucky initialization of parameters, we have even more time of running experiments.

But this is not even the worst part of the computational complexity. Correctly choosing the hyperparameters that one wishes to optimize is a very difficult problem. Here are some questions one should take into account:
\begin{itemize}
    \item do we add graph filters as pre-processing (SGC~\citep{wu2019simplifying})? 
    \item do we add graph filters as post-processing (APPNP~\citep{klicpera2019combining})? 
    \item which neural networks should we use (single-layer, multi-layer, GCN~\citep{kipf2016semi})? 
    \item what size we use for each layer? In the case of multi-layer do we use the same? 
    \item do we combine multiple filters in the same layer?
    \item which graph filter should we use? do we use the same graph filter for each step? what about their hyperparameters?
    \item which value of dropout to use at the input? do we use the same for the input of each layer?
    \item which value of edge-dropout should we use? do we use the same for each graph filter?
    \item do we use L2-regularization? Which value do we choose? The same for all layers?
    \item which learning rate do we use?
\end{itemize}

Note that answering all these 12 questions will generate a very large amount of combinations that one should test (2 choices per combination is already $2^{12}$ possibilities). If we also do it $ns$ times to ensure that it is correctly done in both split and random initialization sides it would be very much intractable.

\subsection{Relaxed framework for comparing graph filters}

Given that it is impossible to tackle all the questions, we thus reduce the number of choices by simplifying the problem with a few simple rules: \begin{inlinelist} \item if one element is applied multiple times it is always applied with the same value \item only one filter is used to isolate their contribution \item only one set of filter hyperparameters is used (SGC with $m=2$ is considered a different filter from SGC with $m=3$) \item we only consider networks with a single hidden layer (hidden size of 64) \item we use the same amount of L2-regularization (0.005) and apply it only to the hidden layer \item we use the same amount of learning rate (0.01) as it is always the same network. \end{inlinelist} Note that by doing so we vastly diminish our ability to ensure that the found solution is the best one. The chosen set parameters for learning rate, hidden layer size and L2-regularization come from~\citep{klicpera2019combining}. This choices leads us to a simplified framework:

\begin{itemize}
    \item which graph filters/graph filter hyperparameters do we test? [$f$ choices]
    \item graph filters as pre-processing or post-processing? [3 choices]  
    \item dropout values for the input? [$d_i$ choices]
    \item dropout values for the edge-dropouts? [$d_e$ choices]
\end{itemize}

This leads to a total of $3 f d_i d_e s n$ choices. We can now set these values to amounts that seem reasonable such as 4 for all the simple choices, $s=50$ and $n=2$ leaving us with 4,800 tests to execute per filter. Even under this simplified conditions a very quick test that takes 1 second to fully execute (from data load to network training to evaluation and saving the results) would take almost an hour and a half to finish. If we consider a more realistic scenario of 30 seconds per test, we are at almost 2 days of training for a single filter. Note that this is for a single filter, in a very simplified scenario where there is only the weights of a logistic regression to train and that a great deal of hyperparameters are taken out. It is not surprising that most papers are not able to do this properly if they have to compare their method with others in the literature. Ideally this could be counter-acted by having a common set of hyperparameters one would optimize in order to compare with other methods. 

Note that in this experiment scenario, when we use a graph filter for pre-processing we are doing a type of feature denoising. On the other hand, when we use a graph filter for post-processing we are doing a type of label propagation. Finally, if we use both, it is akin to a one layer GCN.

\subsection{Graph filter comparison}

In the previous section we have defined the relaxed framework. In this section we present the results using the cora dataset. We chose a split of 20 examples per class for $\trainset$, 30 examples per class for $\validset$ and the rest for the test set as used in~\citep{shchur2018pitfalls}. We use a maximum of 10000 epochs with early stopping if the validation accuracy does not improve after 100 epochs (patience threshold). Code for reproducing the experiments is available at~\url{https://github.com/cadurosar} and a summary of the searched hyperparameters is available in Table~\ref{chap4:hyperparameters-table}.

\begin{table}[ht]
\caption{Summary of the searched hyperparameters.}
\label{chap4:hyperparameters-table}
\begin{center}
\begin{tabular}{|c|c|c|}
\hline
Dropout input         & \multicolumn{2}{c|}{0, 0.25, 0.5, 0.75}                    \\ \hline
Dropout kernel        & \multicolumn{2}{c|}{0, 0.25, 0.5, 0.75}                    \\ \hline
Graph filter position & \multicolumn{2}{c|}{Pre-processing, post-processing, both} \\ \hline
\multirow{5}{*}{Filters}               & $h_\text{SGC}$ & $\alpha=1$ and $m=2$                                     \\
            & $h_\text{Tikhonov}$ & $\alpha=\{10,50\}$                                      \\ 
            & $h_\text{VBL}$ & $\alpha=0.1$ and $m=20$                                     \\ 
            & $h_\text{balcilar-lowpass}$ & $\alpha=1$ and $m=\{5,10\}$                            \\ 
            & $h_\text{Page}$ & $\alpha=0.1$                                     \\ 
\hline
\end{tabular}
\end{center}
\end{table}
We first present the mean validation and test set accuracies, alongside the 95\% confidence interval in Table~\ref{chap4:hyperparameter-result-table}. Note that we only show the results for the hyperparameters that achieved the best validation accuracy for each considered graph filter. In this scenario, the best performing filter of our hyperparameter search was the PageRank filter introduced in~\citep{klicpera2019combining}, but contrary from the original paper, it performed best when used only for feature denoising instead of label propagation. Also note that unfortunately the results are not 100\% conclusive, as seen by the confidence intervals. We believe that adding more initializations per split could allow us to retrieve conclusive results (more than the 95\% confidence interval).

\begin{table}[ht]

\begin{center}
\caption{Performance comparison between different filters using 50 different splits and 2 initializations per split. Results presented as mean accuracy $\pm$ 95\% confidence interval.}
\label{chap4:hyperparameter-result-table}
\begin{adjustbox}{max width=\linewidth}
\begin{tabular}{|c|c|c|c|c|c|}
\hline
Graph filter & Graph filter position & Dropout input & Dropout kernel & Validation Set & Test set      \\ \hline
$h_\text{balcilar-lowpass}(\alpha=1,m=10)$     & Both                  & 0.5           & 0.25           & 83.73 $\pm$ 0.51  & 79.28 $\pm$ 0.31 \\ \hline
$h_\text{VBL}(\alpha=0.1,m=20)$                & Both                  & 0.5           & 0.5            & 84.46 $\pm$ 0.53  & 80.14 $\pm$ 0.34 \\ \hline
$h_\text{Tikhonov}(\alpha=10)$                 & Pre-processing        & 0.5           & 0.5            & 84.78 $\pm$ 0.47  & 80.19 $\pm$ 0.36 \\ \hline
$h_\text{SGC}(\alpha=1,m=2)$                   & Both                  & 0.75          & 0              & 84.96 $\pm$ 0.51  & 80.52 $\pm$ 0.33 \\ \hline
$h_\text{Page}(\alpha=0.1)$                    & Pre-processing        & 0.25          & 0.5            & \textbf{85.24 $\pm$ 0.47}  & \textbf{80.92 $\pm$ 0.3}  \\ \hline
\end{tabular}
\end{adjustbox}
\end{center}
\end{table}

\subsection{Results on the planetoid split}

Now that we have selected the hyperparameters on the cora dataset, we can check the performance of our found hyperparameters on the split defined by~\citep{yang2016revisiting}. This split is used as the de-facto comparison in most papers in the literature. Note that we do this only to verify the performance when compared to other papers as this is not the ideal comparison method. We test 100 different initializations and report the mean test set accuracy alongside the 95\% confidence interval in Table~\ref{chap4:planetoid-cora-result-table}. We note that the relative order of the methods stays the same as per our hyperparameter search and that the performance of the best method (Pagerank) rivals with the best performances found using GCNs and GATs~\citep{kipf2016semi,velickovic2018graph,vialatte2018convolution,shchur2018pitfalls,balcilar2020bridging} while only applying the graph filtering operation as a pre-processing. Finally, the smaller values for the 95\% confidence interval when compared to the difference previous test could be linked to the difference in amount of initializations and splits, which allowed us to have more significant results. 

\begin{table}[ht]
\begin{center}
\caption{Performance comparison between different filters on the split from~\citep{yang2016revisiting}. Results presented as mean accuracy $\pm$ 95\% confidence interval.}
\label{chap4:planetoid-cora-result-table}
\begin{adjustbox}{max width=\linewidth}
\begin{tabular}{|c|c|c|c|c|c|}
\hline
Graph filter & Graph filter position & Dropout input & Dropout kernel & Test set accuracy      \\ \hline
$h_\text{balcilar-lowpass}(\alpha=1,m=10)$     & Both                  & 0.5           & 0.25           & 81.27 $\pm$ 0.20 \\ \hline
$h_\text{VBL}(\alpha=0.1,m=20)$                & Both                  & 0.5           & 0.5            & 82.42 $\pm$ 0.22 \\ \hline
$h_\text{Tikhonov}(\alpha=10)$                 & Pre-processing        & 0.5           & 0.5            & 82.52 $\pm$ 0.19 \\ \hline
$h_\text{SGC}(\alpha=1,m=2)$                   & Both                  & 0.75          & 0              & 82.62 $\pm$ 0.19 \\ \hline
$h_\text{Page}(\alpha=0.1)$                    & Pre-processing        & 0.25          & 0.5            & \textbf{83.53 $\pm$ 0.16}  \\ \hline
\end{tabular}
\end{adjustbox}
\end{center}
\end{table}

\subsection{Ablation results}

We now present ablation results for two of our variables, the dropout possibilities and the position of the graph filter. First concerning the different dropouts possibilities and then for the three different filter behaviours (feature denoising, label propagation and GCN-like).

\subsubsection{Dropout ablation}

We investigate the effect of using dropout on the inputs, the graph kernel or both. Note that the former may be seen as a type of data augmentation of the inputs, while the latter as a form of data augmentation of the graphs (increasing the amount of graphs that we can use on the train set). We display the mean test set accuracy for the best performing hyperparameters of each condition in Table~\ref{chap4:dropout-ablation-table}. The presence of dropout seems to improve the performance on the tested cases, with the dropout on the inputs being more important than on the kernel, but the use of both improves the performance for all graph filters safe for SGC.

\begin{table}[ht]
\begin{center}
\caption{Performance comparison between different filters using 50 different splits and 2 initializations per split for different dropout conditions. Results presented as mean accuracy $\pm$ 95\% confidence interval.}
\label{chap4:dropout-ablation-table}
\begin{adjustbox}{max width=\linewidth}
\begin{tabular}{|c|c|c|c|c|c|}
\hline
Graph filter                  & No dropout                 & Dropout input only     & Dropout kernel only & Dropout both      \\ \hline
$h_\text{balcilar-lowpass}$   & 77.80 $\pm$ 0.35           & 78.90 $\pm$ 0.38          & 78.42 $\pm$ 0.34                & 79.28 $\pm$ 0.31 \\ \hline
$h_\text{VBL}$                & 78.75 $\pm$ 0.37           & 79.90 $\pm$ 0.34          & 79.52 $\pm$ 0.33                & 80.14 $\pm$ 0.34 \\ \hline
$h_\text{Tikhonov}$           & 79.43 $\pm$ 0.32           & 80.14 $\pm$ 0.35          & 80.11 $\pm$ 0.31                & 80.19 $\pm$ 0.36 \\ \hline
$h_\text{SGC}$                & 79.94 $\pm$ 0.35           & 80.52 $\pm$ 0.33          & 80.00 $\pm$ 0.36                & 80.41 $\pm$ 0.34 \\ \hline
$h_\text{Page}$               & 79.91 $\pm$ 0.32           & \textbf{80.76 $\pm$ 0.33} & \textbf{80.51 $\pm$ 0.51}       & \textbf{80.92 $\pm$ 0.3}  \\ \hline
\end{tabular}
\end{adjustbox}
\end{center}
\end{table}

\subsubsection{Graph filter position}

Now we investigate the effect of using the graph filter as a pre-processing (directly at the input as a sort of feature denoising), as a post-processing step (directly at the output as a sort of label propagation) and in both, which in this case is close to a one layer GCN. We display the mean test set accuracy for the best performing hyperparameters of each condition in Table~\ref{chap4:graph-ablation-table}. We can see that for most filters, being applied as a post-processing works slightly better than pre-processing, but more tests would be needed to have any significant conclusion. We also note that surprisingly the PageRank filter had a worse performance when it is used as both pre and post-processing. Further analysis on this effect is needed.

\begin{table}[ht]
\begin{center}
\caption{Performance comparison between different filters using 50 different splits and 2 initializations per split for different dropout conditions. Results presented as mean accuracy $\pm$ 95\% confidence interval. Note that differently from the validation set, in the test set the page rank filter obtains a better result at the post-processing instead of the pre-processing, however for ensuring the correct rigor we use the pre-processing version for all other results as it was the best results on the \emph{validation set}.}
\label{chap4:graph-ablation-table}
\begin{adjustbox}{max width=\linewidth}
\begin{tabular}{|c|c|c|c|c|c|}
\hline
Graph filter                  & Pre-processing only    & Post-processing only         & Pre and post-processing      \\ \hline
$h_\text{balcilar-lowpass}$   & 77.44 $\pm$ 0.33          & 77.29 $\pm$ 0.34                & 79.28 $\pm$ 0.31 \\ \hline
$h_\text{VBL}$                & 78.45 $\pm$ 0.37          & 78.69 $\pm$ 0.33                & 80.15 $\pm$ 0.34 \\ \hline
$h_\text{Tikhonov}$           & 80.18 $\pm$ 0.36          & 80.36 $\pm$ 0.39                & 80.11 $\pm$ 0.36 \\ \hline
$h_\text{SGC}$                & 79.67 $\pm$ 0.33          & 79.73 $\pm$ 0.33                & \textbf{80.52 $\pm$ 0.34} \\ \hline
$h_\text{Page}$               & \textbf{80.92 $\pm$ 0.3} & \textbf{81.09 $\pm$ 0.32}        & 80.05 $\pm$ 0.32  \\ \hline
\end{tabular}
\end{adjustbox}
\end{center}
\end{table}

In the previous paragraphs we have introduced what we believe would be the correct framework to evaluate the different graph filters, but had to use a downgraded version in order to keep it under reasonable timing constraints. We were able to see that the PageRank filter introduced in~\citep{klicpera2019combining} had the best results between the analysed filters, but that further testing would be needed to understand all the effects that we saw in the results.

\section{Summary of the chapter}

In this chapter we have delved into the domain of deep neural networks defined on graphs. We have built upon the concepts from the previous chapters in order to define recent methods in a single graph filter framework that we have presented in increasing order of complexity in Section~\ref{chap4:definitions}. While this framework is not exactly novel, we have extended it to more methods and have introduced a discussion on the drawbacks of these methods.

We then discussed applications of DNNs defined on graphs in the context of the supervised classification of graph signals in Section~\ref{chap4:classify_graph_signals}. We have discussed recent contributions that show the drawbacks of the current approaches in this domain and then introduced two of our contributions. Their aim is to close the gap between graph convolutions and classic 2D/3D convolutions. Our two introduced contributions were published in conferences as follows:
\begin{itemize}
    \item \bibentry{grelier2018graph}
    \item \bibentry{lassance2018matching}
\end{itemize}

Finally, we have discussed the applications on the context of semi-supervised classification of vertices. We have first discussed the problem of fairly evaluating the different GNN methods on this task. While it is not a novel problem with the domain, recent works still commit the two most common pitfalls: \begin{inlinelist}\item using only one train/valid/test split that has been already shown to bias results in~\citep{shchur2018pitfalls} \item not comparing the methods fairly, e.g., method A performs better than method B, but it is mostly due to adding dropout than due to the method itself\end{inlinelist}. Note that these problems are not necessarily due to malpractice or to malice, but mostly from the sheer computational problem that would be necessary to perform everything correctly. Indeed, we propose a framework in order to solve both i) and ii), but show that we would never be able to execute the optimal version in reasonable delay. We added a relaxed framework and present our results on the cora dataset.
\chapter{Deep Neural Networks latent spaces supported on graphs}\label{chap5}
\localtableofcontents
\vspace{1.0cm}

In the first two chapters we introduced the concepts of DNNs and GSP. We then combined these concepts in the previous chapter to introduce neural networks on inputs defined on graphs. In this chapter we present applications where graphs can be used to represent the topology of intermediate DNN activations, even if the inputs are not defined on a graph support. 

We mainly use the smoothness of intermediate representation graphs to characterize DNN behavior in this chapter. We consider multiple goals. The first one is to determine whether a DNN is overfitted, underfitted or achieves a good fit in Section~\ref{chap5:smoothness}. We then focus on applying this concept during the training of DNNs, showing that we can obtain performing feature extractors in Section~\ref{chap5:smoothness_loss} or that we can improve robustness to attacks and deviations in Section~\ref{chap5:laplacian}. Finally, we show that using intermediate representation graphs to define the topology of intermediate spaces we are able to improve knowledge distillation performance in Section~\ref{chap5:gkd}. 

\section{Characterizing DNN behavior via smoothness of intermediate representations}\label{chap5:smoothness}

We have previously discussed the problem of overfitting in Chapter~\ref{chap2} and have defined our view in Definition~\ref{chap2:overfitting}. One of the problems that arises from overfitting is that most of the ways to address it revolve around two main concepts: \begin{inlinelist}\item better understanding of the domain in order to generate artificial data points, c.f. Section~\ref{chap2:data_augmentation} \item removing data points from the trainable ($\trainset$) subset of $\dataset$ and allocating them to $\validset$ and $\testset$.\end{inlinelist} While the former strategy has been shown to help prevent overfitting and improve generalization performance, the latter has been demonstrated to reduce overall generalization performance, leading to the new paradigm of only separating data into $\trainset$ and $\testset$ which has lead to improved generalization performance, but may lead to overfitting to the $\testset$ (i.e. the network may lose performance if we resample $\testset$ from $\sD$).

In order to address this problem, one possible solution is to develop analytical tools that try to infer overfitting using only data from $\trainset$. While it is hard to develop such tools with a very strong scientific background, we develop in this Section two empirical frameworks in order to analyse the state of DNNs a posteriori: \begin{inlinelist} \item use the evolution of the smoothness of intermediate representations to characterize the state of a DNN into underfitted, ok, overfitted and strongly overfitted. \item use the lessons learned from (i) to find a metric that is correlated to the performance on the $\testset$ using only data on the $\trainset$.\end{inlinelist}

\subsection{Predicting DNN behavior using GSP}

In this section we present the idea from~\citep{gripon2018insidelook} that was the cornerstone for our interest in using graphs and GSP to analyze the intermediate representations of DNNs. The authors propose to analyse the evolution of the smoothness of the binary label indicator signal (c.f. Definition~\ref{chap2:label_indicator_vector}) on graphs that are inferred at the output of each layer of a DNN at each training epoch. We recall that the smoothness of a signal on a graph represents how the signal and the graph are aligned. Also, using a binary label indicator vector as our signal means that the smoothness metric can be considered as the sum of the edges of nodes in different classes. Finally, as our graphs are inferred using the similarity between nodes and are thresholded to form a $k$-NN graph, a perfectly smooth graph would be one where either the similarity between examples of different classes is zero or there are no connections between examples of different classes.

The goal is to try to find if there are differences in behavior when analyzing the smoothness metric for each case. Indeed, if there is a clear difference in behavior, this metric could potentially be used to identify, using only $\trainset$, which condition the network seems to favor and take the correct steps in order to transform an overfitted network to a ``good fit'' network, without needing to use $\validset$ or $\testset$. In the following paragraphs, we present and discuss their results.

\subsubsection{Experiments}

The goal of the experiments is to analyze the evolution of the smoothness based metric over the training epochs. To do so, a Resnet-18 with preact blocks is used as the architecture, data augmentation is used as well in order to try to avoid overfitting to $\trainset$ and the experiments are performed on the CIFAR-10 dataset. There are four main different conditions that are analyzed: 
\begin{enumerate}
    \item ``Good fit''/Reference: The reference training procedure;
    \item Underfitting: An underfitted architecture where we divide by 10 the number of feature maps in each convolutional layer.
    \item Extreme overfit, In this case we randomize the labels on $\trainset$. The accuracy on the randomized $\trainset$ is therefore 100\% but on $\testset$ it is as good as a random guess). This is inline with the findings from~\citep{zhang2017generalization};
    \item Overfitting: A slightly overfitted condition in which data augmentation and regularization are removed.
\end{enumerate}

In Figure~\ref{chap5:fig-accuracy-evolution} we depict the evolution of train and test accuracy during the training of the networks. Note that as expected, the best test accuracy is detected on the ``reference'' case and that we are able to train the extreme overfit very quickly when compared with the other networks. 

\begin{figure}[ht]
    \begin{center}
        \begin{subfigure}[ht]{0.48\linewidth}
            \centering
            \caption{Reference}
            \begin{adjustbox}{max width=\linewidth}
                \includegraphics{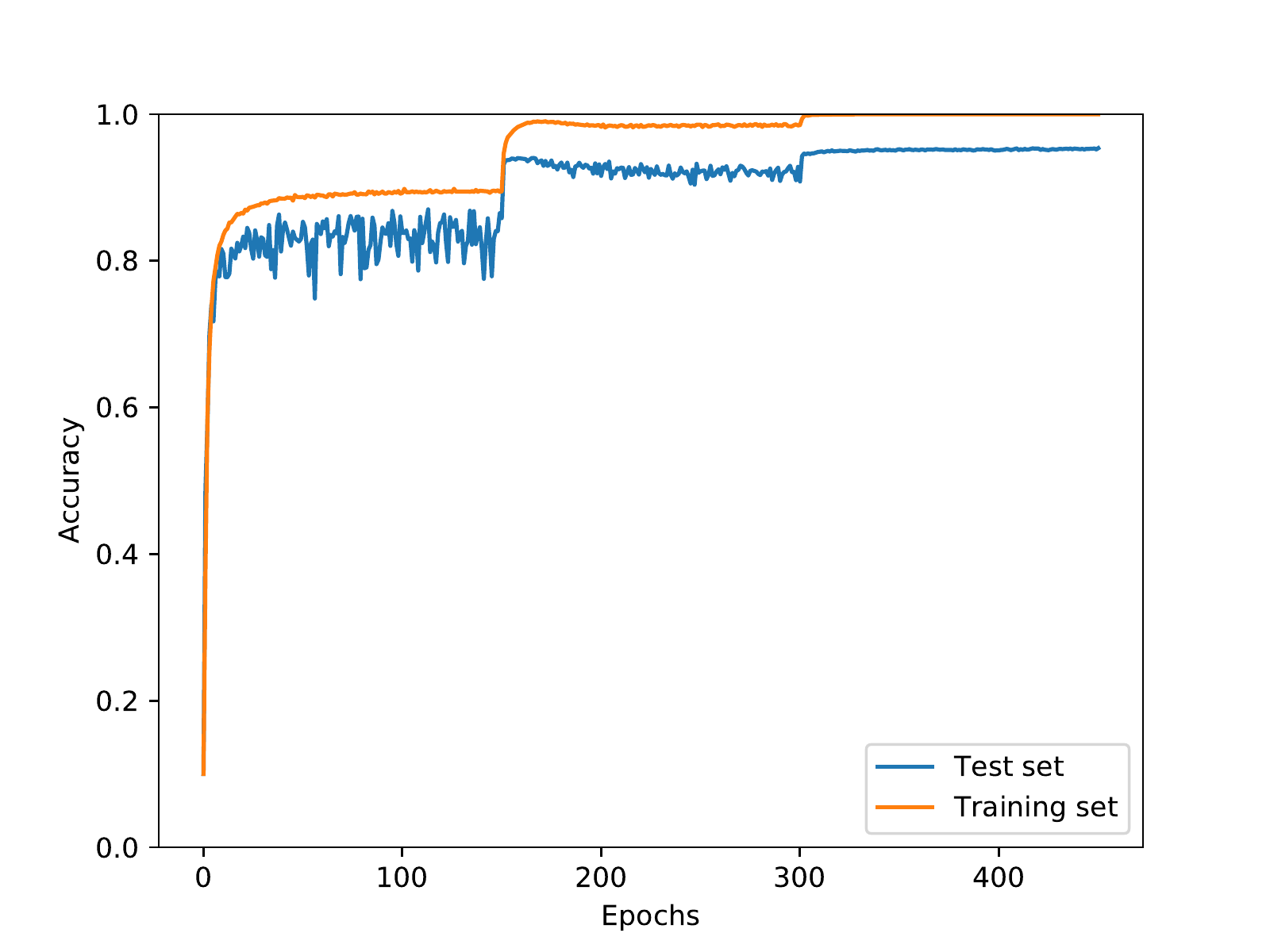}
            \end{adjustbox}
        \end{subfigure}
        \begin{subfigure}[ht]{0.48\linewidth}
            \centering
            \caption{Slight overfit}
            \begin{adjustbox}{max width=\linewidth}
                \includegraphics{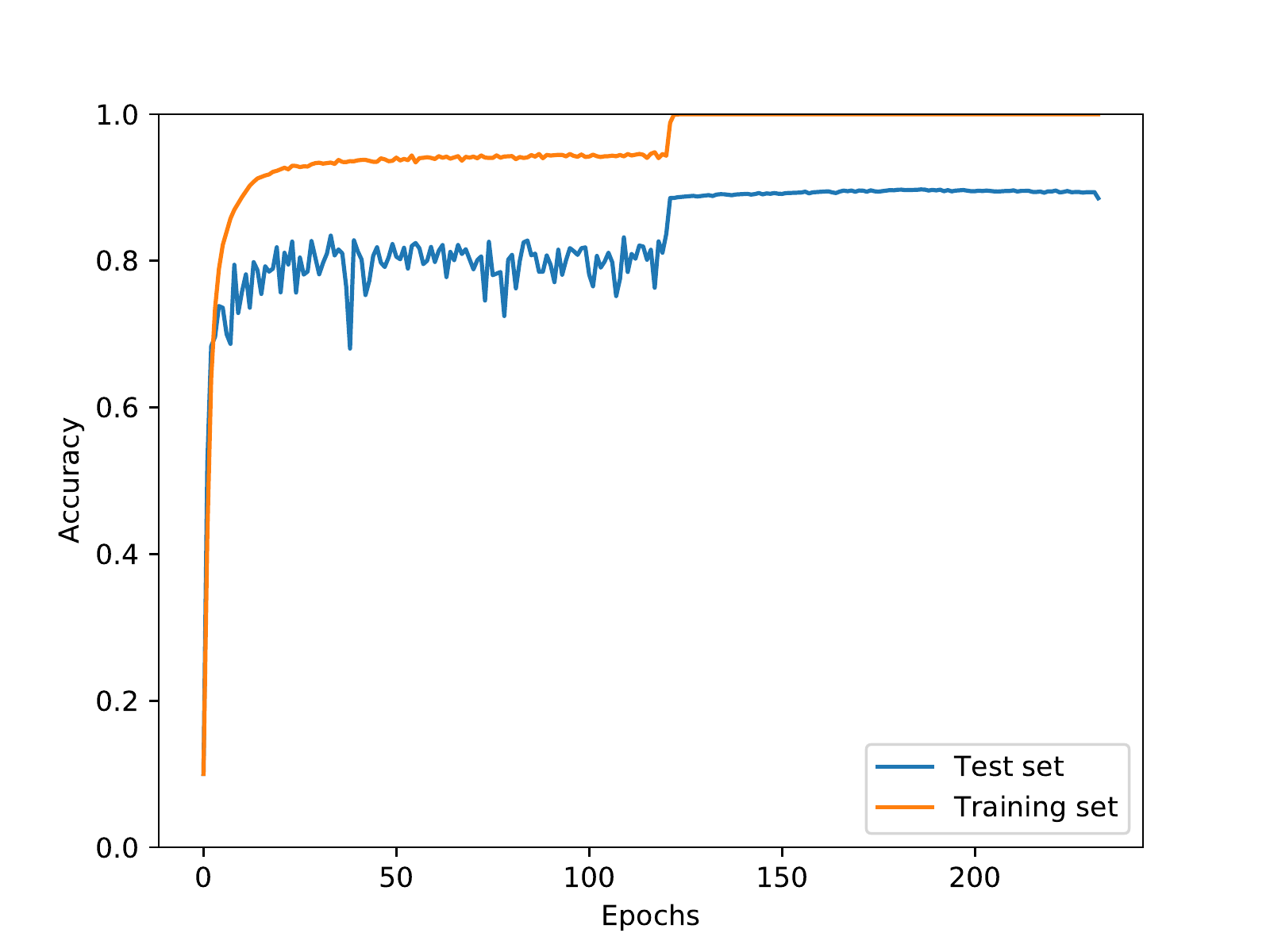}
            \end{adjustbox}
        \end{subfigure}
        \begin{subfigure}[ht]{0.48\linewidth}
            \centering
            \caption{Underfit}
            \begin{adjustbox}{max width=\linewidth}
                \includegraphics{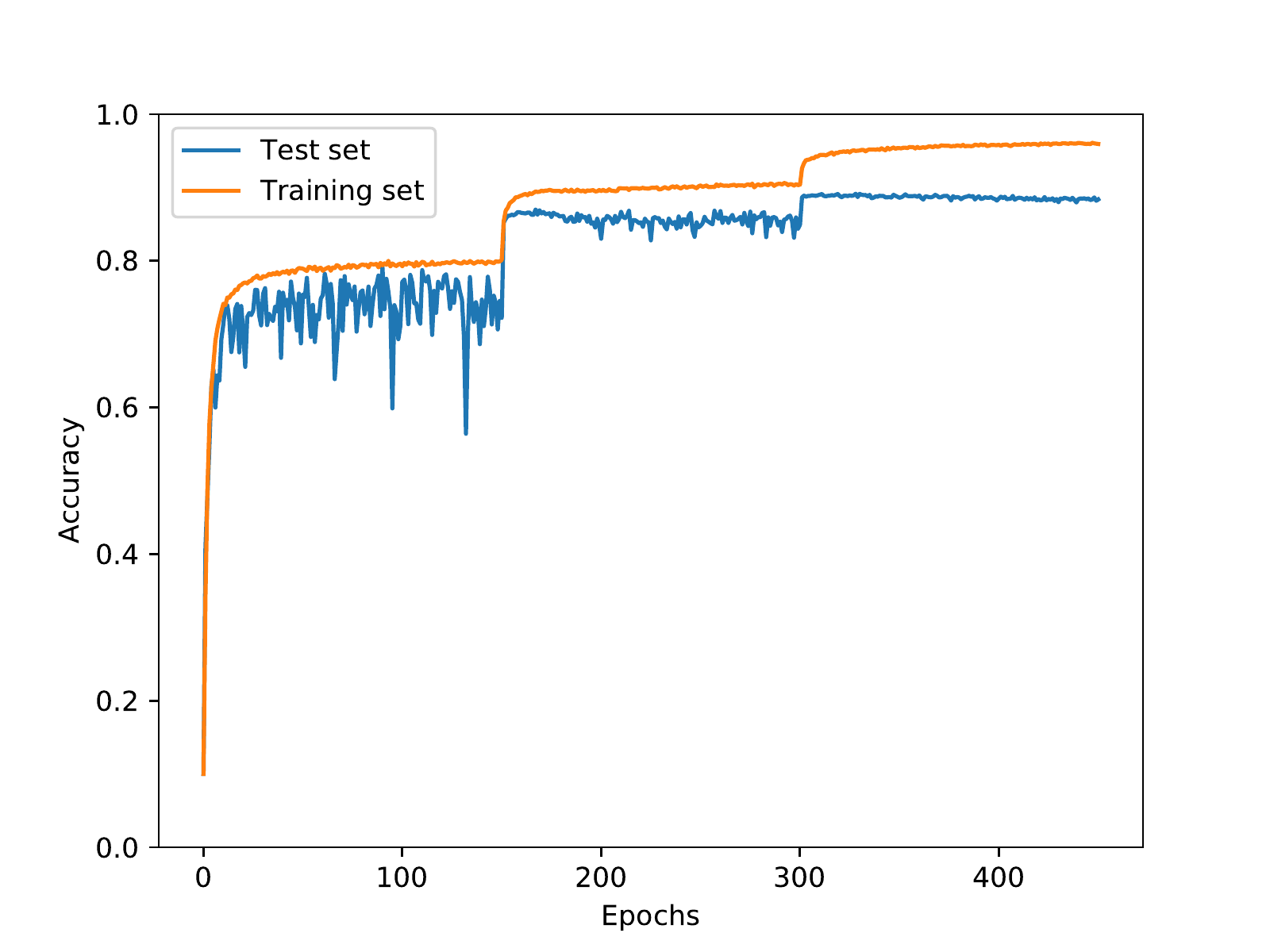}
            \end{adjustbox}
        \end{subfigure}
        \begin{subfigure}[ht]{0.48\linewidth}
            \centering
            \caption{Extreme overfit}
            \begin{adjustbox}{max width=\linewidth}
                \includegraphics{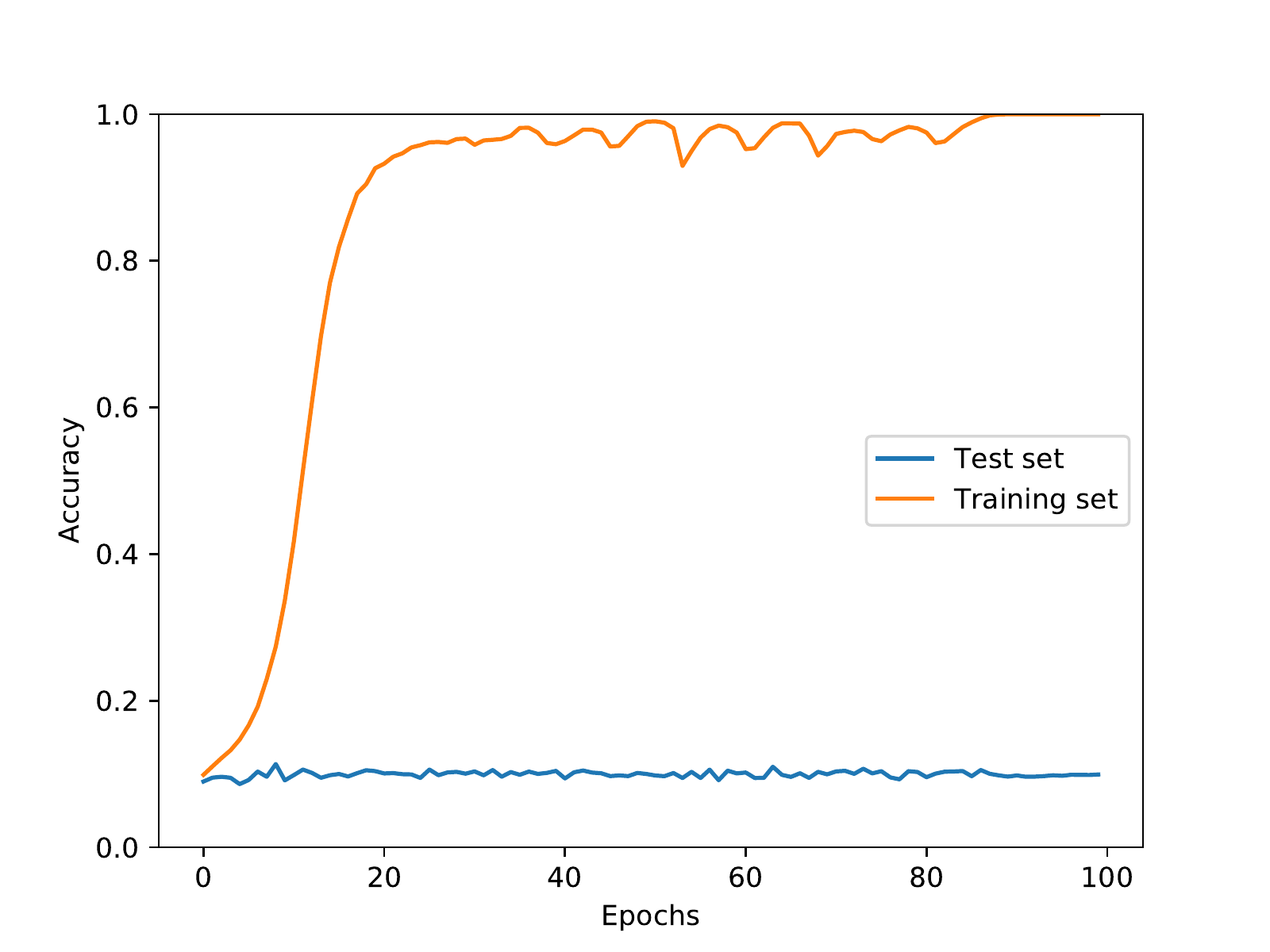}
            \end{adjustbox}
        \end{subfigure}
    \end{center}
    \caption{Train and test accuracy evolution on a Resnet18 under different conditions. Figure adapted from~\citep{gripon2018insidelook} ©2018 IEEE.}
    \label{chap5:fig-accuracy-evolution}
\end{figure}

We then present the smoothness evolution in Figure~\ref{chap5:fig-smoothness-evolution}, note how there is a very distinct behavior over the 4 conditions. Each representation is the output of a Resnet block (c.f. Section~\ref{chap2:subsection_resnet}). That difference in behavior could indicate that it is indeed possible to evaluate the status of the network, using only information from $\trainset$. Note that there are important changes of dimension occurring multiple times throughout the process, which seems to be inline with the concept of group of blocks of the Resnet, in representations 3, 5 and 7. As a result, it is better to consider the representations as groups: \begin{inlinelist} \item 1, 2, and 3 \item 4 and 5 \item 6 and 7 \item 8, 9, and 10.\end{inlinelist} 

\begin{figure}[ht]
    \begin{center}
        \begin{subfigure}[ht]{0.48\linewidth}
            \centering
            \caption{Reference}
            \begin{adjustbox}{max width=\linewidth}
                \includegraphics{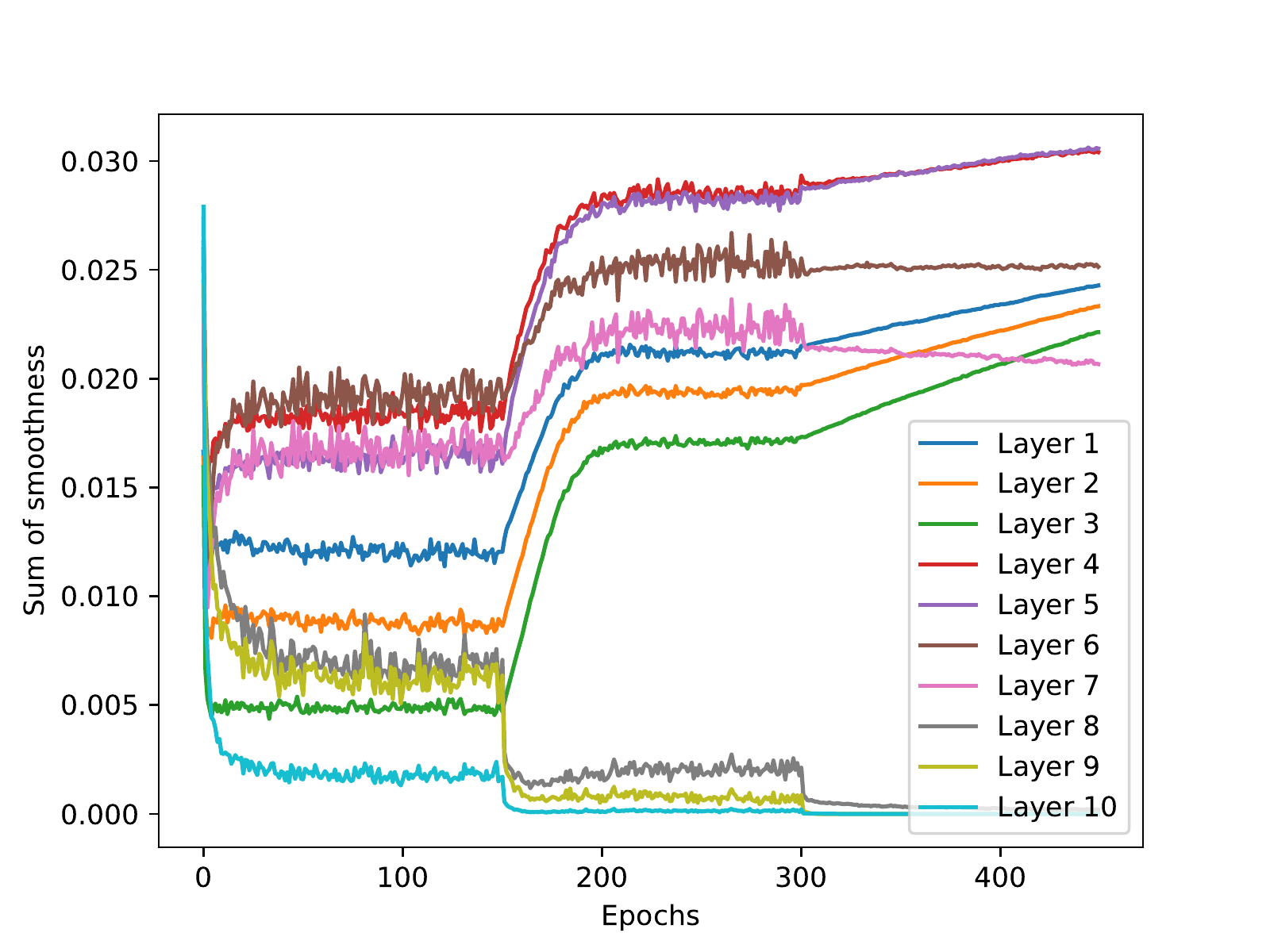}
            \end{adjustbox}
        \end{subfigure}
        \begin{subfigure}[ht]{0.48\linewidth}
            \centering
            \caption{Slight overfit}
            \begin{adjustbox}{max width=\linewidth}
                \includegraphics{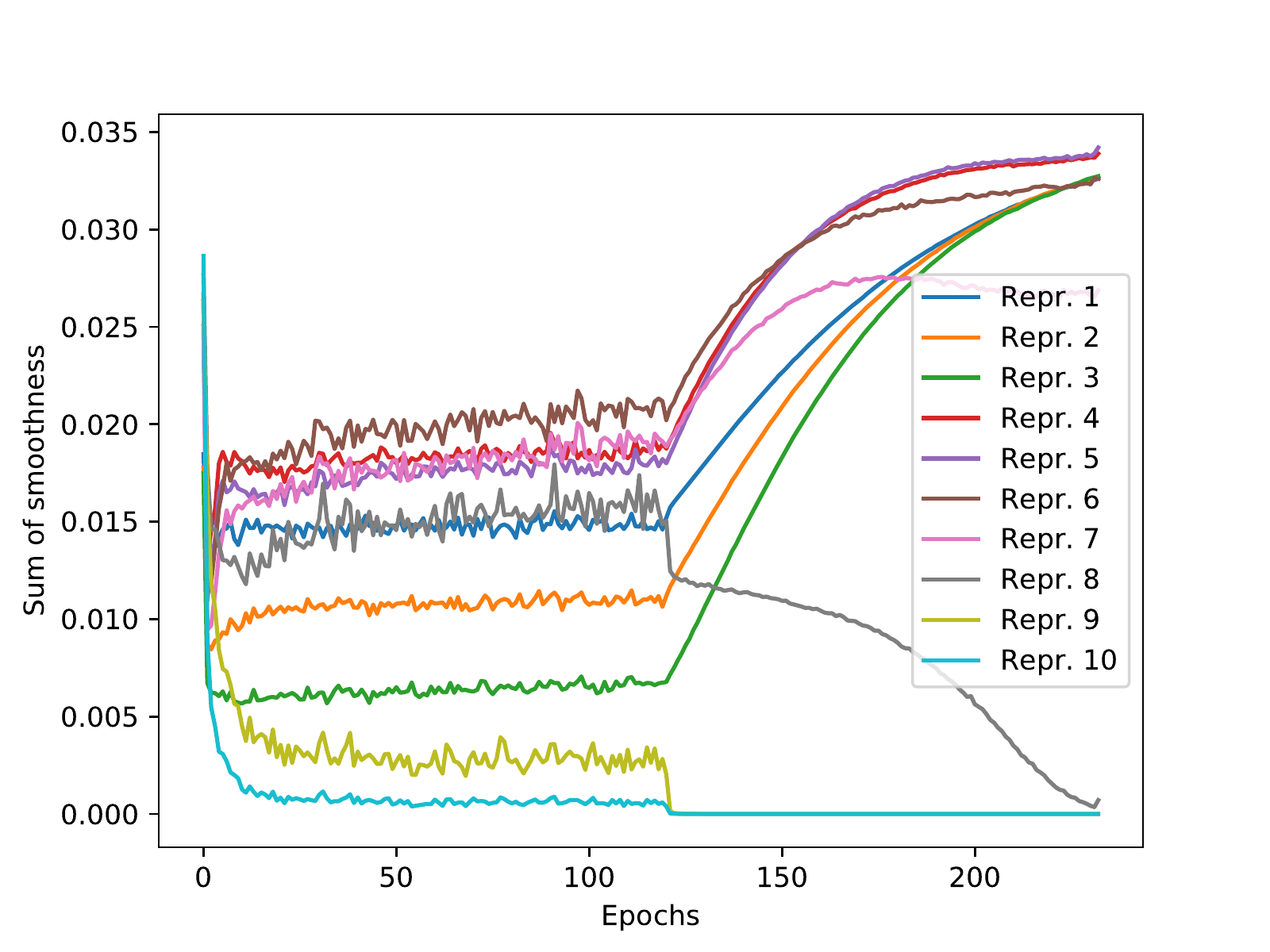}
            \end{adjustbox}
        \end{subfigure}
        \begin{subfigure}[ht]{0.48\linewidth}
            \centering
            \caption{Underfit}
            \begin{adjustbox}{max width=\linewidth}
                \includegraphics{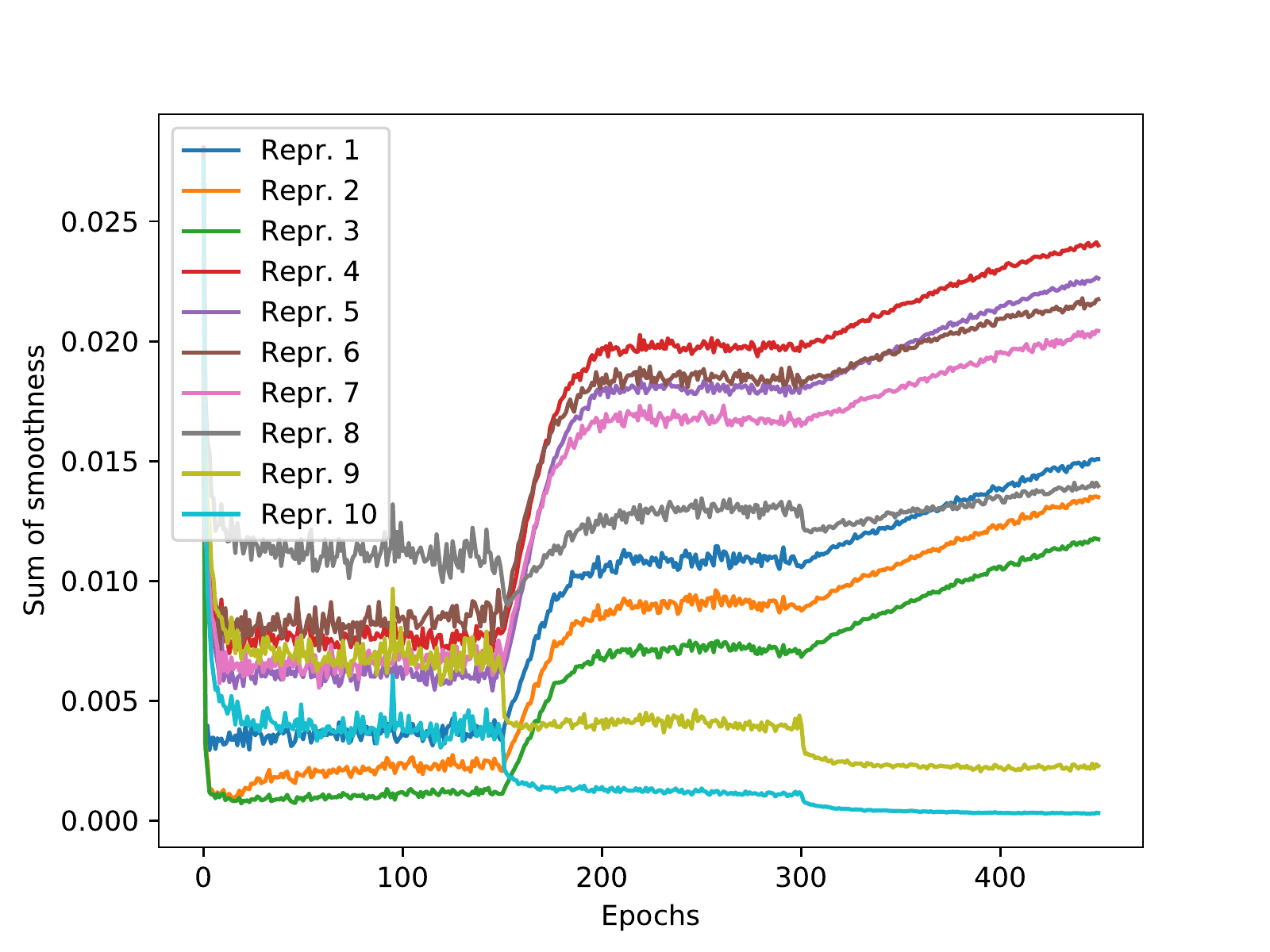}
            \end{adjustbox}
        \end{subfigure}
        \begin{subfigure}[ht]{0.48\linewidth}
            \centering
            \caption{Extreme overfit}
            \begin{adjustbox}{max width=\linewidth}
                \includegraphics{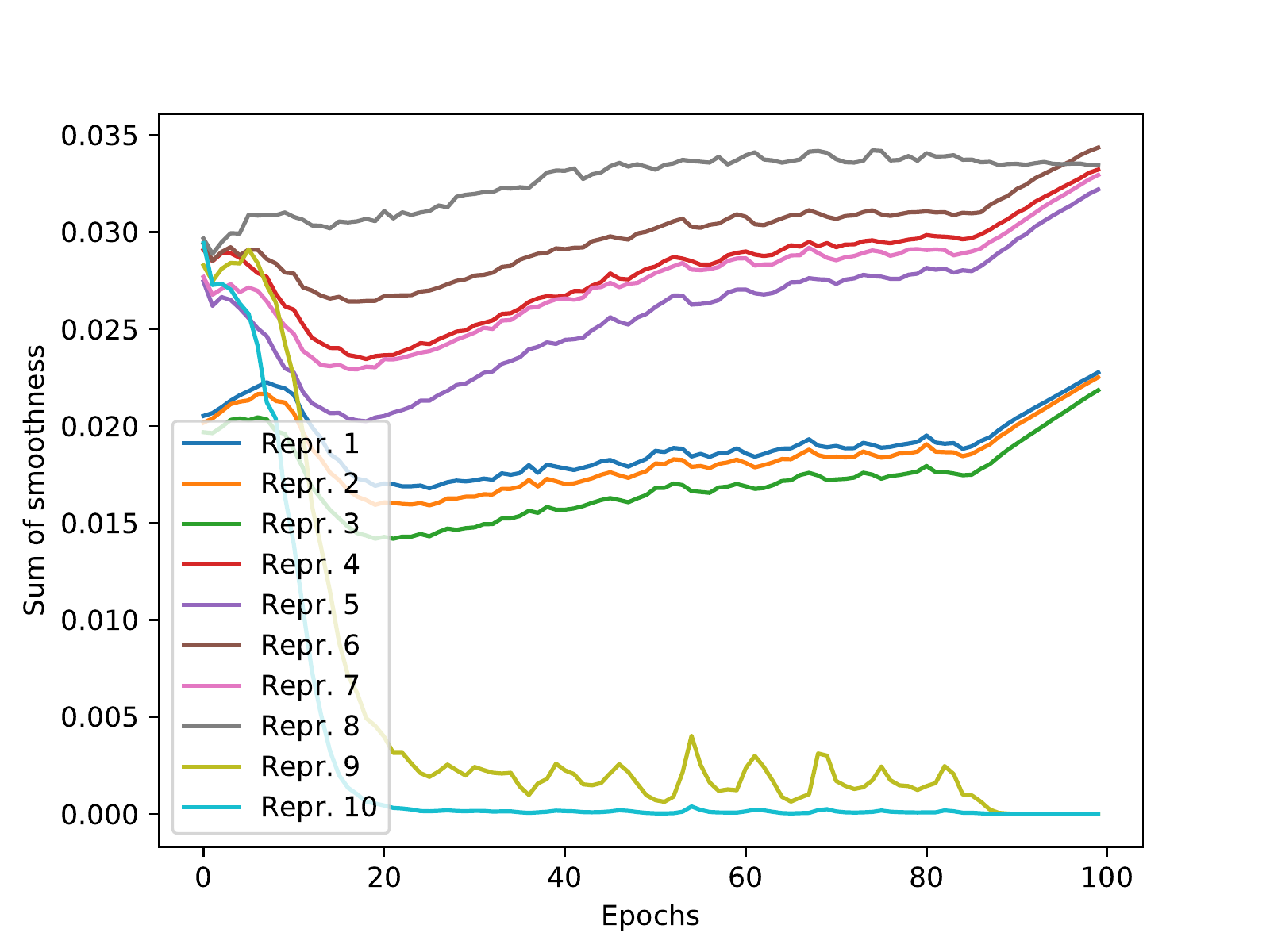}
            \end{adjustbox}
        \end{subfigure}
    \end{center}
    \caption{Smoothness evolution over the different representations of a Resnet-18 under different conditions. Figure adapted from~\citep{gripon2018insidelook} ©2018 IEEE.}
    \label{chap5:fig-smoothness-evolution}
\end{figure}

Interestingly, we are able to observe significant changes between the different conditions, in particular in the last group. In the reference case the three layers have reasonably similar label smoothness, whereas in other conditions we see important gaps, in particular between representations 8 and 9. Indeed, the lack of a gap between label smoothness between the penultimate representation and the output one seems to be a good indicator of overfitting. 

On the other hand, it is also fair to say that the reference condition is slightly overfitting, considering the very high score on the training set. Recall that DNNs are normally trained with a learning rate that decays very quickly, and it seems that these sudden changes in the learning rate specializes the last group in clustering properly the examples whereas it effects the contrary for previous representations. Another interesting observation is the fact that the label smoothness continues to evolve even if training and test accuracies remain constant. Indeed, this motivates the idea that even if the training accuracy is optimal, there are some representations that are still changing, and therefore, the label smoothness could also be used as a secondary measure to see if the network has converged. In the following section we present a continuation of this work, where we use the same smoothness metric in order to try to predict the test accuracy of the network.

\subsection{Predicting under/overfitting using graph signal smoothness}

To avoid underfitting and overfitting often boils down to performing crossvalidation, where one needs to split $\dataset$ into $\trainset$ and $\validset$ in order to assess the generalization performance of a DNN trained on $\trainset$. However, this requires to reduce the size of the training set, thus leading to globally poorer accuracy, and it does not guarantee the architecture will be the best for a set distinct from the validation one. In this subsection we use the GSP framework to develop a measure that we call ``Smoothness Gap'' in order to analyse the state of a DNN into underfitted, ok or overfitted. We do so using the knowledge acquired from the work detailed in the previous subsection~\citep{gripon2018insidelook} which shows that a neural network that generalizes well seems to have a very smooth $k$-neighbours graph using the features from its last layers. This is in line with previous works in the regularization of DNNs~\citep{lee2015deeply}, which use the classification results based on the features of the intermediate layers of a network as a regularizer. The ideas of this section were accepted to a workshop without published procedings~\citep{lassance2018predicting}.

\subsubsection{Smoothness Gap}

First we formally define what we call smoothness gap. Let us consider $M$ example inputs for each of the $C$ classes which we then use to generate intermediate representations across a given trained DNN. We then use the same strategy from Section~\ref{chap3:reducing_noise} to generate $k$-nearest neighbor graphs for each intermediate representation of the DNN and consider the binary label indicator vector from Definition~\ref{chap2:label_indicator_vector} as our graph signal. Note that by using $k$-NN graphs, each $\adjmatrix$ contains less than $2MCk$ nonzero elements and that choosing the correct value of $k$ is a well known problem as we have previously discussed in the previous chapters.

With a graph for each intermediate representation and a graph signal we can then compute the smoothness ($\sigma$) of the graph signal for each intermediate representation. We recall that the smoothness of a label signal is a direct measure of how well the examples of this class are separated from the other classes, and that a global label smoothness of 0 indicates pairs of examples belonging to distinct classes are not connected in the graph or are completely orthogonal. 

We define the smoothness gap as the difference between the smoothness of the representations on the last layer of the network (i.e. the classification layer) and the representations of the penultimate layer (i.e. the representations after the global average pooling). Note that this is influenced by our architecture, in our case we use Resnet-18 as defined in Chapter~\ref{chap2}. Finally, in order to compare smoothness of a given signal on various graphs with possibly very different weightings, we choose to normalize smoothness by its maximum possible value. In our case, we rather use an upper-bound which is $2 MCk$.

\subsubsection{Experiments}

We train our DNNs on a portion of the CIFAR-10 dataset~\citep{krizhevsky2009learning}. To estimate label smoothness, we sample 50 examples from each class to generate our graphs~($M = 50$ and $C = 10$). We repeat this sampling $10$ times. We evaluate our measure using graphs and we also compute the $R^2$ coefficient obtained by a linear regression over our measures to further stress the correlation. The results reported are the mean label smoothness over the 10 graphs. In order to controlably generate our underfitting and overfitting conditions, we proceed as follows:

\begin{enumerate}

\item \textbf{Overfitting:} we use only a portion of the training set ranging from $21\%$ to $99\%$ by $2\%$ increments;

\item \textbf{Underfitting:} convolutional layers come with a hyperparameter which is the number of feature maps. In order to easily vary the number of trainable parameters without changing the global architecture, we thus vary the number of feature maps. In the chosen architecture, the number of feature maps on the first convolutional layer determines all the others. We thus vary it from $5$ to $64$, its default value.
\end{enumerate}

We considered various values of $k$ ($10, 20, M = 50, MC = 500$). Most consistent results were obtained with a value of $20$. Using $10$ would incur on a lot of points being concentrated with approximately zero smoothness. This is not surprising as it tends to select only the very nearest neighbors. Using $M$ or $MC$ leads to a lot of noise in the measures as there are many more pairwise distances to take into consideration.

In Figure~\ref{chap5:fig-datsetsize-acc} we show that by varying the size of the dataset we can generate highly overfitted DNNs. Moreover, there is a correlation between generalization abilities reported by the test accuracy score and the smoothness gap $\delta_s$. We stressed this fact by computing a linear regression and obtained a $R^2$ coefficient of $68\%$.

\begin{figure}[ht]
    \begin{center}
        \begin{subfigure}[ht]{\linewidth}
            \centering
            \caption{$R^2=68\%$}
            \begin{adjustbox}{max width=\linewidth}
  \tikzsetnextfilename{chapter5/tikz/varyingsize}%
  \input{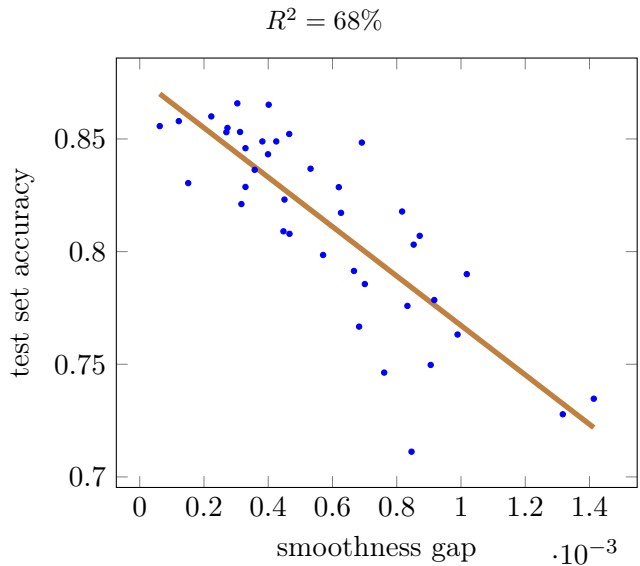}%

            \end{adjustbox}
        \end{subfigure}
    \end{center}
    \caption{Results generated by varying the size of the dataset. Figure extracted from~\citep{lassance2018predicting}.}
    \label{chap5:fig-datsetsize-acc}
  \end{figure}
  
Now we study the case of underfitted/properly fitted DNNs. First we test the case where we vary the amount of parameters on the network following the traditional scaling of Resnet-18 (c.f. Section~\ref{chap2:subsection_resnet}) and we depict the experiments in Figure~\ref{chap5:fig-varying-firstlayer}, where we show that we can also obtain a strong correlation between the test accuracy and the smoothness gap $\delta_s$, as seen by the $R^2=84\%$ coefficient of its linear regression. These results show a very high predictability of the test error given the smoothness gap $\delta_s$. It is very interesting to see this high predictability as the computation of the smoothness gap does not require any knowledge about the test set. 

\begin{figure}[ht]
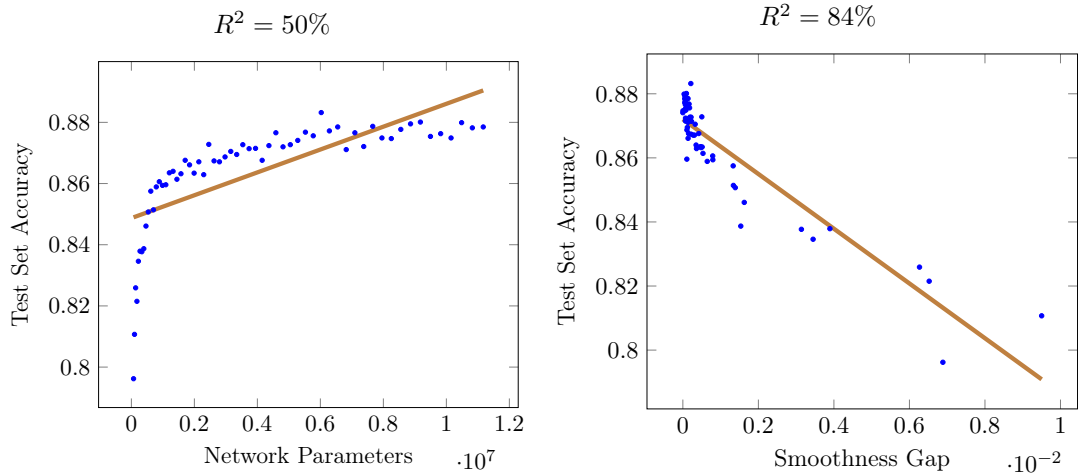

    \begin{center}
        \begin{subfigure}[ht]{.48\linewidth}
            \centering
            \caption{$R^2=50\%$}
            \begin{adjustbox}{max width=\linewidth}
  \tikzsetnextfilename{chapter5/tikz/firstlayersize}%
  \input{chapter5/tikz/firstlayersize.tex}%

            \end{adjustbox}
        \end{subfigure}
        \begin{subfigure}[ht]{.48\linewidth}
            \centering
            \caption{$R^2=84\%$}
            \begin{adjustbox}{max width=\linewidth}
  \tikzsetnextfilename{chapter5/tikz/firstlayersmoothness}%
  \input{chapter5/tikz/firstlayersmoothness.tex}%

            \end{adjustbox}
        \end{subfigure}    
    \end{center}
    \caption{Results generated by varying the size of DNN following the traditional Resnet scaling pattern of the DNNs. In the left we depict the correlation between network size and test set performance, while on the right we show the correlation between the smoothness gap and test set performance. Figure extracted from~\citep{lassance2018predicting}.}
    \label{chap5:fig-varying-firstlayer}
\end{figure}

However, we note that for the underfitted condition, the performance of the network is also correlated to the number of parameters~($R^2 = 50\%$), even if this relation is less strict. In order to be sure that our measure was not only an indirect measure of the size of the network, we performed additional experiments where the number of feature maps at each layer is changed independently, resulting in almost no correlation between the global number of parameters and test accuracy~($R^2 = 14\%$), while still maintaining a good correlation between the smoothness gap and test accuracy~($R^2 = 67\%$). We depict the relationships between size, smoothness and test set accuracy in~Figure~\ref{chap5:fig-varying-all}.

\begin{figure}[ht]
    \begin{center}
        \begin{subfigure}[ht]{.48\linewidth}
            \centering
            \caption{$R^2=14\%$}
            \begin{adjustbox}{max width=\linewidth}
  \tikzsetnextfilename{chapter5/tikz/sizefeatures}%
  \input{chapter5/tikz/sizefeatures.tex}%

            \end{adjustbox}
        \end{subfigure}
        \begin{subfigure}[ht]{.48\linewidth}
            \centering
            \caption{$R^2=67\%$}
            \begin{adjustbox}{max width=\linewidth}
  \tikzsetnextfilename{chapter5/tikz/smoothnessfeatures}%
  \input{chapter5/tikz/smoothnessfeatures.tex}%

            \end{adjustbox}
        \end{subfigure}    
    \end{center}
    \caption{Results generated by varying the size of DNN without following the traditional Resnet scaling pattern of the DNNs. In the left we depict the correlation between network size and test set performance, while on the right we show the correlation between the smoothness gap and test set performance. Figure extracted from~\citep{lassance2018predicting}.}
    \label{chap5:fig-varying-all}
\end{figure}

In this subsection we have proposed to measure the smoothness gap and have shown via experiments that there exists a strong correlation between this measure and the generalization of DNNs. While this seems very promising, it is of upmost importance to be careful and not overpromise as further study is still needed to see if this is an useful correlation or if it is a subproduct of a possibly poor experimental design. Future work includes developing an understanding of why the training set smoothness is correlated with the test set accuracy, using this measure explicitly when performing hyperparameter search, and studying how to use this measure during the training phase.

\section{Smoothness as an objective function for DNNs}\label{chap5:smoothness_loss}

In the previous section we described how the smoothness of graphs generated by the intermediate features of DNNs may be linked with their generalization abilities. In this section we inverse this concept, by training the network to directly minimize the smoothness generated by a DNN in order to train a feature extractor, which was the central theme of our contribution~\citep{bontonou2019smoothness}. 

In machine learning, classification is one of the most studied problems and cross-entropy is the most popular loss function for computer vision tasks. Cross-entropy is often preferred over mean squared error because it converges faster and tends to reach better accuracy. However, cross-entropy requires the outputs of the network to be label indicator vectors of the classes. We believe this decision comes with noticeable drawbacks:
\begin{itemize}
    \item The dimension of the output vectors has to be equal to the number of classes, preventing an easy adaptation to the introduction of new classes. In scenarios where the number of classes is large, this also causes the last layer of the network to contain a lot of parameters.
    \item Inputs of the same class are forced to be mapped to the same output, even if they belong to distinct clusters in the input space. This might cause severe distortions in the topological space that are likely to create vulnerabilities to small deviations of the inputs.
    \item The arbitrary choice of the one-hot-bit encoding is independent of the distribution of the input and of the initialization of the network parameters, which can slow and harden the training process.
\end{itemize}

To overcome these drawbacks, authors have proposed several solutions, some of which we present here:
\begin{enumerate}
    \item Some works propose to train DNNs solely as feature extractors, one example is~\citep{hermans2017defense}, where the authors use triplets, where the first element is the example to train, the second belongs to the same class and the last to another class. They then enforce that the first is closer to the second than to the last. 
    \item Other authors try to smooth the outputs of the DNN, either using smoother representations~\citep{szegedy2016rethinking} or by using a teacher network to define the output vector~\citep{hinton2014distillation} or just by using smoother vectors. 
    \item Finally, works such as~\citep{dietterich1994solving,tigreat2016assembly} propose to use error correcting codes to generate outputs of the network.
\end{enumerate}

In this section, we tackle the problem of training deep learning architectures to generate features that are easy to classify without relying on arbitrary choices for the representation of the output. We introduce a loss function that aims at maximizing the distances between outputs of different classes. It is expressed using the smoothness of a label signal on similarity graphs built at the output of the network. The proposed criterion does not force the output dimension to match the number of classes, can result in distinct clusters in the output domain for a same class, and builds upon the distribution of the inputs and the initialization of the network parameters. We demonstrate the ability of the proposed loss function to train networks with state-of-the-art accuracy on common computer vision benchmarks and its ability to yield increased robustness to deviations of the inputs.

\subsection{Methodology}\label{methodo}

Consider the problem of training a classification function $f$ using $\trainset$, where $\trainset$ is made of $n$ elements. In this case we can say that $\vx_\mu$ refers to an input tensor and $\vy_\mu$ to the corresponding output vector. We denote $C$ the number of classes. In the context of deep learning, $\vy_\mu$ is typically a binary label indicator vector of its class ($\vy_\mu \in \R^C$) and the network function $f$ is trained to minimize the {\em cross-entropy loss} as previously defined in Equation~\ref{chap2:simplified_ce}.

We will consider graphs $\gG$, defined by their weighted adjacency matrix $\adjmatrix$, where $\emAdjacency_{\mu,\nu}$ is the weight of the edge between vertices $\mu$ and $\nu$, or 0 if no such edge exists. We also define the Laplacian $\mL$ as: $\mL = \mD - \adjmatrix$ where  $\mD$ is the degree matrix of the graph.

Given a graph $\gG$ and a graph signal vector $\vs \in \R^{V}$, we can compute the {\em graph signal smoothness}, c.f. Section~\ref{chap3:smoothness}. Finally, we call \emph{label signal} associated with the class $c$ the binary indicator vector $\vs_c$ of elements of class $c$. Hence, $\vs_{c_{\mu}} = 1$ if and only if $\vx_\mu$ is in class $c$ (c.f. Definition~\ref{chap2:label_indicator_vector}).

\subsection{Proposed graph smoothness loss}
We propose to replace the cross-entropy loss with a graph smoothness loss. Consider a fixed metric $\|\cdot\|$. We can compute the distances/similarities between the representations $f(\vx_\mu), \forall \mu \in \trainset$. Using this information, we build a $k$-nearest neighbor graph. 

To generate this graph, denoted $\gG$, we compute the similarity between each representation using an RBF-kernel parameterized by $\alpha$, and then threshold to the closest $k$-neighbors. This leads us to the following $\adjmatrix$: 
\begin{equation}
    \emAdjacency_{\mu,\nu} \neq 0 \Rightarrow \emAdjacency_{\mu,\nu} = \exp{\left(-\alpha\|f(\vx_\mu) - f(\vx_\nu)\|\right)},\forall \mu, \forall \nu\;.
\end{equation}

We can then define our graph smoothness loss as follows.

\begin{definition}[graph smoothness loss]\label{chap5:def-graphsmoothnessloss}
   We call $\emph{graph smoothness loss}$ of $f$ the quantity:
   \begin{eqnarray*}
       \mathcal{L}_{\gG} &=& \vs^{\top} \mL \vs\\
       &=& \hspace{-1.2cm}\underbrace{\sum_{\genfrac{}{}{0pt}{2}{\mathbf{x}_\mu, \mathbf{x}_{\nu}, \mathbf{W}_k[\mu \nu] \neq 0}{ \mathbf{s}_c[\mu] \mathbf{s}_{c}[\nu] = 0, \forall c}}}_{\text{sum over inputs of distinct classes}}{\hspace{-1cm}\exp{\left(-\alpha\|f(\mathbf{x}_{\mu}) - f(\mathbf{x}_{\nu})\|\right)}}
       \;.
   \end{eqnarray*}
\end{definition}

In the following subsection, we motivate the use of this loss.

\subsection{Properties of the graph smoothness loss}
The cross-entropy loss introduced in Equation~\ref{chap2:simplified_ce} aims at mapping inputs of the network to arbitrarily chosen one-hot-bit encoded vectors representing the corresponding classes. Our proposed loss function differs from the cross-entropy loss in three main aspects:
\begin{itemize}
    \item The cross-entropy loss forces a mapping from the input to a single point for each class. This might force the network to considerably distort space, for example in the case where a class is made of several disjoint clusters. The use of $k$-nearest neighbors gives more flexibility to the proposed loss: using a small value of $k$, it is possible to minimize the graph smoothness loss with multiple clusters of points for each class;
    \item The cross-entropy loss requires to arbitrarily choose the outputs of the network, disregarding the dataset and the initialization of the network. In contrast, the proposed loss is only interested in relative positioning of outputs with regards to one another, and can therefore build upon the initial distribution yielded by the network;
    \item To use the cross-entropy loss we are obliged to use an output vector whose dimension is the number of classes of the problem at hand. It is thus required to modify the network to accommodate for new classes (e.g. in an incremental scenario). The dimension of the network output $d$ is less tightly tied to the number of classes with the proposed loss.
\end{itemize}

It is important to note that the capacity and the dimension of the output space need to be bounded by the problem at hand. If the output dimension is too small, it is likely that the network will not be able to converge (i.e., underfit): consider a toy example, where we try to separate $n$ samples so that they are all at the same distance in the output space. It is only possible to suffice this condition if the output dimension is at least $n-1$. On the other hand, if the capacity is too large, we can simply scatter each point in the output space, so that the distance between the image of any two inputs is large, but not significant as it has the same behavior for any input. This relation between the dimension of the output space and the ability of the network to classify is further discussed in the experiments.

\subsection{Experiments}

We evaluate the performance of the proposed loss using three common datasets of image classification: \begin{inlinelist}
    \item CIFAR-10 \item CIFAR-100 \item SVHN
\end{inlinelist}. For each dataset, we follow the same experimental process: \begin{inlinelist}
    \item We define the architecture we are going to use, following networks that are known to provide a good result when using the cross-entropy loss
    \item We train two networks of the same architecture (number of layers, number of features per layer) and hyperparameters (number of epochs, learning rate, gradient descent algorithm,  mini-batch size,  weight decay, weight normalization), but one is trained with  the cross-entropy loss and the other with the proposed graph smoothness loss
    \item We then tune the additional hyperparameters of the proposed loss ($k$, $\alpha$, $d$). When performing classification, we train a simple classifier on top of the network to measure its accuracy.
\end{inlinelist} Note that all input images are normalized before being processed. It is important to keep in mind that by choosing this methodology, we bias the experiments in favor of using the cross-entropy loss, since the chosen architectures have been designed for its use.

The network architecture we use is Resnet-18~\citep{he2016deep}, as previously defined in Chapter~\ref{chap2}. The network is trained for 200 epochs using 100 examples per mini-batch. SVHN and CIFAR-10 networks are trained with SGD, using a learning rate that starts at 0.1 and is divided by 10 at epochs 100 and 150, with a weight decay factor of $10^{-4}$ and a Nesterov momentum of 0.9. On the other hand, CIFAR-100 is trained with the Adam optimizer~\citep{kingma2015adam}, using a learning rate that starts at 0.001 and is divided by 10 at epochs 100 and 150.  Note that due to computational constraints we built a graph for each mini-batch (i.e., graph smoothness is calculated on a graph of 100 vertices that changes at each mini-batch).

In the original version of the chosen architecture, the linear function of the last layer outputs a $C$ dimensional vector on which a softmax function is applied. When using the proposed loss, the linear function outputs a $d$ dimensional vector, where $d$ is an hyperparameter, normalized with respect to the $\normltwo$ norm. We use this normalization to constrain the outputs to remain in a compact subset of the output space. As previously discussed in Section~\ref{methodo}, if we did not normalize the output, and since we use the $\normltwo$ metric to build the graphs in our experiments, the network would likely converge to a trivial solution that would scatter the outputs far away from each other in the output domain, regardless of their class.

\subsubsection{Visualization}
We first compare the embedding obtained using the proposed loss and $d=2$ with the one obtained when putting a bottleneck layer of the same dimension $d=2$ using the cross-entropy loss. Results on CIFAR-10 are depicted in Figure~\ref{chap5:fig-embedding-smoothness}. Exceptionally, for this experiment we do normalize the output of the last layer of the network using batch norm instead of $\normltwo$ norm. This is because using $d=2$ with a $\normltwo$ normalization would reduce the output space dimension to 1, which would likely be too small to allow the training loss to descend to 0. We observe that in the third column of Figure~\ref{chap5:fig-embedding-smoothness}, our method creates clusters whereas the baseline method creates lines. This reflects the choice of the distance metric: our method uses the $\normltwo$ distance, whereas the baseline seems to use the cosine distance instead. Figure~\ref{chap5:fig-embedding-smoothness} shows that training examples are better clustered at the end of the training process when using the proposed loss than with the cross-entropy loss.

\begin{figure}[ht]
    \begin{center}
        \begin{adjustbox}{max width=1.1\linewidth}
            \includegraphics[scale=0.45]{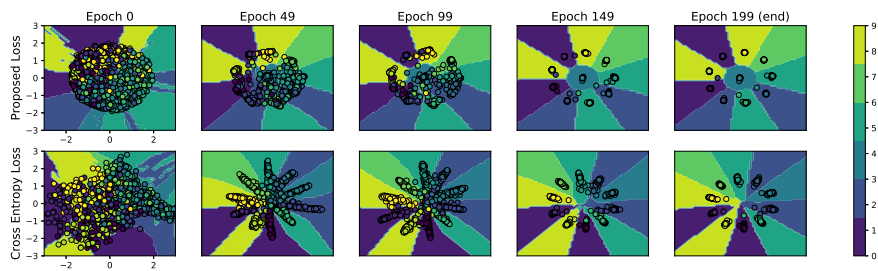}
        \end{adjustbox}
        \caption{Embeddings of CIFAR-10 training set learned using the proposed graph smoothness loss with $d=2$ (top row) compared with the ones obtained using a bottleneck layer and cross-entropy with the same architecture (bottom row). Figure and caption extracted from~\citep{bontonou2019smoothness}. ©2019, IEEE.}\label{chap5:fig-embedding-smoothness}
    \end{center}
\end{figure}    

\subsubsection{Classification}
We evaluate the influence on classification performance of the three hyperparameters of the proposed loss: the number of neighbors $k$ to consider in the similarity graph $\gG$, the number of dimensions $d$ coming out of the network and the scaling parameter $\alpha$ used to define the weights of the graph. When varying $k$, we fix $d$ to be the number of classes and $\alpha = 2$; when varying $d$, we fix $k$ to the maximum value and $\alpha = 2$ and when varying $\alpha$, we fix $d$ to be the number of classes and $k$ to the maximum value. The results are summarized in Figure~\ref{chap5:fig-tests-smoothness}. Note that a 10-NN classifier was used to obtain the accuracy. We observe that the higher $k$ is, the higher the test accuracy is, even if the sensitivity to $k$ is lower when $k$ is larger than the number of classes. As soon as $d$ becomes large enough to accommodate for the number of classes, we observe that the test accuracy starts dropping slowly. Therefore, because using a larger value of $d$ does not seem particularly harmful, applications where the number of classes is unknown (such as in incremental learning) should use a high $d$. Similarly, there is almost no dependence to $\alpha$ as long as its value is small enough. Indeed, when $\alpha$ is large, the loss tends to be close to 0 even if the corresponding distances are still relatively small.

\begin{figure}[ht]
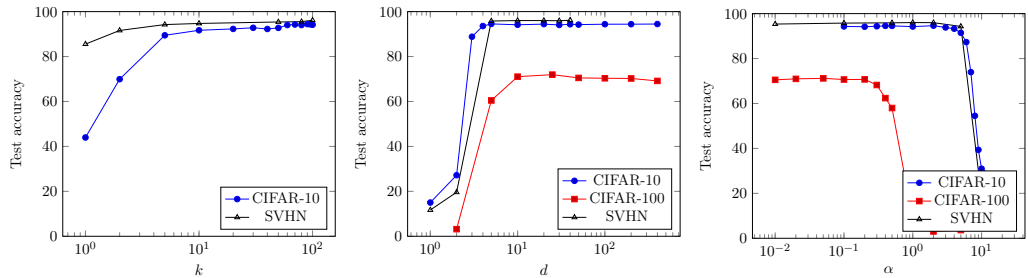

\begin{center}
    \begin{subfigure}[ht]{.3\linewidth}
        \centering
        \begin{adjustbox}{max width=\linewidth}
  \tikzsetnextfilename{chapter5/tikz/testxk}%
  \input{chapter5/tikz/testxk.tex}%

        \end{adjustbox}    
    \end{subfigure}
    \begin{subfigure}[ht]{.3\linewidth}
        \centering
        \begin{adjustbox}{max width=\linewidth}
  \tikzsetnextfilename{chapter5/tikz/testxd}%
  \input{chapter5/tikz/testxd.tex}%

        \end{adjustbox}    
    \end{subfigure}
    \begin{subfigure}[ht]{.3\linewidth}
        \centering
        \begin{adjustbox}{max width=\linewidth}
  \tikzsetnextfilename{chapter5/tikz/testxalpha}%
  \input{chapter5/tikz/testxalpha.tex}%

        \end{adjustbox}    
    \end{subfigure}
   \caption{Test set accuracy as a function of the different parameters $k$, $d$ and $\alpha$. Plots extracted from~\citep{bontonou2019smoothness} ©2019, IEEE.}\label{chap5:fig-tests-smoothness}
\end{center}
\end{figure}

We next evaluate the performance of the graph smoothness loss for classification. To this end, we compare its accuracy to that achieved with optimized network architectures using a cross-entropy loss (CE). We use various classifiers on top of the graph smoothness loss-trained architectures: a $1$-nearest neighbors classifier ($1$-NN), a $10$-nearest neighbors classifier ($10$-NN) and a support vector classifier (SVC) using radial basis functions. The results are summarized in Table~\ref{chap5:table-performance}. We observe that the test error obtained with the proposed loss is close to the CE test error, suggesting that the proposed loss is able to compete in terms of accuracy with the cross-entropy. Interestingly, we do not observe a significant difference in accuracy between the classifiers. Besides, both losses require the same training time.

\begin{table}[ht]
\begin{center}
\caption{Test errors on CIFAR-10, CIFAR-100 and SVHN datasets. The top contains the test error of the optimized network architectures for a cross-entropy loss (CE). The bottom contains the test error of the same network architectures for our proposed graph smoothness loss, associated with three different classifiers. Table and caption extracted from~\citep{bontonou2019smoothness} ©2019, IEEE.}
\label{chap5:table-performance}
\begin{tabular}{c|ccc}
\hline
Loss - Classifier          & CIFAR-10 & CIFAR-100 & SVHN \\
    \Xhline{2\arrayrulewidth}
CE - Argmax                & \textbf{5.06\%}   & \textbf{27.92\%}   &  3.69\%        \\
Proposed      - 1-NN       & 5.63\%   & 29.17\%   &  3.84\%         \\
Proposed      - 10-NN      & 5.48\%   & 28.82\%   &  \textbf{3.34}\%            \\
Proposed      - RBF SVC    & 5.50\%   & 30.55\%   &    3.40\%   \\ \hline 

\end{tabular}
\end{center}
\end{table}

\subsubsection{Robustness}

We now evaluate the robustness of the trained architectures using the robustness bechmark defined in Section~\ref{chap2:robustness_benchmark}. We report the results in Table~\ref{chap5:table-robustness}. We first report the error rate on the clean test set for which we observe a small drop in performance when using the proposed loss. However, this drop is compensated by a better accommodation to deviations of the inputs, as reported by the Mean Corruption Error (MCE) scores (see~\citep{hendrycks2019robustness}). Such a trade-off between accuracy and robustness has been discussed in~\citep{fawzi2018analysis}. For this experiment, we fixed $k$ to its maximum value, $d=200$, $\alpha=2$ and we used 10-NN as a classifier when using the graph smoothness loss.

\begin{table}[ht]
\centering
\caption{Robustness comparison on the 15 corruptions benchmarks from~\citep{hendrycks2019robustness} on the CIFAR-10 dataset. Table and caption extracted from~\citep{bontonou2019smoothness} ©2019, IEEE.}
\label{chap5:table-robustness}
\begin{tabular}{lc|cc}
\hline
Method               & Clean test error & MCE   & relative MCE \\         \Xhline{2\arrayrulewidth}
Cross-entropy              & {\bf 5.06}\%           & 100   & 100          \\
Proposed             & 5.60\%           & {\bf 95.28} & {\bf 90.33}  \\ \hline     
\end{tabular}
\end{table}

In the previous paragraphs, we have introduced a loss function that consists in minimizing the graph smoothness of label signals on similarity graphs built at the output of a deep learning architecture. We discussed several interesting properties of this loss when compared to using the classical cross-entropy. We have shown empirically that the proposed loss can reach similar performance as cross-entropy, while providing more degrees of freedom and increased robustness to deviations of the inputs.

\section{Controling DNN smoothness to improve robustness}\label{chap5:laplacian}

In the previous section we concentrated in the effects of changing the whole training objective of a DNN to the minimization of graph signal signal smoothness. Now, we go back to the ideas presented in Section~\ref{chap5:smoothness} and study the robustness effect of controling the evolution of DNN smoothness. The contents described in this section were made available in our archival contribution~\citep{lassance2018laplacian}, and were the starting stone for defining the robustness metric in~\citep{lassance2019robustness}.

As we have previously discussed, the ability of DNNs to achieve good generalization is closely related to the amount of data available. This strong dependency on data may lead to selection of biased features of the training dataset, resulting in a lack of robustness in classification performance. In this work robustness has been defined to be the ability of a classifier to infer correctly even when the inputs (or the parameters of the classifier) are subject to perturbations. These perturbations can be due to general factors --such as noise, quantization of inputs or parameters, and adversarial attacks-- as well as application specific ones --such as the use of a different camera lens, brightness exposure, or weather, in an imaging task, c.f. Section~\ref{chap2:robustness} for a more in-depth discussion.

In this section we propose to introduce a regularizer that penalizes large deformations of the class boundaries throughout the network architecture, independently of the types of perturbations that we expect to face when the system is deployed. It also enforces a large margin $r$ (i.e., mid-distance between examples of distinct classes) at each layer of the architecture. Note that we have already discussed some of the properties and results of this regularizer in Section~\ref{chap2:robustness}, but here we detail its methodology and provide experiments to support our claims. 

To understand the intuition behind our proposed regularizer, first recall that networks are typically trained with the objective of yielding zero error for the training set. If error on the training set is (approximately) zero then any two examples with different labels can be separated by the network, even if these examples are close to each other in the original domain. This means that the network function can create significant deformations of the space (i.e., small distances in the original domain map to larger distances in the final layers) and explains how an adversarial attack with small changes to the input can lead to class label changes. Our proposed regularizer penalizes big changes at the boundaries between classes. By forcing boundary deformations to evolve smoothly across the architecture, and at the same time by maintaining a large margin, the proposed regularizer therefore favors smooth variations. We argue that favoring smooth variations leads to better robustness, as per Definition~\ref{chap2:def_robustness}. We will empirically demonstrate this claim on classical vision datasets.

The proposed regularizer is based on a series of graphs, one for each layer of the DL architecture, where each graph captures the similarity between training examples given their intermediate representation at that layer. Our regularizer favors small changes, from one layer to the next, in the distances between pairs of examples in different classes. Note that the distance between any two examples at a certain layer depends on their positions in the original domain and the network function applied up to that layer. Thus,  constraints on the distances lead to constraints on the parameters of the network function. It achieves so by penalizing large changes in the smoothness (computed using the Laplacian quadratic form) of the class indicator vectors (viewed as ``graph signals''). As a result, the margin is kept almost constant across layers, and the deformations of space are controlled at the boundary regions, as illustrated in Figure~\ref{chap5:fig-examples_regularizer}. This regularizer draws heavily from the analysis derived in Section~\ref{chap5:smoothness}, and uses the robustness definition that was previously introduced in Section~\ref{chap2:robustness}.

\begin{figure}[ht]
    \centering
    \begin{adjustbox}{max width=\columnwidth}    
  \tikzsetnextfilename{chapter5/tikz/example_regularizer}%
  \input{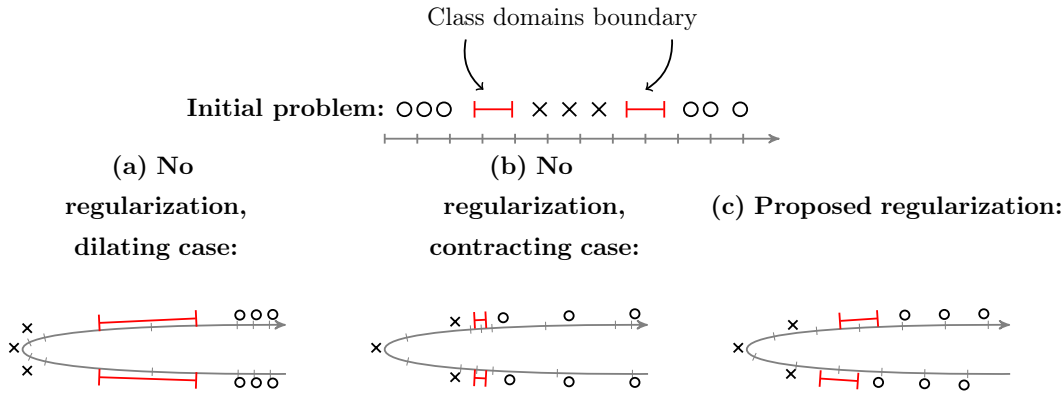}%

    \end{adjustbox}
    \caption{Illustration of the effect of our proposed regularizer. In this example, the goal is to classify circles and crosses (top).  Without use of regularizers (bottom left), the resulting embedding may considerably stretch the boundary regions. Consequently the risk is to obtain sharp transitions in the network function (that would correspond to a large value of $\alpha$ in Equation~(\ref{chap2:eq_robustness})). Another possible issue would be to push inputs closer to the boundary (bottom center), thus reducing the margin (that would correspond to a small value of $r$ in Equation~(\ref{chap2:eq_robustness})). Forcing small variations of smoothness of label signals (bottom right), we ensure the topology is not dramatically changed in the boundary regions. Figure and caption extracted from~\citep{lassance2018laplacian}.}
    \label{chap5:fig-examples_regularizer}
\end{figure}

Another example of prior work that is related to our regularizer is~\citep{svoboda2019peernets} where the authors exploit graph convolutional layers. This leads to smoothing the latent representations of the inferred images using similar images from the training set, in order to increase the robustness of the network. Note that this could be described as a denoising of the inference (test) image, using the training ones. However this differs from the proposed regularizer as our work focuses on generating a smooth network function and their work focuses on combining inputs in order to generate a smooth network function.

In the following of this section we first present a quick recall of our robustness definition and then introduce our regularizer that enforce this property using similarity graphs. We then demonstrate, using readily-available image classification datasets, the robustness of the proposed regularizer to the following common perturbations:
\begin{inlinelist}
\item noise~\citep{hendrycks2019robustness,mallat2016understanding}, for which we show reductions in relative error increase
\item adversarial attacks~\citep{szegedy2014intriguing,goodfellow2014adversarial}, for which the median defense radius ~\citep{moosavi2016deepfool} is increased by 50\% in comparison with the baseline and by 12\% in comparison with another method in the literature~\citep{cisse2017parseval}
\item implementation defects, which result in only approximately correct computations~\citep{hubara2017quantized}, for which we increase the median accuracy by 48\% relative to the baseline and 26\% relative to another method in the literature~\citep{cisse2017parseval}.
\end{inlinelist}

\subsection{Methodology}

In this subsection, we first recall our robustness definition and then present the proposed regularizer.

\subsubsection{Robustness definition}
\label{motivation2}

Recall that a deep neural network architecture can be entirely described by its associated ``network function''. In most cases, the network function $f$ receives an input $\vx$ and outputs a class-wise classification score $f(\vx)$. Typically this output is a vector with as many coordinates as the number of classes in the problem, where the highest valued coordinate is the decision of the network (i.e $\arg\max$ classifier). This function is constructed via the composition of multiple intermediate functions $f^\ell$:
\begin{equation}
    f = f^{L}\circ f^{L-1} \circ \dots \circ f^1,
\end{equation}
where each function $f^\ell$ is highly constrained, typically as the concatenation of a parameter-free nonlinear function with a parameterized linear function.

The function $f$ is typically obtained based on a very large number of parameters, which are tuned during the learning phase. During this phase, a loss function is minimized over a set of training examples using a variant of the stochastic gradient descent algorithm. At the end of the training process, each training example is associated through $f$ with a vector whose largest value is the actual class of that example, leading to an accuracy close to 100\% on the training set. Importantly, the loss function usually targets a specific margin in the output domain. For example, when using the classical cross-entropy loss, the loss function is minimized when the output of the training examples are the one-hot-bit vectors of their corresponding class~\citep{dlbook}, which corresponds to a margin in the output domain of about $\sqrt{2}/2$ for the $\normltwo$ norm.

We use the $\alpha$-robust concept introduced in Definition~\ref{chap2:def_robustness}. Recall that we can say that a network is $\alpha$-robust if $f$ is locally $\alpha$-Lipschitz within a radius $r$ of any point in domain $R$. Obviously, we would like to obtain a function $f$ that is $\alpha$-robust for any valid input. But since we only have access to training samples, we only enforce the property over the training set. 

This definition captures a compromise between margin (represented by $r$) and slope (represented by $\alpha$) of the network function. This is in contrast to other works~\citep{cisse2017parseval} where robustness is directly linked to the Lipschitz constant of the network function. The main motivation for introducing this weaker definition of robustness is that we do not want network functions to be contractive \emph{everywhere}. Indeed, if all mappings are contractive everywhere we cannot hope to separate some samples in different classes. A more in-depth discussion of this is available in Section~\ref{chap2:robustness}.

In what follows, we introduce regularizers that enforce this property using similarity graphs.

\subsubsection{Intermediate representation graphs}

First let us recall the concept of intermediate representation graphs and its notations. Consider a deep learning network architecture. Such a network is obtained by assembling layers of various types. A layer can be represented by a function $f^\ell: \vx^\ell\mapsto \vx^{\ell+1}$ where $\vx^\ell$ is the intermediate representation of the input at layer $\ell$. Assembling can be achieved in various ways: composition, concatenation, sums, etc so that we obtain a global function $f$ that associates an input tensor $\vx$ to an output tensor $\vy = f(\vx)$. In practice a batch of $b$ inputs $\mathcal{X}=\{\mathbf{x}_1,\dots,\mathbf{x}_b\}$ is processed concurrently.

Given a (meaningful) similarity measure $\text{sim}$ on tensors, we can define the similarity matrix of the intermediate representations at layer $\ell$ as: 
\begin{equation}
    \emAdjacency^\ell_{i,j} = \text{sim}(\vx_i^{\ell+1}, \vx_j^{\ell+1}), \forall 1\leq i,j \leq b,
\end{equation} where $\emAdjacency^{\ell}[i,j]$ denotes the element at line $i$ and column $j$ in $\adjmatrix^{\ell}$. In our experiments we mostly focus on the use of cosine similarity, which is widely used in computer vision. It is often the case that the output $\vx^{\ell+1}$ is obtained right after using a ReLU function, that forces all its values to be nonnegative, so that all values in $\adjmatrix^{\ell}$ are also nonnegative. We then use $\adjmatrix^{\ell}$ to define a weighted graph $\gG^\ell = \langle \sV, \sE^\ell\rangle$, where $\sV = \{1,\dots,b\}$ is the set of vertices and $\sE$ the set of edges defined with $\adjmatrix$.

\subsubsection{Smoothness of label signals}

Given a weighted graph: $\gG^\ell$, the Laplacian of $\gG^\ell$ is the matrix: 
\begin{equation}
\mL^\ell = \mD^\ell - \adjmatrix^\ell \;.
\end{equation}
Consider a graph signal $\vs \in \R^b$, we define $\hat{\vs}$ the Graph Fourier Transform (GFT) of $\vs$ on $\gG^\ell$ as~\citep{shuman2013emerging}:  
\begin{equation}
\hat{\vs} = \mF^\top \vs. 
\end{equation}
Assume the order of the eigenvectors is chosen so that the corresponding eigenvalues are in ascending order. If only the first few entries of $\hat{\vs}$ are nonzero then $\vs$ is said to be low frequency (i.e., smooth) on the graph. In the extreme case where only the first entry of $\hat{\mathbf{s}}$ is nonzero we have that $\mathbf{s}$ is constant (maximum smoothness). Recall that the smoothness $\sigma^\ell(\vs)$ of a signal $\vs$ can be measured using the Laplacian quadratic form:
\begin{equation}
    \sigma^\ell(\vs) = \vs^\top \mL^\ell \vs = \sum_{i,j = 1}^{b}{\emAdjacency^\ell_{i,j}(\vs_i - \vs_j)^2}\;.
\end{equation}

In this section, we are particularly interested in smoothness of the label signals. Label signals are also called binary label indicator vector, as we have previously defined in Definition~\ref{chap2:label_indicator_vector}. Recall that when we are dealing with binary signals, the smoothness of the signal is given by the sum of similarities between examples in distinct classes (since $\vs_i - \vs_j$ is zero when $i$ and $j$ have the same label). Thus, a total smoothness of 0 means that all examples in distinct classes have 0 similarity. 

Next we introduce a regularizer that limits how much $\sigma^\ell$ can vary from one layer to the next, thus leading to a network that is more inline with Definition~\ref{chap2:def_robustness}. This will be shown later to improve robustness in Section~\ref{experiments}.

\subsubsection{Proposed regularizer}

\paragraph{Definition: }

We propose to measure the deformation induced by a given layer $\ell$ by computing the difference between label signal smoothness before and after the layer for all labels:
\begin{equation}
    \delta_\sigma^\ell = \sum_c{\left|\sigma^\ell(\vs_c) - \sigma^{\ell-1}(\vs_c)\right|}.
\end{equation}

These quantities are used to regularize modifications made by each of the layers during the learning process. The pseudo-code of Algorithm~\ref{chap5:loss-algo} describes how we use the proposed regularizer to compute the loss.

\paragraph{Illustrative example:}

In Figure~\ref{chap5:fig-examples_regularizer} we depicted a toy illustrative example to motivate the proposed regularizer. We consider here a one-dimensional two-class problem. To linearly separate circles and crosses, it is necessary to group all circles. Without regularization the resulting embedding is likely to either considerably increase the distance between examples in different classes (case (a)), thus producing sharp transitions in the network function, or to reduce the margin (case (b)). In contrast, by penalizing large variations of the smoothness of label signals (case (c)), the average distance between examples in different classes must be preserved in the embedding domain, resulting in a more precise control of distances within the boundary region. 

\begin{remark}
Since we only consider label signals, we solely depend on the similarities between examples of distinct classes. As such, the regularizer only focuses on the boundary, and does not vary if the distance between examples of the same label grows or shrinks.
\end{remark}

\begin{remark}
Compared with~\citep{cisse2017parseval}, there are key differences that characterize the proposed regularizer:
\begin{enumerate}
    \item Only pairwise distances between examples are taken into account. This has the effect of controlling space deformations only in the directions of training examples;
    \item The network is forced to maintain a minimum margin by keeping the smoothness small at each layer of the architecture, thus controlling both contraction and dilatation of space at the boundary. This is illustrated in Figure~\ref{chap5:fig-examples_regularizer}, where~\citep{cisse2017parseval} is represented by b) and our method by c);
    \item The proposed criterion is an average (sum) over all distances, rather than a stricter criterion (e.g. maintaining a small Lipschitz constant), which would force each pair of vectors $(\vx_i, \vx_j)$ to obey the constraint.
\end{enumerate}
\end{remark}

In summary, by enforcing small variations of smoothness across the layers of the network, the proposed regularizer maintains a large enough $r$ so that Equation~(\ref{chap2:eq_robustness}) can hold, while also controlling dilatation. Combining it with Parseval~\citep{cisse2017parseval} would allow for a better control of the $\alpha$ parameter in the other directions of the input space.

\begin{algorithm}
\caption{Loss function of the regularized network}
\label{chap5:loss-algo}
\textbf{Inputs:} \\
$\vx$: list of all the representations of the network.\\
$\text{ReLUs}$, the list containing the positions of all the ReLU activations on $f$.\\
$\vy$, the output of the network \\
$\vs$, the label signal of the batch, i.e.,  the ground truth labels of the examples of the batch \\
$m$, the power of the Laplacian for which we wish to compute the smoothness; \\
$\gamma$, the scaling coefficient of the regularizer loss.

\begin{algorithmic}

\Procedure{Loss}{$\vx, \text{ReLUs},\vy,\vs,m,\gamma$}

\For{$\ell \in \text{ReLUs}$}
\State $\sigma^\ell \gets$ Smoothness$(\vx^\ell,\vs,m)$
\EndFor

\State        $\Delta \gets \frac{\sum_{\ell \in \text{ReLUs}} |\sigma^\ell - \sigma^{\ell-1}|}{|| \text{ReLUs} ||-1}$ 
\State        \Return CategoricalCrossEntropy$(\vs,\vy) + \gamma  \Delta$  
\EndProcedure

\Procedure{Smoothness}{$\vx^\ell,\vs,m$}

\State        $\adjmatrix^\ell \gets$ Pairwise similarity of $\vx^\ell$ (we use cosine similarity in our work)
\State        $\mD^\ell \gets$ Diagonal degree matrix of $\adjmatrix^\ell$
\State        $\mL^\ell \gets  \mathbf{D}^\ell-\mathbf{M}^\ell$ 
\State        $\mathbf{\sigma}^\ell \gets$ Trace$(\vs^\top (L^\ell)^m \vs)$
\State        \Return $\mathbf{\sigma}^\ell$

\EndProcedure

\end{algorithmic}
\end{algorithm}

\subsection{Experiments}
\label{experiments}

In the following subsections we evaluate the proposed method using various tests. We use the well known CIFAR-10 dataset~\citep{krizhevsky2009learning} as a first benchmark and we demonstrate that our proposed regularizer can improve robustness as defined in Section~\ref{chap2:robustness}. 

In summary, in Section~\ref{chap5:subsection_alphar} we first verify that the proposed regularizer favors Definition~\ref{chap2:def_robustness}. We then show in Section~\ref{chap5:no_data_augmentation} that by using the proposed regularizer we are able to increase robustness for random perturbations and weak adversarial attacks. In Section~\ref{chap5:subsection_advdataaug}, we challenge our method on more competitive benchmarks. Finally, in Section~\ref{chap5:other_datasets} we extend the analysis to CIFAR-100~\citep{krizhevsky2009learning} and Imagenet32x32~\citep{chrabaszcz2017downsampled} to validate the generality of the method. These experiments demonstrate that DNNs trained with the proposed regularizer lead to improved robustness.

To measure accuracy, we average over 10 runs each time, unless mentioned otherwise. In all reports, $P$ stands for Parseval~\citep{cisse2017parseval} trained networks, $R$ for networks trained with the proposed regularizer and $V$ for vanilla (i.e. baseline) networks. The corresponding code is available at \url{https://github.com/cadurosar/laplacian_networks}.

\subsubsection{Robustness of trained architectures}
\label{chap5:subsection_alphar}

First we verify that the proposed regularizer improves robustness as defined in Section~\ref{chap2:robustness}. For various values of $r$, we estimate $\alpha_{\min}(r) = \arg\min_{\alpha}{\{f\in Robust_{\alpha}(r)\}}$. We use 1000 training examples and generate 100 uniform noises to estimate $\alpha_{\min}(\cdot)$. Results are shown in Figure~\ref{chap5:fig-ralpha}. We observe that networks trained with the proposed regularizer allow for smaller $\alpha$ values when the radius $r$ increases. The Parseval method achieves better (smaller) Lipschitz constant than Vanilla, as suggested by the large values of $r$. However, we observe that $\alpha_{\min}$ grows fast when using Parseval, suggesting that sharp transitions are allowed in the vicinity of trained examples.

\begin{figure}[ht]
 \begin{center}
  \tikzsetnextfilename{chapter5/tikz/ralpha}%
  \input{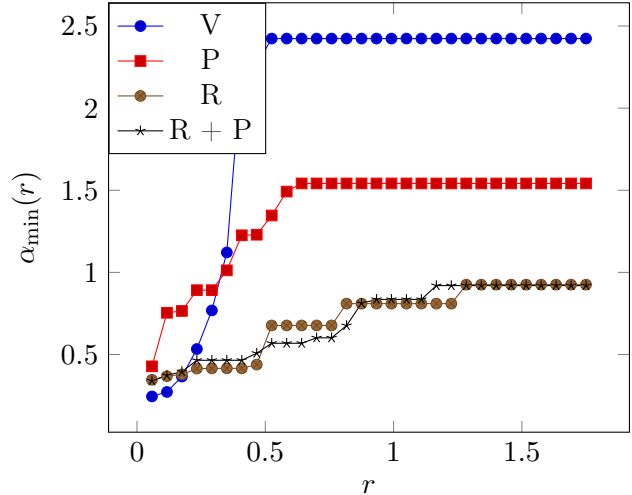}%

    \caption{Estimations of $\alpha_{\min}(r)$ obtained for different radius $r$ over training examples. The proposed regularizer allows for smaller $\alpha$ values when $r$ increases. Figure and caption extracted from~\citep{lassance2018laplacian}.}    \label{chap5:fig-ralpha}
    \end{center}
\end{figure}

\subsubsection{Experiments on perturbations and adversarial attacks}
\label{chap5:no_data_augmentation}
In this subsection we verify the ability of the proposed regularizer to increase robustness, while retaining acceptable accuracy on the clean test set, on the CIFAR-10 dataset without any type of data augmentation.

\paragraph{Clean test set}

Before checking the robustness of the network, we first test the performance on clean examples. In the second column of Table~\ref{chap5:tab-corruption}, we show the baseline accuracy of the models on the clean CIFAR-10 test set (no perturbation is added at this point). These experiments agree with the claim from~\citep{cisse2017parseval} where the authors show that they are able to increase the performance of the network on the clean test set. We observe that the proposed method leads to a minor decrease of performance on this test. However, we see in the following experiments that this is compensated by an increased robustness to perturbations. 
Such a trade-off between robustness and accuracy has already been discussed in the literature~\citep{fawzi2018analysis}.

\begin{table}[ht]
\begin{center}
\caption{Network mean Relative Error Inflation (mREI) under different types of perturbation. Bottom line represents the corresponding median Cosine Distance (mCD) (at the highest perturbation severity) between corrupted and clean images. Table and caption extracted from~\citep{lassance2018laplacian}.}\label{chap5:tab-corruption}
\begin{adjustbox}{max width=\linewidth}

{\setlength\tabcolsep{1.3pt}%
\begin{tabular}{@{}l c c| c c c | c c c c | c c c  c | c c c c@{}}
\multicolumn{3}{c}{} & \multicolumn{3}{c}{Noise} & \multicolumn{4}{c}{Blur} & \multicolumn{4}{c}{Weather} & \multicolumn{4}{c}{Digital} \\ \hline 
\scriptsize{Network} & \scriptsize{Clean set}  & \multicolumn{1}{c|}{\,\textbf{\scriptsize{$mREI$}}\,} & \scriptsize{Gauss.}
    & \scriptsize{Shot} & \scriptsize{Impulse} & \scriptsize{Defocus} & \scriptsize{Glass} & \scriptsize{Motion} & \scriptsize{Zoom} & \scriptsize{Snow} & \scriptsize{Frost} & \scriptsize{Fog} & \scriptsize{Bright} & \scriptsize{Contrast} & \scriptsize{Elastic} & \scriptsize{Pixel} & \scriptsize{JPEG}\\ \hline 
Vanilla (V)  & 11.9\% & 0.00  & 0.00  & 0.00  & 0.00 & 0.00  & 0.00  & 0.00  & 0.00  & 0.00  & 0.00  & 0.00  & 0.00 & 0.00 & 0.00  & 0.00  & 0.00  \\
Parseval (P)  & 10.3\% & 0.29  & 0.71  & 0.48  & 0.57 & 0.10  & 1.01  & 0.15  & 0.11  & 0.16  & 0.17  & -0.02 & 0.04 & 0.13 & 0.14  & 0.25  & 0.28  \\
Regularizer (R)  & 13.2\% & -0.29 & -1.12 & -0.86 & 0.10 & -0.03 & -0.65 & -0.09 & -0.19 & -0.30 & -0.61 & 0.17  & 0.01 & 0.40 & -0.15 & -0.50 & -0.50 \\
P and R & 12.8\% & -0.35 & -1.33 & -1.00 & 0.05 & -0.09 & -0.75 & -0.18 & -0.31 & -0.41 & -0.67 & 0.13  & 0.04 & 0.48 & -0.17 & -0.47 & -0.53 \\ \hline

\multicolumn{3}{c|}{mCD $10^{-3}$} & 18 & 16 & 37 & 5 & 24 & 15 & 17 & 15 & 20 & 51  & 14 & 57 & 14 & 6 & 3 \\
\end{tabular}}
\end{adjustbox}
\end{center}
\end{table}

\paragraph{Perturbation robustness}

In order to assess the effectiveness of the various methods when subject to perturbations,  we use the benchmark proposed in~\citep{hendrycks2019robustness}, and previously described in Section~\ref{chap2:robustness_benchmark}. The benchmark consists of 15 different perturbations, with 5 levels of severity each (note that they are referred to as ``corruptions'' in~\citep{hendrycks2019robustness}). Perturbations test the robustness of the network to noise when compared to its clean test set performance.

In more details, we are interested in the mean Relative Error Inflation (mREI). To define it, consider $E^{\texttt{per},\texttt{sev}}_{\texttt{net}}$ the error rate of a network \texttt{net} (\texttt{V},\texttt{P},\texttt{R} or \texttt{P+R}), under perturbation type \texttt{per} and severity \texttt{sev}. Denote $E_{\texttt{net}}$ the error rate of the network $\texttt{net}$ on the clean set. We first define Error Inflation (EI) as: $$EI_\texttt{net}^{\texttt{per},\texttt{sev}} = \frac{E^{\texttt{per},\texttt{sev}}_\texttt{net}}{E_\texttt{net}}.$$ Then the Relative Error Inflation REI is defined as: $$REI_\texttt{net}^{\texttt{per},\texttt{sev}}= EI_\texttt{net}^{\texttt{per},\texttt{sev}} - EI_\texttt{V}^{\texttt{per},\texttt{sev}}.$$ Finally, mREI is obtained by averaging over all severities. Note that this is different from the traditional MCE metric, but we believe that this metric is more inline with our objective here. 

The results are described in~\ref{chap5:tab-corruption} for the CIFAR-10 dataset. The raw error rates under each type of perturbations can be found in the original paper. We observe that Parseval alone is not able to help with the mREI, despite reducing the clean set error. On the other hand, the proposed regularizer and its combination with Parseval training decreases the clean set accuracy but increases the relative performance under perturbations by a significant amount.

This experiment supports the fact that the proposed regularizer can significantly improve robustness to most types of perturbations introduced in~\citep{hendrycks2019robustness}. It is worth pointing out that this finding does not hold for Impulse Noise, Fog, and Contrast. Looking more into details, we observe that Impulse noise shifts some values on the image to either its maximum possible value or the minimum possible value, while Fog and Contrast perform a re-normalization of the image. In those cases perturbations have the effect of creating noisy inputs that are far away (in terms of the cosine distance) from the original images, as supported by the last line of the table. This is in contrast to the other types of perturbations in the experiment. Because they can be far away, these perturbations do not fulfill Definition~\ref{chap2:def_robustness}, where there is a maximum radius $r$ for which robustness is enforced around the examples. In other words, our robustness definition is focusing on small deviations/distances as those are more likely to characterize noise (i.e., we focus on distances that are too small to change the class of the image). 

\paragraph{Adversarial Robustness}

We next evaluate robustness to adversarial inputs, which are specifically built to fool the network function. Such adversarial inputs can be generated and evaluated in multiple ways. Here we implement three approaches: \begin{inlinelist}
\item a mean case of adversarial noise, where the adversary can only use one forward and one backward pass to generate the perturbations \item a worst case scenario, where the adversary can use multiple forward and backward passes to try to find the smallest perturbation that will fool the network \item a compromise between the mean case and the worst case, where the adversary can do a predefined number of forward and backward passes with a perturbation threshold limit.
\end{inlinelist} 

For the first approach, we add the scaled gradient sign (FGSM attack) to the input~\citep{kurakin2017adversarial}, so that we obtain a target SNR of 33. This is inline with previous works~\citep{cisse2017parseval}. Obtained results are introduced in the left and center plots of Figure~\ref{chap5:fig-advNoise1}. In the left plot the noise is added after normalizing the input, whereas on the middle plot it is added before normalizing it. As with the perturbation tests, a combination of the Parseval method and our proposed approach yields the most robust architecture.

\begin{figure}[t]
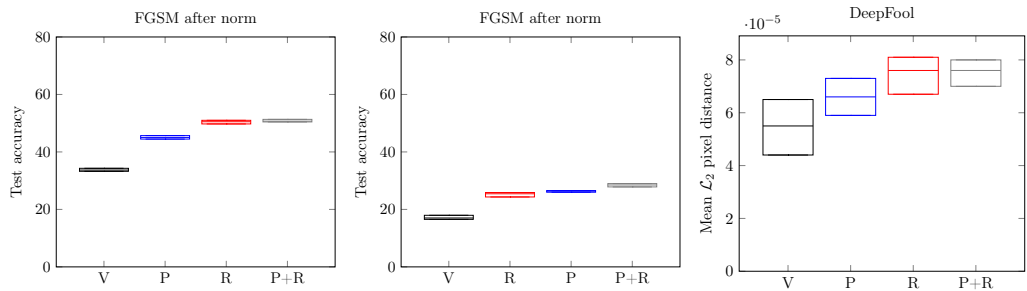

    \begin{center}
        \begin{subfigure}[ht]{0.3\linewidth}
            \centering
            \begin{adjustbox}{max width=\linewidth}
  \tikzsetnextfilename{chapter5/tikz/adv_noise1_FGSMAfter}%
  \input{chapter5/tikz/adv_noise1_FGSMAfter.tex}%

            \end{adjustbox}                
        \end{subfigure}
        \begin{subfigure}[ht]{0.3\linewidth}
            \centering
            \begin{adjustbox}{max width=\linewidth}
  \tikzsetnextfilename{chapter5/tikz/adv_noise1_FGSMBefore}%
  \input{chapter5/tikz/adv_noise1_FGSMBefore.tex}%

            \end{adjustbox}                
        \end{subfigure}
        \begin{subfigure}[ht]{0.3\linewidth}
            \centering
            \begin{adjustbox}{max width=\linewidth}
  \tikzsetnextfilename{chapter5/tikz/adv_noise1_DeepFool}%
  \input{chapter5/tikz/adv_noise1_DeepFool.tex}%

            \end{adjustbox}                
        \end{subfigure}
        \caption{Robustness against an adversary measured by the test set accuracy under FGSM attack in the left and center plots and by the mean $\mathcal{L}_2$ pixel distance needed to fool the network using DeepFool on the right plot. Figure and caption extracted from~\citep{lassance2018laplacian}.}\label{chap5:fig-advNoise1}
    \end{center}
\end{figure}

In regards to the second approach, where a worst case scenario is considered, we use the Foolbox~\citep{rauber2017foolbox} implementation of DeepFool~\citep{moosavi2016deepfool}. Due to time constraints we sample only $\frac{1}{10}$ of the test set images for this test. The conclusions we can draw are similar (right plot of Figure~\ref{chap5:fig-advNoise1}) to those obtained for the first adversarial attack approach.
Finally, for the third approach we use the PGD (Projected Gradient Descent) attack introduced in~\citep{madry2018towards}. PGD is an iterative version of FGSM, which loops for a maximum number of $it$ iterations. For each iteration it moves by a distance of $step$ in the direction of the gradient, provided it does not move away from the original image by a distance greater than $\epsilon$.
Our experiments, described in Table~\ref{chap5:PGDAttack-weak}, show that the proposed regularizer increases robustness against a PGD attack, for an epsilon corresponding to an SNR of about 33 ($it=20,step=0.002,\epsilon=0.01$).

\begin{table}[ht]
\begin{center}
\caption{Median test set accuracy on the CIFAR-10 dataset against the PGD attack. Table and caption extracted from~\citep{lassance2018laplacian}.}
\begin{tabular}{l|cc}
\hline
Model          & PGD Accuracy \\ \Xhline{2\arrayrulewidth}
V              & 1.18\%  \\                                                     
P              & 1.72\%      \\                                                
R              & 5.2\% \\
P+R             & \textbf{5.6\%}      \\ 
\Xhline{2\arrayrulewidth}

\end{tabular}
\label{chap5:PGDAttack-weak}
\end{center}
\end{table}

A common pitfall in evaluating robustness to adversarial attacks comes from the fact the gradient of the architecture can be masked due to the introduced method. As a consequence, generated attacks become weaker compared to those on the vanilla architecture. So, to further verify that the obtained results are not only due to gradient masking, we perform tests with black box FGSM, where the target attacked network is not the same as the source of the adversarial noise. This way, all networks are tested against the same attacks.

For this test we continue to use an SNR of about 33 with the FGSM method. We choose the network with the best performance for each of the tested methods. The results are depicted in Table~\ref{black-box}. In our experiments, we found that the combination of our method with Parseval is the most robust to noise coming from other sources. This demonstrates that the improvements are not caused by gradient masking, but are caused by the increased robustness of the proposed method and Parseval's. Interestingly, the noise created by both Parseval and our method did not challenge the other methods as well as the one created by Vanilla, justifying a posteriori the interest of this experiment.

\begin{table}[ht]
\begin{center}
\caption{Comparison of CIFAR-10 test set accuracy under the black box FGSM attack. The most robust target for a given source is bolded, while the strongest source for a target is in italic. Table and caption extracted from~\citep{lassance2018laplacian}.}
\begin{tabular}{l|c|c|c|c}
\hline
\multirow{2}{*}{Target} & \multicolumn{4}{c}{Source}                                                \\ 
                        & V              & P              & R                       & P+R            \\ \hline
V             & X              & \textit{60.74} & 61.49                   & 72.51          \\ \hline
P            & \textit{57.82} & X              & 68.21                   & \textbf{73.87} \\ \hline
R         & \textit{69.72} & 74.96          & X                       & 73.56          \\ \hline
P+R                     & \textbf{75.35} & \textbf{76.11} & \textit{\textbf{70.22}} & X              \\ \Xhline{2\arrayrulewidth}

\end{tabular}
\label{black-box}
\end{center}
\end{table}

\paragraph{Robustness to parameter and activation noises}

In a third series of experiments we aim at evaluating the robustness of the architecture to noise on parameters and activations. We consider two types of noises: \begin{inlinelist} \item erasures of the memory (dropout) \item quantization of the weights~\citep{hubara2017quantized}.\end{inlinelist} 

In the dropout case, we compute the test set accuracy when the network has a probability of either $25\%$ or $40\%$ of dropping an intermediate representation value after each block of computation in the architecture. We average over a run of $40$ experiments. Results are depicted in the left and center plots of Figure~\ref{chap5:dropout_quantized}. It is interesting to note that the Parseval trained functions seem to collapse as soon as we reach $40\%$ probability of dropout, providing an average accuracy smaller than the vanilla networks. In contrast, the proposed method is the most robust to these perturbations.

\begin{figure}[t]
    \begin{center}
        \begin{adjustbox}{max width=\linewidth}
  \tikzsetnextfilename{chapter5/tikz/dropout_quantized}%
  \input{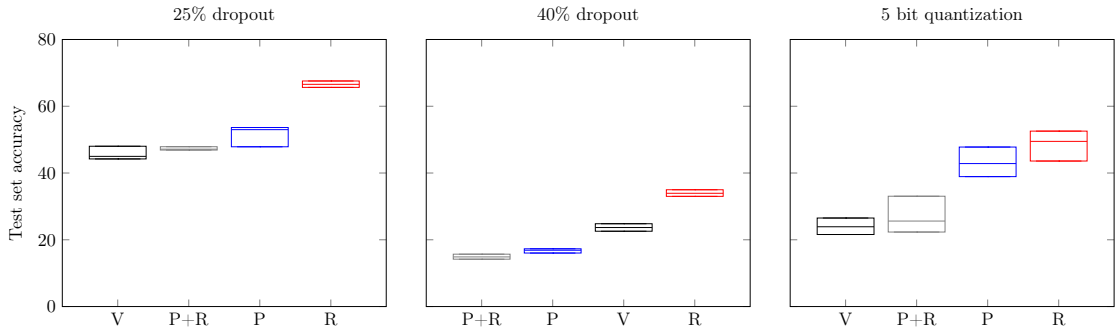}%

        \end{adjustbox}        
        \caption{CIFAR-10 test set accuracy under different types of implementation related noise. Figure and caption extracted from~\citep{lassance2018laplacian}.}\label{chap5:dropout_quantized}
    \end{center}
\end{figure}

For the quantization of the weights, we aim at compressing the network size in memory by a factor of 6. We therefore quantize the weights using 5 bits (instead of 32) and re-evaluate the test set accuracy. The right plot of Figure~\ref{chap5:dropout_quantized} shows that the proposed method is providing a better robustness to this perturbation than the tested counterparts.

Overall, these experiments confirm previous ones in the conclusion that the proposed regularizer obtains the best robustness compared to Parseval and Vanilla architectures.

\subsubsection{Experiments on challenging benchmarks}
\label{chap5:subsection_advdataaug}

In this subsection we verify the ability of the proposed regularizer to increase robustness on the CIFAR-10 dataset while being combined with recent techniques of adversarial data augmentation. This is important as those methods are seen as the state of the art for adversarial robustness. We recall that adversarial data augmentation consists in augmenting the training set during the training stage by using the same kind of attacks as those described in the last subsection. We refer to techniques using adversarial data augmentation using the letter A.

\paragraph{Tests with FGSM adversarial data augmentation}

We first perform experiments with adversarial data augmentation as suggested in~\citep{kurakin2017adversarial}. To be more precise we use the method they advise which is called ``step1.1'' using $\epsilon = \frac{8}{255}$. 

A first test consists in measuring the accuracy of these methods when the test set inputs are modified with additive Gaussian noise with various SNRs. As expected, we observe in Figure~\ref{chap5:fig-gaussian-adv} that training with adversarial examples helps in this case, as it adds more variation to the training set. Yet it reduces the accuracy on the clean set (left plot). Note that combining our method with adversarial training results in the best median accuracy.

\begin{figure}[ht]
    \begin{center}
        \begin{adjustbox}{max width=\linewidth}
  \tikzsetnextfilename{chapter5/tikz/gaussian-adv}%
  \input{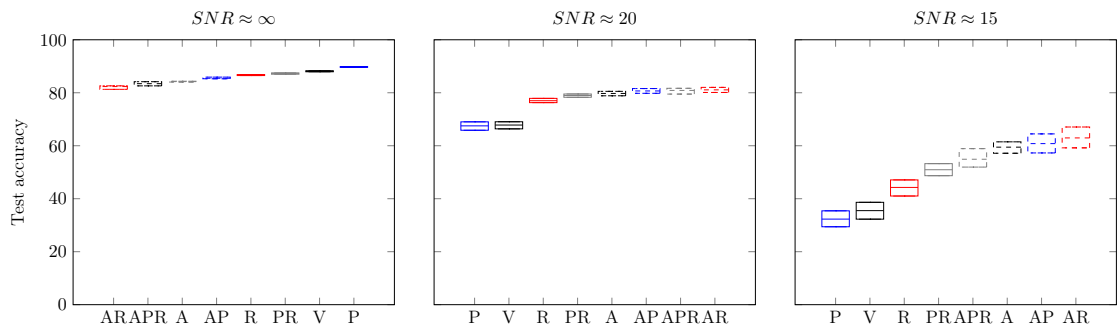}%

        \end{adjustbox}
        \caption{Test set accuracy under Gaussian noise with varying Signal-to-Noise Ratio (SNR). Figure and caption extracted from~\citep{lassance2018laplacian}.}\label{chap5:fig-gaussian-adv}
    \end{center}
\end{figure}

About robustness to adversarial attacks, the obtained results are depicted in Figure~\ref{chap5:fig-adversarialNoise-adv}. We observe that adding FGSM adversarial training does not generalize well to other types of attack (which is readily seen in the literature~\citep{madry2018towards}). Overall, the models using the proposed regularizer are the most robust again.

\begin{figure}[ht]
    \begin{center}
        \begin{adjustbox}{max width=\linewidth}
  \tikzsetnextfilename{chapter5/tikz/adversarialnoise-adv}%
  \input{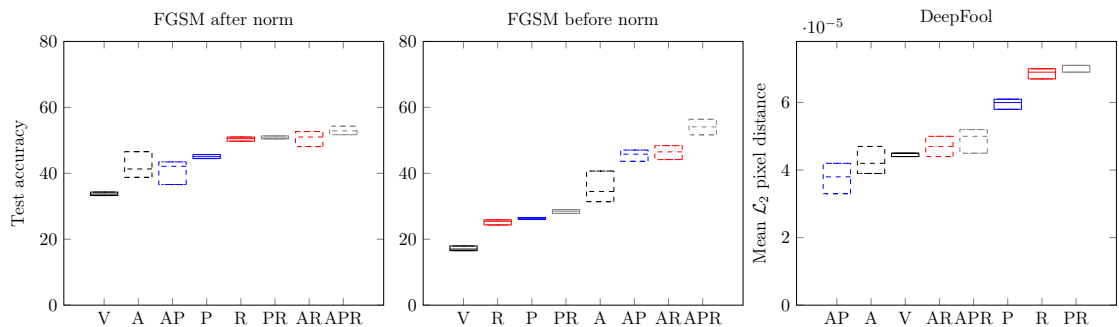}%

        \end{adjustbox}
        \caption{Robustness against an adversary measured by the test set accuracy under FGSM attack in the left and center plots and by the mean $\mathcal{L}_2$ pixel distance needed to fool the network using DeepFool on the right plot. Figure and caption extracted from~\citep{lassance2018laplacian}.}\label{chap5:fig-adversarialNoise-adv}
    \end{center}
\end{figure}

Finally, when considering implementation related perturbations, the results depicted in Figure~\ref{chap5:fig-dropout-quantized-adv} are consistent with the ones from the previous section, in which is shown that the proposed regularizer helps improving robustness to this type of noise.

\begin{figure}[ht]
    \begin{center}
        \begin{adjustbox}{max width=\linewidth}
  \tikzsetnextfilename{chapter5/tikz/dropout_quantized_adv}%
  \input{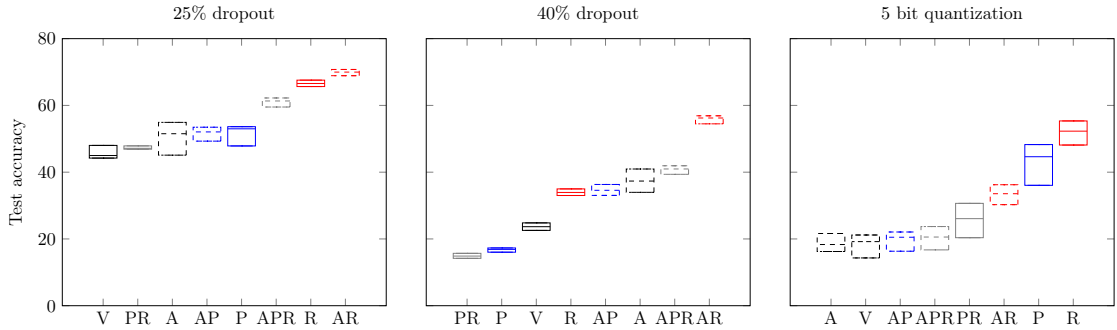}%

        \end{adjustbox}
    \caption{Test set accuracy under different types of implementation related noise. Figure and caption extracted from~\citep{lassance2018laplacian}.}
    \label{chap5:fig-dropout-quantized-adv}
    \end{center}
\end{figure}

In summary, even when adding adversarial training, the proposed regularizer is either the most robust in median, or capable of improving the robustness when combined with the other methods.

\paragraph{Tests with PGD adversarial data augmentation}

Most of our adversarial tests are performed with FGSM because of its simplicity and speed, even though it has already been shown~(e.g: ~\citep{madry2018towards}) that FGSM is weak as an attack and as a defense mechanism. Despite the fact we do not only target adversarial defense, we further stress the ability of the proposed regularizer to improve it and to combine with other methods. To this end we perform experiments against the PGD (Projected Gradient Descent) attack.

As the proposed regularizer can be combined with FGSM defense, it is natural to also test it alongside PGD training. We use the parameters advised in~\citep{madry2018towards}: 7 iterations with $step = 2/255$, and $\epsilon=8/255$. The results depicted in Table~\ref{chap5:PGDtraining} show that using our regularizer increases robustness of networks trained with PGD. Note that Dropout and Gaussian Noise were applied ten times to each of the networks and the results are displayed as the mean test set accuracy under these perturbations. A rate of 40\% was used for dropout. The PGD attack uses the following parameters: $it=20,step=\frac{2}{255},\epsilon=\frac{8}{255}\;.$

\begin{table}[ht]
\begin{center}
\caption{Test set accuracy results on the CIFAR-10 dataset with PGD training. Table and caption extracted from~\citep{lassance2018laplacian}.}
\begin{tabular}{cc|ccc}
\hline
 & Clean & Gaussian & PGD & Dropout                                            \\ \Xhline{2\arrayrulewidth}
A               & 76.39\% & 71.25\%          & 32.78\%           & 35.20\%           \\ 
A + R & 76.36\% & \textbf{72.26\%}          & \textbf{33.72}\%  & \textbf{55.63\%}          \\ \Xhline{2\arrayrulewidth}
\end{tabular}
\label{chap5:PGDtraining}
\end{center}
\end{table}

\subsubsection{Experiments with other datasets}
\label{chap5:other_datasets}

In this final subsection, we test the generality of the method using the CIFAR-100 and ImageNet32x32 datasets, with a subset of the perturbations used for CIFAR-10. Gaussian Noise is applied ten times to each of the networks for a total of 30 different runs. A SNR of $33$ is used for FGSM and $15$ for Gaussian Noise. Images are normalized in the same way as the experiments with CIFAR-10. Standard data augmentation is used for CIFAR-100.  

Results on CIFAR-100 are shown in Table~\ref{chap5:table_cifar_100_wide} as the mean over three different initializations. We observe that as it was the case on CIFAR-10, the proposed method and the combination of the methods is the most robust on these test cases.

\begin{table}[ht]
\begin{center}
\caption{Test set accuracy results on the CIFAR-100 dataset. Table and caption extracted from~\citep{lassance2018laplacian}.}
\begin{tabular}{cc|cc}
\hline
Model & Clean Set       & Gaussian Noise  & FGSM                                             \\ \Xhline{2\arrayrulewidth}
Vanilla (V)   & 78.7\%   & 12.6\%          & 20.5\%           \\ 
Parseval (P)   & \textbf{80.1\%}            & 14.8\%          & 22.0\%           \\ 
Regularizer (R)   & 79.4\%            & 15.9\%          & 23.0\%           \\ 
P+R  & 79.5\%            & \textbf{19.1\%} & \textbf{24.4\%}  \\ \hline                                            
\end{tabular}
\label{chap5:table_cifar_100_wide}
\end{center}
\end{table}

We then use Imagenet32x32, a downscaled version of Imagenet~\citep{chrabaszcz2017downsampled} which can be used as an alternative to CIFAR-10 while maintaining a similar computational budget~\citep{chrabaszcz2017downsampled}. We use the same network and training hyperparameters of the original paper. Gaussian Noise and Dropout are applied 40 times to each of the networks. Gaussian noise is applied with SNR=33 whereas Dropout is applied with 15\%.

Results are shown in Table~\ref{chap5:imagenet32x32}. We observe that as it was the case on CIFAR-10 and CIFAR-100, the proposed method provides more robustness in all of these test cases. Note that we had trouble fine-tuning the $\beta$ parameter for the Parseval criterion, explaining the poor performance of Parseval and its combination with our proposed regularizer.

\begin{table}[ht]
\begin{center}
\caption{Test set accuracy results on the Imagenet32x32 dataset. Table and caption extracted from~\citep{lassance2018laplacian}.}
\begin{tabular}{cc|ccc}
\hline 
Model & Clean & Gaussian Noise  &  Dropout    \\ \Xhline{2\arrayrulewidth}
Vanilla (V)     & 52.1\%    & 36.8\%          & 2.3\% \\ 
Parseval (P)     & 48.1\%    & 34.10\%                & 3.71\%\\ 
Regularizer (R)     & \textbf{52.4\%}    & \textbf{37.4\%}          & \textbf{7.0\%}      \\ 
P+R    & 43.80\%   & 29.87\%                &  5.0\%\\ \hline                                            
\end{tabular}
\label{chap5:imagenet32x32}
\end{center}
\end{table}

In this Section we have introduced a definition of robustness alongside an associated regularizer. The former takes into account both small variations around the training set examples and the margin. The latter enforces small variations of the smoothness of label signals on similarity graphs obtained at intermediate layers of a deep learning network architecture.
We have empirically shown with our tests that the proposed regularizer can lead to improved robustness in various conditions compared to existing counterparts. We also demonstrated that combining the proposed regularizer with existing methods can result in even better robustness for some conditions. Future work includes a more systematic study of the effectiveness of the method with regards to other datasets, models and perturbations. Recent works shown adversarial noise is partially transferable between models and dataset and therefore we are confident about the generality of the method in terms of models and datasets.

\section{Using intermediate representation graphs to compress DNNs}\label{chap5:gkd}

In the previous sections we have shown the interest of using the concepts of GSP in order to analyze and improve DNNs. In this section we present an introductory work that specializes a knowledge distillation framework, called Relational Knowledge Distillation (RKD), to the graph domain. We call this new framework Graph Knowledge Distillation (GKD). In other words, we present a technique that allows us to compress DNNs by using graphs to represent the intermediate spaces of neural networks. We presented this work in a recent contribution~\citep{lassance2020deep} and note two works from the same period that proposed similar ideas~\citep{liu2019knowledge,lee2019graph}. 

As we have previously discussed in Section~\ref{chap2:compression}, the success of DNNs is heavily linked to the availability of large amounts of data and special purpose hardware, e.g., graphics processing units (GPUs) allowing significant levels of parallelism. However, this need for a significant amount of computation is a limitation in the context of embedded systems, where energy and memory are constrained. As a result, numerous recent works have focused on compressing deep learning architectures, some of them using the distillation technique. 

In a quick recall from Section~\ref{chap2:distillation}, one approach to distillation is performing Individual Knowledge Distillation (IKD)~\citep{ba2014deep,hinton2014distillation,romero2015fitnets}. Initial IKD techniques~\citep{hinton2014distillation} focused on using the output representations of the teacher as a target for the smaller architecture, while more recent works have reached better accuracy by performing this process layer-wise, or block-wise for complex architectures~\citep{romero2015fitnets,koratana2019lit}. However, IKD can be directly performed layer-wise only if the student and the teacher have inner data representations with the same dimension~\citep{koratana2019lit}, or if transformations are added~\citep{romero2015fitnets}.

In an effort to allow distillation to be performed layer-wise on architectures with varying dimensions, recent works~\citep{park2019rkd} have introduced distillation in a dimension-agnostic manner. To do so, these methods focus on the relative distances of the intermediate representations of training examples, rather than on the exact positions of each example in their corresponding domains. These methods are referred to as relational knowledge distillation (RKD) in the literature.

In this section, we present our work in which we extend this notion of RKD by introducing {\em graph knowledge distillation} (GKD). As in the previous sections of this chapter, we construct graphs where vertices represent training examples, and the edge weight between two vertices is a function of the similarity between the representations of the corresponding  examples at a given layer of the network architecture. The main motivation for this choice is that even though representations generally have different dimensions in each architecture, the size of the corresponding graphs is always the same  (since the number of nodes is equal to the number of training examples). Thus, information from graphs generated from the teacher architecture can be used to train the student architecture by introducing a discrepancy loss between their respective adjacency matrices during training.

In other words, we introduce a layer-wise distillation process using graphs, extending the RKD framework, and we demonstrate that this method can improve the accuracy of students trained in the context of distillation, using standard vision benchmarks. The reported gains are about twice as important as those obtained by using standard RKD instead of no distillation.

\subsection{Methodology}
\label{methodology}

In this section we first introduce RKD and recall some of the notations from Section~\ref{chap2:distillation}, then we introduce the methodology used to define GKD.

\subsubsection{Relational Knowledge Distillation (RKD) }

Recall that $T$ and $S$ denote teacher and student architectures, respectively. The goal of distilation is to transfer knowledge from $T$ to $S$, where $S$ typically contains fewer parameters than $T$. For presentation simplicity, we assume that both architectures generate the same number of inner representations. In the context of distillation, we consider that the teacher has already been trained, and that we want to use both the training set and the inner representations of the teacher in order to train the student. This is an alternative to directly training the student using only the training data (which we refer to as ``baseline'' in our experiments). Also recall, that we use the following loss to train the student:
\begin{equation}
    \mathcal{L} = \mathcal{L}_\text{task} + \lambda_{\text{KD}} \cdot \mathcal{L}_\text{KD}\;.
    \label{chap5:loss-distilation}
\end{equation}

We denote $\mX \in \trainset$ a batch of input examples and $\sX'$ the set of intermediate representations generated using $\mX$ that are used for inferring knowledge. RKD approaches consider relative metrics between the respective inner representations of the networks to be compared. In the specific case of RKD-D~\citep{park2019rkd}, the mathematical formulation is: 
\begin{equation}
\mathcal{L}_\text{RKD-D} = \sum_{X' \in \sX'}\sum_{(\vx_i,\vx_j)\in \dot{X'}}{\mathcal{L}_d\left(\frac{\|\vx^S_i-\vx^S_j\|_2}{\Delta'^S},\frac{\|\vx^T_i - \vx^T_j\|_2}{{\Delta'^T}}\right)},
\end{equation}
where $\dot{X'}$ is the set of all possible pairs from $X'$, $\Delta'^A$ is the average distance between all couples $(\vx^A_i,\vx^A_j)$ for each $X' \in \sX'$ for the given architecture, and $\mathcal{L}_d$ is the Huber loss~\citep{huber1992robust}. The main advantage of using RKD is that it allows to distillate knowledge from an inner representation of the teacher to one of the student, even if their respective dimensions are different.

\subsubsection{Proposed Approach: Graph Knowledge Distillation (GKD)}\label{chap5:gkd_section}

We now introduce our proposed approach. Instead of directly trying to make the distances between data points in the student match those of the teacher, we consider the problem from a graph perspective. Given an architecture $A$, a batch of inputs $X$, we compute the corresponding inner representations $X'^A = f'^A([\vx, \vx \in X])$. We can then choose a set of layers that we want to consider and create a set $\sX'$ containing these intermediate representations. These representations are then used to define a similarity graph $\gG^A(X')$, for each $X' \in \sX'$. The graph contains a node for each input in the batch, and the edge weight $\emAdjacency^A(X')_{i,j}$ represents the similarity between the $i$-th and the $j$-th elements of $X'$ from architecture $A$. In this work, we use the cosine similarity. Finally, in order to control the importance of outliers, we also normalize the adjacency matrix.

While training the student, we input our training batch into both the student architecture and the (now fixed) previously trained teacher architecture. This provides a similarity graph for each representation $X'$ from the set of representations to consider $\sX'$. The loss we aim to minimize combines the task loss, as expressed in Equation~\ref{chap5:loss-distilation}, with the following graph knowledge distillation (GKD) loss: 
\begin{equation}
    \mathcal{L}_{\text{GKD}} = \sum_{X' \in \sX'}{\mathcal{L}_d(\gG^S(X'),\gG^T(X'))}\;.
\end{equation}

In our work, we mainly consider the case where $\mathcal{L}_d$ is the Frobenius norm between the adjacency matrices. The GKD loss measures the discrepancy between the adjacency matrices of teacher and student graphs. In this way the geometry of the latent representations of the student will be forced to converge to that of the teacher. Our intuition is that since the teacher network is expected to generalize well to the test, mimicking its latent representation geometry should allow for better generalization of the student network as well. An equivalent definition of our proposed loss is:
\begin{equation}
    \mathcal{L}_{\text{GKD}} = \sum_{X' \in \sX'}{\|\adjmatrix^S(X')-\adjmatrix^T(X')\|_2^2}\;.
\end{equation}

A first obvious advantage of GKD with respect to RKD-D is the fact it has a more natural normalization over the batch of inputs, yielding to a more robust process. This is discussed in Section~\ref{chap5:normalization}. Amongst other degrees of freedom that become available when using graphs, we focus on three possible variations of the method:
\begin{enumerate}
    \item Task specific: considering only examples of the same (resp. distinct) classes when creating the edges of the graph, thus focusing on the clustering (resp. margin) of classes,
    \item Localized: weighting differently the closest and furthest neighbors of each node in the graph, in an effort to focus on locality, or to the contrary on remoteness,
    \item Smoothed: taking powers $p$ of the normalized adjacency matrix of considered graphs before computing the loss. By considering higher powers of $\adjmatrix$, we consider smoothed relations between inner representations of inputs.
\end{enumerate}

\subsection{Experiments}

In this section, we perform two types of experiments. First we compare the accuracy of RKD-D and GKD using the CIFAR-10 and CIFAR-100 datasets~\citep{krizhevsky2009learning}, analyze the impact of the normalization of the similarities, compare the consistency with the teacher and perform spectral analysis of the different graphs and graph signals. We then look at proposed variations of GKD: task specific, localized and smoothed.

\subsubsection{Hyperparameters}

We train our CIFAR-10/100 networks for 200 epochs, using standard Stochastic Gradient Descent (SGD) with batches of size 128 ($|X|=128$) and an initial learning rate of $0.1$ that is decayed by a factor of $0.2$ at epochs $60$, $120$ and $160$. We also add a momentum of $0.9$ and follow standard data augmentation procedure. We use a ResNet26-1 architecture for our teacher network, while the student network uses a Resnet26-0.5. In terms of scale, ResNet26-0.5 has approximately 27\% of the operations and parameters of ResNet26-1. All these architectures are particularly small compared to the ones achieving state-of-the-art performance. We use a network of same size of the students but trained without a teacher as a baseline that we call Vanilla. Our RKD-D~\citep{park2019rkd} students are trained with the parameters from~\citep{park2019rkd}, $\lambda_\text{RKD-D}=25$ and applied to the output of each block. We applied the same values for GKD. Note that all these choices were made to remain as consistent as possible with existing literature. For each student network we run either 10 (CIFAR-10) or 3 (CIFAR-100) tests and report the median value. The code for reproducing the experiments and boxplots for each experiment is available at~\url{https://github.com/cadurosar/graph_kd}.

\subsubsection{Direct comparison between GKD and RKD-D}

In a first experiment we simply evaluate the test set error rate when performing distillation. Results are summarized in Table~\ref{chap5:errorrate}. We compare student sized networks trained without distillation, that we call baseline, with GKD and RKD-D~\citep{park2019rkd} trained networks. We also report the performance of the teacher. We note that RKD-D~\citep{park2019rkd} by itself provides a small gain in error rate with respect to the Baseline approach, while GKD outperforms RKD-D by almost the same gain.

\begin{table}[ht]
\centering
\caption{Error rate comparison of GKD and RKD-D. Table and caption extracted from~\citep{lassance2020deep} @2020 IEEE.}
\label{chap5:errorrate}
\begin{tabular}{|c|c|c|c|}
\hline
Method                   & CIFAR-10 & CIFAR-100 & Relative size   \\ \hline 
Teacher                  & 7.27\% ($\pm$ 0.26)     & 31.26\%  & 100\% \\ \hline
Baseline                 & 10.34\% ($\pm$ 0.27)   & 38.50\% & 27\%  \\ \hline
\hline
RKD-D~\citep{park2019rkd}         & 10.05\% ($\pm$ 0.28)    & 38.26\% & 27\%  \\ \hline
GKD                      & \textbf{9.71\% ($\pm$ 0.27) }     & \textbf{38.17\%} & 27\%  \\ \hline

\end{tabular}
\end{table}

\subsubsection{Effect of the normalization}
\label{chap5:normalization}

To better understand why GKD performed better than RKD-D we analyze the contribution of each example in a batch in both the GKD loss and the RKD-D one. If our premise from Section~\ref{chap5:gkd_section} is correct, by using a degree normalized adjacency matrix instead of the distance pairs directly, most examples will be able to contribute to the optimization. To do so, we compute the respective loss, for each block, using 50 batches of 1000 training set examples and analyze the median amount of examples that are responsible for 90\% of the loss at each block.  In Table~\ref{chap5:normalization_table}, we present the results. As we suspected for GKD, it shows a significant advantage on the number of examples responsible for 90\% of the loss.

\begin{table}[ht]
\centering
\caption{Comparison of the effect of the normalization on the amount of examples that it takes to achieve 90\% of the total loss value. Table and caption extracted from~\citep{lassance2020deep} @2020 IEEE.}
\begin{tabular}{|c|c|c|}
\hline
Block position in the architecture & RKD-D            & GKD              \\ \hline\hline
Middle & 83.70\%          & \textbf{86.50\%} \\ \hline
Final & 82.05\%          & \textbf{83.60\%} \\ \hline
\end{tabular}
\label{chap5:normalization_table}
\end{table}

\subsubsection{Classification consistency}

We now take our trained students and compare their outputs to the trained teacher's outputs. For the output of each WideResNet block we compute the classification of a simple Logistic Regression, while the network's final output is already a classifier. The ideal scenario would be one where the student is 100\% consistent with the teacher's decision on the test set, as this would greatly improve the classification performance when compared to the baseline. The results are depicted in Figure~\ref{chap5:fig-consistency}. As expected the GKD was able to be more consistent with the teacher than the RKD-D.

\begin{figure}[ht]
    \begin{center}
        \begin{adjustbox}{max width=\linewidth}
  \tikzsetnextfilename{chapter5/tikz/consistency}%
  \input{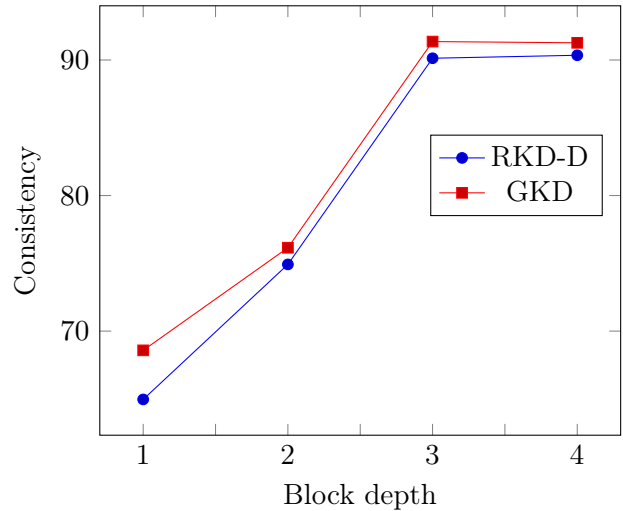}%

        \end{adjustbox}    
        \caption{Analysis of the consistency of classification compared to the teacher, across blocks of RKD-D and GKD students. We consider the classification layer of the network as the ``fourth block''. Figure and caption adapted from~\citep{lassance2020deep} @2020 IEEE.}
        \label{chap5:fig-consistency}
    \end{center}
\end{figure}

\subsubsection{Spectral analysis}

Given that we have introduced intermediate representation graphs, it is quite natural to analyze performance from a GSP perspective~\citep{shuman2013emerging}. We propose to do so by considering specific graph signals $\vs$ and computing their respective smoothness on each of the two graphs. We create graphs with 1000 examples chosen at random from the training set. The signals that we consider are \begin{inlinelist} \item the label binary indicator signal \item the Fiedler eigenvectors from each intermediate representation in the teacher, which allow us to compare the clustering of both networks and how they evolve over successive blocks.\end{inlinelist} The results are depicted in Figure~\ref{chap5:fig-spectral}. We can see that both signals have more smoothness in the graphs generated by GKD. This means that the geometry of the latent spaces from GKD are more aligned to those of the teacher. 

\begin{figure}[ht]
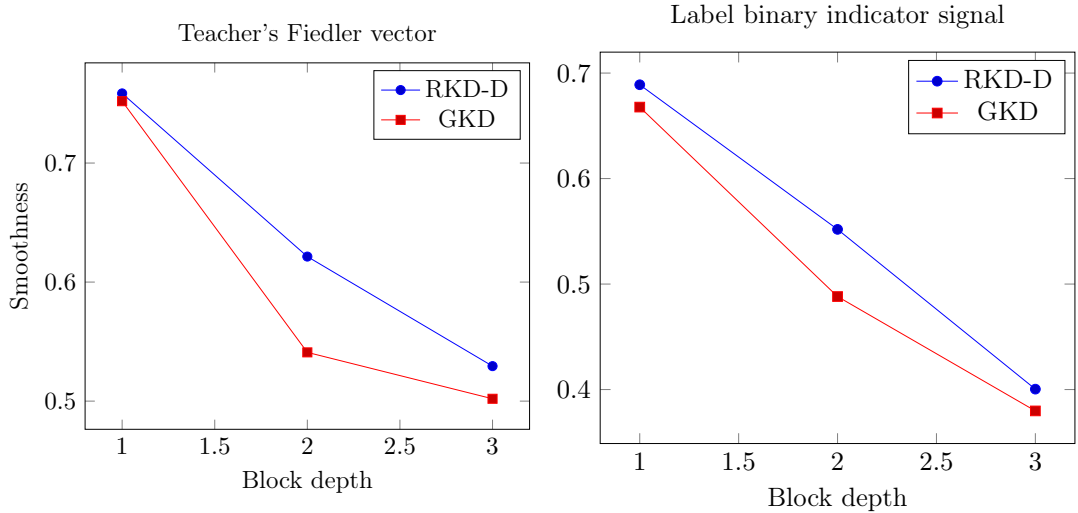

    \begin{center}
        \begin{subfigure}[ht]{0.48\linewidth}
            \centering
            \begin{adjustbox}{max width=\linewidth}
  \tikzsetnextfilename{chapter5/tikz/gkd_fiedler}%
  \input{chapter5/tikz/gkd_fiedler.tex}%

            \end{adjustbox}
        \end{subfigure}
        \begin{subfigure}[ht]{0.48\linewidth}
            \centering
            \begin{adjustbox}{max width=\linewidth}
  \tikzsetnextfilename{chapter5/tikz/gkd_smoothness}%
  \input{chapter5/tikz/gkd_smoothness.tex}%

            \end{adjustbox}
        \end{subfigure}
        \caption{Analysis of the smoothness evolution across blocks of the students. In the right we have the label binary indicator signal and in the left we use the Teacher's Fiedler vector as a signal. Figure and caption extracted from~\citep{lassance2020deep} @2020 IEEE.}
        \label{chap5:fig-spectral}
    \end{center}
\end{figure}

\subsubsection{Task specific graph signals}

We now consider variations of the proposed GKD method. The first one are the effects of considering only intra or inter-class distances. If we consider only inter-class distances we can focus mostly on having a similar margin in both teacher and student. On the other hand, considering only intra-class distances would force both networks to perform the same type of clustering on the classes. The results are presented in Figure~\ref{chap5:fig-signal}. In this case, focusing on the margin helped decrease both median test error rate and its standard deviation, while concentrating on the clustering was not effective. This result is similar to what we found in our prior work (Section~\ref{chap5:laplacian}), which shows that the margin is a better tool to interpret the network results than the class clustering.

\begin{figure}[ht]
    \begin{center}
        \begin{adjustbox}{max width=\linewidth}
  \tikzsetnextfilename{chapter5/tikz/gkd_task_specific}%
  \input{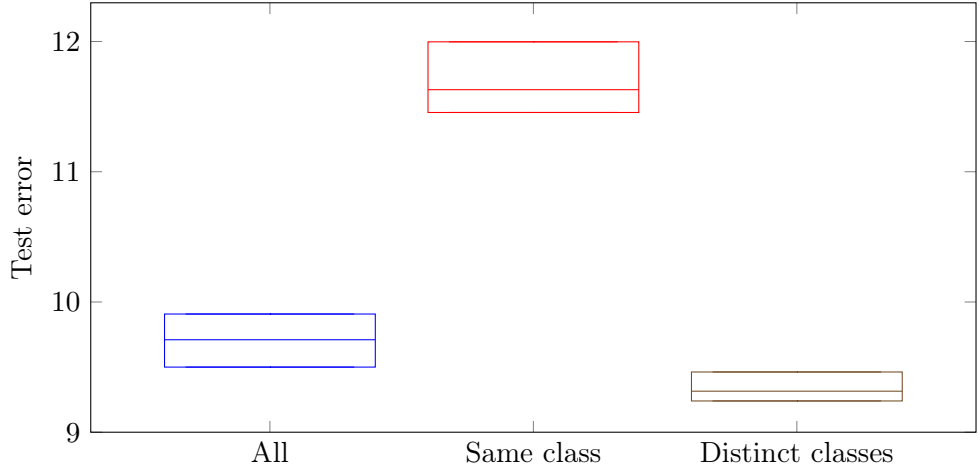}%

        \end{adjustbox}
        \caption{Analysis of the effect of task specific graph signals. Figure and caption extracted from~\citep{lassance2020deep} @2020 IEEE.}
        \label{chap5:fig-signal}
    \end{center}
\end{figure}

\subsubsection{Effect of locality}

To study the effect of locality, we partition the graph edges in two parts: 1) the ones corresponding to the $k$ nearest neighbors of each vertex $\adjmatrix^A(k)$ and 2) the other ones $\overline{\adjmatrix^A(k)}$. Consequently, we can write $\adjmatrix^A = \adjmatrix^A(k) + \overline{\adjmatrix^A(k)}$. We then introduce the new adjacency matrix $\adjmatrix^A(k,\alpha)=\alpha\adjmatrix^A(k) + (1-\alpha)\overline{\adjmatrix^A(k)}$, where $\alpha$ scales the importance of 1) with respect to 2). So choosing $\alpha = 0$ means to disregard nearest neighbors while $\alpha=1$ corresponds to focusing only on them. Results are summarized in Figure~\ref{chap5:fig-neighbors}. We observe that for small value of $k$, small values of $\alpha$ lead to the best performance, whereas for $k=|X|/2$ larger values of $\alpha$ are better. This is similar to results such as~\citep{hermans2017defense}, where the authors show that one should not concentrate on the hardest/easiest cases, but on the intermediate cases.

\begin{figure}[ht]
    \begin{center}
        \begin{adjustbox}{max width=\linewidth}
  \tikzsetnextfilename{chapter5/tikz/gkd_knn}%
  \input{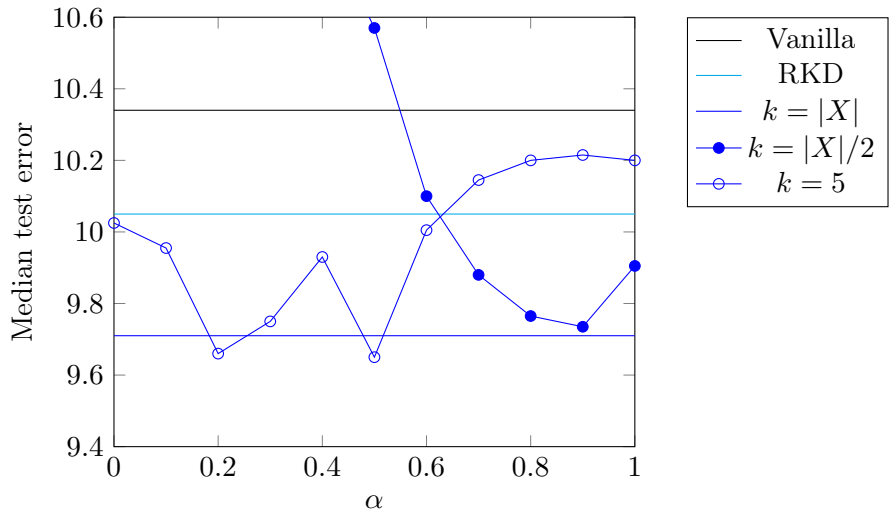}%

        \end{adjustbox}

        \caption{Median test error for different values of $k$ and $\alpha$. Figure and caption extracted from~\citep{lassance2020deep} @2020 IEEE.}
        \label{chap5:fig-neighbors}
    \end{center}
\end{figure}

\subsubsection{Smoothed representations}

Finally, we study the effect of varying the power of adjacency matrices $p$. This allows us to consider smoothed relations between inner representation of inputs when compared to fixing $p$ to 1. The results are presented in Table~\ref{chap5:tab-powers}. Smoothed relations do not seem to help the transfer of knowledge. One possible reason is that larger powers have the effect of drowning out the information.

\begin{table}[ht]
\centering
\caption{Analysis of the effect of varying $p$ on the error rate. Table and caption extracted from~\citep{lassance2020deep} @2020 IEEE.}
\vspace{.5cm}

\begin{tabular}{|c||c|c|c|}
\hline
$p$        & 1 & 2 & 3 \\ \hline
Error Rate & \textbf{9.71\%} & 10.20\% & 10.07\%\\\hline
\end{tabular}
\label{chap5:tab-powers}
\end{table}

In the previous paragraphs we have introduced graph knowledge distillation (GKD), a method using graphs to transfer knowledge from a teacher architecture to a student one. By using graphs, the method opens the way to numerous variations that can significantly benefit the accuracy of the student, as demonstrated by our experiments. We note that we are not the first to propose such an extension, but that nonetheless this is an interesting research direction. In future work we consider: \begin{inlinelist} \item using more appropriate graph distances, such as in~\citep{bunke1998graph,segarra2015diffusion} \item doing a more in-depth exploration of how to properly scale the student network, e.g. following~\citep{tan2019efficientnet} \item combining with approaches such as~\citep{hermans2017defense,bontonou2019smoothness} to train a teacher network in a layer-wise fashion. \end{inlinelist}

\section{Summary of the chapter}

Differently from the previous ones, in this chapter we have mainly presented our contributions in the domain of ``Deep Neural Networks latent spaces supported on graphs''. While this domain is not very developed, we hope that our contributions may shine a light and allow for more development on it, as we believe there are a lot of interesting contributions to pursue. 

We have first introduced the work that we believe was the cornerstone for our interest in the domain~\citep{gripon2018insidelook}, in which the authors have shown that it was possible to characterize different DNN behaviors by analyzing the evolution of the graph signal smoothness over their representations. We then built upon this work to propose a measure that is empirically correlated (but that we are not able to ensure causation) with the generalization performance of DNNs. This was subject of a contribution to a non-archival conference:

\begin{itemize}
    \item \bibentry{lassance2018predicting}
\end{itemize}

Then we concentrated on possible uses of graph signal smoothness during the training of neural networks. First we showed that we are able to train good feature extractors by training the network to minimize the smoothness of the label indicator signals on the graphs generated by their outputs. This new objective function has three important features that are not present in the traditional cross entropy loss and we demonstrate using experiments that we are able to obtain networks that are more robust, without losing too much generalization performance. Second, we propose to use a regularizer in order to control how the smoothness of the label indicator signals evolve over graphs that are generated by the intermediate representations of DNNs. We show that these regularizers are not only theoretically inline with our definition of robustness (Definition~\ref{chap2:def_robustness}), but also that we can demonstrate empirically their efficacy when compared (or added) to other methods in the literature. These two uses of graph signal smoothness were subjects of contributions, one to a conference and the other is under the review process of a journal:

\begin{enumerate}
    \item \bibentry{bontonou2019smoothness}
    \item \bibentry{lassance2018laplacian}
\end{enumerate}

Finally we have presented a method that does not build from the GSP framework, but that allows us to use the GSP framework on previously defined techniques. In other words, we have specialized the RKD framework as GKD, which we have shown empirically and analytically to improve the performance of the compressed networks. This introductory work was published at a conference as:

\begin{itemize}
    \item \bibentry{lassance2020deep}
\end{itemize}

In summary, the works in this chapter aim at proposing and expanding the domain of ``Deep Neural Networks latent spaces supported on graphs'', while showing the possible improvements this domain can bring to the overall Deep Learning community. In the next chapter we provide a summary of the overall thesis, in order to conclude the work and present the research directions that we have opened but were not able to explore yet.
\chapter{Conclusion}\label{conclusion}
\localtableofcontents

In this section, we first present a quick summary of the thesis. We then present a summary of the contributions presented in this document and the perspectives/research directions that we consider now open in the context of this work. We close this document with a discussion and considerations on the overall field of deep learning. The main idea that we pursued during the last three years was to tackle some shortcomings of deep learning architectures by looking at their intermediate representations. 

To perform our analyses, we used the framework of Graph Signal Processing, in which graphs are used to represent the topology of a complex domain (here: latent spaces of deep learning architectures). We have considered deep learning applications within three machine learning domains: \begin{inlinelist} \item representation/transfer learning \item compression of deep learning architectures \item study of overfitting (generalization and robustness) \end{inlinelist}. In the following paragraphs, we present a summary of the contributions introduced in the document, grouped according to the above-mentioned domains.

\section{Summary of contributions}

\subsection{Representation and transfer learning}

In Section~\ref{chap3:graph_inference} we introduced a benchmark to allow the comparison of graph inference methods. Being able to measure and determine the most effective technique for graph inference is of utmost importance, given that most of the analysis we performed during the PhD depends on such inferred graphs. Our findings are that the ``naive'' baseline ($k$-nn symmetric similarity graphs) can achieve good enough performance when well-tuned compared to more principled approaches. This is even more important given the fact these naive baselines can typically be computed very efficiently, allowing for deployment during the learning of deep neural networks. We consider that the experiments and findings presented in this document could be the starting point to interesting extensions, including taking into account the specific task that graphs are inferred for when they are created. We shall discuss this point later in this chapter.

Then in Section~\ref{chap4:classify_graph_signals}, we introduced two techniques that allow us to perform the supervised classification of graph signals. First, we presented an embedding technique which aim at representing a graph in a 2D Euclidean space. As most conventional deep learning architectures are already well adapted to inputs on the 2D Euclidean space, it is quite straight-forward then to use these representations in a classical deep convolutional neural network. We then introduced a set of methods that attack the question of whether one can match the performance of CNNs without using priors about the data structure. Three ideas were discussed in this context: \begin{inlinelist} \item using a graph convolution scheme based on translations, which were first defined in Section~\ref{chap4:translation} \item introducing a method for performing downsampling on those graphs \item introducing a data augmentation scheme based on graph translations\end{inlinelist}. These three improvements allowed us to reduce the gap from convolutions in a 2D space to graph convolutions to a drop in accuracy of only about 2.7\% in a competitive image classification benchmark.   

We also considered the task of learning representations that are suited for the classification task. Namely, we introduced a graph smoothness loss for deep learning architectures in Section~\ref{chap5:smoothness_loss}. Using this loss to train a deep learning architecture enforces that the output should be able to generate graphs that are smooth with regards to the label signal. This graph smoothness loss has three properties that are interesting in the context of representation learning: \begin{inlinelist} \item no contraction, as we only force elements of different classes to be distant, but not all elements of the same class to be close \item no arbitrary decision of where the points of each class should be \item no restriction on the amount of dimensions of the output \end{inlinelist}. We demonstrate via experiments that networks trained with the proposed loss are more robust while achieving similar accuracy than networks trained with the standard cross-entropy loss.  

Finally, we considered transfer learning. In the case of transfer learning, the goal is to use representations learned in a first task where data is abundant in a second problem where data tends to be scarce. In Section~\ref{chap3:graph_filter} we presented graph filters and how they can be used to improve the pre-trained representations by either combining additional information (Section~\ref{chap3:vbl}) or by denoising the learned features in the new class domain (Section~\ref{chap3:reducing_noise}). We showed through our experiments that graph filters are well suited to the transfer learning task, allowing us to improve the performance in various tasks ranging from visual-based localization to few-shot learning. 

\subsection{Neural network compression}

In the context of neural network compression, we presented two types of contributions: \begin{inlinelist} \item efficient layers \item distillation\end{inlinelist}. First, in Section~\ref{chap2:SAL}, we presented Shift Attention Layers (SAL). This efficient layer scheme reduces the amount of weights per convolutional filter using the concept of attention so that only the most important weights are kept per filter at the end of the training. The goal of SAL is to start with a vast optimization space to optimize our network (more parameters). As the network evolves, we thin out the optimization space, pruning out the less important parameters. We compared with similar shift layers and showed that SAL can improve the performance of the compressed networks. 

We then focused on distillation. In Section~\ref{chap5:gkd}, we presented the Graph Knowledge Distillation (GKD) framework that uses graphs to represent latent spaces. GKD has the advantage of disregarding the dimensionality difference between teacher and student's latent spaces while distilling the learned structure from the teacher to the student. Using graphs also allowed us to propose newer additions to the distillation framework. For example, we proposed to consider only the edges between elements of distinct classes, which we showed to improve the performance of GKD.

\subsection{Generalization and robustness}

In this manuscript, we also considered the concept of overfitting, as stated in Definition~\ref{chap2:overfitting}. Per our definition, overfitting is closely linked to both robustness (the network is considered to be overfitted to the $\dataset$) and generalization (we say a network generalizes well if it is not overfitted to $\trainset$ or $\validset$).

We first formally defined what we call robustness in Section~\ref{chap2:robustness}. We also introduced the concept of $\alpha$-robustness. The principle is to not only focus on controlling the maximum perturbation a noise $\epsilon$ applied to the input may cause to the output but also on bounding the radius $r$ where this control should be applied. The goal is to enforce a small $\alpha$ around the examples (small $r$), while allowing more significant transitions on the parts of the space that are not represented in $\dataset$. We analyzed the representations of four recently proposed methods to increase the robustness of DNNs and found out that the methods that follow our definition (i.e., are more $\alpha$-robust) are the ones that are more empirically robust.

We then presented in Section~\ref{chap5:laplacian} a regularizer that, when applied to networks, creates Laplacian networks. In a Laplacian network, the smoothness of the label signal should evolve slowly across sequential intermediate representations. Therefore, a small perturbation applied to the input or one of the representations should not be able to impact the overall classification significantly. We analyzed how this is linked to the previously presented robustness definition and showed through experiments how it empirically improves the performance under various perturbations.  

Finally, we also studied the problem of analyzing the generalization of DNNs, when no extra labeled data (or a $\validset$) is available, in Section~\ref{chap5:smoothness}. We used the smoothness of the label signal as our metric. We first analyzed qualitatively (i.e., via graphs) if there is a difference in behavior from a reference network to controlled scenarios where we know the network will be underfitted or overfitted. We verified that the smoothness gap between the last layers of the network is a good indicator of generalization. We then performed a more quantitative analysis. We trained networks varying multiple hyperparameters and verified that there is indeed a correlation between the smoothness gap and the generalization of the network.

\section{Perspectives and future work}

In this thesis, we have introduced many contributions, each opening their research directions and perspectives. We have already described these perspectives during the introduction of each work. Therefore, we now present more high-level point of view. There are four main perspectives that we discuss: \begin{inlinelist} \item graph inference \item extending our framework to other tasks \item graphs and data acquisition \item communication of results \end{inlinelist}.

\subsection{Graph inference}

First, recall that our goal was to study the intermediate representations of DNNs. In most cases, we have introduced graphs that are inferred from data via a similarity metric. We have discussed such construction in Section~\ref{chap3:graph_inference}, where we have shown that, when correctly tuned, these graph representations can rival more principled techniques. On the other hand, it is fair to say that improvements in graph construction should lead to improvements in most methods presented in this thesis. One such improvement would be to take into account the task at hand when creating the graph, e.g., considering the label information during graph inference allowed us to improve our graph distillation in Section~\ref{chap5:gkd}. 

We already started working on this problem, and thinking of possible solutions to infer graphs given two objectives: matching the representations and helping in the considered downstream task. As a matter of fact, very often authors in the domain of graph inference introduce priors to solve what is in-fine an ill-posed problem: there are infinitely many graphs that would correspond to a dataset. For example, in~\citep{pasdeloup2017characterization}, the authors showed that when the prior is that signals are stationary on the graph, there is a polytope of possible graph structures that would fit the provided data. Choosing a point in the polytope boils down to favorizing a specific key property of the seeked graph structure. In their work, the authors consider sparsity or simplicity for example. We believe that using the task as a prior could lead to an interesting tradeoff between matching the signals and helping in solving the considered task.

\subsection{Extension to other tasks}

Note that in this manuscript, we mainly focused on semi-supervised and supervised classification. Extending our framework, which uses graphs that represent latent spaces, to other tasks, would be interesting future work. For example, recent literature in self-supervised learning uses a technique close to the presented graph smoothness loss, where augmented examples from the same original image should be closer in latent space than examples from different images~\citep{grill2020bootstrap,chen2020simple}. 

More generally, the consideration of hyperbolic spaces in the design and the optimization of deep learning architectures has become increasingly popular~\citep{verma2019manifold,liu2019hyperbolic}. Graphs could be considered as a natural way to reformulate or improve these methods. 

Consider manifold mixup for instance, that is being used in the training of many modern state-of-the-art classifiers~\citep{mangla2019manifold}. The principle is to interpolate inputs, outputs and intermediate representations to augment the training set. Instead of using a naive linear interpolation, using graphs instead could lead to a more accurate generation of augmented inputs.

\subsection{Graphs and data acquisition}

In the previous subsection, we described how our framework may be adapted to tasks that we did not consider in this document. An orthogonal problem would be to use the GSP framework during the data acquisition and or labeling phase. 

Consider the few-shot learning task. In this task, the goal is to learn from a few labeled samples. There are already many techniques to tackle this problem, including the ones presented in this thesis. The labeled samples are either chosen at random or predefined by the benchmark. This procedure leads to two drawbacks: 
\begin{enumerate}
    \item The algorithm's results are very dependent on exactly which are the labeled samples, requiring a large number of random initializations (i.e., drawing the few labeled samples) to have a good enough confidence interval for comparing two methods;  
    \item The procedure is not inline with real-world scenarios. A more realistic scenario is to acquire a collection of unlabelled samples. One can then either: \begin{inlinelist} \item choose in which order it should label the samples (active learning) \item receive a small subset of labeled examples and be able to exploit both labeled and unlabelled data (semi-supervised learning).\end{inlinelist} 
\end{enumerate}

Note that it is common to call ``few labels'' the scenario of ``few-shot semi-supervised learning''. Even if we do not explicitly treat the few-labels task in this thesis, we note that using similarity graphs (akin to those we use in this thesis) improves accuracy on the few-labels scenario~\citep{hu2020exploiting}. Moreover, we believe that extending this framework to the active learning scenario\footnote{more precisely the ``pool setting'', where the learner is given the set of unlabelled samples and can then iteratively choose which points that it wants to label.} should lead to improvements when compared to the traditional semi-supervised setting.

Indeed, imagine that the similarity graph we generate is not well-behaved (i.e., either disconnected or with a high variance of degrees). In this case, correctly choosing which node to label is of significant importance because some label information may be lost in the case of an unlucky random sample. To mitigate this problem, we believe that using a graph sampling algorithm~\citep{tanaka2020sampling} should allow us to accurately select the correct nodes to label, reducing the number of labels needed for adequate performance and reducing the variance caused by the random sampling. 

\subsection{Communication of results}

Finally, we have to analyze the diffusion of the ideas presented here. Parallel to the writing of this thesis, we are also preparing a book chapter that presents the domain of ``Graphs for deep learning latent representations'' in a more concise way, focusing less on our contributions and more on the domain itself. The goal is to create a more straightforward introduction of the concepts described here to diffuse our contributions and inspire more interest in the presented domain.  Further diffusing our findings with introductory courses to deep learning and GSP would be advisable as well.

\section{Discussion and considerations about the field}

Deep Learning has attracted a lot of attention in the past few years, and the trend is increasing. Consider for instance the number of papers submitted to NeurIPS, which is an iconic conference of the domain, in the past few years, showed in Figure~\ref{chap6:neurips}. We can clearly see how popular the domain has become.

\begin{figure}[ht]
    \centering
  \tikzsetnextfilename{conclusion/tikz/neurips}%
  \input{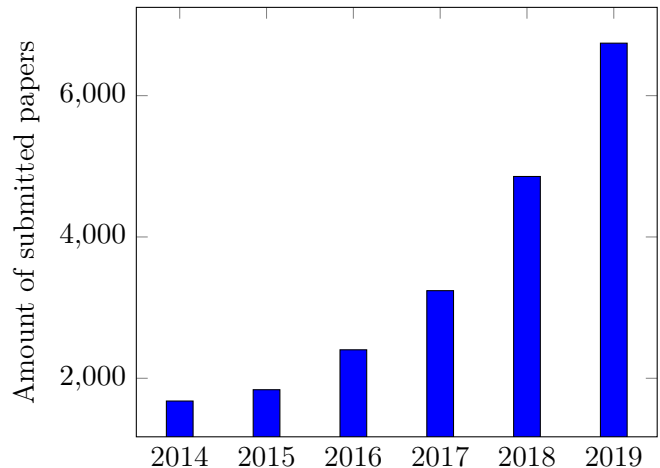}%

    \caption{NeurIPS paper submission evolution from 2014 to 2019. Data extracted from~\url{https://www.openresearch.org/wiki/NIPS} and~\url{https://medium.com/@NeurIPSConf/what-we-learned-from-neurips-2019-data-111ab996462c}}
    \label{chap6:neurips}
\end{figure}

Such a sudden popularity is not without drawbacks. For example, we noticed that despite being mostly driven by experiments, publications to the top-tier conferences in machine learning tend to provide weak statistical guarantees about the improvements they claim to obtain in their work. Too often we found papers in which the confidence interval is not given, the code made available (if made available) does not reproduce the presented results, and the mathematical formulation is disconnected from the presented experimental results.

This is not surprising given the fact that acceptance of papers in these venues is becoming increasingly important for applying to some companies or even academic positions. The problem is that the more papers are submitted, the more reviewers are needed, and of course when the numbers explode that fast, it is not possible to ensure a fair and balanced process.

In this context, most of the references that we cited in this document are very recent (> 2017) and they might contain contradictory results and claims. This is highly problematic in the context of a PhD in science, where we should be more focused on reproducibility and generalization of the results introduced in our contributions.

One such example happened at the start of this PhD thesis. One of our initial goals was to study graph neural networks. Very quickly we started to realize that there was a problem in the way that papers compared with each other. This greatly impacted our vision of the domain and was further confirmed by studies such as~\citep{vialatte2018convolution,shchur2018pitfalls,Errica2020A}. We have previously discussed this in Section~\ref{chap4:classify_graph_vertices}, but I believe that this required a more in-depth discussion in this conclusion. The fact that the problems come not only from the benchmarks (that is tackled by recent contributions such as~\citep{hu2020open}) but from the experimental design is a clear signal that the domain could be evolving too fast. Note that this is not exclusive to GNNs. The same problems have been found in deep metric learning, where in~\citep{roth2020revisiting} the authors show that the gap between older methods and more recent contributions is smaller than it is advertised in the recent papers. This is due mostly to improvements that are not linked with the more recent methods themselves, but in data acquisition/pre-processing. 

We tackled many important problems in this thesis. These problems are very relevant for a safe and trustworthy deployment of deep learning solutions in the society. Of course we would not pretend that the work that we did was not without failures or better than the rest of the literature. Indeed, some of our critics, also apply to some of our work. Yet depending on the opportunities that are going to appear in the continuity of my career, I hope I will be able to continue in this direction of research always striving to perform my work with scientific rigor.

\small{
\bibliography{references}
\bibliographystyle{natbib}
\clearpage
}
\clearpage
\includepdf{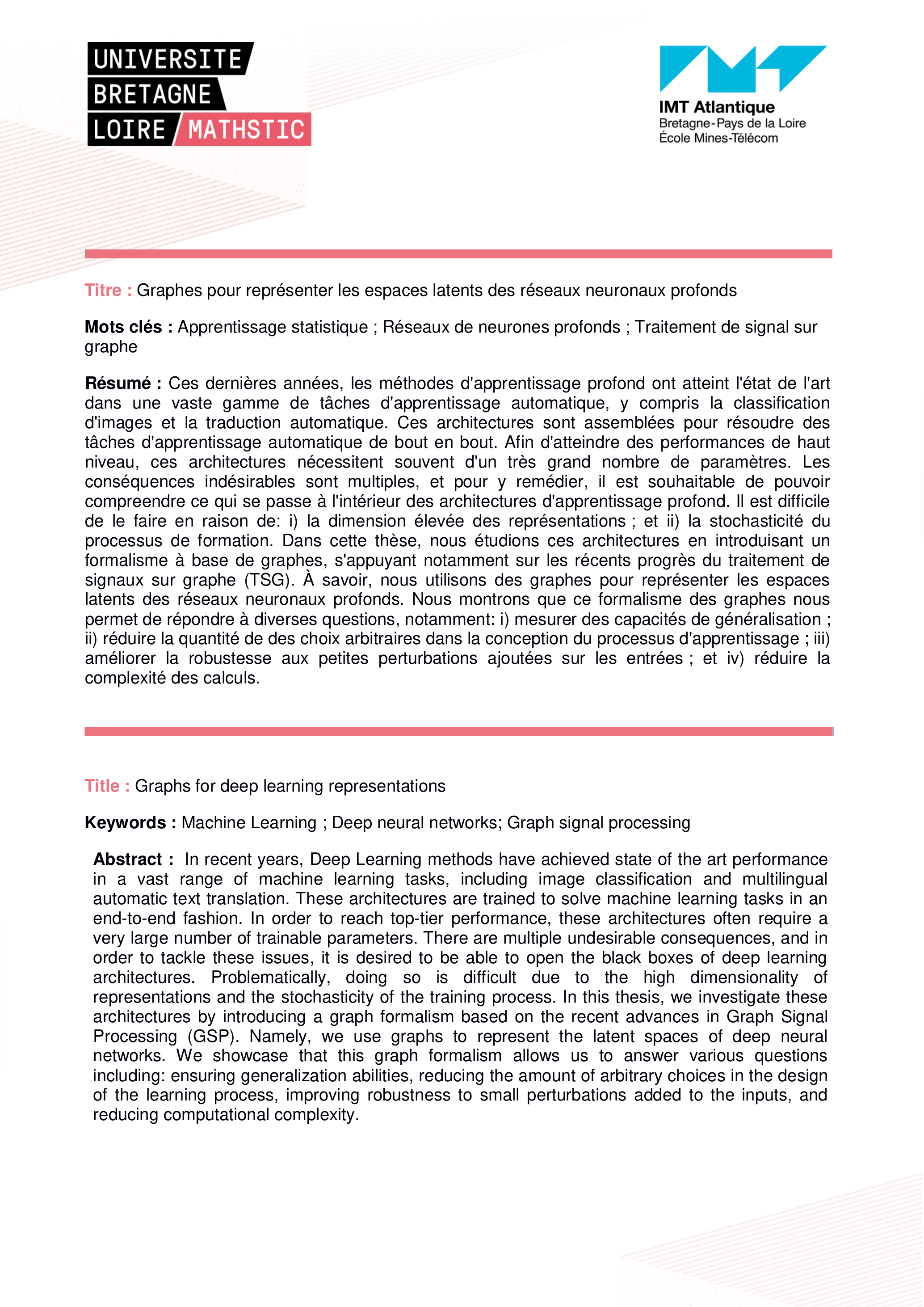}

\end{document}